\documentclass{article}

% if you need to pass options to natbib, use, e.g.:
% \PassOptionsToPackage{numbers, compress}{natbib}
% before loading nips_2018

% ready for submission
%\usepackage{nips_2018}

% to compile a preprint version, e.g., for submission to arXiv, add
% add the [preprint] option:
%\usepackage[preprint]{nips_2018}

% if you need to pass options to natbib, use, e.g.:
% \PassOptionsToPackage{numbers, compress}{natbib}
% before loading nips_2017
%
% to avoid loading the natbib package, add option nonatbib:
% \usepackage[nonatbib]{nips_2017}
\PassOptionsToPackage{numbers,compress}{natbib}

% Reward is not required to be Markov

%\usepackage[workshop]{neurips_2019}
\usepackage[workshop]{nips_2018}

% to compile a camera-ready version, add the [final] option, e.g.:
% \usepackage[final]{nips_2018}

% to avoid loading the natbib package, add option nonatbib:
% \usepackage[nonatbib]{nips_2018}

\usepackage[utf8]{inputenc} % allow utf-8 input
\usepackage[T1]{fontenc}    % use 8-bit T1 fonts
\usepackage{hyperref}       % hyperlinks
\usepackage{url}            % simple URL typesetting
\usepackage{booktabs}       % professional-quality tables
\usepackage{amsfonts}       % blackboard math symbols
\usepackage{nicefrac}       % compact symbols for 1/2, etc.
\usepackage{microtype}      % microtypography
\usepackage{nameref}

\usepackage{float}
\usepackage{graphicx}
\usepackage{tocstyle}

\usepackage{color}
\usepackage{amsmath}
\usepackage{amssymb}
\usepackage{amsthm}
\usepackage{mathtools}
\usepackage[mathscr]{eucal}% \mathscr{ABCDEFGHIJKLMNOPQRSTUVWXYZ}
\usepackage{bm}% bold math
\usepackage{bbm}
\usepackage{relsize}
\usepackage{graphics}
\usepackage{wrapfig}
\usepackage{multirow}
\usepackage{enumitem} 

\usepackage{array}

\usepackage{pgf}
\usepackage{xinttools}%,pgfkeys,pgfmath}
\usepackage{tikz}
\usetikzlibrary{arrows.meta}
\usepackage{textcomp}
\usetikzlibrary{calc,shapes,arrows,positioning,automata,trees}
\usepackage{placeins} % Provides \FloatBarrier
\usepackage{tabularx}
\usepackage{graphicx}
\usepackage{subcaption} % subfigures
\usepackage{listings}
\lstset{
  basicstyle=\ttfamily,
  columns=fixed,
  fontadjust=true,
  basewidth=0.5em
}

\usepackage{xargs,lipsum,caption,changepage,ifthen}

\usepackage[toc,page]{appendix}

\DeclarePairedDelimiter\floor{\lfloor}{\rfloor}

\newtheorem{theorem}{Theorem}
\newtheorem{definition}{Definition}
\newtheorem*{theorem*}{Theorem}
\newtheorem*{definition*}{Definition}

\newtheorem{theoremA}{Theorem}
\newtheorem{definitionA}{Definition}
\newtheorem{corollaryA}{Corollary}%
\newtheorem{propositionA}{Proposition}%
\newtheorem{lemmaA}{Lemma}%
%

% define bold lowercase letters (using the bm.sty package)
%
\newcommand\Ba{\bm{a}}
\newcommand\Bb{\bm{b}}
\newcommand\Bc{\bm{c}}

\newcommand\Bf{\bm{f}}
\newcommand\Bg{\bm{g}}
\newcommand\Bh{\bm{h}}
\newcommand\Bi{\bm{i}}

\newcommand\Bo{\bm{o}}
\newcommand\Bp{\bm{p}}
\newcommand\Bq{\bm{q}}
\newcommand\Br{\bm{r}}
\newcommand\Bs{\bm{s}}

\newcommand\Bw{\bm{w}}
\newcommand\Bx{\bm{x}}
\newcommand\By{\bm{y}}
\newcommand\Bz{\bm{z}}
%
% define bold uppercase letters (using the bm.sty package)
%
\newcommand\BA{\bm{A}}
\newcommand\BB{\bm{B}}
\newcommand\BC{\bm{C}}
\newcommand\BD{\bm{D}}

\newcommand\BI{\bm{I}}
\newcommand\BJ{\bm{J}}

\newcommand\BP{\bm{P}}
\newcommand\BQ{\bm{Q}}
\newcommand\BR{\bm{R}}
\newcommand\BS{\bm{S}}

\newcommand\BV{\bm{V}}
\newcommand\BW{\bm{W}}
\newcommand\BX{\bm{X}}

%
% define bold lowercase Greek letters (using the bm.sty package)
%

\newcommand\Bbe{\bm{\beta}}

\newcommand\Bth{\bm{\theta}}

\newcommand\Bom{\bm{\omega}}
%
% define bold uppercase Greek letters (using the bm.sty package)
%

%
% define bold zeroes and ones (using the bm.sty package)
%
\newcommand\BOn{\bm{1}}
\newcommand\BZe{\bm{0}}

% 
% define blackboard font set symbols (using the amssymb.sty package)
%
%%% Double capital letters

 \newcommand{\dR}{\mathbb{R}}

%%% Roman capital letters

 \newcommand{\rT}{\mathrm{T}}

% Calligraphic capital letters
\newcommand{\cA}{\mathcal{A}}

\newcommand{\cG}{\mathcal{G}}

 \newcommand{\cP}{\mathcal{P}}
 \newcommand{\cR}{\mathcal{R}}
\newcommand{\cS}{\mathcal{S}}

% Euler calligraphic capital letters
\newcommand{\sA}{\mathscr{A}}

 \newcommand{\sR}{\mathscr{R}}
\newcommand{\sS}{\mathscr{S}}

%%% Roman lower letters
  
\newcommand{\Rc}{\mathrm{c}} \newcommand{\Rd}{\mathrm{d}} 
 \newcommand{\Rf}{\mathrm{f}}

%
% other macros for convenience
%

\newcommand\EXP{\mathbf{\mathrm{E}}}
\newcommand\PR{\mathbf{\mathrm{Pr}}}
\newcommand\VAR{\mathbf{\mathrm{Var}}}
\newcommand\COV{\mathbf{\mathrm{Cov}}}
\newcommand\TR{\mathbf{\mathrm{Tr}}}

\newcommand\argmax{\mathop{\mathrm{argmax}\,}}

\newcommand\nn{\mathrm{new}}

\newcommand\net{\mathrm{net}}

\newcommand{\mse}{\mathop{\bf mse}}
\newcommand{\bias}{\mathop{\bf bias}}
\newcommand{\var}{\mathop{\bf var}}

%%% Prototype for functionals

%%% Norms and so on
\newcommand{\ABS}[1]{{{\left| #1 \right|}}} % |1|
 % {1}
 % ||1||
 % (1)
 % [1]

%%% Misc
\renewcommand{\leq}{\leqslant}
\renewcommand{\geq}{\geqslant}

%%% R stuff

\newcommand{\figpath}{figures/}

%%% WIDI
\newcommand{\returnrealization}{\mathbf{g}}

\usepackage{pdflscape} % For landscape tables

\newcolumntype{R}[1]{>{\raggedright\arraybackslash}p{#1}}
\newcolumntype{C}[1]{>{\centering\arraybackslash}p{#1}}
\newcolumntype{L}[1]{>{\raggedleft\arraybackslash}p{#1}}

\usepackage{bigdelim}
\usepackage{colortbl} % colored cells
\definecolor{mColor1}{rgb}{0.95,0.95,0.95}

%SEPP SEPP align environment dense
\setlength{\abovedisplayskip}{0pt}
\setlength{\belowdisplayskip}{0pt}
\setlength{\abovedisplayshortskip}{0pt}
\setlength{\belowdisplayshortskip}{0pt}

%SEPP SEPP figures and tables with caption on side
%\renewcommand{\qedsymbol}{}
%----------------------------------------
%COMMAND FOR DOING SIDE CAPTION.
%----------------------------------------
\newcommandx{\mycaptionminipage}[3][3=c,usedefault]{%
    \begin{minipage}[#3]{#1}%
        \ifthenelse{\equal{#3}{b}}{\captionsetup{aboveskip=0pt}}{}
        \ifthenelse{\equal{#3}{t}}{\captionsetup{belowskip=0pt}}{}
        \vspace{0pt}\centering\captionsetup{width=\textwidth} %Temporarily set caption width
        #2%
    \end{minipage}%
}%
\newcommandx{\mysidecaption}[4][4=c,usedefault]{%
    \checkoddpage%
    \ifoddpage%
        %CASE ODD PAGES
        \mycaptionminipage{\dimexpr\linewidth-#1\linewidth-\intextsep\relax}{#3}[#4]%
        \hfill%
        \mycaptionminipage{#1\linewidth}{#2}[#4]%
    \else%
        %CASE EVEN PAGES
        \mycaptionminipage{#1\linewidth}{#2}[#4]%
        \hfill%
        \mycaptionminipage{\dimexpr\linewidth-#1\linewidth-\intextsep\relax}{#3}[#4]%
    \fi%
}%

\hypersetup{
   colorlinks=true,
   linkcolor=blue,
   citecolor=green,
   urlcolor=magenta,
%   backref=page,
   pdfborder=0 0 0,
   pdftitle=RUDDER: Return Decomposition for Delayed Rewards,
   pdfsubject=reinforcement learning; delayed reward; reward redistribution; return decomposition; bias-variance; credit assignment; LSTM ,
   pdfkeywords={reinforcement learning, delayed reward, reward redistribution, return decomposition, bias-variance, credit assignment, LSTM },
   pdfauthor=anonymous,%
   %Jose A. Arjona-Medina and Michael Gillhofer and Michael Widrich and Thomas Unterthiner and Johannes Brandstetter and Sepp Hochreiter
   pdfstartview=FitH
}

\title{RUDDER: Return Decomposition for Delayed Rewards}

\author{\vspace{0.1cm} Jose A. Arjona-Medina\thanks{authors contributed equally} \quad Michael Gillhofer\footnotemark[1] \quad Michael Widrich\footnotemark[1] \\ {\vspace{0.1cm}\bf Thomas Unterthiner \quad Johannes Brandstetter \quad Sepp Hochreiter\footnotemark[2]} \\
  LIT AI Lab \\ Institute for Machine Learning\\
  Johannes Kepler University Linz, Austria \\
  \footnotemark[2]~~also at Institute of Advanced Research in Artificial Intelligence (IARAI)
  %% \author{Jose A. Arjona-Medina  \\
  %% LIT AI Lab \\ Institute for Machine Learning\\
  %% Johannes Kepler University Linz, Austria\\  
  %% arjona@ml.jku.at}
  %% \And
  %% Michael Gillhofer \\
  %% LIT AI Lab \\ Institute for Machine Learning\\
  %% Johannes Kepler University Linz, Austria\\ 
  %% gillhofer@ml.jku.at}
  %% \AND
  %% Michael Widrich \\
  %%  LIT AI Lab \\ Institute for Machine Learning\\
  %% Johannes Kepler University Linz, Austria\\ 
  %% widrich@ml.jku.at}
  %% \And
  %% Thomas Unterthiner \\
  %% LIT AI Lab \\ Institute for Machine Learning\\
  %% Johannes Kepler University Linz, Austria\\ 
  %% unterthiner@ml.jku.at}
  %% \And
  %% Johannes Brandstetter \\
  %% LIT AI Lab \\ Institute for Machine Learning\\
  %% Johannes Kepler University Linz, Austria\\ 
  %% brandstetter@ml.jku.at}
  %% \And
  %% Sepp Hochreiter \\
  %% LIT AI Lab \\ Institute for Machine Learning\\
  %% Johannes Kepler University Linz, Austria\\ 
  %% hochreiter@ml.jku.at}
}

% The \author macro works with any number of authors. There are two
% commands used to separate the names and addresses of multiple
% authors: \And and \AND.
%
% Using \And between authors leaves it to LaTeX to determine where to
% break the lines. Using \AND forces a line break at that point. So,
% if LaTeX puts 3 of 4 authors names on the first line, and the last
% on the second line, try using \AND instead of \And before the third
% author name.

% Fix pgf missing character bug
\DeclareUnicodeCharacter{2212}{-}

\begin{document}
% \nipsfinalcopy is no longer used

%\smaketitle

\maketitle

\begin{abstract}
We propose RUDDER,
a novel reinforcement learning approach
for delayed rewards in finite Markov decision processes (MDPs).
In MDPs the $Q$-values are equal to the
expected immediate reward plus the expected future rewards.
The latter are related to bias problems 
in temporal difference (TD) learning and to
high variance problems in Monte Carlo (MC) learning.
Both problems are even more severe when rewards are delayed.
RUDDER aims at making the expected future rewards
zero, which simplifies $Q$-value estimation to
computing the mean of the immediate reward.
We propose the following two new concepts to push the
expected future rewards toward zero.
(i) Reward redistribution that leads to return-equivalent
decision processes with the same optimal policies and, when optimal,
zero expected future rewards.
(ii) Return decomposition via contribution analysis which
transforms the reinforcement learning task 
into a regression task at which deep learning excels. 
On artificial tasks with delayed rewards,
RUDDER is significantly faster than 
MC and exponentially faster than Monte Carlo Tree Search (MCTS), TD($\lambda$),
and reward shaping approaches.
At Atari games, 
RUDDER on top of a Proximal Policy Optimization (PPO) baseline improves the scores, 
which is most prominent at games with delayed rewards.
Source code is available at \url{https://github.com/ml-jku/rudder} and demonstration videos at \url{https://goo.gl/EQerZV}.
\end{abstract}

\section{Introduction}

Assigning credit for a received reward to past actions
is central to reinforcement learning \cite{Sutton:18book}.
A great challenge is to learn long-term credit assignment for delayed 
rewards \cite{Ke:18,Hung:18,Hernandez-Leal:18,Sahni:18}.
Delayed rewards are often episodic or sparse
and common in real-world problems \cite{Rahmandad:09,Luoma:17}.
For Markov decision processes (MDPs), 
the $Q$-value is equal to the expected immediate reward plus 
the expected future reward.
For $Q$-value estimation,
the expected future reward leads to biases in temporal difference (TD) and
high variance in Monte Carlo (MC) learning.
For delayed rewards, 
TD requires exponentially many updates to correct the bias, 
where the number of updates is exponential in the number of delay steps.
For MC learning the number of states affected by a delayed reward can 
grow exponentially with the number of delay steps. 
(Both statements are proved after 
theorems \ref{th:AexponDecay} and \ref{th:Aaffect} in the appendix.)
An MC estimate of the expected future reward 
has to average over all possible future trajectories,
if rewards, state transitions, or policies are probabilistic.
Delayed rewards make an MC estimate much harder.
    
The main goal of our approach is 
to construct an MDP that has {\bf expected future rewards equal to zero}. 
If this goal is achieved, 
$Q$-value estimation simplifies to computing the mean of the immediate rewards.
To push the expected future rewards to zero, we require two new concepts.
The first new concept is {\bf reward redistribution} to create {\bf return-equivalent MDPs}, 
which are characterized by having the same optimal policies. 
An optimal reward redistribution should transform a delayed reward MDP into  
a return-equivalent MDP with zero expected future rewards.
However, expected future rewards equal to zero are in general not possible for MDPs.
Therefore, we introduce sequence-Markov decision processes (SDPs),
for which reward distributions need not to be Markov.
We construct a reward redistribution that leads to a return-equivalent
SDP with a second-order Markov reward distribution and 
expected future rewards that are equal to zero.
For these return-equivalent 
SDPs, $Q$-value estimation simplifies to computing the mean. 
Nevertheless, the $Q$-values or advantage functions 
can be used for learning optimal policies.
The second new concept is
{\bf return decomposition} and its realization via {\bf contribution analysis}.
This concept serves to efficiently construct a proper reward redistribution,
as described in the next section.
% [JAMnew]
Return decomposition transforms a reinforcement learning task 
into a regression task, where the sequence-wide 
return must be predicted from the whole state-action sequence.
The regression task identifies which state-action pairs
contribute to the return prediction and, therefore, receive a redistributed reward. 
Learning the regression model uses only completed episodes as training set,
therefore avoids problems with unknown future state-action trajectories.
Even for sub-optimal reward redistributions, we obtain an enormous speed-up
of $Q$-value learning if relevant reward-causing state-action pairs are identified.
We propose RUDDER (RetUrn Decomposition for DElayed Rewards) for learning with 
reward redistributions that are obtained via return decompositions.

{\bf To get an intuition} for our approach,
assume you repair pocket watches and then sell them.
For a particular brand of watch you have to decide
whether repairing pays off.
The sales price is known,
but you have unknown costs, 
i.e.\ negative rewards, caused by repair and delivery.
The advantage function is
the sales price
minus the expected immediate repair costs
minus the expected future delivery costs.
Therefore, you want to know whether the advantage function is positive.
--- Why is zeroing the expected future costs beneficial? ---
If the average delivery costs are known,
then they can be added to the repair costs
resulting in zero future costs.
Using your repairing experiences,
you just have to average over the repair costs
to know whether repairing pays off.
--- Why is return decomposition so efficient? ---
Because of pattern recognition.
For zero future costs, you have to estimate
the expected brand-related delivery costs,
which are e.g.\ packing costs.
These brand-related costs are superimposed by 
brand-independent general delivery costs 
for shipment (e.g.\ time spent for delivery).
Assume that general delivery costs
are indicated by patterns, e.g.\ weather conditions, which delay delivery.
Using a training set of completed deliveries,
supervised learning can identify these patterns
and attribute costs to them.
This is return decomposition.
In this way, only brand-related delivery costs remain 
and, therefore, can be estimated more efficiently than by MC.

{\bf Related Work.}
Our new learning algorithm is gradually changing 
the reward redistribution during learning, 
which is known as shaping \cite{Skinner:58,Sutton:18book}.
In contrast to RUDDER, potential-based shaping like 
reward shaping \cite{Ng:99}, look-ahead advice, and
look-back advice \cite{Wiewiora:03icml} use a fixed reward redistribution. 
Moreover, since these methods keep the original reward, 
the resulting reward redistribution is not optimal,
as described in the next section,
and learning can still be exponentially slow. 
A monotonic positive reward transformation \cite{Peters:07}
also changes the reward distribution but is neither assured 
to keep optimal policies nor to have expected future rewards of zero.
Disentangled rewards keep optimal policies but are neither environment 
nor policy specific, therefore can in general not achieve 
expected future rewards being zero \cite{Fu:18}.
Successor features decouple environment and policy 
from rewards, but changing the reward changes the  
optimal policies \cite{Barreto:17,Barreto:18}. 
Temporal Value Transport (TVT) uses 
an attentional memory mechanism to learn
a value function that serves as
fictitious reward \cite{Hung:18}. 
However, expected future rewards are not close to zero
and optimal policies are not guaranteed to be kept.
Reinforcement learning tasks have been changed
into supervised tasks \cite{Schaal:99,Barto:04,Schmidhuber:15}. 
For example, a model that predicts the return can supply update 
signals for a policy by sensitivity analysis. 
This is known as ``backpropagation through a model''  \cite{Munro:87,Robinson:89,RobinsonFallside:89,Werbos:90,Schmidhuber:91nips,Bakker:02,Bakker:07}.
In contrast to these approaches, 
(i) we use contribution analysis instead of 
sensitivity analysis,
and (ii) we use the whole state-action sequence to predict its
associated return.

\section{Reward Redistribution and Novel Learning Algorithms}

Reward redistribution is the main new concept
to achieve expected future rewards equal to zero.
We start by introducing MDPs,  
return-equivalent sequence-Markov decision processes (SDPs),
and reward redistributions.
Furthermore, optimal reward redistribution is defined and
novel learning algorithms based on reward redistributions 
are introduced.

\paragraph{MDP Definitions and 
           Return-Equivalent Sequence-Markov Decision Processes (SDPs).}
\label{c:def}
A finite Markov decision process (MDP) $\cP$
is 5-tuple $\cP=(\sS,\sA,\sR,p,\gamma )$ of finite sets
$\sS$ of states $s$ (random variable $S_t$ at time $t$),
$\sA$ of actions $a$ (random variable $A_t$),
and $\sR$ of rewards $r$ (random variable $R_{t+1}$).
Furthermore, $\cP$ has
transition-reward distributions
$p(S_{t+1}=s',R_{t+1}=r \mid S_t=s,A_t=a)$
conditioned on state-actions, and a discount factor $\gamma \in [0, 1]$.
The marginals are $p(r\mid s,a) = \sum_{s'}p(s',r\mid s,a)$ and 
$p(s'\mid s,a) = \sum_{r}p(s',r\mid s,a)$.
The expected reward is $r(s,a) = \sum_{r} r  p(r\mid s,a)$.
The return $G_t$ is $G_t =  \sum_{k=0}^{\infty}  \gamma^k  R_{t+k+1}$,
while for finite horizon MDPs with sequence
length $T$ and $\gamma=1$ it is $G_t = \sum_{k=0}^{T-t}  R_{t+k+1}$.
A Markov policy is given as
action distribution $\pi(A_t=a \mid S_t=s)$ conditioned on
states.
We often equip an MDP $\cP$ with a policy $\pi$ 
without explicitly mentioning it. 
The action-value function $q^{\pi}(s,a)$ for policy $\pi$ is 
$q^{\pi}(s,a) = \EXP_{\pi} \left[ G_t \mid S_t=s, A_t=a \right]$.
The goal of learning is to maximize the expected return at time $t=0$,
that is $v^{\pi}_0=\EXP_{\pi} \left[G_0\right]$.
The optimal policy $\pi^{*}$ is $\pi^{*} = \argmax_{\pi}[v_0^{\pi}]$.
A {\em sequence-Markov decision process} (SDP) is defined 
as a decision process which is equipped with a Markov policy
and has Markov transition probabilities
but a reward that is not required to be Markov.
Two SDPs $\tilde{\cP}$ and $\cP$
are {\em return-equivalent} if
(i) they differ only in their reward distribution  
and (ii) they
have the same expected return at $t=0$ for each policy $\pi$:
$\tilde{v}^{\pi}_0=v^{\pi}_0$. 
They are {\em strictly return-equivalent} if
they have the same expected return for every episode and
for each policy $\pi$.
Strictly return-equivalent SDPs are return-equivalent.
Return-equivalent SDPs have the same optimal policies.
For more details see Section~\ref{sec:AReturnEquivalent} in the appendix.

\paragraph{Reward Redistribution.}
Strictly return-equivalent SDPs $\tilde{\cP}$ and $\cP$
can be constructed by reward redistributions.
A {\em reward redistribution} given an SDP $\tilde{\cP}$ 
is a procedure that redistributes 
for each sequence $s_0,a_0,\ldots,s_T,a_T$ the realization of 
the sequence-associated return variable 
$\tilde{G}_0 = \sum_{t=0}^{T}  \tilde{R}_{t+1}$ 
or its expectation along the sequence. 
Later we will introduce a reward redistribution that 
depends on the SDP $\tilde{\cP}$.
The reward redistribution creates a new SDP $\cP$ with
the redistributed reward $R_{t+1}$ at time $(t+1)$ and the return variable
$G_0 = \sum_{t=0}^{T} R_{t+1}$. 
A reward redistribution is second order Markov if
the redistributed reward $R_{t+1}$ depends only on $(s_{t-1},a_{t-1},s_t,a_t)$. 
If the SDP $\cP$ is obtained from the 
SDP $\tilde{\cP}$ by reward redistribution, 
then $\tilde{\cP}$ and $\cP$ are strictly return-equivalent.
The next theorem states that the optimal policies are still the same 
for $\tilde{\cP}$ and $\cP$ (proof after Section Theorem~S2). 
\begin{theorem}
\label{th:EquivT}
Both the SDP $\tilde{\cP}$ 
with delayed reward $\tilde{R}_{t+1}$
and the SDP $\cP$ with redistributed reward $R_{t+1}$ 
have the same optimal policies.
\end{theorem}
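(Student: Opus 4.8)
The plan is to reduce the claim to the chain of equivalences already assembled in this section, so that the theorem follows from a short argument about the argmax defining the optimal policy. The excerpt records three facts I may invoke: (a) if $\cP$ is obtained from $\tilde{\cP}$ by a reward redistribution, then the two SDPs are strictly return-equivalent; (b) strictly return-equivalent SDPs are return-equivalent; and (c) return-equivalence means $\tilde{v}^{\pi}_0 = v^{\pi}_0$ for every policy $\pi$. Given these, the entire task is to show that equality of the objective $v^{\pi}_0$ across all admissible policies forces the two processes to share their maximizers.

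First I would make return-equivalence explicit at the level of episodes. A reward redistribution, by definition, only reallocates the realization of the sequence return $\tilde{G}_0 = \sum_{t=0}^{T} \tilde{R}_{t+1}$ (or its conditional expectation) along an episode $\tau = (s_0,a_0,\ldots,s_T,a_T)$, while leaving the total untouched; hence $\sum_{t=0}^{T} R_{t+1}$ and $\sum_{t=0}^{T} \tilde{R}_{t+1}$ carry the same conditional expectation given each fixed $\tau$, which is exactly strict return-equivalence. Crucially, a redistribution changes only the reward distribution, so the Markov transition probabilities and the Markov policy $\pi$ --- and therefore the probability $p_{\pi}(\tau)$ assigned to each episode --- are identical in $\cP$ and $\tilde{\cP}$.

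Next I would combine these two observations by writing the objective as an average over episodes,
\begin{equation}
v^{\pi}_0 \ = \ \EXP_{\pi}\!\left[ G_0 \right] \ = \ \sum_{\tau} p_{\pi}(\tau) \ \EXP\!\left[ G_0 \mid \tau \right] .
\end{equation}
Both the episode weights $p_{\pi}(\tau)$ and the conditional expected returns $\EXP[G_0 \mid \tau]$ coincide between $\cP$ and $\tilde{\cP}$, so $v^{\pi}_0 = \tilde{v}^{\pi}_0$ holds for every Markov policy $\pi$. Since the two SDPs are equipped with the same class of Markov policies and take the same objective value on each of them, they have the same set of maximizers; in particular $\pi^{*} = \argmax_{\pi}[v^{\pi}_0] = \argmax_{\pi}[\tilde{v}^{\pi}_0]$, which is precisely the assertion that the optimal policies agree.

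I expect the main obstacle to lie not in the argmax step, which is immediate once the objectives coincide, but in pinning down strict return-equivalence cleanly. One has to treat both admissible forms of redistribution --- conserving the realized return pathwise versus conserving only its expectation --- and verify that each yields the same conditional expected return per episode. One must also check that the non-Markov nature of the redistributed reward $R_{t+1}$ causes no difficulty here, precisely because $v^{\pi}_0$ depends only on the episode-total return (with $\gamma = 1$ over the finite horizon $T$), which is insensitive to how the reward mass is shifted within an episode. Confirming that the admissible policy class is genuinely shared --- both SDPs carry Markov policies and identical Markov transitions by construction --- closes the remaining gap.
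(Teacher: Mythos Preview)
Your proposal is correct and follows essentially the same route as the paper: redistribution $\Rightarrow$ strict return-equivalence $\Rightarrow$ return-equivalence $\Rightarrow$ identical objective $v^{\pi}_0$ for every $\pi$ $\Rightarrow$ identical $\argmax$. The paper packages the middle steps into separate propositions, while you unpack the episode-level decomposition $v^{\pi}_0 = \sum_{\tau} p_{\pi}(\tau)\,\EXP[G_0 \mid \tau]$ explicitly, but the argument is the same.
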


\paragraph{Optimal Reward Redistribution with 
           Expected Future Rewards Equal to Zero.}
We move on to the main goal of this paper: 
to derive an SDP via reward redistribution 
that has expected future rewards equal to zero and, therefore,
no delayed rewards.
At time $(t-1)$ the immediate reward is $R_t$ with expectation
$r(s_{t-1},a_{t-1})$.
We define the expected future rewards $\kappa(m,t-1)$ at time $(t-1)$
as the expected sum of future rewards from $R_{t+1}$ to $R_{t+1+m}$.
\begin{definition}
For $1 \leq t\leq T$ and $0\leq m\leq T-t$,  
the expected sum of delayed rewards at time $(t-1)$ 
in the interval $[t+1,t+m+1]$ is defined as
$\kappa(m,t-1)  =  \EXP_{\pi} \left[ \sum_{\tau=0}^{m} R_{t+1+\tau}  
  \mid s_{t-1}, a_{t-1} \right]$.
\end{definition}
For every time point $t$, the expected future rewards $\kappa(T-t-1,t)$ 
given $(s_t,a_t)$
is the expected sum of future rewards until sequence end, 
that is, in the interval $[t+2,T+1]$. 
For MDPs, the Bellman equation for $Q$-values becomes
$q^\pi(s_t,a_t) =   r(s_t,a_t)  + \kappa(T-t-1,t)$.
We aim to derive an MDP with $\kappa(T-t-1,t)=0$,
which gives $q^\pi(s_t,a_t) =  r(s_t,a_t)$.
In this case, learning the $Q$-values 
simplifies to estimating the expected immediate reward
$r(s_t,a_t) = \EXP \left[ R_{t+1} \mid s_t,a_t\right]$.
Hence, the reinforcement learning task reduces to computing 
the mean, e.g.\ the arithmetic mean, for each
state-action pair $(s_t,a_t)$.
A reward redistribution is defined to be {\em optimal},  
if $\kappa(T-t-1,t) = 0$ for $0\leq t \leq T-1$.
In general, an optimal reward redistribution violates
the Markov assumptions and the Bellman equation does
not hold (proof after Theorem~\ref{th:Aviolate} in the appendix).
Therefore, we will consider SDPs in the following.
The next theorem states that 
a delayed reward MDP $\tilde{\cP}$ 
with a particular policy $\pi$
can be transformed into a return-equivalent SDP $\cP$
with an optimal reward redistribution.
\begin{theorem}
\label{th:zeroExp}
We assume a delayed reward MDP $\tilde{\cP}$, 
where the accumulated reward is given at sequence end.
A new SDP $\cP$ is obtained by a 
second order Markov reward redistribution,
which ensures that $\cP$ is return-equivalent to $\tilde{\cP}$.
For a specific $\pi$, the following two
statements are equivalent:
(I) ~~$\kappa(T-t-1,t) = 0$, i.e.\ the reward redistribution is optimal, 
\begin{flalign}
\label{eq:diffQ}
\text{(II)~~}  \EXP \left[ R_{t+1} \mid s_{t-1},a_{t-1},s_t,a_t \right] 
   \ &= \ \tilde{q}^\pi(s_t,a_t) \ - \
    \tilde{q}^\pi(s_{t-1},a_{t-1}) \ .&& 
\end{flalign}
An optimal reward redistribution
fulfills for $1 \leq t\leq T$ and $0\leq m\leq T-t$:
  $\kappa(m,t-1)= 0$. 
\end{theorem}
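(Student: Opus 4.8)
The plan is to route everything through a single structural fact about the delayed-reward MDP $\tilde{\cP}$: since the accumulated reward is paid only at sequence end, $\tilde{q}^{\pi}(s_k,a_k) = \EXP_{\pi}[\tilde{R}_{T+1}\mid s_k,a_k]$, so the process $(\tilde{q}^{\pi}(s_k,a_k))_k$ is a martingale along trajectories and obeys the tower identity $\EXP_{\pi}[\tilde{q}^{\pi}(s_{k+1},a_{k+1})\mid s_k,a_k] = \tilde{q}^{\pi}(s_k,a_k)$. This identity is the engine for both directions and for the closing claim. First I would record a one-step recursion for $\kappa$: conditioning on the intermediate pair $(s_t,a_t)$ and using that transitions and the policy are Markov while the redistributed reward is second-order Markov, the tower rule yields $\kappa(m,t-1) = \EXP_{\pi}[R_{t+1}\mid s_{t-1},a_{t-1}] + \EXP_{\pi}[\kappa(m-1,t)\mid s_{t-1},a_{t-1}]$ with base case $\kappa(0,t-1)=\EXP_{\pi}[R_{t+1}\mid s_{t-1},a_{t-1}]$. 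The only thing to verify is that, given $(s_t,a_t)$, the rewards from $R_{t+2}$ onward are conditionally independent of $(s_{t-1},a_{t-1})$, which follows from the Markov transitions and the second-order-Markov reward.

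For the direction (II)$\Rightarrow$(I) I would write $\kappa(T-t-1,t)=\EXP_{\pi}[\sum_{j=t+2}^{T+1}R_j\mid s_t,a_t]$, substitute (II) into each summand, and telescope; the martingale identity then collapses the sum to $\EXP_{\pi}[\tilde{q}^{\pi}(s_T,a_T)\mid s_t,a_t]-\tilde{q}^{\pi}(s_t,a_t)=0$, which is exactly (I).

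The converse (I)$\Rightarrow$(II) is the crux. Feeding optimality into the recursion peels off one reward at a time: since both $\kappa(T-t-1,t)$ and $\EXP_{\pi}[\kappa(T-t-2,t+1)\mid s_t,a_t]$ vanish (optimality applied twice, with the boundary case $t=T-1$ read off directly), the recursion forces $\EXP_{\pi}[R_{t+2}\mid s_t,a_t]=0$, i.e.\ $\EXP_{\pi}[R_{t+1}\mid s_{t-1},a_{t-1}]=0$ for every $t$. Then I would bring in strict return equivalence, $\sum_{j=1}^{T+1}R_j=\tilde{R}_{T+1}$, to write $\tilde{q}^{\pi}(s_{t-1},a_{t-1})=\EXP_{\pi}[\sum_{j=1}^{T+1}R_j\mid s_{t-1},a_{t-1}]$; splitting at $t$ and discarding the future part (zero by optimality together with the per-step identity just derived) identifies the past expectation $B_{t-1}:=\EXP_{\pi}[\sum_{j=1}^{t}R_j\mid s_{t-1},a_{t-1}]$ with $\tilde{q}^{\pi}(s_{t-1},a_{t-1})$. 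Conditioning the same return-equivalence identity on all four arguments $(s_{t-1},a_{t-1},s_t,a_t)$, collapsing the past sum to $B_{t-1}$ and the future sum to $0$ by the Markov property, gives $\tilde{q}^{\pi}(s_t,a_t)=B_{t-1}+\EXP_{\pi}[R_{t+1}\mid s_{t-1},a_{t-1},s_t,a_t]$, which rearranges to (II). I expect the main obstacle to be precisely this identification $B_{t-1}=\tilde{q}^{\pi}(s_{t-1},a_{t-1})$: statement (I) by itself constrains only full-horizon reward sums, so pinning down a per-transition expected reward forces me to couple (I) with strict return equivalence at the level of conditional expectations and with the second-order-Markov structure.

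Finally, the closing assertion that $\kappa(m,t-1)=0$ for all $1\le t\le T$ and $0\le m\le T-t$ follows once (II) is available: the same telescoping gives $\kappa(m,t-1)=\EXP_{\pi}[\tilde{q}^{\pi}(s_{t+m},a_{t+m})\mid s_{t-1},a_{t-1}]-\tilde{q}^{\pi}(s_{t-1},a_{t-1})$, and because $t+m\le T$ the martingale identity makes this vanish.
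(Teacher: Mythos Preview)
Your proposal is correct and follows essentially the same route as the paper: both directions hinge on the martingale identity $\EXP_{\pi}[\tilde{q}^{\pi}(s_{k+1},a_{k+1})\mid s_k,a_k]=\tilde{q}^{\pi}(s_k,a_k)$ (the Bellman equation with zero immediate reward in $\tilde{\cP}$), strict return equivalence, the second-order Markov structure of the redistributed reward, and a telescoping sum. The only organizational difference is that for (I)$\Rightarrow$(II) the paper conditions on the full prefix $(s_0,a_0,\ldots,s_t,a_t)$, making the past reward sum deterministically equal to $\sum_{\tau\le t} h_\tau$ and then differencing, whereas you condition on the four arguments $(s_{t-1},a_{t-1},s_t,a_t)$ and invoke past--future conditional independence to reduce the past sum to $B_{t-1}$; both are equivalent, and your detour through the $\kappa$-recursion and the per-step identity $\EXP_{\pi}[R_{t+1}\mid s_{t-1},a_{t-1}]=0$ is sound but not strictly needed, since optimality directly gives $\kappa(T-t,t-1)=0$ for the whole future block.
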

The proof can be found after Theorem~\ref{th:AzeroExp} in the appendix.
Equation $\kappa(T-t-1,t)= 0$ implies that the new SDP $\cP$
has no delayed rewards, that is, 
$\EXP_{\pi} \left[ R_{t+1+\tau}  \mid  s_{t-1}, a_{t-1} \right] =  0$,
for $0\leq \tau \leq T-t-1$ 
(Corollary~\ref{th:ApropDelay} in the appendix).
The SDP $\cP$ has no delayed rewards since no state-action pair
can increase or decrease the expectation of a future reward.
Equation~\eqref{eq:diffQ} shows that for an optimal reward redistribution
the expected reward has to be the difference of
consecutive $Q$-values of the original delayed reward.
The optimal reward redistribution is
second order Markov since the expectation of $R_{t+1}$ at time 
$(t+1)$ depends on $(s_{t-1},a_{t-1},s_t,a_t)$.

% SEPP new
The next theorem states the major advantage of an
optimal reward redistribution:
$\tilde{q}^\pi(s_t,a_t)$ can be estimated with an offset that 
depends only on $s_t$ 
by estimating the expected immediate redistributed reward.
Thus, $Q$-value estimation becomes trivial and the
the advantage function of the MDP $\tilde{\cP}$ can be readily computed.
% SEPP end new
\begin{theorem}
\label{th:OptReturnDecomp}
If the reward redistribution is 
optimal, then the $Q$-values 
of the SDP $\cP$ are given by 
\begin{align}
\label{eq:qvalue}
   q^\pi(s_t,a_t) \ &= \   r(s_t,a_t) \ = \  
   \tilde{q}^\pi(s_t,a_t) \ - \ 
    \EXP_{s_{t-1},a_{t-1}} \left[ \tilde{q}^\pi(s_{t-1},a_{t-1}) \mid s_t \right]
    \ = \ \tilde{q}^\pi(s_t,a_t) \ - \ \psi^\pi(s_t) \ .
\end{align} \vspace{-0.7cm}

The SDP $\cP$ and the original MDP $\tilde{\cP}$ 
have the same advantage function.
% SEPP new
Using a behavior policy 
$\breve{\pi}$ the expected immediate reward is
\begin{align}
\label{eq:behavior}
   \EXP_{\breve{\pi}} \left[ R_{t+1} \mid s_t,a_t \right] \ &= \
    \tilde{q}^\pi(s_t,a_t) \ - \ \psi^{\pi,\breve{\pi}}(s_t) \ .
  \end{align}
% SEPP end new
\end{theorem}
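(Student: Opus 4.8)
The plan is to prove the three claims in sequence, each leaning on the optimality characterization of Theorem~\ref{th:zeroExp}. I would begin with the identity $q^\pi(s_t,a_t) = r(s_t,a_t)$. Regardless of whether the redistributed reward is Markov, the $Q$-value of the SDP $\cP$ still decomposes as $q^\pi(s_t,a_t) = r(s_t,a_t) + \kappa(T-t-1,t)$, i.e.\ the expected immediate reward plus the expected future reward. Since the redistribution is optimal, Theorem~\ref{th:zeroExp} gives $\kappa(T-t-1,t) = 0$, so $q^\pi(s_t,a_t) = r(s_t,a_t)$ at once.

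The substantive step is to evaluate $r(s_t,a_t) = \EXP_{\pi}[R_{t+1}\mid s_t,a_t]$. By Theorem~\ref{th:zeroExp}, the second-order Markov redistributed reward satisfies $\EXP[R_{t+1}\mid s_{t-1},a_{t-1},s_t,a_t] = \tilde{q}^\pi(s_t,a_t) - \tilde{q}^\pi(s_{t-1},a_{t-1})$. Applying the tower property by conditioning on the predecessor pair,
\begin{align*}
r(s_t,a_t) \ = \ \EXP_{s_{t-1},a_{t-1}\mid s_t,a_t}\left[\tilde{q}^\pi(s_t,a_t) - \tilde{q}^\pi(s_{t-1},a_{t-1})\right] \ = \ \tilde{q}^\pi(s_t,a_t) \ - \ \EXP_{s_{t-1},a_{t-1}\mid s_t,a_t}\left[\tilde{q}^\pi(s_{t-1},a_{t-1})\right].
\end{align*}
I would then argue that the conditioning on $(s_t,a_t)$ collapses to conditioning on $s_t$ alone, so the residual expectation equals $\psi^\pi(s_t)$, which gives Equation~\eqref{eq:qvalue}.

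The remaining two claims follow quickly. For the advantage function, because $\psi^\pi(s_t)$ depends only on $s_t$ and not on $a_t$, taking the $\pi$-expectation over $a_t$ yields $v^\pi_{\cP}(s_t) = \tilde{v}^\pi(s_t) - \psi^\pi(s_t)$, so the offset cancels in $q^\pi(s_t,a_t) - v^\pi_{\cP}(s_t) = \tilde{q}^\pi(s_t,a_t) - \tilde{v}^\pi(s_t)$, which is exactly the advantage function of $\tilde{\cP}$. For the behavior policy, the redistributed reward is a fixed function of $(s_{t-1},a_{t-1},s_t,a_t)$, so its conditional expectation given these four arguments is unchanged under $\breve{\pi}$; only the distribution of the predecessor pair shifts. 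Repeating the tower-property computation with the past drawn under $\breve{\pi}$ then produces Equation~\eqref{eq:behavior} with $\psi^{\pi,\breve{\pi}}(s_t) = \EXP_{s_{t-1},a_{t-1}\mid s_t;\breve{\pi}}[\tilde{q}^\pi(s_{t-1},a_{t-1})]$.

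The main obstacle is the conditional-independence reduction $(s_{t-1},a_{t-1}) \perp a_t \mid s_t$ that justifies collapsing the conditioning. Here I expect to spend the most care: since the policy is Markov, $a_t \sim \pi(\cdot\mid s_t)$ depends on $s_t$ alone, so $p(s_{t-1},a_{t-1}\mid s_t,a_t) = p(s_{t-1},a_{t-1}\mid s_t)$, which is precisely what makes the offset a function of $s_t$ only. This property is the linchpin of the whole argument: it makes $\psi^\pi$ (and, under $\breve{\pi}$, $\psi^{\pi,\breve{\pi}}$) well-defined as a function of $s_t$, and it is exactly the $a_t$-independence that forces the advantage-function identity in the previous paragraph.
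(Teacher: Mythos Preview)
Your proposal is correct and follows essentially the same route as the paper: both compute $r(s_t,a_t)$ by the tower property over the predecessor pair using the characterization from Theorem~\ref{th:zeroExp}, both reduce the posterior $p(s_{t-1},a_{t-1}\mid s_t,a_t)$ to $p(s_{t-1},a_{t-1}\mid s_t)$ via the Markov-policy factorization (the paper writes this out explicitly with Bayes' rule), and both derive the advantage-function equality and the behavior-policy formula exactly as you outline. The only cosmetic difference is that the paper obtains $q^\pi(s_t,a_t)=r(s_t,a_t)$ by a direct telescoping computation of $\EXP_\pi[\sum_{\tau} R_{t+1+\tau}\mid s_t,a_t]$ rather than by invoking $\kappa(T-t-1,t)=0$, but the two arguments are interchangeable.
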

The proof can be found after Theorem~\ref{th:AOptReturnDecomp} in the appendix. 
If the reward redistribution is not optimal, then 
$\kappa(T-t-1,t)$ measures the deviation of the $Q$-value 
from $r(s_t,a_t)$. 
This theorem justifies several learning methods based on
reward redistribution presented in the next paragraph.

% SEPP new
\paragraph{Novel Learning Algorithms Based on Reward Redistributions.}
\label{c:novel}
We assume $\gamma=1$ and a finite horizon or an absorbing state
original MDP $\tilde{\cP}$ with delayed rewards.
For this setting we introduce new reinforcement learning algorithms.
They are gradually changing 
the reward redistribution during learning and are
based on the estimations in Theorem~\ref{th:OptReturnDecomp}.
These algorithms are also valid for non-optimal reward redistributions,
since the optimal policies are kept (Theorem~\ref{th:EquivT}).
% SEPP end new 
Convergence of RUDDER learning
can under standard assumptions be proven by the stochastic 
approximation for 
two time-scale update rules \cite{Borkar:97,Karmakar:17}.
Learning consists of an LSTM and a $Q$-value update.
%SEPP: \cite{Borkar:06, Borkar:08}
Convergence proofs to an optimal policy are 
difficult, since locally stable attractors 
may not correspond to optimal policies.

% SEPP new
According to Theorem~\ref{th:EquivT}, reward redistributions
keep the optimal policies. 
Therefore, even non-optimal reward redistributions ensure correct learning. 
However, an optimal reward redistribution speeds up learning considerably.
Reward redistributions can be combined 
with methods that use $Q$-value ranks or advantage functions.
We consider 
(A) $Q$-value estimation, 
(B) policy gradients, and 
(C) $Q$-learning.
Type (A) methods estimate $Q$-values and are divided 
into variants (i), (ii), and (iii).
Variant (i) assumes an optimal reward redistribution
and estimates $\tilde{q}^\pi(s_t,a_t)$ with an offset
depending only on $s_t$.
The estimates are based on Theorem~\ref{th:OptReturnDecomp}
either by on-policy direct $Q$-value estimation according to Eq.~\eqref{eq:qvalue}
or by off-policy immediate reward estimation according to Eq.~\eqref{eq:behavior}.
Variant (ii) methods assume a non-optimal reward redistribution and 
correct Eq.~\eqref{eq:qvalue} by estimating $\kappa$.
Variant (iii) methods use eligibility traces for the redistributed reward.
RUDDER learning can be based on policies like
``greedy in the limit with infinite exploration'' (GLIE) or
``restricted rank-based randomized'' (RRR) \cite{Singh:00}. 
GLIE policies change toward greediness with respect to the $Q$-values
during learning.
% SEPP end new
For more details on these learning approaches 
see Section~\ref{sec:QestimateA} in the apendix.

Type (B) methods replace in the expected updates  
$\EXP_{\pi}\left[ \nabla_{\theta} \log \pi(a \mid s;\Bth)  
q^\pi(s,a) \right]$ of policy gradients the value $q^\pi(s,a)$ 
by an estimate of $r(s,a)$ or by
a sample of the redistributed reward. 
The offset $\psi^{\pi}(s)$ in Eq.~\eqref{eq:qvalue}
% SEPP new
or  $\psi^{\pi,\breve{\pi}}(s)$ in Eq.~\eqref{eq:behavior}
% SEPP end new
reduces the variance as baseline normalization does. 
These methods can be extended to Trust Region Policy
Optimization (TRPO) \cite{Schulman:15icml} as used in 
Proximal Policy Optimization (PPO) \cite{Schulman:17}.
The type (C) method is $Q$-learning with the redistributed reward. 
Here, $Q$-learning is justified if
immediate and future reward are drawn together,
as typically done.

\section{Constructing Reward Redistributions by Return Decomposition}

We now propose methods to construct reward redistributions. 
Learning with non-optimal reward redistributions {\em does work} since the 
optimal policies do not change according to Theorem~\ref{th:EquivT}.
However, reward redistributions that are optimal considerably speed up learning,
since future expected rewards introduce 
biases in TD methods and high variances in MC methods.
The expected optimal redistributed reward is 
the difference of $Q$-values according to Eq.~\eqref{eq:diffQ}. 
The more a reward redistribution deviates from these differences,
the larger are the absolute $\kappa$-values and, in turn, the less optimal
the reward redistribution gets.
Consequently, to construct a reward redistribution which is close to optimal
we aim at identifying the largest $Q$-value differences.

\paragraph{Reinforcement Learning as Pattern Recognition.}
We want to transform the reinforcement learning problem into
a pattern recognition task to employ deep learning approaches.
The sum of the $Q$-value differences gives the 
difference between expected return at sequence begin and
the expected return at sequence end (telescope sum).
Thus, $Q$-value differences allow to predict the 
expected return of the whole state-action sequence.
Identifying the largest $Q$-value differences 
reduces the prediction error most.
$Q$-value differences are assumed to be associated with
patterns in state-action transitions.
The largest $Q$-value differences 
are expected to be found more frequently in sequences
with very large or very low return.
The resulting task is to predict the expected return
from the whole sequence and identify which 
state-action transitions have contributed the most to the prediction.
This pattern recognition task serves to
construct a reward redistribution, where the redistributed reward
corresponds to the different contributions.
The next paragraph shows how the return is decomposed and redistributed
along the state-action sequence.

\paragraph{Return Decomposition.}
\label{para:returnDecomposition}
The {\em return decomposition} idea is 
that a function $g$ predicts the expectation
of the return
for a given state-action sequence (return for the whole sequence).
The function $g$ is neither a value nor an action-value function
since it predicts the expected return when the whole sequence is given.
With the help of $g$ either the predicted value or
the realization of the return is redistributed over
the sequence. 
A state-action pair receives as redistributed reward
its contribution to the prediction, which
is determined by contribution analysis.
% [JAMnew]
We use contribution analysis since sensitivity analysis has serious drawbacks:
local minima, instabilities, exploding or vanishing
gradients, and proper exploration
\cite{Hochreiter:90,Schmidhuber:90diff}.
The major drawback is that
the relevance of actions is missed
since sensitivity analysis does not consider the contribution of actions to 
the output,
but only their effect on the output when slightly perturbing them.
Contribution analysis
determines how much a state-action pair contributes to the final prediction.
%[JAMnew]
We can use any contribution analysis method, but
we specifically consider three methods:
(A) differences of return predictions,
(B) integrated gradients (IG) \cite{Sundararajan:17}, and
(C) layer-wise relevance propagation (LRP) \cite{Bach:15}.
For (A), $g$ must try to predict the  
sequence-wide return at every time step.
The redistributed reward is given by 
the difference of consecutive predictions. 
The function $g$ can be decomposed into
past, immediate, and future contributions to the return.
Consecutive predictions share the same past and the same
future contributions except for two immediate state-action pairs.
Thus, in the difference of consecutive predictions 
contributions cancel except for the two immediate state-action pairs.
Even for imprecise predictions of future contributions to the return, 
contribution analysis is more precise, 
since prediction errors cancel out.
Methods (B) and (C) rely on information later in the sequence for
determining the contribution and thereby may introduce a non-Markov reward.
The reward can be viewed to be probabilistic
but is prone to have high variance. 
Therefore, we prefer method (A).

\paragraph{Explaining Away Problem.}
\label{para:explainingAway}
We still have to tackle the problem that reward causing actions do not receive redistributed rewards
since they are explained away by later states.
To describe the problem, assume an MDP $\tilde{\cP}$ with the only 
reward at sequence end.
To ensure the Markov property, states in $\tilde{\cP}$ have to store 
the reward contributions of previous state-actions;
e.g.\ $s_T$ has to store all previous contributions such that the expectation $\tilde{r}(s_T,a_T)$
is Markov.
The explaining away problem is that later states
are used for return prediction, while reward causing
earlier actions are missed.
To avoid explaining away,
we define a difference function $\Delta(s_{t-1},a_{t-1},s_t,a_t)$
between a state-action pair $(s_t,a_t)$ and 
its predecessor $(s_{t-1},a_{t-1})$.
That $\Delta$ is a function of $(s_t,a_t,s_{t-1},a_{t-1})$
is justified by Eq.~\eqref{eq:diffQ}, which ensures that such
$\Delta$s allow an optimal reward redistribution.
The sequence of differences is 
$\Delta_{0:T} := \big(\Delta(s_{-1},a_{-1},s_0,a_0),\ldots,
 \Delta(s_{T-1},a_{T-1},s_T,a_T)\big)$. 
The components $\Delta$ are assumed 
to be statistically independent from each other, therefore
$\Delta$ cannot store reward contributions of previous $\Delta$.
The function $g$ should predict the return by 
$g(\Delta_{0:T}) = \tilde{r}(s_T,a_T)$ and
can be decomposed into $g(\Delta_{0:T}) = \sum_{t=0}^T h_t$. 
The contributions are
$h_t = h(\Delta(s_{t-1},a_{t-1},s_t,a_t))$
for $0 \leq t \leq T$.
For the redistributed rewards $R_{t+1}$, we ensure 
$\EXP \left[ R_{t+1} \mid s_{t-1},a_{t-1},s_t,a_t \right]  =  h_t$.
The reward $\tilde{R}_{T+1}$ of $\tilde{\cP}$
is probabilistic and 
the function $g$ might not be perfect,
therefore neither $g(\Delta_{0:T}) = \tilde{r}_{T+1}$ for the return
realization $\tilde{r}_{T+1}$ nor
$g(\Delta_{0:T}) = \tilde{r}(s_T,a_T)$ for the expected return
holds.
Therefore, we need to introduce the compensation
$\tilde{r}_{T+1} \ - \ 
  \sum_{\tau=0}^T h(\Delta(s_{\tau-1},a_{\tau-1},s_{\tau},a_{\tau}))$
as an extra reward $R_{T+2}$ at time $T+2$
to ensure strictly return-equivalent SDPs.
If $g$ was perfect, then it would predict the expected return which
could be redistributed.
The new redistributed rewards
$R_{t+1}$ are based on the return decomposition, since they 
must have the contributions $h_t$ as mean:\newline
$\EXP \left[ R_1 \mid s_0,a_0 \right]   =  h_0$ , 
$\EXP \left[ R_{t+1} \mid s_{t-1},a_{t-1},s_t,a_t \right] 
   =  h_t, 0 < t \leq T$  , 
$R_{T+2}  = \tilde{R}_{T+1}  -  \sum_{t=0}^T h_t$,
where the realization $\tilde{r}_{T+1}$ is replaced by its
random variable $\tilde{R}_{T+1}$.
If the prediction of $g$ is perfect, then we can
redistribute the expected return via the prediction.
Theorem~\ref{th:zeroExp} holds also for
the correction $R_{T+2}$ (see Theorem~\ref{th:AzeroExpCorr} in the appendix). 
A $g$ with zero prediction errors 
results in an optimal reward redistribution.
Small prediction errors lead to reward redistributions 
close to an optimal one.

\paragraph{RUDDER: Return Decomposition using LSTM.}
RUDDER uses a Long Short-Term Memory (LSTM) network for 
return decomposition and the resulting reward redistribution.
RUDDER consists of three phases. 
{\bf (I) Safe exploration.}
Exploration sequences should generate LSTM training samples 
with delayed rewards by avoiding 
low $Q$-values during a particular time interval. 
Low $Q$-values hint at states where the agent gets stuck. 
Parameters comprise starting time, length, and $Q$-value threshold.
{\bf (II) Lessons replay buffer for training the LSTM.}
If RUDDER's safe exploration discovers 
an episode with unseen delayed rewards,
it is secured in a lessons replay buffer \cite{Lin:93}. 
Unexpected rewards are indicated by a large prediction error of the LSTM.
For LSTM training, episodes with larger errors are sampled more often 
from the buffer, similar to prioritized
experience replay \cite{Schaul:15}.
{\bf (III) LSTM and return decomposition.}
An LSTM learns to predict sequence-wide return at 
every time step and, thereafter, 
return decomposition uses differences of return predictions
(contribution analysis method (A)) to construct a reward redistribution. 
For more details see Section~\ref{sec:ALSTMadjust} in the appendix.

\paragraph{Feedforward Neural Networks (FFNs) vs.\ LSTMs.}
In contrast to LSTMs, 
FNNs are not suited for processing sequences. 
Nevertheless, FNNs can learn a action-value function, which 
enables contribution analysis by 
differences of predictions. 
However, this leads to serious problems by spurious contributions
that hinder learning.
For example, any contributions would be incorrect
if the true expectation of the return did not change. 
Therefore, prediction errors might falsely cause contributions 
leading to spurious rewards. 
FNNs are prone to such prediction errors since they
have to predict the expected return
again and again from each different state-action pair and 
cannot use stored information.
In contrast, the LSTM is less prone to produce spurious 
rewards:
(i) The LSTM will only learn to store information 
if a state-action pair has a strong evidence 
for a change in the expected return. 
If information is stored, then internal states and,
therefore, also the predictions change, otherwise the predictions
stay unchanged.
Hence, storing events receives a contribution and
a corresponding reward, while by default nothing is stored and 
no contribution is given.
(ii) The LSTM tends to have smaller prediction errors since it can 
reuse past information for predicting the expected return. 
For example, key events can be stored.
(iii) Prediction errors of LSTMs are much more likely to cancel 
via prediction differences than those of FNNs. 
Since consecutive predictions of LSTMs rely on the same
internal states, they usually have highly correlated errors.

\paragraph{Human Expert Episodes.} They are an alternative to 
exploration and can 
serve to fill the lessons replay buffer.
Learning can be sped up considerably when LSTM identifies
human key actions.
Return decomposition will reward human key actions even for episodes
with low return since other actions that thwart high
returns receive negative reward.
Using human demonstrations in reinforcement learning
led to a huge improvement on some Atari
games like Montezuma's Revenge \cite{Pohlen:18,Aytar:18}.

\paragraph{Limitations.}
In all of the experiments reported in this manuscript, we show that RUDDER significantly outperforms other methods for delayed reward problems. However, RUDDER might not be effective when the reward is not delayed since LSTM learning takes extra time and has problems with very long sequences. Furthermore, reward redistribution may introduce disturbing spurious reward signals.

\section{Experiments}
\label{sec:Mexperiments}
RUDDER is evaluated on three artificial tasks with delayed rewards.
These tasks are designed to show problems of TD, MC, 
and potential-based reward shaping.
RUDDER overcomes these problems.
Next, we demonstrate that RUDDER also works for more complex tasks
with delayed rewards.
Therefore, we compare RUDDER with 
a Proximal Policy Optimization (PPO) baseline
on 52 Atari games. 
All experiments use finite time horizon or 
absorbing states MDPs with $\gamma=1$ and reward at episode end.
For more information see Section~\ref{sec:Aexp} in the appendix.

{\bf Artificial Tasks (I)--(III).} 
Task (I) shows that TD methods have problems with vanishing information 
for delayed rewards.
Goal is to learn that a delayed reward is
larger than a distracting immediate reward.
Therefore, the correct expected future reward must be  
assigned to many state-action pairs. 
Task (II) is a variation of the introductory pocket watch example
with delayed rewards.
It shows that MC methods have problems with the 
high variance of future unrelated rewards.
The expected future reward that is caused by the first action
has to be estimated.
Large future rewards that are not associated 
with the first action impede MC estimations. 
Task (III) shows that potential-based reward shaping methods have
problems with delayed rewards.
For this task, only the first two actions are relevant, to which
the delayed reward has to be propagated back.

The tasks have different delays, 
are tabular ($Q$-table),
and use an $\epsilon$-greedy policy with $\epsilon = 0.2$.
We compare RUDDER, MC, and TD($\lambda$) 
on all tasks, and Monte Carlo Tree Search (MCTS) on task (I).
Additionally, on task (III), %RUDDER, $Q$($\lambda$), 
SARSA($\lambda$) and reward shaping are compared.
We use $\lambda=0.9$ as suggested \cite{Sutton:18book}.
Reward shaping methods are the original method, look-forward advice,
and look-back advice with three different potential functions.
RUDDER uses an LSTM without output and forget gates,
no lessons buffer, and no safe exploration. 
For all tasks contribution analysis is performed 
with difference of return predictions.  
A $Q$-table is learned 
by an exponential moving average of 
the redistributed reward (RUDDER's $Q$-value estimation) 
or by $Q$-learning. 
Performance is measured by 
the learning time to achieve 90\% of the maximal expected return.
A Wilcoxon signed-rank test determines 
the significance of performance differences between RUDDER
and other methods.

{\bf(I) Grid World} 
shows problems of TD methods with delayed rewards.
The task illustrates a time bomb that
explodes at episode end.
The agent has to defuse the bomb
and then run away as far as possible since
defusing fails with a certain probability.
Alternatively, the agent can immediately run away, 
which, however, leads to less reward on average.
The Grid World is a $31\times 31$ grid with
{\em bomb} at coordinate $[30,15]$ and
{\em start} at $[30-d, 15]$, where $d$ is the delay of the task.
The agent can move 
{\em up}, {\em down}, {\em left}, and {\em right} as long as
it stays on the grid.
At the end of the episode, after $\floor*{1.5 d}$ steps, 
the agent receives a reward of 1000
with probability of 0.5, 
if it has visited {\em bomb}.
At each time step, the agent receives 
an immediate reward of $c\cdot t \cdot h$, 
where $c$ depends on the chosen action, 
$t$ is the current time step, and $h$ is 
the Hamming distance to {\em bomb}.
Each move toward the {\em bomb}, 
is immediately penalized with $c=-0.09$. 
Each move away from the {\em bomb}, 
is immediately rewarded with $c=0.1$. 
The agent must learn the $Q$-values precisely to
recognize that directly running away is not optimal.
Figure~\ref{fig:test}(I) shows the 
learning times 
to solve the task vs.\ the delay of the reward averaged over 100 trials.
For all delays, RUDDER is significantly faster than all other methods
with $p$-values $<10^{-12}$.
Speed-ups vs.\ MC and MCTS,
suggest to be exponential with delay time.
RUDDER is exponentially faster with increasing 
delay than $Q$($\lambda$), 
supporting Theorem~\ref{th:AexponDecay} in the appendix. 
{\bf RUDDER significantly outperforms all other methods.}

\begin{figure}[!t]%{0.95\textheight}
 \centering%
 \resizebox{\linewidth}{!}{%
    \input{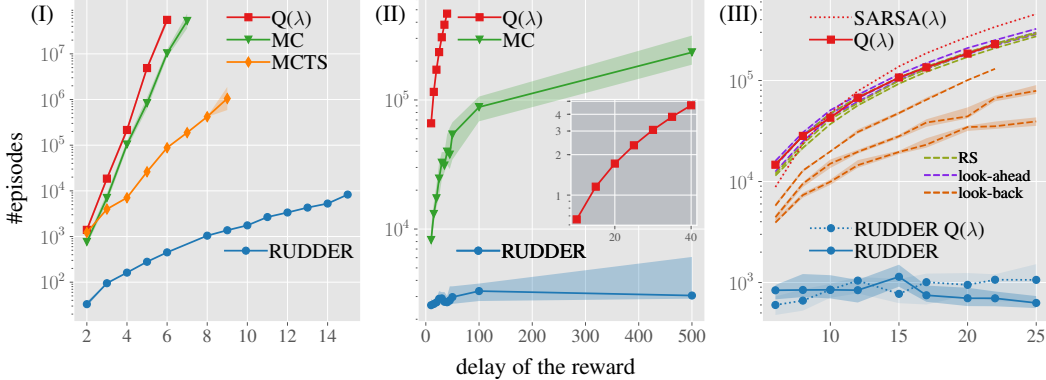}% Horizontal 3x1 (tight)
 }
\caption{Comparison of RUDDER and other methods on artificial tasks 
with respect to the learning time in episodes (median of 100 trials)
vs.\ the delay of the reward.
The shadow bands indicate the $40\%$ and $60\%$ quantiles.
In (II), the y-axis of the inlet is scaled by $10^{5}$.
In (III), reward shaping (RS), 
look-ahead advice (look-ahead), 
and look-back advice (look-back) use three different potential functions.
In (III), the dashed blue line
represents RUDDER with $Q$($\lambda$), 
in contrast to RUDDER with $Q$-estimation. 
In all tasks, RUDDER significantly outperforms all other methods.\label{fig:test}}%
\end{figure}%

{\bf (II) The Choice}
shows problems of MC methods with delayed rewards.
This task has probabilistic state transitions, which
can be represented as a tree with states as nodes.
The agent traverses the tree from the root (initial state) 
to the leafs (final states).
At the root, the agent has to choose between 
the left and the right subtree,
where one subtree has a higher expected reward.
Thereafter, it traverses the tree randomly 
according to the transition probabilities. 
Each visited node adds its fixed share to the final reward. 
The delayed reward is given as accumulated shares at a leaf.
The task is solved when 
the agent always chooses the subtree with higher expected reward.
Figure~\ref{fig:test}(II) shows the 
learning times 
to solve the task vs.\ the delay of the reward averaged over 100 trials.
For all delays, RUDDER is significantly faster than all other methods
with $p$-values $<10^{-8}$.
The speed-up vs.\ MC, 
suggests to be exponential with delay time.
RUDDER is exponentially faster with increasing delay 
than $Q$($\lambda$), 
supporting Theorem~\ref{th:AexponDecay} in the appendix. 
{\bf RUDDER significantly outperforms all other methods.}

{\bf (III) Trace-Back} 
shows problems of potential-based reward shaping methods
with delayed rewards.
We investigate how fast information about delayed rewards 
is propagated back by
RUDDER, $Q$($\lambda$), SARSA($\lambda$), 
and potential-based reward shaping.
MC is skipped since it does not 
transfer back information.
The agent can move in a 15$\times$15 grid
to the 4 adjacent positions as long as it remains on the grid.
Starting at $(7,7)$, the number of moves per episode is $T=20$. 
The optimal policy moves the agent up in $t=1$ and  
right in $t=2$, 
which gives immediate reward of $-50$ at $t=2$, 
and a delayed reward of 150 at the end $t=20=T$. 
Therefore, the optimal return is 100.
For any other policy, 
the agent receives only an immediate reward of 50 at $t=2$.
For $t\leq 2$, state transitions are deterministic, while
for $t>2$ they are uniformly distributed and 
independent of the actions.
Thus, the return does not depend on actions at $t>2$.
We compare RUDDER, original reward shaping, 
look-ahead advice, and look-back advice. 
As suggested by the authors, we use SARSA instead of $Q$-learning 
for look-back advice. 
We use three different potential functions for reward shaping,
which are all based on the reward redistribution 
(see appendix). 
At $t=2$, there is a distraction since the immediate 
reward is $-50$ for the optimal and 50 for other actions. 
RUDDER is significantly faster than all other methods
with $p$-values $<10^{-17}$.
Figure~\ref{fig:test}(III) shows the 
learning times averaged over 100 trials.
{\bf RUDDER is exponentially faster than 
all other methods and significantly outperforms them.}

\paragraph{Atari Games.}%{\bf Atari Games.} 
\label{para:Atari}
\label{c:Atari}
RUDDER is evaluated with respect to its learning time and 
achieves scores on Atari games of the
Arcade Learning Environment (ALE) \cite{Bellemare:13}
and OpenAI Gym \cite{Brockman:16}.
RUDDER is used on top of the TRPO-based \cite{Schulman:15icml} 
policy gradient method PPO that uses GAE \cite{Schulman:15}.
Our PPO baseline differs from the original 
PPO baseline \cite{Schulman:17} in two aspects.
(i) Instead of using the sign function of the rewards, 
rewards are scaled by their current maximum.
In this way, the ratio between different rewards
remains unchanged and the advantage of large delayed rewards
can be recognized. 
(ii) The safe exploration strategy of RUDDER is used.
The entropy coefficient is replaced by 
Proportional Control \cite{Bolton:15,Berthelot:17}. 
A coarse hyperparameter optimization is performed for the PPO baseline.
For all 52 Atari games, RUDDER uses 
the same architectures, losses, and hyperparameters, 
which were optimized for the baseline.
The only difference to the PPO baseline is that
the policy network predicts the value function 
of the redistributed reward to integrate reward redistribution
 into the PPO framework.
Contribution analysis uses an LSTM with
differences of return predictions.
Here $\Delta$ is the pixel-wise
difference of two consecutive frames augmented 
with the current frame.
LSTM training and reward redistribution are restricted to
sequence chunks of 500 frames.
Source code is provided upon publication.

%\newlength{\oldintextsep}
%\setlength{\oldintextsep}{\intextsep}
%\setlength\intextsep{5pt}

% \begin{wraptable}{r}{0pt}
% %\begin{center}
% %\hfill
% \begin{tabular}{lrrrr}
% \toprule
% & R & B & delay & delay-event\\
% Bowling  & 192 & 56 & 200 & strike pins\\
% Solaris  & 1827 & 616 & 122 & navigate map\\
% Venture  & 1350 & 820 & 150 & find treasure\\
% % Gravitar  & 2233 & 1838 & 91 & travel/land\\
% % Frostbite  & 7439 & 5092 & 91 & build igloo\\
% Seaquest  & 4770 & 1616 & 272 & collect divers\\
% \bottomrule
% \end{tabular}
% %\end{center}
% \caption{Avg.\ scores for RUDDER (R) and PPO baseline (B)
% over 10 trials and 3 random seeds for 
% delayed reward Atari games.
% "delay": Frames between reward and first related action.  
% RUDDER considerably improves the baseline on delayed reward games.%
% \label{tab:atarires}}%
% \end{wraptable}
% \setlength\intextsep{\oldintextsep}

%\begin{wraptable}{r}{0pt}

%
%\hfill

\begin{table}
\begin{center}
\begin{tabular}{lcccr}
\toprule
& RUDDER & baseline & delay & delay-event\\
Bowling  & 192 & 56 & 200 & strike pins\\
Solaris  & 1,827 & 616 & 122 & navigate map\\
Venture  & 1,350 & 820 & 150 & find treasure\\
% Gravitar  & 2233 & 1838 & 91 & travel/land\\
% Frostbite  & 7439 & 5092 & 91 & build igloo\\
Seaquest  & 4,770 & 1,616 & 272 & collect divers\\
\bottomrule
\\ 
\end{tabular}
%\end{center}
\caption{Average scores over 3 random seeds with 10 trials each for 
delayed reward Atari games.
"delay": frames between reward and first related action.  
RUDDER considerably improves the PPO baseline on delayed reward games.%
\label{tab:atarires}}%
\end{center}
\end{table}
%\end{wraptable}
%\setlength\intextsep{\oldintextsep}

Policies are trained with no-op starting condition
for 200M game frames using every 4th frame.
Training episodes end with losing a life 
or at maximal 108K frames.
All scores are averaged over 3 different random seeds 
for network and ALE initialization.
We asses the performance by the learning time and
the achieved scores. 
First, we compare RUDDER to the baseline by
average scores per game throughout training, 
to assess learning speed \cite{Schulman:17}.
For 32 (20) games RUDDER (baseline) learns on average faster.
%of which 5/2 games have a difference $\geq 100\%$.
%{\bf RUDDER learns in 62\% of the games faster than the PPO baseline.}
Next, we compare the average scores of the last 10 training games.
For 29 (23) games RUDDER (baseline) has higher average scores.
%of which 9/4 games have a difference $\geq 100\%$.
%{\bf RUDDER has in 56\% of the games higher scores than the PPO baseline.}
In the majority of games RUDDER, improves the scores of the PPO baseline.
To compare RUDDER and the baseline on Atari games that are 
characterize by delayed rewards, 
we selected the games Bowling, Solaris, Venture, and Seaquest.
In these games, high scores are achieved by learning the delayed reward, 
while learning the immediate reward and 
extensive exploration (like for Montezuma's revenge)
is less important.
The results are presented in Table~\ref{tab:atarires}.
For more details and further results see Section~\ref{sec:Aatari} in the appendix.
Figure~\ref{fig:bowlingExample}
displays how RUDDER redistributes rewards to key events 
in Bowling.
{\bf At delayed reward Atari games,
RUDDER considerably increases the scores compared to the PPO baseline.}

\begin{figure}[htp]
\begin{center}
\includegraphics[angle=0,width=0.7\textwidth]{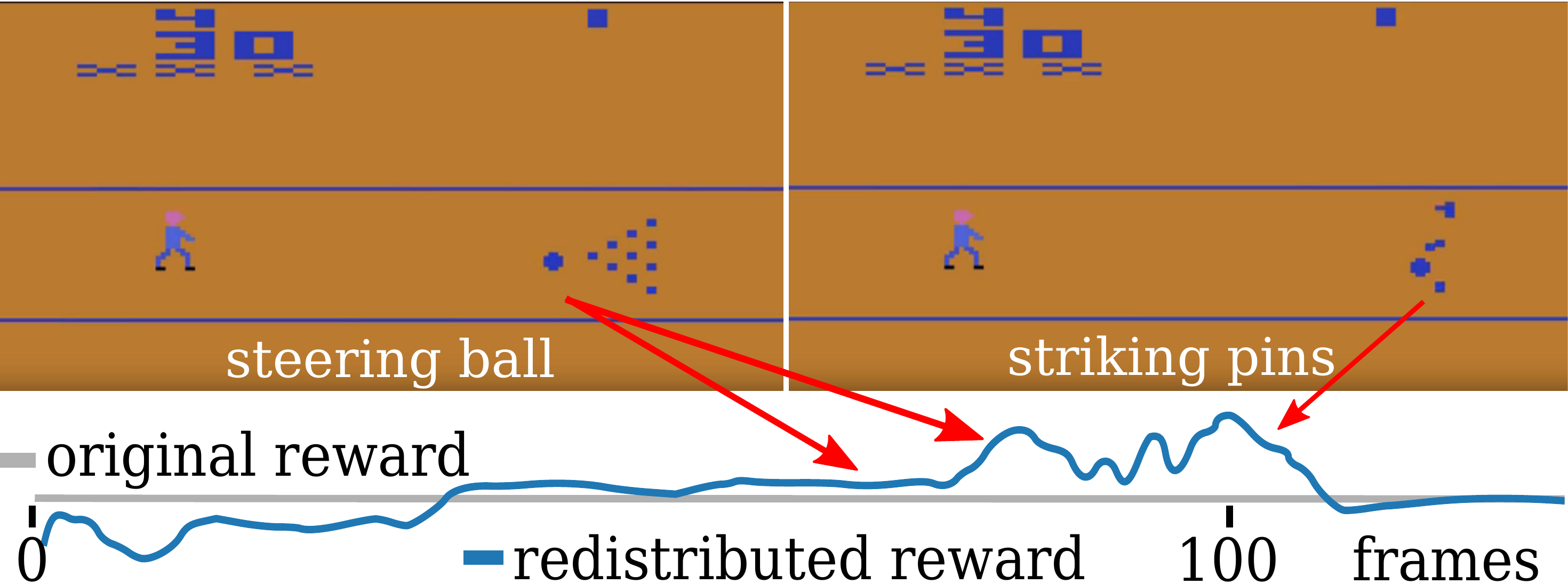} 
\caption{RUDDER redistributes rewards to key events in the Atari game Bowling.
Originally, rewards are delayed and only given at episode end.
The first 120 out of 200 frames of the episode are shown.
RUDDER identifies key actions that steer the ball to hit all pins.
\label{fig:bowlingExample}}
\end{center}
\end{figure}

%\mysidecaption{0.5}{%
%                    \captionof{figure}{

%RUDDER redistributes rewards to key events in the Atari game Bowling.
%Originally, rewards are delayed and only given at episode end.
%The first 120 out of 200 frames of the episode are shown.
%RUDDER identifies key actions that steer the ball to hit all pins.
%\label{fig:bowlingExample}}%
%%                %}{ \includegraphics[angle=0,width=\textwidth]{\figpath rr_examples/bowling_v3.pdf} 
%                }{ \includegraphics[angle=0,width=\textwidth]{\figpath rr_examples/bowling_v6.pdf} 
%                }[t]%
%\begin{figure}[htp]

% \mysidecaption{0.7}{%
%                     \captionof{figure}{Return decomposition by RUDDER in the Atari game Bowling, where[t+1]
% RUDDER redistributes rewards to key events.
% This game is characterized by long delayed rewards, since
% reward is only given after multiple rolls at the end of the episode.
% RUDDER identifies the actions that 
% steer the ball in the right direction to hit all pins. 
% The first 100 frames of this 200 frames episode are depicted, where 
% reward is given only after 200 frames.
% \label{fig:bowlingExample}}%
%                 }{ \includegraphics[angle=0,width=\textwidth]{\figpath rr_examples/bowling_v3.pdf} 
%                 }[t]%
% %\begin{figure}[htp]

%\caption{}
%\end{figure}
%\end{wrapfigure}

{\bf Conclusion.} 
We have introduced RUDDER, a novel reinforcement learning 
algorithm based on the new concepts of reward redistribution
and return decomposition.
On artificial tasks,
RUDDER significantly outperforms TD($\lambda$), MC, MCTS and reward shaping methods,
while on Atari games it improves a PPO baseline on average but 
most prominently on long delayed rewards games.

\subsubsection*{Acknowledgments}
This work was supported by 
NVIDIA Corporation,
%Bayer AG with Research Agreement 09/2017,
Merck KGaA, 
%Zalando SE with Research Agreement 01/2016, 
Audi.JKU Deep Learning Center, 
Audi Electronic Venture GmbH, 
Janssen Pharmaceutica (madeSMART), 
TGW Logistics Group,
ZF Friedrichshafen AG, UCB S.A.,
FFG grant 871302,
LIT grant DeepToxGen and AI-SNN,
%IWT research grant IWT150865 (Exaptation), 
%LIT grant LSTM4Drive, and 
and
FWF grant P 28660-N31.

\subsubsection*{References}
References are provided in Section~\ref{sec:Areferences} in the appendix.

%\bibliographystyle{apalike} 
%\bibliographystyle{plain}
%\bibliographystyle{icml2019}
%\bibliographystyle{unsrt} 
%\bibliography{lrp}

%%%%%%%%%%%%%%%%%%%%%%%%%%%%%%%%%%%%%%%%%%%%%%%%%%%%%%%%%%%%%%%%%%
%%%%%%%%%%%%%%%%%%%%%%%%%%%%%%%%%%%%%%%%%%%%%%%%%%%%%%%%%%%%%%%%%%
%%%%%%%%%%%%%%%%%%%%%%%%%%%%%%%%%%%%%%%%%%%%%%%%%%%%%%%%%%%%%%%%%%
%%%%%%%%%%%%%%%%%%%%%%%%%%%%%%%%%%%%%%%%%%%%%%%%%%%%%%%%%%%%%%%%%%
%%%%%%%%%%%%%%%%%%%%%%%%%%%%%%%%%%%%%%%%%%%%%%%%%%%%%%%%%%%%%%%%%%

\newpage

\begin{appendices}

\newlength{\auxparskip}
\setlength{\auxparskip}{\parskip} 
\setlength{\parskip}{0pt}
\tableofcontents

\newpage

\renewcommand{\thesection}{A\arabic{section}}
\renewcommand{\thefigure}{A\arabic{figure}}
\renewcommand{\thetable}{A\arabic{table}}
\renewcommand{\theequation}{A\arabic{equation}}

% Reset counter
\setcounter{figure}{0}
\setcounter{table}{0}
\setcounter{equation}{0}

\setcounter{theorem}{0}
\setcounter{definition}{0}
\setcounter{corollary}{0}
\setcounter{proposition}{0}
\setcounter{lemma}{0}

\maketitle
\section{Definition of Finite Markov Decision Processes}
\label{sec:AMDP}

We consider a finite Markov decision process (MDP) $\cP$,
which is a 5-tuple $\cP=(\sS,\sA,\sR,p,\gamma )$:
\begin{itemize}
\item $\sS$ is a finite set of states;
  $S_t$ is the random variable
  for states at time $t$ with value
  $s \in \sS$. $S_t$ has a discrete probability distribution.
\item    $\sA$  is a finite set of actions (sometimes state-dependent
  $\sA(s)$);  $A_t$ is the random variable
  for actions at time $t$ with value
  $a \in \sA$. $A_t$ has a discrete probability distribution.
\item $\sR$  is a finite set of rewards; $R_{t+1}$ is the random variable
  for rewards at time $(t+1)$ with value
  $r \in \sR$. $R_t$ has a discrete probability distribution.
\item $p(S_{t+1}=s',R_{t+1}=r \mid S_t=s,A_t=a)$ are the
  transition and reward distributions over states and rewards, respectively, conditioned on
  state-actions,
\item  $\gamma \in [0, 1]$ is a discount factor for the reward.
\end{itemize}
The Markov policy $\pi$ is a distribution over
actions given the state: $\pi(A_t=a \mid S_t=s)$.
We often equip an MDP $\cP$ with a policy $\pi$ 
without explicitly mentioning it. 
At time $t$, the random variables give the states, actions, and
rewards of the MDP, while low-case letters give possible values.
At each time $t$, the environment is in some state $s_t \in \sS$. The policy
$\pi$ takes an action $a_t \in \sA$, which causes a transition of
the environment to state $s_{t+1}$ and a reward $r_{t+1}$ for the
policy.
Therefore, the MDP creates a sequence
\begin{align}
\left(S_0,A_0,R_1,S_1,A_1,R_2,S_2,A_2,R_3,\ldots \right) \ .
\end{align}
The marginal probabilities for 
\begin{align}
p(s',r\mid s,a) \ &= \ \PR\left[S_{t+1}=s',R_{t+1}=r \mid S_t=s,A_t=a
  \right] \   
\end{align}
are:
\begin{align}
  p(r\mid s,a) \ &= \ \PR\left[R_{t+1}=r \mid S_t=s,A_t=a
  \right] \ = \ \sum_{s'}p(s',r\mid s,a) \ , \\
  p(s'\mid s,a) \ &= \ \PR\left[S_{t+1}=s' \mid S_t=s,A_t=a
  \right] \ = \ \sum_{r}p(s',r\mid s,a) \ .
\end{align}

We use a sum convention: $\sum_{a,b}$ goes over all possible values of
$a$ and $b$, that is, all combinations which fulfill the constraints
on $a$ and $b$. If $b$ is a function of $a$ (fully determined by $a$),
then $\sum_{a,b}=\sum_{a}$.

We denote expectations:
\begin{itemize}
\item $\EXP_{\pi}$ is the expectation where
 the random variable is an MDP sequence of states, actions, and rewards
 generated with policy $\pi$.
\item $\EXP_{s}$ is the expectation where
 the random variable is $S_t$ with values $s \in \sS$.
\item $\EXP_{a}$ is the expectation where
 the random variable is $A_t$ with values $a \in \sA$.
\item $\EXP_{r}$ is the expectation where
 the random variable is $R_{t+1}$ with values $r \in \sR$.
\item $\EXP_{s,a,r,s',a'}$ is the expectation where
  the random variables are $S_{t+1}$ with values $s' \in \sS$,
  $S_t$ with values $s \in \sS$, $A_t$ with values $a \in \sA$, 
  $A_{t+1}$ with values $a' \in \sA$,  and $R_{t+1}$ with values $r \in \sR$.
  If more or fewer random variables are used, the notation is
  consistently adapted. 
\end{itemize}

The return $G_t$ is the accumulated reward starting from $t+1$:
\begin{align}
  G_t \ &= \ \sum_{k=0}^{\infty}  \gamma^k \ R_{t+k+1} \ .
\end{align}
The discount factor $\gamma$ determines how much immediate rewards are
favored over more delayed rewards.
For $\gamma=0$ the return (the objective) is determined
as the largest expected immediate reward, while
for $\gamma=1$ the return is determined by the
expected sum of future rewards if the sum exists.

\paragraph{State-Value and Action-Value Function.}
The state-value function $v^{\pi}(s)$ for policy $\pi$ and state $s$
is defined as
\begin{align}
  v^{\pi}(s) \ &= \ \EXP_{\pi} \left[ G_t \mid S_t=s \right]
   \ = \ \EXP_{\pi} \left[\sum_{k=0}^{\infty}  \gamma^k
     \ R_{t+k+1} \mid S_t=s \right] \ .  
\end{align}
Starting at $t=0$:
\begin{align}
  v_0^{\pi} \ &= \ \EXP_{\pi} \left[\sum_{t=0}^{\infty} \gamma^{t}
    \ R_{t+1}\right] \ = \ \EXP_{\pi} \left[G_0\right] \ ,
\end{align}
the optimal state-value function $v_{*}$ and policy $\pi_{*}$ are
\begin{align}
  v_{*}(s) \ &= \ \max_{\pi} v^{\pi}(s) \ , \\
 \pi_{*} \ &= \ \arg\max_{\pi} v^{\pi}(s) \ \text{ for all } s \ .
\end{align}

The action-value function $q^{\pi}(s,a)$ for policy $\pi$ is the
expected return when starting from $S_t=s$, taking action $A_t=a$,
and following policy $\pi$:
\begin{align}
  q^{\pi}(s,a) \ &= \ \EXP_{\pi} \left[ G_t \mid S_t=s, A_t=a \right]
   \ = \ \EXP_{\pi} \left[\sum_{k=0}^{\infty}  \gamma^k
   \ R_{t+k+1}  \mid S_t=s, A_t=a \right] \ .  
\end{align}
The optimal action-value function $q_{*}$ and policy $\pi_{*}$ are
\begin{align}
  q_{*}(s,a) \ &= \ \max_{\pi} q^{\pi}(s,a) \ , \\
  \pi_{*} \ &= \ \arg\max_{\pi} q^{\pi}(s,a) \ \text{ for all } (s,a) \ .
\end{align}
The optimal action-value function $q_{*}$ can be expressed via the optimal value
function  $v_{*}$:
\begin{align}
  q_{*}(s,a) \ &= \ \EXP \left[ R_{t+1} \ + \ \gamma \ v_{*}(S_{t+1})
   \mid S_t=s,A_t=a\right] \ .  
\end{align}
The optimal state-value function $v_{*}$
can be expressed via the optimal action-value
function  $q_{*}$ using the optimal policy $\pi_{*}$: 
\begin{align}
  v_{*}(s) \ &= \ \max_{a} q^{\pi_{*}}(s,a) \ = \ 
   \max_{a} \EXP_{\pi_{*}} \left[ G_t \mid S_t=s,A_t=a\right] \ = \\ \nonumber
   &\max_{a} \EXP_{\pi_{*}} \left[ R_{t+1} \ + \
       \gamma \ G_{t+1}\mid S_t=s,A_t=a\right]  \ = \\ \nonumber
   &\max_{a} \EXP \left[ R_{t+1} \ + \ \gamma \ v_{*}(S_{t+1}) \mid S_t=s,A_t=a\right] \ .  
\end{align}

\paragraph{Finite time horizon and no discount.}
We consider a {\bf finite} time horizon, that is, we consider only
episodes of length $T$, but may receive reward $R_{T+1}$ at episode end at time $T+1$.
The finite time horizon MDP creates a sequence
\begin{align}
\left(S_0,A_0,R_1,S_1,A_1,R_2,S_2,A_2,R_3,\ldots,S_{T-1},A_{T-1},R_T,S_T,A_T,R_{T+1}\right) \ .
\end{align}
Furthermore, we do not discount future rewards, that is, we set $\gamma=1$.
The return $G_t$ from time $t$ to $T$ is the sum of rewards:
\begin{align}
    G_t \ &= \   \sum_{k=0}^{T-t}  R_{t+k+1} \ .
\end{align}

The state-value function $v$ for policy $\pi$ is
\begin{align}
  v^{\pi}(s) \ &= \ \EXP_{\pi} \left[ G_t \mid S_t=s \right]
  \ = \ \EXP_{\pi} \left[\sum_{k=0}^{T-t}  R_{t+k+1}
  \mid S_t=s \right] 
\end{align}
and the action-value function $q$ for policy $\pi$ is
\begin{align}
  \label{eq:bellmanA}
  q^{\pi}(s,a) \ &= \ \EXP_{\pi} \left[ G_t \mid S_t=s, A_t=a \right]
  \ = \ \EXP_{\pi} \left[\sum_{k=0}^{T-t}   R_{t+k+1}
    \mid S_t=s, A_t=a \right] \\  \nonumber &= \
  \EXP_{\pi} \left[R_{t+1} \ + \   G_{t+1} \mid S_t=s, A_t=a  \right]
  \\ \nonumber &= \
  \sum_{s',r} p(s',r \mid s,a) \left[r \ + \  \sum_{a'}
    \pi(a'\mid s') \ q^{\pi}(s',a')  \right] \ .  
\end{align}

From the Bellman equation Eq.~\eqref{eq:bellmanA}, we obtain:
\begin{align}
  \label{eq:bellmanExp}
  \sum_{s'} p(s'\mid s,a) \ \sum_{a'} \pi(a' \mid s') \ q^\pi(s',a') \ &= \ 
  q^\pi(s,a) \ - \ \sum_{r} r \ p(r \mid s,a)  \ , \\
  \EXP_{s',a'} \left[ q^\pi(s',a') \mid s,a \right] \
  &= \  q^\pi(s,a) \ - \   r(s,a)  \ .
\end{align} 

The expected return at time $t=0$ for policy $\pi$ is
\begin{align}
  v_0^{\pi} \ &= \   \EXP_{\pi} \left[G_0\right]  \ = \ \EXP_{\pi}
  \left[\sum_{t=0}^{T} R_{t+1}\right] \ , \\ \nonumber
  \pi^{*} \ &= \ \underset{\pi}{\argmax} \ v_0^{\pi}  \ .
\end{align}
The agent may start in a particular starting state $S_0$ which is
a random variable. Often $S_0$ has only one value $s_0$.

\paragraph{Learning.}
The {\bf goal} of learning is to find the policy $\pi^{*}$ that
maximizes the expected future discounted reward (the return)
if starting at $t=0$. Thus, the optimal policy $\pi^{*}$ is
\begin{align}
\pi^{*} \ &= \ \underset{\pi}{\argmax} \ v_0^{\pi}  \ .
\end{align}
We consider two learning approaches for $Q$-values: Monte Carlo and
temporal difference. 

\paragraph{Monte Carlo (MC).}
To estimate $q^\pi(s,a)$, MC computes the arithmetic mean of all observed
returns $(G_t \mid S_t=s,A_t=a)$ in the data.
When using Monte Carlo for learning a policy we use an 
exponentially weighted arithmetic mean since the policy steadily changes.

For the $i$th update Monte Carlo tries to minimize $\frac{1}{2}M(s_t,a_t)^2$ with the residual $M(s_t,a_t)$
\begin{align}
\label{eq:mc-error}
  M(s_t,a_t) \ &= \ (q^\pi)^i(s_t,a_t) \ - \
   \sum_{\tau=0}^{T-t-1} \gamma^{\tau}  r_{t+1+\tau} \ ,
\end{align}
such that the update of the action-value $q$ at state-action $(s_t,a_t)$ is
\begin{align}
\label{eq:AMCpolicy}
 (q^\pi)^{i+1}(s_t,a_t) \ &= \ (q^\pi)^i(s_t,a_t) \ - \
 \alpha \ M(s_t,a_t) \ .
\end{align} 
This update is called {\em constant-$\alpha$ MC} \cite{Sutton:18book}.
 
\paragraph{Temporal difference (TD) methods.}
TD updates are based on the Bellman equation.
If $r(s,a)$ and
$\EXP_{s',a'} \left[\hat{q}^\pi(s', a') \mid s,a\right]$ have been estimated,
the $Q$-values can be updated according to the Bellman equation:
\begin{align}
  \left(\hat{q}^\pi\right)^{\nn}(s,a) \ &= \
  r(s,a) \ + \ \gamma \
  \EXP_{s',a'} \left[\hat{q}^\pi(s', a') \mid s,a\right]\ . 
\end{align}
The update is applying the Bellman operator with estimates
$\EXP_{s',a'} \left[\hat{q}^\pi(s', a') \mid s,a\right]$ and
$r(s,a)$ to $\hat{q}^\pi$ to obtain
$\left(\hat{q}^\pi\right)^{\nn}$.
The new estimate $\left(\hat{q}^\pi\right)^{\nn}$ is closer to
the fixed point $q^\pi$ of the Bellman operator,
since the Bellman operator is a contraction
(see Section~\ref{sec:ApropPolyCon2}
and Section~\ref{sec:ApropPolyCon}).

Since the estimates $\EXP_{s',a'} \left[\hat{q}^\pi(s', a') \mid
  s,a\right]$ and $r(s,a)$ are not known,
TD methods try to minimize $\frac{1}{2}B(s,a)^2$ with the Bellman residual $B(s,a)$:
\begin{align}
\label{eq:bellman-error}
  B(s,a) \ &= \ \hat{q}^\pi(s,a) \ - \ r(s,a) \ - \
  \gamma \ \EXP_{s',a'} \left[\hat{q}^\pi(s', a')\right] \ .
\end{align}
TD methods use an estimate $\hat{B}(s,a)$ of $B(s,a)$ and a learning
rate $\alpha$ to make an update
\begin{align}
  \hat{q}^\pi(s,a)^\nn \ \leftarrow \  \hat{q}^\pi(s,a) \ - \ \alpha \
  \hat{B}(s,a) \ .
\end{align}
For all TD methods $r(s,a)$ is estimated by $R_{t+1}$ and $s'$ by
$S_{t+1}$, while
$\hat{q}^\pi(s', a')$ does not change with the
current sample, that is, it is fixed for the estimate.
However, the sample determines which $(s',a')$ is chosen.
The TD methods differ in how they select $a'$.
{\bf SARSA}~\cite{Rummery:94} selects $a'$ by sampling from the policy:
\begin{align} \nonumber
  \EXP_{s',a'} \left[\hat{q}^\pi(s', a')\right] \ &\approx \
  \hat{q}^\pi(S_{t+1}, A_{t+1})
\end{align}
and {\bf expected SARSA}~\cite{John:94} averages over selections
\begin{align} \nonumber
  \EXP_{s',a'} \left[\hat{q}^\pi(s', a')\right] \ &\approx \
  \sum_{a} \pi(a \mid S_{t+1}) \ \hat{q}^\pi(S_{t+1}, a).
\end{align}
It is possible to estimate $r(s,a)$ separately via an unbiased 
minimal variance estimator
like the arithmetic mean and then 
perform TD updates with the Bellman error
using the estimated $r(s,a)$ \cite{Romoff:18}.
{\bf $\BQ$-learning} \cite{Watkins:89} is  an
off-policy TD algorithm which is proved
to converge \cite{Watkins:92,Dayan:92}. The proofs were later generalized 
\cite{Jaakkola:94,Tsitsiklis:94}.
$Q$-learning uses
\begin{align}
  \EXP_{s',a'} \left[\hat{q}^\pi(s', a')\right] \ &\approx \
  \max_{a}  \hat{q}(S_{t+1},a) \ .
\end{align}
The action-value function $q$, which is learned by $Q$-learning, approximates $q_{*}$
independently of the policy that is followed. More precisely,
with $Q$-learning $q$ converges with probability 1 to the optimal $q_{*}$.
However, the policy still
determines which state-action pairs are encountered during learning.
The convergence only requires that all action-state pairs are visited and
updated infinitely often.

% \section{Reward Redistribution, Return-Equivalent MDPs, Novel Learning Algorithms, 
%          Return Decomposition, Contribution Analysis}
% WIDI:
\section{Reward Redistribution, Return-Equivalent SDPs, Novel Learning Algorithms, 
         and Return Decomposition}
\label{c:RR}

\subsection{State Enriched MDPs}
\label{sec:AStateEnriched}

For MDPs with a delayed reward the
states have to code the reward. 
However, for an immediate reward the
states can be made more compact by removing the reward information.
For example, states with memory of a delayed reward can be mapped
to states without memory.
Therefore, in order to compare MDPs, we introduce the concept of homomorphic MDPs.
We first need to define a partition of a set induced by a function.
Let $B$ be a partition of a set $X$. For any $x \in X$,
we denote $[x]_B$ the block of $B$ to which $x$ belongs. 
Any function $f$ from a set $X$ to a set $Y$ 
induces a partition (or equivalence relation) on $X$, 
with $[x]_f = [x']_f$ if and only if $f(x) = f(x')$.
We now can define homomorphic MDPs.
\begin{definitionA}[Ravindran and Barto \cite{Ravindran:01,Ravindran:03}]
An MDP homomorphism $h$ from
an MDP $\cP=(\sS,\sA,\sR,p,\gamma)$
to an MDP
$\tilde{\cP}=(\tilde{\sS},\tilde{\sA},\tilde{\sR},\tilde{p},\tilde{\gamma})$
is a a tuple of surjections $(f,g_1,g_2,\ldots,g_n)$ ($n$ is number of
states), with $h(s,a)=(f(s),g_s(a))$,
where $f: \sS \to \tilde{\sS}$ and $g_s: \sA_s \to
\tilde{\sA}_{f(s)}$ for $s \in \sS$ ($\sA_s$ are the admissible actions in state $s$
and $\tilde{\sA}_{f(s)}$ are the admissible actions in state $\tilde{s}$).
Furthermore, for all $s,s' \in \sS, a \in \sA_s$:
\begin{align}
\tilde{p}(f(s') \mid f(s),g_s(a)) \ &= \ \sum_{s'' \in [s']_f} p(s''
\mid s,a) \ , \\
\tilde{p}(\tilde{r} \mid f(s),g_s(a)) \ &= \ p(r \mid s,a) \ .
\end{align}
We use $[s]_f=[s']_f$ if and only if $f(s)=f(s')$.
\end{definitionA}

We call
$\tilde{\cP}$ the {\em homomorphic image} of $\cP$ under $h$.
For homomorphic images the optimal $Q$-values and the optimal 
policies are the same. 
\begin{lemmaA}[Ravindran and Barto \cite{Ravindran:01}]
\label{th:Arav}
If $\tilde{\cP}$ is a homomorphic image of $\cP$, then
the optimal $Q$-values are the same and
a policy that is optimal in $\tilde{\cP}$ can be transformed to
an optimal policy in $\cP$ by normalizing the number of actions $a$
that are mapped to the same action $\tilde{a}$.
\end{lemmaA}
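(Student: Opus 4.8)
The plan is to prove both claims by pulling the optimal action-value function of the image MDP back along the homomorphism and checking that it solves the Bellman optimality equation of $\cP$. Write $\tilde q_*$ for the optimal action-value function of $\tilde\cP$ and set $\hat q(s,a):=\tilde q_*(f(s),g_s(a))$ as the candidate optimal $Q$-value of $\cP$. Since the optimal $Q$-value is the unique fixed point of the Bellman optimality operator (by the contraction property invoked in Section~\ref{sec:ApropPolyCon}, or by backward induction on the time-to-go in the finite-horizon case), it suffices to verify that $\hat q$ satisfies $\hat q(s,a)=\sum_{s',r}p(s',r\mid s,a)\big[r+\gamma\max_{a'}\hat q(s',a')\big]$; uniqueness then forces $\hat q=q_*$, i.e. $q_*(s,a)=\tilde q_*(f(s),g_s(a))$, which is the first claim.

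The heart of the argument is to rewrite the Bellman right-hand side for $\tilde q_*$ at $(f(s),g_s(a))$ back in terms of $\cP$. First I would observe that $\max_{a'}\hat q(s',a')$ depends on $s'$ only through $f(s')$: because $g_{s'}:\sA_{s'}\to\tilde\sA_{f(s')}$ is \emph{surjective}, as $a'$ ranges over $\sA_{s'}$ the image $g_{s'}(a')$ ranges over all of $\tilde\sA_{f(s')}$, so $\max_{a'}\hat q(s',a')=\max_{\tilde a'}\tilde q_*(f(s'),\tilde a')$. The reward term then matches because the identity $\tilde p(\tilde r\mid f(s),g_s(a))=p(r\mid s,a)$ gives equal expected rewards, and the transition term matches because I can group the sum over $s'$ into the blocks $[s']_f$, pull out the (now block-constant) bracketed maximum, and apply $\tilde p(f(s')\mid f(s),g_s(a))=\sum_{s''\in[s']_f}p(s''\mid s,a)$. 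This reproduces exactly the Bellman right-hand side for $\hat q$, closing the first claim.

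For the second claim, take an optimal policy $\tilde\pi_*$ in $\tilde\cP$ and define $\pi_*(a\mid s):=\tilde\pi_*(g_s(a)\mid f(s))/\lvert g_s^{-1}(g_s(a))\rvert$, spreading the mass $\tilde\pi_*(\tilde a\mid f(s))$ uniformly over the actions mapping to $\tilde a$; surjectivity of $g_s$ makes this a valid distribution. By the first claim all actions in a fibre $g_s^{-1}(\tilde a)$ share the same optimal value $\tilde q_*(f(s),\tilde a)$, and $\pi_*$ places mass only on actions $a$ whose image $g_s(a)$ maximizes $\tilde q_*(f(s),\cdot)$, hence only on actions maximizing $q_*(s,\cdot)$. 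Thus $\pi_*$ is greedy with respect to $q_*$ and therefore optimal in $\cP$.

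The step I expect to be the main obstacle is the collapse in the second paragraph: everything hinges on surjectivity of the $g_s$, since without it the maximum over $\cP$-actions could fall short of the maximum over $\tilde\cP$-actions, and one must also check carefully that the bracketed term is genuinely constant across each preimage block $[s']_f$ before exchanging the order of summation and invoking the transition-aggregation identity.
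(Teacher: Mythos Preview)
The paper does not supply its own proof of this lemma: it is stated with attribution to Ravindran and Barto and used as a black box (the text immediately after it merely draws the consequence that ``the original MDP can be solved by solving a homomorphic image'' and points to analogous results of Givan et al.). So there is no in-paper proof to compare against.

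Your reconstruction is correct and is essentially the standard argument from the cited source. Pulling $\tilde q_*$ back along $(f,g_s)$ and verifying the Bellman optimality equation is exactly the right move; your use of surjectivity of $g_s$ to identify $\max_{a'}\tilde q_*(f(s'),g_{s'}(a'))$ with $\max_{\tilde a'}\tilde q_*(f(s'),\tilde a')$ is the crucial observation, and the block-grouping of the transition sum via $\tilde p(f(s')\mid f(s),g_s(a))=\sum_{s''\in[s']_f}p(s''\mid s,a)$ is the intended mechanism. One minor point worth making explicit: the homomorphism definition in the paper gives the transition and reward marginals separately rather than the joint $p(s',r\mid s,a)$, so when you invoke the Bellman equation you are implicitly writing it in the separated form $r(s,a)+\gamma\sum_{s'}p(s'\mid s,a)\max_{a'}q(s',a')$; this is fine, but spelling it out avoids any appearance of using a joint aggregation identity that was not assumed. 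Your policy-lifting construction and the greedy-implies-optimal conclusion are likewise standard and correct.
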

Consequently, the original MDP
can be solved by solving a homomorphic image.

Similar results have been obtained by
Givan~et~al.\ using stochastically bisimilar MDPs:
``Any stochastic bisimulation used for aggregation preserves the
optimal value and action sequence properties as well as the optimal
policies of the model'' \cite{Givan:03}.
Theorem 7 and Corollary 9.1 in Givan~et~al.\ show the facts of
Lemma~\ref{th:Arav}. 
Li~et~al.\ give an overview over state abstraction and state aggregation for
Markov decision processes, which covers homomorphic MDPs \cite{Li:06}.

A Markov decision process $\tilde{\cP}$ is state-enriched compared to
an MDP $\cP$ if $\tilde{\cP}$ has the same states, actions, transition
probabilities, and reward probabilities as
$\cP$ but with additional information in its states.
We define state-enrichment as follows:
\begin{definitionA}
  A Markov decision process $\tilde{\cP}$ is {\em state-enriched} compared to
  a Markov decision process $\cP$ if $\cP$ is a homomorphic image of
  $\tilde{\cP}$, where $g_{\tilde{s}}$ is the identity and 
  $f(\tilde{s})=s$ is not bijective. 
\end{definitionA}
Being not bijective means that there exist $\tilde{s}'$
and $\tilde{s}''$ with $f(\tilde{s}')=f(\tilde{s}'')$, that is, 
$\tilde{\sS}$ has more elements than $\sS$.
In particular, state-enrichment does not change the optimal policies nor
the $Q$-values in the sense of Lemma~\ref{th:Arav}.
\begin{propositionA}
\label{th:Aenrich}
If an MDP $\tilde{\cP}$ is {\em state-enriched} compared to
an MDP $\cP$, then both MDPs have the same 
optimal $Q$-values and the same optimal policies. 
\end{propositionA}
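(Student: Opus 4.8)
The plan is to reduce the statement to Lemma~\ref{th:Arav}. The definition of state-enrichment says exactly that $\cP$ is a homomorphic image of $\tilde{\cP}$, via a homomorphism whose action maps $g_{\tilde{s}}$ are all the identity and whose state map $f:\tilde{\sS}\to\sS$ is surjective but not injective. So I would simply apply Lemma~\ref{th:Arav} with the enriched process $\tilde{\cP}$ in the role of the source MDP and the compact process $\cP$ in the role of its homomorphic image; the lemma then delivers both conclusions at once, namely equal optimal $Q$-values and the transfer of an optimal policy from the image to the source.

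The one point needing care is what ``same optimal $Q$-values'' means, since $\tilde{\cP}$ and $\cP$ have different state spaces; the intended reading, supplied by the homomorphism conditions, is $q_{*}^{\tilde{\cP}}(\tilde{s},a) = q_{*}^{\cP}(f(\tilde{s}),a)$ for every enriched state $\tilde{s}$ and admissible action $a$ (with $g_{\tilde{s}}(a)=a$). Because $g$ is the identity, no two distinct actions are identified, so the ``normalizing the number of actions'' step in Lemma~\ref{th:Arav} is vacuous and the policy correspondence becomes a clean bijection: an optimal policy of $\cP$ lifts to an optimal policy of $\tilde{\cP}$ that ignores the redundant state information, and conversely. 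I expect the only real obstacle to be bookkeeping rather than mathematics, namely keeping straight that the \emph{enriched} MDP is the source and the \emph{compact} one the image (the reverse of the labelling in the homomorphism definition), and phrasing the $Q$-value equality across non-identical state spaces through $f$.

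If a self-contained argument is preferred to citing Lemma~\ref{th:Arav}, I would instead check directly that aggregation by $f$ commutes with the Bellman optimality operator. Taking the candidate value function $\tilde{s} \mapsto v_{*}^{\cP}(f(\tilde{s}))$, the transition consistency condition $p(f(\tilde{s}') \mid f(\tilde{s}), a) = \sum_{\tilde{s}'' \in [\tilde{s}']_f} \tilde{p}(\tilde{s}'' \mid \tilde{s}, a)$ together with the matching reward distributions $p(r \mid f(\tilde{s}), a) = \tilde{p}(r \mid \tilde{s}, a)$ guarantee that expected immediate rewards and expected next-state optimal values are preserved under $f$. Hence this function is a fixed point of the optimality operator on $\tilde{\cP}$, and uniqueness of the optimal value function forces $v_{*}^{\tilde{\cP}}(\tilde{s}) = v_{*}^{\cP}(f(\tilde{s}))$; equality of optimal $Q$-values and of optimal policies follows, the content being simply that optimal behaviour cannot exploit information that provably affects neither transitions nor rewards.
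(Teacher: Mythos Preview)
Your proposal is correct and takes essentially the same approach as the paper: the paper's proof simply observes that, by definition, $\cP$ is a homomorphic image of $\tilde{\cP}$ and then invokes Lemma~\ref{th:Arav} directly. Your additional remarks on how to read ``same optimal $Q$-values'' across different state spaces and on the normalization step being vacuous when $g_{\tilde{s}}$ is the identity are apt clarifications that the paper leaves implicit, and your alternative self-contained Bellman-operator argument is a sound backup but not needed here.
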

\begin{proof}
According to the definition $\cP$ is a homomorphic image of
$\tilde{\cP}$. The statements of Proposition~\ref{th:Aenrich} 
follow directly from Lemma~\ref{th:Arav}.
\end{proof}
Optimal policies of the 
state-enriched MDP $\tilde{\cP}$
can be transformed to optimal policies of the original MDP $\cP$ 
and, vice versa, 
each optimal policy of the original MDP $\cP$ 
corresponds to at least one optimal
policy of the state-enriched MDP $\tilde{\cP}$.

\subsection{Return-Equivalent Sequence-Markov Decision Processes (SDPs)}
\label{sec:AReturnEquivalent}

Our goal is to compare Markov decision processes (MDPs)
with delayed rewards to decision processes (DPs) without delayed rewards.
The DPs without delayed rewards can but need not to be Markov in the rewards.
Toward this end, we consider two DPs $\tilde{\cP}$ and $\cP$ which
differ only in their (non-Markov) reward distributions. 
However for each policy $\pi$ the DPs $\tilde{\cP}$ and $\cP$
have the same expected return at $t=0$, 
that is, $\tilde{v}^{\pi}_0=v^{\pi}_0$, 
or they have the same expected return for every episode.

\subsubsection{Sequence-Markov Decision Processes (SDPs)}
\label{sec:Asdps}

We first define decision processes that
are Markov except for the reward, which is not required to be Markov.
\begin{definitionA}
  \label{def:Asdp}
A sequence-Markov decision process (SDP) is defined as a finite decision process which is equipped with 
a Markov policy and has Markov transition probabilities 
but a reward distribution that is not required to be Markov.
\end{definitionA}

\begin{propositionA}
\label{th:AsdpMDP}
  Markov decision processes are sequence-Markov decision processes.
\end{propositionA}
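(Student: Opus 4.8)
The plan is to verify directly that an arbitrary MDP $\cP=(\sS,\sA,\sR,p,\gamma)$ satisfies every clause in the definition of a sequence-Markov decision process (Definition~\ref{def:Asdp}). There are four things to check: that it is a finite decision process, that it is equipped with a Markov policy, that its transition probabilities are Markov, and that it has a reward distribution (where the reward is only \emph{required not} to be Markov, making this the weakest of the four conditions). The entire argument is a definitional unpacking, so I do not expect any real obstacle; the one point deserving a moment of care is the reward clause, where one must observe that a Markov reward is a \emph{special case} of a reward that ``need not'' be Markov, rather than something that would violate the SDP definition.

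First I would record that an MDP is by construction a finite decision process, since $\sS$, $\sA$, and $\sR$ are all finite sets. Next, the MDP is equipped with a Markov policy $\pi(A_t=a \mid S_t=s)$, which conditions only on the current state and is therefore Markov by definition. Third, the induced transition distribution $p(s' \mid s,a) = \sum_{r} p(s',r \mid s,a)$ conditions only on the current state-action pair $(S_t,A_t)=(s,a)$, so the transition probabilities are Markov as required.

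Finally, for the reward clause I would argue that the marginal reward distribution $p(r \mid s,a) = \sum_{s'} p(s',r \mid s,a)$ of the MDP is itself Markov, conditioning only on $(s,a)$, while Definition~\ref{def:Asdp} places no Markov requirement on the reward and merely permits non-Markov rewards. Hence a Markov reward distribution trivially meets the condition ``not required to be Markov,'' and this clause holds as a special case. Collecting the four verified clauses establishes that $\cP$ is a sequence-Markov decision process, which completes the proof; the essential content is simply that the SDP notion is obtained from the MDP notion by relaxing, not strengthening, the Markov assumption on rewards.
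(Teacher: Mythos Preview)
Your proposal is correct and follows the same definitional approach as the paper. The paper's own proof is considerably more terse---it simply notes that MDPs have Markov transition probabilities and are equipped with Markov policies---whereas you carefully verify all four clauses of Definition~\ref{def:Asdp}, including the finiteness and the reward clause; your extra care on the reward condition (Markov reward being a special case of ``not required to be Markov'') is sound but not something the paper bothers to spell out.
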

\begin{proof}
MDPs have Markov transition probabilities
and are equipped with Markov policies.
\end{proof}

\begin{definitionA}
  \label{def:AseqEqi}
We call two sequence-Markov decision processes $\cP$ and $\tilde{\cP}$  
that have the same Markov transition probabilities
and are equipped with the same Markov policy {\em sequence-equivalent}.
\end{definitionA}

\begin{lemmaA}
\label{th:AsdpSeq}
Two sequence-Markov decision processes that are 
sequence-equivalent have the same probability to generate 
state-action sequences $(s_0,a_0,\ldots,s_t,a_t)$, $0 \leq t \leq T$. 
\end{lemmaA}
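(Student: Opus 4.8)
The plan is to prove that sequence-equivalent SDPs assign the same probability to every finite state-action prefix $(s_0,a_0,\ldots,s_t,a_t)$ by a direct factorization argument over the trajectory, followed by induction on $t$. The key observation is that the generating probability of a state-action sequence depends \emph{only} on the initial-state distribution, the Markov transition probabilities $p(s_{k+1}\mid s_k,a_k)$, and the Markov policy $\pi(a_k\mid s_k)$, and in no way on the reward distribution. Since sequence-equivalence (Definition~\ref{def:AseqEqi}) asserts that $\cP$ and $\tilde{\cP}$ share exactly the same transition probabilities and the same policy, all the factors entering this product coincide, so the two processes induce identical laws on finite prefixes.

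Concretely, I would first write the joint probability of a prefix as a telescoping product. For an SDP the reward at each step is generated from its own (possibly non-Markov) distribution, but crucially the reward variable does not feed back into the selection of subsequent states or actions: the policy conditions only on the current state and the transition conditions only on the current state-action pair. Hence, marginalizing out the rewards $R_1,\ldots,R_t$, the probability of the state-action prefix factorizes as
\begin{align}
\label{eq:seqfactor}
\PR[S_0{=}s_0,\ldots,S_t{=}s_t,A_t{=}a_t] \ = \ p(s_0)\ \prod_{k=0}^{t-1}\pi(a_k\mid s_k)\,p(s_{k+1}\mid s_k,a_k)\ \cdot\ \pi(a_t\mid s_t)\ .
\end{align}
The same identity holds verbatim for $\tilde{\cP}$ with its transition probabilities $\tilde{p}$ and policy $\tilde{\pi}$. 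The main step is therefore to justify~\eqref{eq:seqfactor}, i.e.\ that the rewards can be summed out without affecting the state-action law. This follows because an SDP, by Definition~\ref{def:Asdp}, retains Markov transition probabilities and a Markov policy, so conditioning on the full history collapses to conditioning on the current state (for the policy) and the current state-action (for the transition), independently of any reward realizations. I would make this explicit by an induction on $t$: the base case $t=0$ is just $p(s_0)\,\pi(a_0\mid s_0)$, and the inductive step appends one factor $p(s_{t}\mid s_{t-1},a_{t-1})\,\pi(a_t\mid s_t)$, using the Markov property of transitions and policy to drop all earlier conditioning.

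With~\eqref{eq:seqfactor} established for both processes, the conclusion is immediate: by sequence-equivalence we have $p(s_0)=\tilde{p}(s_0)$ (same initial distribution), $p(s_{k+1}\mid s_k,a_k)=\tilde{p}(s_{k+1}\mid s_k,a_k)$, and $\pi(a_k\mid s_k)=\tilde{\pi}(a_k\mid s_k)$ for every factor, so the two products are term-by-term equal and the prefix probabilities agree for all $0\leq t\leq T$.

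I expect the only genuinely delicate point to be the decoupling of rewards from the state-action dynamics in the factorization~\eqref{eq:seqfactor}. One must be careful that although the reward distribution is allowed to be non-Markov (it may depend on the whole past sequence), this does \emph{not} introduce any dependence of future states or actions on past rewards, since neither the transition kernel nor the policy takes rewards as inputs. Once this structural independence is stated cleanly, the rest is a routine marginalization and the matching of factors under sequence-equivalence; the non-Markov nature of the rewards is harmless precisely because rewards are purely an \emph{output} of the process and never condition the state-action generation.
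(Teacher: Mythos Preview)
Your proposal is correct and follows exactly the paper's approach: the paper's proof is a two-sentence observation that sequence generation depends only on transition probabilities and policy, hence is identical for sequence-equivalent SDPs, and your factorization~\eqref{eq:seqfactor} with the induction simply makes this explicit. One small caveat: you invoke $p(s_0)=\tilde{p}(s_0)$, which is not literally part of Definition~\ref{def:AseqEqi} (that definition mentions only transition probabilities and policy), though it is clearly intended in the paper's setup where SDPs differ only in their reward distribution.
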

\begin{proof}
Sequence generation only depends on transition probabilities
and policy. Therefore the probability of generating a particular
sequences is the same for both SDPs.
\end{proof}

\subsubsection{Return-Equivalent SDPs}
\label{sec:AEquiSDPs}

We define return-equivalent SDPs which can be shown to
have the same optimal policies.
\begin{definitionA}
  \label{def:AreturnEqSDP}
  Two sequence-Markov decision processes $\tilde{\cP}$ and $\cP$
  are {\em return-equivalent} if
  they differ only in their reward but
  for each policy $\pi$ have the same expected return
  $\tilde{v}^{\pi}_0=v^{\pi}_0$.
  $\tilde{\cP}$ and $\cP$ 
  are {\em strictly return-equivalent} if
  they have the same expected return for every episode and
  for each policy $\pi$:
  \begin{align}
   \EXP_{\pi} \left[
    \tilde{G}_0 \mid s_0,a_0,\ldots,s_T,a_T \right] \ &= \
   \EXP_{\pi} \left[
    G_0 \mid s_0,a_0,\ldots,s_T, a_T\right] \ .
  \end{align}
\end{definitionA}
The definition of return-equivalence can be generalized to 
strictly monotonic functions $f$
for which $\tilde{v}^{\pi}_0=f(v^{\pi}_0)$. Since strictly monotonic functions
do not change the ordering of the returns, maximal returns stay
maximal after applying the function $f$.

Strictly return-equivalent SDPs are return-equivalent as the
next proposition states.
\begin{propositionA}
\label{th:AstrictToEqsdp}
  Strictly return-equivalent sequence-Markov decision processes
  are return-equivalent.
\end{propositionA}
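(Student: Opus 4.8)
The plan is to apply the law of total expectation (the tower property), conditioning the unconditional expected return at $t=0$ on the full state-action sequence, and then to exploit two facts: that strict return-equivalence equates the conditional expected returns episode-by-episode, and that the two SDPs assign identical probabilities to each state-action sequence.

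First I would fix an arbitrary policy $\pi$ and write the unconditional return of each SDP as an average over episodes. Concretely, I would decompose
\begin{align}
  v^{\pi}_0 \ &= \ \EXP_{\pi}\left[G_0\right] \ = \
  \sum_{s_0,a_0,\ldots,s_T,a_T} p_{\pi}(s_0,a_0,\ldots,s_T,a_T) \
  \EXP_{\pi}\left[G_0 \mid s_0,a_0,\ldots,s_T,a_T\right] \ ,
\end{align}
where $p_{\pi}$ denotes the probability of generating the episode $(s_0,a_0,\ldots,s_T,a_T)$ under $\pi$, and I would write the analogous identity for $\tilde{v}^{\pi}_0$ in terms of $\tilde{G}_0$ and $\tilde{p}_{\pi}$.

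Next I would invoke the two hypotheses in turn. Since $\tilde{\cP}$ and $\cP$ are by definition of return-equivalence assumed to differ only in their reward distribution, they share the same Markov transition probabilities and the same Markov policy, hence they are sequence-equivalent in the sense of Definition~\ref{def:AseqEqi}; Lemma~\ref{th:AsdpSeq} then yields $p_{\pi} = \tilde{p}_{\pi}$ on every state-action sequence. Strict return-equivalence supplies the term-by-term equality $\EXP_{\pi}\left[\tilde{G}_0 \mid s_0,a_0,\ldots,s_T,a_T\right] = \EXP_{\pi}\left[G_0 \mid s_0,a_0,\ldots,s_T,a_T\right]$. Substituting both equalities into the two sums makes them coincide termwise, so $\tilde{v}^{\pi}_0 = v^{\pi}_0$; since $\pi$ was arbitrary, this is precisely return-equivalence.

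I do not expect a genuine obstacle here, as the statement is essentially an unwinding of the definitions through the tower property. The one place requiring care is the justification that the episode-generation measures coincide: this is exactly the content of Lemma~\ref{th:AsdpSeq}, and it relies on the SDPs differing only in reward, so that the transition probabilities and the policy — which alone determine the probability of an episode — are shared. Without that observation the two sums would be over potentially different weightings and the conditional equalities alone would not suffice.
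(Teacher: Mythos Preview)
Your proof is correct and follows essentially the same approach as the paper: decompose the unconditional expected return via the tower property into a sum over episodes of (sequence probability) $\times$ (conditional expected return), then use equality of sequence probabilities (since the SDPs share transitions and policy) and equality of the conditional expectations (strict return-equivalence) to conclude. The paper's argument is a terse one-paragraph version of exactly this; your version is more explicit in invoking Lemma~\ref{th:AsdpSeq} for the sequence-probability equality, which is a welcome clarification.
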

\begin{proof}
 The expected return at $t=0$ given a policy
  is the sum of
  the probability of generating a sequence times the expected reward
  for this sequence. Both expectations are the same for 
  two strictly return-equivalent sequence-Markov decision processes.
 Therefore the expected return at time $t=0$ is the same.
\end{proof}

The next proposition states that return-equivalent SDPs  
have the same optimal policies.
\begin{propositionA}
\label{th:AsdpPol}
  Return-equivalent sequence-Markov decision processes 
  have the same optimal policies.
\end{propositionA}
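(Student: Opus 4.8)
The plan is to reduce the statement to the defining property of return-equivalence together with the definition of an optimal policy. First I would recall that by Definition~\ref{def:AreturnEqSDP} the two SDPs $\tilde{\cP}$ and $\cP$ differ only in their reward distributions; they share the same state set, action set, Markov transition probabilities, and hence the same class of admissible Markov policies. In particular, the maximization over policies in $\tilde{\cP}$ and in $\cP$ ranges over one and the same set, so it is meaningful to compare the two objective functions on this common domain, and it suffices to do exactly that.

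Next I would invoke the definition of the optimization target. For each SDP the optimal policy is the maximizer of the expected return at time $t=0$, namely $\pi^* = \argmax_{\pi} v_0^{\pi}$ for $\cP$ and the analogous $\argmax_{\pi} \tilde{v}^{\pi}_0$ for $\tilde{\cP}$. The crucial input is the return-equivalence hypothesis $\tilde{v}^{\pi}_0 = v^{\pi}_0$, which holds for \emph{every} policy $\pi$. This says precisely that the two maps $\pi \mapsto v^{\pi}_0$ and $\pi \mapsto \tilde{v}^{\pi}_0$ coincide pointwise as functions on the shared policy class. From here the conclusion is immediate: two functions that agree everywhere on a common domain have the same set of maximizers, so $\argmax_{\pi} v^{\pi}_0 = \argmax_{\pi} \tilde{v}^{\pi}_0$, i.e.\ the optimal policies of $\cP$ and $\tilde{\cP}$ coincide.

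There is essentially no hard step here: the entire content lives in the definitions, and the result is a one-line consequence of the pointwise equality of the scalar return objective. The only point requiring a moment of care is to confirm that the two processes genuinely optimize over the identical policy set, which is guaranteed because they share transition probabilities and action sets. I would also remark that the argument never uses the Markov property of the rewards, so it applies verbatim to SDPs; this is exactly why return-equivalence was phrased through the scalar $v^{\pi}_0$ rather than through a full state- or action-value function, which need not even be well defined when the reward distribution is non-Markov.
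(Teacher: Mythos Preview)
Your proposal is correct and follows the same approach as the paper's proof: both argue that since return-equivalent SDPs have identical expected return $v^{\pi}_0 = \tilde{v}^{\pi}_0$ for every policy, and since the optimal policy is defined as $\argmax_{\pi} v^{\pi}_0$, the sets of maximizers must coincide. Your version is simply more explicit about the shared policy class and the pointwise-equality-of-objectives step, but the substance is identical.
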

\begin{proof}
  The optimal policy is defined as maximizing the expected return at
  time $t=0$. 
  For each policy the 
  expected return at
  time $t=0$ is the same for return-equivalent decision processes.
  Consequently, the optimal policies are the same.
\end{proof}

Two strictly return-equivalent SDPs have the same 
expected return for each state-action sub-sequence
$(s_0,a_0,\ldots,s_t,a_t)$, $0 \leq t \leq T$.
\begin{lemmaA}
\label{th:AreturnSub}
Two strictly return-equivalent SDPs $\tilde{\cP}$ and $\cP$
have the same expected return for each
state-action sub-sequence
$(s_0,a_0,\ldots,s_t,a_t)$, $0 \leq t \leq T$:
\begin{align}
   \EXP_{\pi} \left[
    \tilde{G}_0 \mid s_0,a_0,\ldots,s_t,a_t \right] \ &= \
   \EXP_{\pi} \left[
    G_0 \mid s_0,a_0,\ldots,s_t, a_t\right] \ .
  \end{align}
\end{lemmaA}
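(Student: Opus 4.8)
The plan is to reduce the sub-sequence statement to the full-episode identity that \emph{defines} strict return-equivalence, using the law of total expectation together with the fact that the two SDPs generate trajectories with identical probabilities.

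First I would fix a policy $\pi$, a length $0 \leq t \leq T$, and a prefix $(s_0,a_0,\ldots,s_t,a_t)$ generated with positive probability (prefixes of zero probability carry no mass and can be ignored; by Lemma~\ref{th:AsdpSeq} a prefix has positive probability under $\tilde{\cP}$ if and only if it does under $\cP$). Conditioning first on the whole episode and then on the prefix, the tower property gives
\begin{align}
\EXP_{\pi}\left[ \tilde{G}_0 \mid s_0,a_0,\ldots,s_t,a_t\right] \ &= \ \sum_{s_{t+1},a_{t+1},\ldots,s_T,a_T} p\left(s_{t+1},\ldots,a_T \mid s_0,\ldots,a_t\right) \nonumber \\
&\quad \times \ \EXP_{\pi}\left[ \tilde{G}_0 \mid s_0,a_0,\ldots,s_T,a_T\right] \ , \nonumber
\end{align}
together with the analogous identity for $G_0$ under $\cP$.

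The two remaining observations are purely structural. The conditional continuation probability $p(s_{t+1},\ldots,a_T \mid s_0,\ldots,a_t)$ is a product of transition probabilities and policy factors, so it does not involve the reward distribution; since return-equivalent SDPs differ only in their reward and are therefore sequence-equivalent, Lemma~\ref{th:AsdpSeq} guarantees that these continuation weights coincide for $\tilde{\cP}$ and $\cP$. Moreover, by the definition of \emph{strictly} return-equivalent SDPs, the full-episode conditional expectations agree term by term, $\EXP_{\pi}[ \tilde{G}_0 \mid s_0,\ldots,a_T] = \EXP_{\pi}[ G_0 \mid s_0,\ldots,a_T]$. Substituting both facts into the two marginalizations exhibits the sub-sequence expectations as sums of identical summands, which yields the claim.

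The only real care needed is the bookkeeping of the conditioning: I must keep the reward distribution out of the continuation weights (so that Lemma~\ref{th:AsdpSeq} genuinely applies) and invoke the tower property with the correct nesting of prefix inside full episode. Beyond that the argument is routine marginalization, so I do not expect a genuine analytic obstacle.
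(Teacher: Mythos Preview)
Your proposal is correct and essentially identical to the paper's proof: both marginalize the prefix expectation over continuations $(s_{t+1},a_{t+1},\ldots,s_T,a_T)$, observe that the continuation weights coincide for the two SDPs because they depend only on transitions and policy (not rewards), and then apply the full-episode identity from the definition of strict return-equivalence term by term. The only cosmetic difference is that the paper writes the continuation probability conditioned on $(s_t,a_t)$ alone (invoking the Markov property directly), whereas you condition on the whole prefix and appeal to Lemma~\ref{th:AsdpSeq}; both are fine.
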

\begin{proof}
Since the SDPs are strictly return-equivalent, we have
\begin{align}
   &\EXP_{\pi} \left[
    \tilde{G}_0 \mid s_0,a_0,\ldots,s_t,a_t \right] \\ \nonumber
    &= \ \sum_{s_{t+1},a_{t+1},\ldots,s_T,a_T} 
    p_{\pi} (s_{t+1},a_{t+1},\ldots,s_T,a_T \mid s_t,a_t) \
    \EXP_{\pi} \left[
    \tilde{G}_0 \mid s_0,a_0,\ldots,s_T,a_T \right] \\ \nonumber
    &= \ \sum_{s_{t+1},a_{t+1},\ldots,s_T,a_T} 
    p_{\pi} (s_{t+1},a_{t+1},\ldots,s_T,a_T \mid s_t,a_t) \
    \EXP_{\pi} \left[
    G_0 \mid s_0,a_0,\ldots,s_T,a_T \right] \\ \nonumber
    &= \    \EXP_{\pi} \left[
    G_0 \mid s_0,a_0,\ldots,s_t, a_t\right] \ .
  \end{align}
We used the marginalization of the full probability and
the Markov property of the state-action sequence.
\end{proof}

We now give the analog definitions and
results for MDPs which are SDPs.
\begin{definitionA}
  \label{def:AreturnEq}
  Two Markov decision processes $\tilde{\cP}$ and $\cP$
  are {\em return-equivalent} if
  they differ only in $p(\tilde{r} \mid s,a)$ and $p(r \mid s,a)$ but
  have the same expected return
  $\tilde{v}^{\pi}_0=v^{\pi}_0$ for each policy $\pi$.
  $\tilde{\cP}$ and $\cP$ 
  are {\em strictly return-equivalent} if
  they have the same expected return for every episode and
  for each policy $\pi$:
  \begin{align}
   \EXP_{\pi} \left[
    \tilde{G}_0 \mid s_0,a_0,\ldots,s_T,a_T \right] \ &= \
   \EXP_{\pi} \left[
    G_0 \mid s_0,a_0,\ldots,s_T, a_T\right] \ .
  \end{align}
\end{definitionA}

Strictly return-equivalent MDPs are return-equivalent as the
next proposition states.
\begin{propositionA}
\label{th:AstrictToEq}
  Strictly return-equivalent decision processes are return-equivalent.
\end{propositionA}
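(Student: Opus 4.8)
The plan is to reduce this MDP statement to the already-established SDP version, Proposition~\ref{th:AstrictToEqsdp}, rather than re-proving anything from scratch. First I would observe that every MDP is a sequence-Markov decision process (Proposition~\ref{th:AsdpMDP}), so the two strictly return-equivalent MDPs $\tilde{\cP}$ and $\cP$ are in particular SDPs. Next I would check that the specialized definitions line up: strict return-equivalence in Definition~\ref{def:AreturnEq} is literally the same per-episode equality of expected returns as in the SDP Definition~\ref{def:AreturnEqSDP}, and ``differing only in $p(\tilde r \mid s,a)$ and $p(r\mid s,a)$'' for MDPs is exactly the SDP clause ``differ only in their reward.'' Hence $\tilde{\cP}$ and $\cP$ are strictly return-equivalent when regarded as SDPs.

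I would then invoke Proposition~\ref{th:AstrictToEqsdp} to conclude that they are return-equivalent as SDPs, i.e. $\tilde v^\pi_0 = v^\pi_0$ for every policy $\pi$. Because the MDP notion of return-equivalence (Definition~\ref{def:AreturnEq}) imposes precisely the same condition $\tilde v^\pi_0 = v^\pi_0$ together with differing only in the reward, this SDP conclusion is verbatim the MDP return-equivalence that is to be shown, completing the argument.

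If a self-contained proof is preferred over the citation, I would repeat the one-line computation underlying the SDP case: write $v^\pi_0 = \sum_{s_0,a_0,\ldots,s_T,a_T} p_\pi(s_0,a_0,\ldots,s_T,a_T)\,\EXP_\pi\left[G_0 \mid s_0,a_0,\ldots,s_T,a_T\right]$, and the analogous expression for $\tilde v^\pi_0$. The sequence-generating probabilities $p_\pi$ coincide for the two processes because they share the same Markov transition probabilities and the same policy (they differ only in their reward), while the conditional expected returns coincide by the hypothesis of strict return-equivalence; summing the equal products term by term yields $\tilde v^\pi_0 = v^\pi_0$.

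There is essentially no obstacle here, since all the substance already resides in Proposition~\ref{th:AstrictToEqsdp}; the only thing to verify is the bookkeeping that the MDP definitions are the specializations of the SDP definitions. The mildest subtlety worth stating explicitly is that the two MDPs genuinely share the same transition probabilities, so that their sequence-generation probabilities $p_\pi$ agree; this is guaranteed by the clause in Definition~\ref{def:AreturnEq} that return-equivalent processes differ only in their reward distributions.
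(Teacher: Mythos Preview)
Your proposal is correct and matches the paper's approach exactly: the paper's proof is the single sentence ``Since MDPs are SDPs, the proposition follows from Proposition~\ref{th:AstrictToEqsdp},'' which is precisely the reduction you describe. Your additional bookkeeping about the definitions lining up and the optional self-contained computation are fine but not required.
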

\begin{proof}
Since MDPs are SDPs, the proposition follows from Proposition~\ref{th:AstrictToEqsdp}.
\end{proof}

\begin{propositionA}
\label{th:AthRE}
  Return-equivalent Markov decision processes have the same optimal policies.
\end{propositionA}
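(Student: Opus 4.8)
The plan is to reduce this entirely to the analogous result already established at the SDP level, since an MDP is merely a special kind of SDP. First I would invoke Proposition~\ref{th:AsdpMDP}, which states that every Markov decision process is a sequence-Markov decision process. Hence two return-equivalent MDPs $\tilde{\cP}$ and $\cP$ are, in particular, two SDPs that differ only in their reward.

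Next I would check that the notion of return-equivalence for MDPs in Definition~\ref{def:AreturnEq} is precisely the specialization to MDPs of the SDP notion in Definition~\ref{def:AreturnEqSDP}. Both definitions demand that the two processes differ only in their reward distributions and that they satisfy $\tilde{v}^{\pi}_0 = v^{\pi}_0$ for every policy $\pi$. The only formal difference is that the MDP definition phrases the shared structure in terms of the Markov reward distributions $p(\tilde{r}\mid s,a)$ and $p(r\mid s,a)$, whereas the SDP definition allows more general (non-Markov) reward distributions. Since Markov reward distributions are a particular instance of the reward distributions permitted for SDPs, return-equivalence of $\tilde{\cP}$ and $\cP$ in the sense of Definition~\ref{def:AreturnEq} immediately implies return-equivalence in the sense of Definition~\ref{def:AreturnEqSDP}.

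Finally I would apply Proposition~\ref{th:AsdpPol}, which asserts that return-equivalent SDPs have the same optimal policies. Combined with the two preceding observations, this yields that $\tilde{\cP}$ and $\cP$ have the same optimal policies, completing the argument. I do not anticipate any genuine obstacle: the substantive content has already been carried by the SDP-level Proposition~\ref{th:AsdpPol} (whose own proof rests only on the fact that optimality is defined through the common quantity $v^{\pi}_0$), so the sole task here is the purely definitional verification that MDP return-equivalence is a special case of SDP return-equivalence. If anything requires care, it is simply making this inclusion of the two definitions explicit so that Proposition~\ref{th:AsdpPol} may be invoked verbatim.
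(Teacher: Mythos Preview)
Your proposal is correct and takes essentially the same approach as the paper: the paper's own proof is the one-liner ``Since MDPs are SDPs, the proposition follows from Proposition~\ref{th:AsdpPol}.'' You have simply made explicit the definitional check that MDP return-equivalence is a special case of SDP return-equivalence, which the paper leaves implicit.
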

\begin{proof}
Since MDPs are SDPs, the proposition follows from Proposition~\ref{th:AsdpPol}.
\end{proof}

For strictly return-equivalent MDPs the expected return is the same if 
a state-action sub-sequence is given. 
\begin{propositionA}
\label{th:AstrictSub}
Strictly return-equivalent MDPs $\tilde{\cP}$ and $\cP$
have the same expected return for a given state-action sub-sequence
$(s_0,a_0,\ldots,s_t,a_t)$, $0 \leq t \leq T$:
\begin{align}
   \EXP_{\pi} \left[
    \tilde{G}_0 \mid s_0,a_0,\ldots,s_t,a_t \right] \ &= \
   \EXP_{\pi} \left[
    G_0 \mid s_0,a_0,\ldots,s_t, a_t\right] \ .
  \end{align}
\end{propositionA}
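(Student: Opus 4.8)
The plan is to recognize that this proposition is simply the MDP specialization of Lemma~\ref{th:AreturnSub}, which has already been established at the more general SDP level. By Proposition~\ref{th:AsdpMDP}, every Markov decision process is a sequence-Markov decision process, and the notion of strict return-equivalence for MDPs in Definition~\ref{def:AreturnEq} is exactly the restriction of Definition~\ref{def:AreturnEqSDP} to Markov reward distributions. Hence two strictly return-equivalent MDPs are in particular strictly return-equivalent SDPs, and Lemma~\ref{th:AreturnSub} applies directly to yield the claimed sub-sequence equality. This mirrors precisely how Propositions~\ref{th:AstrictToEq} and~\ref{th:AthRE} were deduced from their SDP counterparts.

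Should a self-contained argument be preferred, I would reproduce the marginalization step from the proof of Lemma~\ref{th:AreturnSub}. First I would invoke the defining equality of strict return-equivalence at the level of the full episode $(s_0,a_0,\ldots,s_T,a_T)$. Then I would expand the shorter conditioning by summing over the tail $(s_{t+1},a_{t+1},\ldots,s_T,a_T)$, weighting by the conditional sequence-generation probability $p_{\pi}(s_{t+1},a_{t+1},\ldots,s_T,a_T \mid s_t,a_t)$. The Markov property of the state-action sequence guarantees this probability depends only on $(s_t,a_t)$ and, crucially, is identical for both MDPs, since they share the same transition probabilities and the same policy. Substituting the full-episode equality into the two expansions produces equal sums, which gives the desired equality of the sub-sequence expectations.

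The only step worth scrutinizing is the coincidence of the two sequence-generation measures: the identity rests on both processes having the same Markov transition probabilities and being equipped with the same policy, so that the processes differ only in their reward distributions, and the reward enters the expressions exclusively through $\tilde{G}_0$ and $G_0$, which are equated at the full-episode level. There is no genuine obstacle here; the result is immediate once the reduction to the SDP lemma is made. The sole care required is to confirm that Definition~\ref{def:AreturnEq} is the restriction of Definition~\ref{def:AreturnEqSDP} to Markov rewards, which it manifestly is, so that the hypotheses of Lemma~\ref{th:AreturnSub} are met without further work.
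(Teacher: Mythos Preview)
Your proposal is correct and matches the paper's proof exactly: the paper simply notes that since MDPs are SDPs, the proposition follows from Lemma~\ref{th:AreturnSub}. Your additional self-contained argument is just the proof of that lemma unpacked, which is fine but unnecessary here.
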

\begin{proof}
Since MDPs are SDPs, the proposition follows from Lemma~\ref{th:AreturnSub}.
\end{proof}

\subsection{Reward Redistribution for Strictly Return-Equivalent SDPs}
\label{sec:rewardRedist}

Strictly return-equivalent SDPs $\tilde{\cP}$ and $\cP$
can be constructed by a reward redistribution.

\subsubsection{Reward Redistribution}

We define reward redistributions for SDPs.
\begin{definitionA}
A {\em reward redistribution} given an SDP $\tilde{\cP}$ 
is a fixed procedure that redistributes 
for each state-action sequence $s_0,a_0,\ldots,s_T,a_T$ the realization of 
the associated return variable 
$\tilde{G}_0 = \sum_{t=0}^{T}  \tilde{R}_{t+1}$ 
or its expectation $\EXP \left[\tilde{G}_0 \mid s_0,a_0,\ldots,s_T,a_T \right]$
along the sequence. 
The redistribution creates a new SDP $\cP$ with
redistributed reward $R_{t+1}$ at time $(t+1)$ and return variable
$G_0 = \sum_{t=0}^{T} R_{t+1}$.
The redistribution procedure 
ensures for each sequence either $\tilde{G}_0 = G_0$
or 
\begin{align}
   \EXP_{\pi} \left[
    \tilde{G}_0 \mid s_0,a_0,\ldots,s_T,a_T \right] \ &= \
   \EXP_{\pi} \left[
    G_0 \mid s_0,a_0,\ldots,s_T, a_T\right] \ .
\end{align}
\end{definitionA}

Reward redistributions can be very general.
A special case is if the return can be deduced from the past sequence, 
which makes the return causal. 
\begin{definitionA}
A reward redistribution is {\em causal} if for the redistributed
reward $R_{t+1}$ the following holds:
\begin{align}
   \EXP \left[ R_{t+1} \mid s_0,a_0,\ldots,s_T,a_T \right] \ &= \ 
   \EXP \left[ R_{t+1} \mid s_0,a_0,\ldots,s_t,a_t \right]\ .
\end{align}
\end{definitionA}

For our approach we only need reward redistributions that are
second order Markov.
\begin{definitionA}
A causal reward redistribution is {\em second order Markov} if
\begin{align}
   \EXP \left[ R_{t+1} \mid s_0,a_0,\ldots,s_t,a_t \right] \ &= \ 
   \EXP \left[ R_{t+1} \mid s_{t-1},a_{t-1},s_t,a_t \right] \ .
\end{align}
\end{definitionA}

\subsection{Reward Redistribution Constructs Strictly Return-Equivalent SDPs}

\begin{theoremA}
\label{th:AsdpEquiv}
If the SDP $\cP$ is obtained by reward redistribution from the 
SDP $\tilde{\cP}$, then $\tilde{\cP}$ and $\cP$ are strictly return-equivalent.
\end{theoremA}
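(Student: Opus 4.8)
The plan is to read the conclusion almost directly off the two defining conditions, namely the definition of a reward redistribution and the definition of strict return-equivalence. First I would note that a reward redistribution, by its very definition, only changes how the reward is distributed along a sequence while leaving the Markov transition probabilities and the Markov policy untouched. Hence the constructed SDP $\cP$ and the original SDP $\tilde{\cP}$ differ only in their reward, which is exactly the first requirement appearing in the definition of (strict) return-equivalence.

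The remaining requirement is the per-episode identity $\EXP_{\pi}[\tilde{G}_0 \mid s_0,a_0,\ldots,s_T,a_T] = \EXP_{\pi}[G_0 \mid s_0,a_0,\ldots,s_T,a_T]$ for every policy $\pi$. Here I would split along the two cases that the redistribution procedure is guaranteed to satisfy. In the first case the procedure preserves the realized return, $\tilde{G}_0 = G_0$ for every sequence; taking the conditional expectation of both sides given the full state-action sequence, and using linearity of expectation, then yields the identity at once. In the second case the procedure directly guarantees the conditional-expectation identity, so nothing further is needed.

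Since in either case $\tilde{\cP}$ and $\cP$ differ only in their reward and satisfy the per-episode conditional-expectation identity for every policy, they meet the definition of strictly return-equivalent, completing the argument. I do not expect any genuine obstacle: the statement is essentially an unpacking of the definitions, and the only point deserving a moment of care is that the realization-preserving branch ($\tilde{G}_0 = G_0$) also implies the expectation branch, which follows immediately by conditioning on the sequence.
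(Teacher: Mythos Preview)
Your proposal is correct and mirrors the paper's own proof almost exactly: the paper also splits on the two cases in the definition of reward redistribution, derives the conditional-expectation identity from $\tilde{G}_0=G_0$ by conditioning in the first case, notes it holds by definition in the second, and concludes by matching the definition of strict return-equivalence. If anything, you are slightly more explicit in checking the ``differ only in reward'' clause, which the paper leaves implicit.
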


\begin{proof}
For redistributing the reward we have for 
each state-action sequence $s_0,a_0,\ldots,s_T,a_T$ the same return
$\tilde{G}_0 = G_0$, therefore  
\begin{align}
   \EXP_{\pi} \left[
    \tilde{G}_0 \mid s_0,a_0,\ldots,s_T,a_T \right] \ &= \
   \EXP_{\pi} \left[
    G_0 \mid s_0,a_0,\ldots,s_T, a_T\right] \ .
\end{align}
For redistributing the expected return the last equation holds 
by definition.
The last equation is the definition of 
strictly return-equivalent SDPs.
\end{proof}

The next theorem states that the optimal policies are still the same 
when redistributing the reward.
\begin{theoremA}
\label{th:AEquivT}
If the SDP $\cP$ is obtained by reward redistribution from the 
SDP $\tilde{\cP}$, then both SDPs have the same optimal policies.
\end{theoremA}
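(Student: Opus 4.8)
The plan is to prove this by chaining together three results already established in the excerpt, so that no fresh computation is needed. The entire content of the statement is that reward redistribution preserves optimal policies, and each link in the implication chain has been proved separately.

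First I would invoke Theorem~\ref{th:AsdpEquiv}, which asserts precisely that if $\cP$ is obtained from $\tilde{\cP}$ by a reward redistribution, then $\tilde{\cP}$ and $\cP$ are strictly return-equivalent. This is the step that uses the defining property of reward redistribution, namely that either $\tilde{G}_0 = G_0$ holds for each sequence or the expected returns conditioned on the full state-action sequence coincide. Next I would apply Proposition~\ref{th:AstrictToEqsdp} to pass from strict return-equivalence to ordinary return-equivalence, since the former is the stronger notion (equality of expected return per episode implies equality of expected return at $t=0$ after summing over sequences weighted by their generation probabilities, which are identical because the transition probabilities and policy are shared).

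Finally I would apply Proposition~\ref{th:AsdpPol}, which states that return-equivalent SDPs have the same optimal policies. Since the optimal policy is defined as the maximizer of the expected return at $t=0$, and return-equivalence means this expected return agrees for every policy $\pi$, the arg-max sets coincide, yielding the claim.

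The honest observation is that there is no hard part here: the theorem is a one-line corollary of the preceding development, and all the genuine work---establishing that redistribution preserves the per-episode expected return, and that equal expected returns force equal optimal policies---was discharged in Theorem~\ref{th:AsdpEquiv}, Proposition~\ref{th:AstrictToEqsdp}, and Proposition~\ref{th:AsdpPol}. If anything warrants a sentence of care, it is merely confirming that the hypotheses of each cited result are met (e.g.\ that $\tilde{\cP}$ and $\cP$ share transition probabilities and policy, which holds because redistribution alters only the reward), but this is immediate from the definition of reward redistribution.
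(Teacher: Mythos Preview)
Your proposal is correct and matches the paper's proof exactly: the paper also invokes Theorem~\ref{th:AsdpEquiv} for strict return-equivalence, then Proposition~\ref{th:AstrictToEqsdp} and Proposition~\ref{th:AsdpPol} to conclude that the optimal policies coincide. Your assessment that this is a one-line corollary is accurate.
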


\begin{proof}
\label{c:t1p}
According to Theorem~\ref{th:AsdpEquiv}, 
the SDP $\cP$ is strictly return-equivalent to the SDP $\tilde{\cP}$.
According to Proposition~\ref{th:AstrictToEqsdp} and Proposition~\ref{th:AsdpPol}
the SDP $\cP$ and the SDP $\tilde{\cP}$ have  
the same optimal policies. 
\end{proof}

\subsubsection{Special Cases of Strictly Return-Equivalent Decision Processes: 
  Reward Shaping, Look-Ahead Advice, and Look-Back Advice}
\label{sec:AshapingEquiv}

Redistributing the reward via
reward shaping \cite{Ng:99,Wiewiora:03}, look-ahead advice, and
look-back advice \cite{Wiewiora:03icml} is 
a special case of reward redistribution that leads to MDPs 
which are strictly return-equivalent to the original MDP.
We show that reward shaping is a special case of
reward redistributions that lead to MDPs 
which are strictly return-equivalent to the original MDP. 
First, we subtract from the potential the constant 
$c=(\Phi(s_0,a_0) - \gamma^T \Phi(s_T,a_T) )/ (1-\gamma^T)$,
which is the potential of the initial state minus the 
discounted potential in the last state divided by a fixed divisor. 
Consequently, the sum of additional rewards in reward shaping, 
look-ahead advice, or look-back advice from $1$ to $T$ is zero.
The original sum of additional rewards is
\begin{align}
 \sum_{i=1}^{T} \gamma^{i-1} \ (\gamma \Phi(s_i,a_i) \ - \  \Phi(s_{i-1},a_{i-1}) )
 \ &= \ \gamma^T \Phi(s_T,a_T) \ - \ \Phi(s_0,a_0) \ .
\end{align}
If we assume $\gamma^T \Phi(s_T,a_T)=0$ and $\Phi(s_0,a_0)=0$,
then reward shaping does not change the return and 
the shaping reward is a reward 
redistribution leading to an MDP that is 
strictly return-equivalent to the original MDP.
For $T\to \infty$ only
$\Phi(s_0,a_0)=0$ is required.
The assumptions can always be fulfilled by adding a single new initial
state and a single new final state to the original MDP.

Without the assumptions $\gamma^T \Phi(s_T,a_T)=0$ and $\Phi(s_0,a_0)=0$, 
we subtract $c=(\Phi(s_0,a_0) - \gamma^T \Phi(s_T,a_T) )/ (1-\gamma^T)$
from all
potentials $\Phi$, and obtain
\begin{align}
 \sum_{i=1}^{T} \gamma^{i-1} \ (\gamma (\Phi(s_i,a_i) \ - \ c) 
 \ - \  (\Phi(s_{i-1},a_{i-1}) \ - \ c ) )
 \ &= \ 0 \ .
\end{align}
Therefore, the potential-based shaping function (the additional reward)
added to the original reward does not change the return, which means
that the shaping reward is a reward 
redistribution that leads to an MDP that is 
strictly return-equivalent to the original MDP.
Obviously, reward shaping is a special case of reward
redistribution that leads to a strictly 
return-equivalent MDP.
Reward shaping does not change the general learning behavior if a constant $c$
is subtracted from the potential function $\Phi$. 
The $Q$-function of the original reward shaping and the $Q$-function of
the reward shaping, which has a constant $c$
subtracted from the potential function $\Phi$, differ by $c$
for every $Q$-value \cite{Ng:99,Wiewiora:03}.
For infinite horizon MDPs with $\gamma<1$, 
the terms $\gamma^T$ and $\gamma^T \Phi(s_T,a_T)$ 
vanish, therefore it is sufficient to subtract $c=\Phi(s_0,a_0)$ from the
potential function.

Since TD based reward shaping methods keep the original reward, 
they can still be exponentially slow for delayed rewards.
Reward shaping methods like reward shaping, look-ahead advice, and
look-back advice rely on the Markov property of the original reward, 
while an optimal reward redistribution is not Markov.
In general, reward shaping does not lead to 
an optimal reward redistribution according to 
Section~\ref{sec:Aopt_rew_red}.

As discussed in Paragraph~\ref{sec:Aremark},
the optimal reward redistribution does not 
comply to the Bellman equation.
Also look-ahead advice does not comply to the 
Bellman equation.
The return for the look-ahead advice reward $\tilde{R}_{t+1}$ is
\begin{align}
  G_t \ &= \  
  \sum_{i=0}^{\infty} \tilde{R}_{t+i+1} 
\end{align}
with expectations for the reward $\tilde{R}_{t+1}$
\begin{align}
   \EXP_{\pi} \left[ \tilde{R}_{t+1} \mid  s_{t+1},a_{t+1},s_t,a_t \right]
   \ &= \ \tilde{r}(s_{t+1},a_{t+1},s_t,a_t)  \ &= \ 
  \gamma \Phi(s_{t+1},a_{t+1}) \ - \  \Phi(s_t,a_t) \ .
\end{align}
The expected reward $\tilde{r}(s_{t+1},a_{t+1},s_t,a_t)$ depends on 
future states $s_{t+1}$ and, more importantly, on future actions $a_{t+1}$.
It is a non-causal reward redistribution.
Therefore look-ahead advice cannot be directly used for selecting the 
optimal action at time $t$.
For look-back advice we have
\begin{align}
   \EXP_{\pi} \left[ \tilde{R}_{t+1} \mid  s_t,a_t,s_{t-1},a_{t-1} \right]
   \ &= \ \tilde{r}(s_t,a_t,s_{t-1},a_{t-1})  \ &= \ 
  \Phi(s_t,a_t) \ - \ \gamma^{-1}  \Phi(s_{t-1},a_{t-1}) \ .
\end{align}
Therefore look-back advice introduces a 
second-order Markov reward like the optimal reward redistribution.

\subsection{Transforming an Immediate Reward MDP to a Delayed Reward MDP}
\label{sec:Aequiv}

We assume to have a Markov decision process $\cP$ with immediate reward. 
The MDP $\cP$ is transformed into an MDP $\tilde{\cP}$ with delayed
reward, where the reward is given at sequence end.
The reward-equivalent MDP $\tilde{\cP}$ with delayed reward 
is state-enriched,
which ensures that it is an MDP. 

The state-enriched MDP $\tilde{\cP}$ has
\begin{itemize}
\item reward:
\begin{align}
  \tilde{R}_t  \ &= \
  \begin{cases}
    0 \ , & \text{for } t \leq T  \\
    \sum_{k=0}^{T}  R_{k+1} \ , & \text{for } t = T+1 \ .
  \end{cases} 
\end{align}
 \item state:
\begin{align}
  \tilde{s}_t \ &= \ (s_t,\rho_t) \ , \\
  \rho_t \ &= \ \sum_{k=0}^{t-1}  r_{k+1} \ , \ \text{ with } R_{k+1}=r_{k+1}\ .
\end{align} 
\end{itemize}
Here we assume that $\rho$ can only take a finite number of values to assure
that the enriched states $\tilde{s}$ are finite. 
If the original reward was continuous, then $\rho$ can represent the accumulated reward 
with any desired precision if the sequence length is $T$ and the original reward was
bounded. We assume that $\rho$ is sufficiently precise to distinguish the 
optimal policies, which are deterministic, from sub-optimal deterministic policies. 
The random variable $R_{k+1}$ is distributed according to
$p(r \mid s_k,a_k)$.
We assume that the time $t$ is coded in $s$ in order to know when the
episode ends and reward is no longer received, otherwise we introduce
an additional state variable $\tau=t$ that codes the time.

\begin{propositionA}
\label{th:AreturnEqu}
If a Markov decision process $\cP$ with immediate reward is
transformed by above defined $\tilde{R}_t$ and $\tilde{s}_t$ 
to a Markov decision process $\tilde{\cP}$ with delayed
reward, where the reward is given at sequence end, then:
(I) the optimal policies do not change, and (II)
for $\tilde{\pi}(a \mid \tilde{s}) =   \pi(a \mid s)$
\begin{align}
 \tilde{q}^{\tilde{\pi}}(\tilde{s},a) \ &= \ q^\pi(s,a) \ + \
 \sum_{k=0}^{t-1}  r_{k+1} \ ,
\end{align}
for $\tilde{S}_t=\tilde{s}$, $S_t=s$, and $A_t=a$.
\end{propositionA}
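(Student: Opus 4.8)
The plan is to prove (II) by a direct computation of the action-value function in $\tilde{\cP}$, exploiting that all reward there is concentrated at the terminal step and that $\rho_t$ stores the past reward, and then to obtain (I) by factoring the transformation $\cP \to \tilde{\cP}$ through an intermediate state-enriched MDP, so that the two previously established facts — preservation of optimal policies under state enrichment (Proposition~\ref{th:Aenrich}) and under reward redistribution (Theorem~\ref{th:AEquivT}) — can be chained.

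For (II), first I would observe that in $\tilde{\cP}$ every reward $\tilde{R}_{t+k+1}$ with $t+k+1 \leq T$ vanishes, so the return collapses to the single terminal term $\tilde{G}_t = \tilde{R}_{T+1} = \sum_{k=0}^{T} R_{k+1}$, independently of $t$. Conditioning on $\tilde{S}_t = \tilde{s} = (s,\rho_t)$ with $\rho_t = \sum_{k=0}^{t-1} r_{k+1}$, I would split this sum into the already realized past $\sum_{k=0}^{t-1} R_{k+1}$ and the future $\sum_{k=t}^{T} R_{k+1}$. The past part is determined by the conditioning and equals $\rho_t$, since $\rho_t$ is exactly the accumulated reward stored in the state. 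For the future part, the crucial point is that the $\rho$-component influences neither the $s$-transition probabilities, nor the reward distributions $p(r\mid s,a)$, nor — because $\tilde{\pi}(a\mid\tilde{s})=\pi(a\mid s)$ — the choice of actions; hence by the Markov property the future unrolls exactly as in $\cP$ under $\pi$ started from $(s,a)$. This yields $\EXP[\sum_{k=t}^{T} R_{k+1}\mid \tilde{S}_t=\tilde{s}, A_t=a] = \EXP_{\pi}[G_t \mid S_t=s, A_t=a] = q^\pi(s,a)$, and adding the constant $\rho_t$ gives the claimed identity. At $t=0$ the offset is the empty sum $0$, so $\tilde{v}_0^{\tilde{\pi}}=v_0^\pi$ for corresponding policies, which I would record for part (I).

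For (I), I would introduce the auxiliary MDP $\cP'$ that shares the enriched state space $\{(s,\rho)\}$ and the transitions of $\tilde{\cP}$ but keeps the original immediate reward $R_{t+1}\sim p(r\mid s,a)$. Since $\rho$ merely accumulates past reward without affecting $s$-dynamics or reward emission, $\cP$ is a reward-preserving homomorphic image of $\cP'$, i.e.\ $\cP'$ is state-enriched relative to $\cP$; Proposition~\ref{th:Aenrich} then gives that $\cP'$ and $\cP$ have identical optimal $Q$-values and, up to the enrichment correspondence, identical optimal policies. Next, $\tilde{\cP}$ arises from $\cP'$ purely by moving all reward to the sequence end, which preserves the per-episode return $\tilde{G}_0 = G_0'$; hence $\tilde{\cP}$ is obtained from $\cP'$ by a reward redistribution, so by Theorem~\ref{th:AsdpEquiv} the two are strictly return-equivalent and by Theorem~\ref{th:AEquivT} they share the same optimal policies. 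Composing the two correspondences establishes (I).

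The step I expect to be the main obstacle is the careful justification in (I) that enriching the state does not enlarge the set of achievable optima: a policy in $\tilde{\cP}$ may in principle condition on $\rho$, so ``same optimal policies'' is not a naive bijection but the nontrivial content of the homomorphism result. Routing through $\cP'$ lets me borrow Proposition~\ref{th:Aenrich} (built on Lemma~\ref{th:Arav}) for exactly this point rather than re-proving it; a secondary item to verify is that $\tilde{\cP}$ is genuinely an MDP, which holds because storing $\rho$ in the state makes the terminal reward $\tilde{R}_{T+1}=\rho_T + R_{T+1}$ Markov in $(\tilde{s}_T,a_T)$.
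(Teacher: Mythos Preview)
Your proposal is correct and follows essentially the same approach as the paper: for (II) you give the direct computation via the Markov property (matching the paper's ``equivalence without Bellman equation'' argument), and for (I) you factor through an intermediate state-enriched MDP and then invoke the state-enrichment and return-equivalence preservation results, exactly as the paper does. The only minor difference is that the paper additionally supplies a second, alternative proof of (II) by verifying the Bellman equation for the candidate $\tilde{q}^{\tilde{\pi}}$, and it cites the MDP-specific propositions (Proposition~\ref{th:AstrictToEq}, Proposition~\ref{th:AthRE}) rather than the SDP theorems you reference, but these are trivially equivalent here.
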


\begin{proof}
For (I) we first perform an state-enrichment of $\cP$ by
$\tilde{s}_t =  (s_t,\rho_t)$ with $\rho_t=  \sum_{k=0}^{t-1}
r_{k+1}$ for $R_{k+1}=r_{k+1}$ leading to an intermediate MDP.
We assume that the finite-valued $\rho$ is 
sufficiently precise to distinguish the 
optimal policies, which are deterministic, 
from sub-optimal deterministic policies. 
Proposition~\ref{th:Aenrich} ensures that 
neither the optimal $Q$-values 
nor the optimal policies change between the original MDP $\cP$
and the intermediate MDP.
Next, we redistribute the original reward $R_{t+1}$ 
according to the redistributed reward $\tilde{R}_t$. 
The new MDP $\tilde{\cP}$ with state enrichment and reward redistribution
is strictly return-equivalent to 
the intermediate MDP with state enrichment but the original reward.
The new MDP $\tilde{\cP}$ is Markov since the enriched state ensures that
$\tilde{R}_{T+1}$ is Markov.
Proposition~\ref{th:AstrictToEq} and Proposition~\ref{th:AthRE} ensure that
the optimal policies are the same.

For (II) we show a proof without Bellman equation and a proof using
the Bellman equation.\\
{\bf Equivalence without Bellman equation.}
We have $\tilde{G}_0=G_0$.
The Markov property ensures that the future reward is independent of
the already received reward:
\begin{align}
  \EXP_{\pi} \left[ \sum_{k=t}^{T}  R_{k+1} \mid S_t=s, A_t=
  a, \rho=\sum_{k=0}^{t-1}  r_{k+1}\right] \ &= \ \EXP_{\pi}
  \left[ \sum_{k=t}^{T}  R_{k+1} \mid S_t=s, A_t=a \right] \ .
\end{align}
We assume $\tilde{\pi}(a \mid \tilde{s})=\pi(a \mid s)$.

We obtain
\begin{align}
 \tilde{q}^{\tilde{\pi}}(\tilde{s},a) \ &= \ \EXP_{\tilde{\pi}} \left[
 \tilde{G}_0 \mid \tilde{S}_t=\tilde{s}, A_t=a \right] \\ \nonumber
 &= \ \EXP_{\tilde{\pi}} \left[ \sum_{k=0}^{T}  R_{k+1}
 \mid S_t=s,  \rho=\sum_{k=0}^{t-1}  r_{k+1} , A_t= a \right] \\ \nonumber
 &= \ \EXP_{\tilde{\pi}} \left[ \sum_{k=t}^{T}  R_{k+1}
 \mid S_t=s, \rho=\sum_{k=0}^{t-1}  r_{k+1} , A_t= a \right] \ + \
  \sum_{k=0}^{t-1}   r_{k+1}\\ \nonumber
 &= \ \EXP_{\pi} \left[ \sum_{k=t}^{T}  R_{k+1}
 \mid S_t=s, A_t= a \right] \ + \ \sum_{k=0}^{t-1}  r_{k+1}\\ \nonumber
 &= \ q^{\pi}(s,a) \ + \ \sum_{k=0}^{t-1}  r_{k+1} \ .
\end{align} 
We used $\EXP_{\tilde{\pi}} =\EXP_{\pi}$, which is ensured
since reward probabilities, transition probabilities, and
the probability of choosing an action by the policy correspond to each
other in both settings.

Since the optimal policies do not change for
reward-equivalent and state-enriched processes,
we have
\begin{align}
 \tilde{q}^*(\tilde{s},a) \ &= \ q^*(s,a) \ + \ \sum_{k=0}^{t-1}  r_{k+1}\ .
\end{align}

{\bf Equivalence with Bellman equation.}
With $q^\pi(s,a)$ as optimal action-value function for the
original Markov decision process, we define a new Markov
decision process with action-state function $\tilde{q}^{\tilde{\pi}}$.
For $\tilde{S}_t=\tilde{s}$, $S_t=s$, and $A_t=a$ we have
\begin{align}
 \tilde{q}^{\tilde{\pi}}(\tilde{s},a) \ &:= \ q^\pi(s,a) \ + \
 \sum_{k=0}^{t-1}  r_{k+1} \ , \\
 \tilde{\pi}(a \mid \tilde{s}) \ &:= \  \pi(a \mid s) \ .
\end{align} 

Since $\tilde{s}'=(s',\rho')$, $\rho'=r+\rho$, and $\tilde{r}$ is
constant, the values $\tilde{S}_{t+1}=\tilde{s}'$ and
$\tilde{R}_{t+1}=\tilde{r}$
can be computed from
$R_{t+1}=r$, $\rho$, and $S_{t+1}=s'$. Therefore, we have 
\begin{align}
\tilde{p}(\tilde{s}',\tilde{r}\mid s,\rho,a)\ &= \ \tilde{p}(s',\rho',\tilde{r}\mid s,\rho,a)\ = \ p(s',r\mid s,a) \ .
\end{align}

For $t < T$, we have $\tilde{r}=0$ and $\rho'=r+\rho$, where we set
$r=r_{t+1}$:
\begin{align}
  \tilde{q}^{\tilde{\pi}}(\tilde{s},a) \ &= \ q^\pi(s,a) \ + \ \sum_{k=0}^{t-1} 
  r_{k+1} \\ \nonumber
  &= \ \sum_{s',r} p(s',r\mid s,a) \ 
  \left[r \ + \ \sum_{a'} \pi(a' \mid s') \ q^\pi(s',a')  \right]  \ +
  \
  \sum_{k=0}^{t-1}  r_{k+1} \\ \nonumber
  &= \ \sum_{s',\rho'} \tilde{p}(s',\rho', \tilde{r}\mid s,\rho,a) \ 
  \left[r \ + \ \sum_{a'} \pi(a' \mid s') \ q^\pi(s',a')  \right]
  \ + \ \sum_{k=0}^{t-1}  r_{k+1} \\ \nonumber
  &= \ \sum_{\tilde{s}', \tilde{r}} \tilde{p}(\tilde{s}', \tilde{r}\mid \tilde{s},a) \ 
  \left[ r \ + \ \sum_{a'} \pi(a' \mid s') \ q^\pi(s',a')   \ + \
    \sum_{k=0}^{t-1}  r_{k+1}\right]  \\ \nonumber
  &= \  \sum_{\tilde{s}', \tilde{r}} \tilde{p}(\tilde{s}', \tilde{r}\mid
  \tilde{s},a) \ 
  \left[\tilde{r} \ + \ \sum_{a'} \pi(a' \mid s') \ q^\pi(s',a')   \ + \
    \sum_{k=0}^{t}  r_{k+1} \right]  \\ \nonumber
  &= \ \sum_{\tilde{s}', \tilde{r}} \tilde{p}(\tilde{s}', \tilde{r}\mid
  \tilde{s},a) \ 
  \left[\tilde{r} \ + \ \sum_{a'} \tilde{\pi}(a' \mid \tilde{s}') \
  \tilde{q}^{\tilde{\pi}}(\tilde{s}',a')
  \right]  \ .
\end{align} 

For $t=T$ we have $\tilde{r}=\sum_{k=0}^{T}  r_{k+1}=\rho'$ and
$q^\pi(s',a')=0$ as well as $\tilde{q}^{\tilde{\pi}}(\tilde{s}',a')=0$.
Both $q$ and $\tilde{q}$ must be zero for $t\geq T$ since after time $t=T+1$
there is no more reward.
We obtain for $t=T$ and $r=r_{T+1}$:
\begin{align}
  \tilde{q}^{\tilde{\pi}}(\tilde{s},a) \ &= \ q^\pi(s,a) \ + \ \sum_{k=0}^{T-1} 
  r_{k+1} \\ \nonumber
  &= \ \sum_{s',r} p(s',r\mid s,a) \ 
  \left[r \ + \ \sum_{a'} \pi(a' \mid s') \ q^\pi(s',a')  \right]  \ + \ 
  \sum_{k=0}^{T-1}  r_{k+1} \\ \nonumber
  &= \ \sum_{s',\rho', r} \tilde{p}(s',\rho'\mid s,\rho,a) \ 
  \left[ r \ + \ \sum_{a'} \pi(a' \mid s') \ q^\pi(s',a')  \right]  \ + \ 
  \sum_{k=0}^{T-1}  r_{k+1} \\ \nonumber
  &= \ \sum_{s',\rho', r} \tilde{p}(s',\rho'\mid s,\rho,a) \ 
  \left[\sum_{k=0}^{T}  r_{k+1}  \ + \ \sum_{a'} \pi(a' \mid s') \ q^\pi(s',a')  \right]  \\ \nonumber
  &= \ \sum_{\tilde{s}', \rho'} \tilde{p}(\tilde{s}'\mid \tilde{s},a) \ 
  \left[\rho'  \ + \ \sum_{a'} \pi(a' \mid s') \ q^\pi(s',a')  \right]  \\ \nonumber
  &= \ \sum_{\tilde{s}', \rho'} \tilde{p}(\tilde{s}'\mid \tilde{s},a) \ 
  \left[\rho' \ + \ 0  \right]  \\ \nonumber
  &= \  \sum_{\tilde{s}', \tilde{r}} \tilde{p}(\tilde{s}'\mid \tilde{s},a) \ 
  \left[\tilde{r}  \ + \ \sum_{a'} \tilde{\pi}(a' \mid \tilde{s}') \
  \tilde{q}^{\tilde{\pi}}(\tilde{s}',a')
  \right] \ .
\end{align}

Since $\tilde{q}^{\tilde{\pi}}(\tilde{s},a)$ fulfills the Bellman 
equation, it is the action-value function for $\tilde{\pi}$.

\end{proof}

\subsection{Transforming an Delayed Reward MDP to an Immediate Reward SDP}
\label{sec:AdelayIm}

Next we consider the opposite direction, where the delayed reward
MDP $\tilde{\cP}$ is given and we want to find an immediate reward
SDP $\cP$ that is return-equivalent to $\tilde{\cP}$.
We assume an episodic reward for $\tilde{\cP}$, that is, 
reward is only given at sequence end.
The realization of final reward, that is the realization of the return,
$\tilde{r}_{T+1}$ is redistributed 
to previous time steps.
Instead of redistributing the realization $\tilde{r}_{T+1}$ 
of the random variable $\tilde{R}_{T+1}$, also its expectation
$\tilde{r}(s_T,a_T)=\EXP \left[ \tilde{R}_{T+1} \mid s_T,a_T \right]$ 
can be redistributed since $Q$-value estimation 
considers only the mean.  
We used the Markov property 
\begin{align}
  \EXP_{\pi} \left[ \tilde{G}_0 \mid s_0,a_0,\ldots,s_T,a_T \right] \ &= \
  \EXP_{\pi} \left[ \sum_{t=0}^{T} \tilde{R}_{t+1} \mid s_0,a_0,\ldots,s_T,a_T \right] \\ \nonumber
  &= \
  \EXP \left[ \tilde{R}_{T+1} \mid s_0,a_0,\ldots,s_T,a_T \right] \\ \nonumber
  &= \
  \EXP \left[ \tilde{R}_{T+1} \mid s_T,a_T \right] \ .
\end{align}
Redistributing the expectation reduces the variance of estimators since
the variance of the random variable is already factored out.

We assume a delayed reward MDP $\tilde{\cP}$ with reward
\begin{align}
  \tilde{R}_t  \ &= \
  \begin{cases}
    0 \ , & \text{for } t \leq T  \\
    \tilde{R}_{T+1} \ , & \text{for } t = T+1 \ ,
  \end{cases} 
\end{align}
where $\tilde{R}_t=0$ means that the random variable $\tilde{R}_t$ is always zero.
The expected reward at the last time step is
\begin{align}
 \tilde{r}(s_T,a_T) \ &= \ \EXP\left[\tilde{R}_{T+1} \mid s_T,a_T\right] \ ,
\end{align}
which is also the expected return.
Given a state-action sequence $(s_0,a_0,\ldots,s_T,a_T)$,
we want to redistribute either the realization $\tilde{r}_{T+1}$ of the
random variable $\tilde{R}_{T+1}$ or its expectation $\tilde{r}(s_T,a_T)$,

\subsubsection{Optimal Reward Redistribution}
\label{sec:Aopt_rew_red}

The main goal in this paper is to derive 
an SDP via reward redistribution 
that has zero expected future rewards.
Consequently the SDP has no delayed rewards.
To measure the amount of delayed rewards,
we define the expected sum of delayed rewards $\kappa(m,t-1)$.
\begin{definitionA}
For $1 \leq t\leq T$ and $0\leq m\leq T-t$,  
the expected sum of delayed rewards at time $(t-1)$ 
in the interval $[t+1,t+m+1]$ is defined as
\begin{align}
 \kappa(m,t-1) \ = \ \EXP_{\pi} \left[ \sum_{\tau=0}^{m} R_{t+1+\tau}  
  \mid s_{t-1}, a_{t-1} \right]  \ .
\end{align}
\end{definitionA}
The Bellman equation for $Q$-values becomes
\begin{align}
    q^\pi(s_t,a_t) \ &= \   r(s_t,a_t) \ + \ \kappa(T-t-1,t) \ ,
\end{align} 
where $\kappa(T-t-1,t)$ 
is the expected sum of future rewards until sequence end given $(s_t,a_t)$, 
that is, in the interval $[t+2,T+1]$. 
We aim to derive an MDP with $\kappa(T-t-1,t)=0$,
which gives $q^\pi(s_t,a_t) =  r(s_t,a_t)$.
In this case, 
learning the $Q$-values reduces to estimating the average immediate reward
%Sepp new
$r(s_t,a_t) = \EXP \left[ R_{t+1} \mid s_t,a_t\right]$.
%Sepp new end
Hence, the reinforcement learning task reduces to computing 
the mean, e.g.\ the arithmetic mean, for each
state-action pair $(s_t,a_t)$.
Next, we define an optimal reward redistribution.
\begin{definitionA}
A reward redistribution is optimal,  
if  $\kappa(T-t-1,t) = 0$ for $0\leq t \leq T-1$.
\end{definitionA}

Next theorem states that in general an MDP with optimal reward
redistribution does not exist, which
is the reason why we will consider SDPs in the following.
\begin{theoremA}
\label{th:Aviolate}
In general, an optimal reward redistribution violates
the assumption that the reward distribution is Markov, 
therefore the Bellman equation does not hold.
\end{theoremA}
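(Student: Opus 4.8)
The plan is to prove the statement with an explicit counterexample: a delayed-reward MDP $\tilde{\cP}$ whose optimal reward redistribution cannot be made first-order Markov. I would start from the characterization of optimal redistributions already available in Theorem~\ref{th:zeroExp}. By the equivalence (I)$\Leftrightarrow$(II) established there, optimality $\kappa(T-t-1,t)=0$ forces the redistributed reward to satisfy Eq.~\eqref{eq:diffQ}, namely $\EXP[R_{t+1}\mid s_{t-1},a_{t-1},s_t,a_t]=\tilde{q}^\pi(s_t,a_t)-\tilde{q}^\pi(s_{t-1},a_{t-1})$. Hence the expected reward of an optimal redistribution depends on $(s_t,a_t)$ alone only if the subtracted term $\tilde{q}^\pi(s_{t-1},a_{t-1})$ happens to be identical for every admissible predecessor $(s_{t-1},a_{t-1})$ of $(s_t,a_t)$. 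The whole argument therefore reduces to building an MDP in which some pair $(s_t,a_t)$ has two admissible predecessors with distinct action-values.

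Next I would construct the minimal such process. Observe first that with deterministic transitions and a deterministic policy a common successor forces equal values: from $(s_{t-1},a_{t-1})$ one reaches $(s_t,a_t)$ with probability one, and since reward is zero until the end, $\tilde{q}^\pi(s_{t-1},a_{t-1})=\EXP[\tilde{q}^\pi(s_t,a_t)\mid s_{t-1},a_{t-1}]=\tilde{q}^\pi(s_t,a_t)$; so stochastic transitions are essential. Take $T=2$ with a start state $s_0$ that branches into states $x$ and $y$ at $t=1$. From $x$ the single action leads to a common terminal $m$ with probability $q_x$ and to a terminal $x'$ otherwise; from $y$ it leads to $m$ with probability $q_y$ and to $y'$ otherwise. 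The terminal reward $\tilde{R}_3$ is Markov in $\tilde{\cP}$, with expectations $r_m,r_{x'},r_{y'}$ attached to $m,x',y'$, so $\tilde{\cP}$ is a genuine MDP. Then the two predecessors of $(m,a_m)$ have values $\tilde{q}^\pi(x,a_x)=q_x r_m+(1-q_x)r_{x'}$ and $\tilde{q}^\pi(y,a_y)=q_y r_m+(1-q_y)r_{y'}$, which I can make unequal by an elementary choice of parameters, e.g.\ $r_m=1$, $r_{x'}=0$, $r_{y'}=-1$, $q_x=q_y=\tfrac12$.

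Finally I would substitute into Eq.~\eqref{eq:diffQ} at $t=2$. Since $\tilde{q}^\pi(m,a_m)=r_m$, the expected redistributed reward reaching $(m,a_m)$ equals $r_m-\tilde{q}^\pi(x,a_x)=(1-q_x)(r_m-r_{x'})$ when the predecessor is $x$ and $r_m-\tilde{q}^\pi(y,a_y)=(1-q_y)(r_m-r_{y'})$ when it is $y$; for the chosen parameters these are $\tfrac12$ and $1$. Thus $\EXP[R_3\mid s_1,a_1,m,a_m]$ takes two different values for the same current pair $(m,a_m)$, so no function $r(m,a_m)$ of the current pair alone can equal this conditional expectation, and the reward distribution of every optimal redistribution is non-Markov. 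The Bellman equation, however, presupposes a well-defined immediate expected reward $r(s,a)=\EXP[R_{t+1}\mid s,a]$; since no such quantity exists here, the Bellman recursion cannot hold, which is the claim. The main obstacle is precisely the tension identified above: the construction must simultaneously keep $(m,a_m)$ reachable from two predecessors and keep those predecessors' values distinct, and reconciling these two requirements — impossible under deterministic dynamics — is exactly what forces the stochastic branching while still leaving $\tilde{\cP}$ a legitimate Markov process.
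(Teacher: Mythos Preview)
Your proof is correct but takes a genuinely different route from the paper.

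The paper argues directly from the Bellman equation in matrix form: assuming a Markov redistributed reward, optimality $\kappa=0$ forces $q^\pi=r$ and hence $\BP_{t\to t+1}\Br_{t+1}=\BZe$; one then observes that an MDP can be chosen so that $\BP_{t\to t+1}$ has full column rank, which together with $\Br_{t+1}\neq\BZe$ (not all reward can sit at $t=0$, since different policies must have different expected returns) yields a contradiction. This is a linear-algebra obstruction and is entirely self-contained.

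You instead invoke the characterization of Theorem~\ref{th:zeroExp} (Eq.~\eqref{eq:diffQ}) and build a concrete three-step MDP where a common successor $(m,a_m)$ has two stochastic predecessors with distinct $\tilde q^\pi$-values, so the right-hand side of Eq.~\eqref{eq:diffQ} cannot be a function of $(s_t,a_t)$ alone. Your argument is more explicit and hands-on; the paper's is more structural and avoids forward reference to Theorem~\ref{th:zeroExp}, whose proof in the appendix actually comes \emph{after} the present theorem. There is no true circularity---the two proofs are independent---but your version leans on a result that the paper establishes only later, whereas the paper's own proof is deliberately prior to and independent of the $\tilde q^\pi$-difference characterization.

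One small clarification you should add: make explicit that the branching $s_0\to\{x,y\}$ is by stochastic transition under a single action (or that $\pi$ is stochastic), so that both $(x,a_x)$ and $(y,a_y)$ are reachable predecessors of $(m,a_m)$ under the same policy $\pi$ to which Theorem~\ref{th:zeroExp} is applied.
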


\begin{proof}
We assume an MDP $\tilde{\cP}$ with
$\tilde{r}(s_T,a_T)\not=0$ and which has policies
that lead to different expected returns at time $t=0$.
If all reward is given at time $t=0$, 
all policies have the same expected return at time $t=0$.
This violates our assumption, therefore not all reward can 
be given at $t=0$.
In vector and matrix notation the Bellman equation is
\begin{align}
  \Bq^\pi_t    \ &= \ \Br_t \ + \  \BP_{t\rightarrow t+1} \ \Bq^\pi_{t+1}  \ ,
\end{align}
where $\BP_{t\rightarrow t+1}$ is the row-stochastic matrix with
$p(s_{t+1}\mid s_t,a_t)\pi(a_{t+1} \mid s_{t+1})$ 
at positions $((s_t,a_t),(s_{t+1},a_{t+1}))$.
An optimal reward redistribution requires the 
expected future rewards to be zero:
\begin{align}
  \BP_{t\rightarrow t+1} \ \Bq^\pi_{t+1}  \ &= \ \BZe 
\end{align}
and, since optimality requires $\Bq^\pi_{t+1}=\Br_{t+1}$,
we have
\begin{align}
  \BP_{t\rightarrow t+1} \ \Br_{t+1}  \ &= \ \BZe \ , 
\end{align}
where $\Br_{t+1}$ is the vector 
with components $\tilde{r}(s_{t+1},a_{t+1})$.
Since (i) the MDPs are return-equivalent, 
(ii) $\tilde{r}(s_T,a_T)\not=0$, 
and (iii) not all reward is given at $t=0$,
an $(t+1)$ exists with $\Br_{t+1}\not=\BZe$.
We can construct an MDP $\tilde{\cP}$ which has
(a) at least as many state-action pairs $(s_t,a_t)$ 
as pairs $(s_{t+1},a_{t+1})$ and (b) the transition matrix
$\BP_{t\rightarrow t+1}$ has full rank.
$\BP_{t\rightarrow t+1}\Br_{t+1}=\BZe$ 
is now a contradiction to
$\Br_{t+1}\not=0$ and 
$\BP_{t\rightarrow t+1}$ has full rank.
Consequently, simultaneously ensuring Markov properties 
and ensuring zero future return
is in general not possible.
\end{proof}

For a particular $\pi$,
the next theorem states that an optimal reward redistribution,
that is $\kappa=0$, is equivalent to 
a redistributed reward which expectation is the difference of
consecutive $Q$-values of the original delayed reward.
The theorem states that an optimal reward redistribution exists but
we have to assume an SDP $\cP$ that has a
second order Markov reward redistribution.
\begin{theoremA}
\label{th:AzeroExp}
We assume a delayed reward MDP $\tilde{\cP}$, 
where the accumulated reward is given at sequence end.
An new SDP $\cP$ is obtained by a 
second order Markov reward redistribution,
which ensures that $\cP$ is return-equivalent to $\tilde{\cP}$.
For a specific $\pi$, the following two
statements are equivalent:
(I) ~~$\kappa(T-t-1,t) = 0$, i.e.\ the reward redistribution is optimal, 
\begin{flalign}
\label{eq:AdiffQ}
\text{(II)~~}  \EXP \left[ R_{t+1} \mid s_{t-1},a_{t-1},s_t,a_t \right] 
   \ &= \ \tilde{q}^\pi(s_t,a_t) \ - \
    \tilde{q}^\pi(s_{t-1},a_{t-1}) \ .&& 
\end{flalign}
 
Furthermore, an optimal reward redistribution
fulfills for $1 \leq t\leq T$ and $0\leq m\leq T-t$:
\begin{align}
\label{eq:AoptimalCon1}
  \kappa(m,t-1) \ &= \  0 \ . 
\end{align} 
\end{theoremA}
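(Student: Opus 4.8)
The plan is to prove the equivalence by first establishing a single \emph{decomposition identity} that writes the expected return, conditioned on an initial sub-sequence, as a sum of already-fixed redistributed rewards plus the tail quantity $\kappa$; both implications then follow by reading this identity off at consecutive times. Two preliminary facts drive everything. First, because all reward in $\tilde{\cP}$ is paid at the end, $\tilde{q}^\pi(s_u,a_u)=\EXP_\pi\left[\tilde{R}_{T+1}\mid s_u,a_u\right]$, so the tower property together with the Markov transitions gives the \emph{martingale property}
\begin{align}
\EXP_\pi\left[\tilde{q}^\pi(s_u,a_u)\mid s_v,a_v\right] \ = \ \tilde{q}^\pi(s_v,a_v), \qquad u\geq v.
\end{align}
Second, since $\cP$ is obtained from $\tilde{\cP}$ by reward redistribution it is strictly return-equivalent to $\tilde{\cP}$ (Theorem~\ref{th:AsdpEquiv}), so Lemma~\ref{th:AreturnSub} applied to the sub-sequence $(s_0,a_0,\ldots,s_t,a_t)$ yields $\EXP_\pi\left[G_0\mid s_0,a_0,\ldots,s_t,a_t\right]=\EXP_\pi\left[\tilde{G}_0\mid s_0,a_0,\ldots,s_t,a_t\right]=\tilde{q}^\pi(s_t,a_t)$, the last step again using the delayed-reward structure and the Markov property.

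Next I would expand $G_0=\sum_{\tau=0}^{T} R_{\tau+1}$ inside this conditional expectation and split the sum at $\tau=t$. Using that the redistribution is causal and second order Markov, for $\tau\leq t$ the term $\EXP\left[R_{\tau+1}\mid s_0,a_0,\ldots,s_t,a_t\right]$ collapses (by causality) to $\EXP\left[R_{\tau+1}\mid s_{\tau-1},a_{\tau-1},s_\tau,a_\tau\right]$, which is fixed by the given sub-sequence, while for $\tau>t$ the Markov property makes the terms depend only on $(s_t,a_t)$ and their sum is exactly $\kappa(T-t-1,t)$. This produces the decomposition identity
\begin{align}
\sum_{\tau=0}^{t}\EXP\left[R_{\tau+1}\mid s_{\tau-1},a_{\tau-1},s_\tau,a_\tau\right] \ + \ \kappa(T-t-1,t) \ = \ \tilde{q}^\pi(s_t,a_t),
\end{align}
valid for $0\leq t\leq T$, where at $t=T$ the empty tail gives $\kappa(-1,T)=0$.

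With this identity the two implications are short. For (I)$\Rightarrow$(II), reading the identity at $t$ and at $t-1$ under the assumption $\kappa=0$ (the tail at $t=T$ vanishing automatically) gives $\sum_{\tau=0}^{t}\EXP[R_{\tau+1}\mid\cdots]=\tilde{q}^\pi(s_t,a_t)$ and its analogue at $t-1$; subtracting yields $\EXP\left[R_{t+1}\mid s_{t-1},a_{t-1},s_t,a_t\right]=\tilde{q}^\pi(s_t,a_t)-\tilde{q}^\pi(s_{t-1},a_{t-1})$, which is (II). For (II)$\Rightarrow$(I) and the concluding statement $\kappa(m,t-1)=0$, I would substitute (II) directly into $\kappa(m,t-1)=\EXP_\pi\left[\sum_{\tau=0}^{m}R_{t+1+\tau}\mid s_{t-1},a_{t-1}\right]$; the consecutive differences telescope to $\EXP_\pi\left[\tilde{q}^\pi(s_{t+m},a_{t+m})-\tilde{q}^\pi(s_{t-1},a_{t-1})\mid s_{t-1},a_{t-1}\right]$, and the martingale property collapses this to $0$. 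Taking $m=T-t-1$ with the shifted time index recovers (I) as a special case, while general $m$ gives the final claim. The main obstacle is the decomposition identity itself: one must argue carefully that appending the later states $s_{t+1},\ldots$ to the conditioning leaves the expectations of the \emph{past} redistributed rewards unchanged (this is exactly where causality enters) while collapsing the future ones into $\kappa$; everything afterward is telescoping plus the martingale property of $\tilde{q}^\pi$.
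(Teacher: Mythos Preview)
Your proposal is correct and follows essentially the same route as the paper: the paper likewise derives your decomposition identity $\tilde{q}^\pi(s_t,a_t)=\sum_{\tau=0}^t h_\tau+\kappa(T-t-1,t)$ from strict return-equivalence (Lemma~\ref{th:AreturnSub}) together with the second-order Markov property, then reads off (II) by consecutive subtraction, and for the reverse direction telescopes the $\tilde q^\pi$-differences and collapses them via the Markov/tower property of $\tilde R_{T+1}$. The only cosmetic difference is that the paper treats $m=0$ separately via the Bellman equation before doing the general telescoping, whereas you handle all $m$ uniformly through the martingale property; this is a presentational streamlining, not a different argument.
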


\begin{proof}
\label{c:t2p}
PART (I): we assume that the reward redistribution is optimal, that is,
\begin{align}
  \kappa(T-t-1,t) \ &= \  0 \ .
\end{align} 
The redistributed reward $R_{t+1}$ is second order Markov.
We abbreviate the expected $R_{t+1}$ by $h_t$:
 \begin{align}
    \EXP \left[ R_{t+1} \mid s_{t-1},a_{t-1},s_t,a_t \right] 
    \ &= \ h_t \ .
\end{align} 
  
The assumptions of Lemma~\ref{th:AreturnSub} hold for 
for the delayed reward MDP $\tilde{\cP}$ and
the redistributed reward SDP $\cP$. 
Therefore for a given state-action sub-sequence
$(s_0,a_0,\ldots,s_t,a_t)$, $0 \leq t \leq T$:
\begin{align}
   \EXP_{\pi} \left[
    \tilde{G}_0 \mid s_0,a_0,\ldots,s_t,a_t \right] \ &= \
   \EXP_{\pi} \left[
    G_0 \mid s_0,a_0,\ldots,s_t, a_t\right] 
  \end{align}
with
$G_0=\sum_{\tau=0}^{T}  R_{\tau+1}$ and $\tilde{G}_0=\tilde{R}_{T+1}$.
The Markov property of the MDP $\tilde{\cP}$
ensures that the future reward  from $t+1$ on is independent of
the past sub-sequence $s_0,a_0,\ldots,s_{t-1},a_{t-1}$:
\begin{align}
 \EXP_{\pi} \left[
 \sum_{\tau=0}^{T-t} \tilde{R}_{t+1+\tau} \mid s_t,a_t \right] \ &= \ 
 \EXP_{\pi} \left[
 \sum_{\tau=0}^{T-t} \tilde{R}_{t+1+\tau} \mid 
 s_0,a_0,\ldots,s_t,a_t \right] \ .
 \end{align}
The second order Markov property of the SDP $\cP$
ensures that the future reward from $t+2$ on is independent of
the past sub-sequence $s_0,a_0,\ldots,s_{t-1},a_{t-1}$:
\begin{align}
\EXP_{\pi} \left[
 \sum_{\tau=0}^{T-t-1} R_{t+2+\tau} \mid s_t,a_t \right] \ &= \ 
 \EXP_{\pi} \left[
 \sum_{\tau=0}^{T-t-1} R_{t+2+\tau} \mid s_0,a_0,\ldots,s_t,a_t \right] \ .
\end{align}

Using these properties we obtain
\begin{align}
 \tilde{q}^{\pi}(s_t,a_t) \ 
 &= \ \EXP_{\pi} \left[
 \sum_{\tau=0}^{T-t} \tilde{R}_{t+1+\tau} \mid s_t,a_t \right] \\ \nonumber
 &= \ \EXP_{\pi} \left[
 \sum_{\tau=0}^{T-t} \tilde{R}_{t+1+\tau} \mid s_0,a_0,\ldots,s_t,a_t \right] \\ \nonumber
 &= \ \EXP_{\pi} \left[
 \tilde{R}_{T+1} \mid s_0,a_0,\ldots,s_t,a_t \right] \\ \nonumber
 &= \ \EXP_{\pi} \left[
 \sum_{\tau=0}^{T} \tilde{R}_{\tau+1} \mid s_0,a_0,\ldots,s_t,a_t \right] \\ \nonumber
  &= \ \EXP_{\pi} \left[
 \tilde{G}_0 \mid s_0,a_0,\ldots,s_t,a_t \right] \\ \nonumber
 &= \ \EXP_{\pi} \left[
 G_0 \mid s_0,a_0,\ldots,s_t, a_t\right] \\ \nonumber
  &= \ \EXP_{\pi} \left[
 \sum_{\tau=0}^{T} R_{\tau+1}  \mid s_0,a_0,\ldots,s_t, a_t\right] \\ \nonumber
   &= \ \EXP_{\pi} \left[
 \sum_{\tau=0}^{T-t-1} R_{t+2+\tau}  \mid s_0,a_0,\ldots,s_t, a_t\right] \ + \ 
 \sum_{\tau=0}^{t}  h_{\tau} \\ \nonumber
   &= \ \EXP_{\pi} \left[
 \sum_{\tau=0}^{T-t-1} R_{t+2+\tau}  \mid s_t, a_t\right] \ + \ 
 \sum_{\tau=0}^{t}  h_{\tau} \\ \nonumber
  &= \ \kappa(T-t-1,t)  \ + \ 
 \sum_{\tau=0}^{t} h_{\tau} \\ \nonumber
  &= \ \sum_{\tau=0}^{t}  h_{\tau} \ .
\end{align}
We used
\begin{align}
 \kappa(T-t-1,t) \ = \ \EXP_{\pi} \left[ \sum_{\tau=0}^{T-t-1} R_{t+2+\tau}  
  \mid s_t, a_t \right]  \ = \ 0 \ .
\end{align}

It follows that 
\begin{align}
 &\EXP \left[ R_{t+1} \mid s_{t-1},a_{t-1},s_t,a_t \right] 
    \ = \ h_t   \\ \nonumber 
    &= \ \tilde{q}^\pi(s_t,a_t) \ - \ \tilde{q}^\pi(s_{t-1},a_{t-1}) \ .
\end{align} 

~~\newline
~~\newline

PART (II): we assume that
\begin{align}
 &\EXP \left[ R_{t+1} \mid s_{t-1},a_{t-1},s_t,a_t \right] 
    \ = \ h_t   \\ \nonumber 
    &= \ \tilde{q}^\pi(s_t,a_t) \ - \ \tilde{q}^\pi(s_{t-1},a_{t-1}) \ .
\end{align} 

The expectations
$\EXP_{\pi}\left[.\mid  s_{t-1},a_{t-1}\right]$
like
$\EXP_{\pi}\left[\tilde{R}_{T+1}\mid  s_{t-1},a_{t-1}\right]$
are expectations over all episodes starting in $(s_{t-1},a_{t-1})$
and ending in some $(s_T,a_T)$.

First, we consider $m=0$ and  $1 \leq t\leq T$, therefore
$\kappa(0,t-1)=\EXP_{\pi} \left[R_{t+1}  \mid s_{t-1}, a_{t-1} \right]$.
Since  $\tilde{r}(s_{t-1},a_{t-1})=0$ for $1 \leq t\leq T$, we have
\begin{align}
  \tilde{q}^\pi(s_{t-1},a_{t-1}) \ &= \  
  \tilde{r}(s_{t-1},a_{t-1})\ + \ \sum_{s_t,a_t} p(s_t,a_t \mid s_{t-1}, a_{t-1}) \ 
  \tilde{q}^\pi(s_t,a_t) \\ \nonumber
  &= \ 
  \sum_{s_t,a_t} p(s_t,a_t \mid s_{t-1}, a_{t-1}) \ \tilde{q}^\pi(s_t,a_t) \ .
\end{align} 
Using this equation we obtain for $1 \leq t\leq T$:
\begin{align}
  \kappa(0,t-1) \ &= \
  \EXP_{s_t,a_t,R_{t+1}} \left[ R_{t+1}  \mid s_{t-1}, a_{t-1} \right] \\ \nonumber
  &= \ \EXP_{s_t,a_t} \left[ \tilde{q}^\pi(s_t,a_t) \ - \  
  \tilde{q}^\pi(s_{t-1},a_{t-1})  \mid s_{t-1}, a_{t-1} \right] \\ \nonumber
  &= \ \sum_{s_t,a_t} p(s_t,a_t \mid s_{t-1}, a_{t-1}) \ 
  \left( \tilde{q}^\pi(s_t,a_t) \ - \  
  \tilde{q}^\pi(s_{t-1},a_{t-1}) \right) \\ \nonumber
  &= \ \tilde{q}^\pi(s_{t-1},a_{t-1}) \ - \ 
  \sum_{s_t,a_t} p(s_t,a_t \mid s_{t-1}, a_{t-1}) \  
  \tilde{q}^\pi(s_{t-1},a_{t-1}) \\ \nonumber
  &= \ \tilde{q}^\pi(s_{t-1},a_{t-1}) \ - \ \tilde{q}^\pi(s_{t-1},a_{t-1})  
  \ = \  0 \ .
\end{align} 

Next, we consider the expectation of $\sum_{\tau=0}^{m} R_{t+1+\tau}$
for $1 \leq t\leq T$ and $1\leq m\leq T-t$ (for $m>0$)
\begin{align}
  \kappa(m,t-1) \ &= \ 
  \EXP_{\pi} \left[ \sum_{\tau=0}^{m} R_{t+1+\tau}  
  \mid s_{t-1},a_{t-1}\right]\\ \nonumber
  &= \
  \EXP_{\pi} \left[\sum_{\tau=0}^{m}
  \left(\tilde{q}^\pi(s_{\tau+t},a_{\tau+t}) \ - \
    \tilde{q}^\pi(s_{\tau+t-1},a_{\tau+t-1}) \right)  \mid
  s_{t-1},a_{t-1}\right] \\ \nonumber
  &= \
  \EXP_{\pi} \left[\tilde{q}^\pi(s_{t+m},a_{t+m})  \ - \
    \tilde{q}^\pi(s_{t-1},a_{t-1})
    \mid s_{t-1},a_{t-1}\right]\\ \nonumber
   &= \ \EXP_{\pi}\left[\EXP_{\pi}\left[ \sum_{\tau=t+m}^{T}
       \tilde{R}_{\tau+1} \mid s_{t+m},a_{t+m} \right]  \mid 
  s_{t-1},a_{t-1}\right]  \\ \nonumber
  &- \
   \EXP_{\pi}\left[\EXP_{\pi}\left[ \sum_{\tau=t-1}^{T}
       \tilde{R}_{\tau+1}  \mid s_{t-1},a_{t-1}\right] \mid
       s_{t-1},a_{t-1}\right] \\ \nonumber
 &= \ \EXP_{\pi}\left[ \tilde{R}_{T+1} \mid 
  s_{t-1},a_{t-1}\right]   \ - \
   \EXP_{\pi}\left[ \tilde{R}_{T+1}  \mid s_{t-1},a_{t-1}\right] \\ \nonumber
  &= \ 0 \ .
\end{align}
We used that $\tilde{R}_{t+1}=0$ for $t <T$.

For $t=\tau+1$ and $m=T-t=T-\tau-1$ we have
\begin{align}
  \kappa(T-\tau-1,\tau) \ &= \ 0 \ ,
\end{align}
which characterizes an optimal reward redistribution.

\end{proof}
Thus, an SDP with an optimal reward redistribution 
has a expected future rewards that are zero.
Equation $\kappa(T-t-1,t)= 0$ means that the new SDP $\cP$
has no delayed rewards as shown in next corollary.
\begin{corollaryA}
\label{th:ApropDelay}
An SDP with an optimal reward redistribution
fulfills for $0\leq \tau \leq T-t-1$ 
\begin{align}
  \EXP_{\pi} \left[ R_{t+1+\tau}  \mid
  s_{t-1}, a_{t-1} \right] \ &= \ 0 \ .
\end{align}
The SDP has no delayed rewards since no state-action pair
can increase or decrease the expectation of a future reward.
\end{corollaryA}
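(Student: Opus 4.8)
The plan is to obtain the single-step statement directly from the already-established fact that $\kappa$ vanishes at \emph{every} horizon $m$, using a simple telescoping (differencing) argument on the partial sums of future rewards.

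First I would invoke Theorem~\ref{th:AzeroExp}: for an optimal reward redistribution we have $\kappa(m,t-1)=0$ for all $1 \leq t \leq T$ and $0 \leq m \leq T-t$. By linearity of expectation the definition of $\kappa$ reads
\begin{align}
\kappa(m,t-1) \ &= \ \sum_{\tau=0}^{m} \EXP_{\pi}\left[ R_{t+1+\tau} \mid s_{t-1},a_{t-1}\right] \ ,
\end{align}
so two consecutive partial sums differ by exactly one expected reward term.

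Next I would separate the base case from the differencing step. For $\tau=0$ the claim is immediate, since $\EXP_{\pi}\left[ R_{t+1} \mid s_{t-1},a_{t-1}\right] = \kappa(0,t-1) = 0$. For $1 \leq \tau \leq T-t-1$ I would take the difference of two consecutive (vanishing) $\kappa$-values,
\begin{align}
\EXP_{\pi}\left[ R_{t+1+\tau} \mid s_{t-1},a_{t-1}\right] \ &= \ \kappa(\tau,t-1) \ - \ \kappa(\tau-1,t-1) \ = \ 0 \ - \ 0 \ = \ 0 \ ,
\end{align}
where both $\kappa$-terms lie in the admissible range $0 \leq m \leq T-t$ precisely because $\tau \leq T-t-1 < T-t$. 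This establishes $\EXP_{\pi}\left[ R_{t+1+\tau} \mid s_{t-1},a_{t-1}\right]=0$ for every $0 \leq \tau \leq T-t-1$, which is the asserted equation; the concluding sentence about ``no delayed rewards'' is then merely an interpretation, stating that conditioning on an earlier state-action pair $(s_{t-1},a_{t-1})$ leaves the expectation of each individual later reward unchanged at zero.

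There is essentially no hard step here: the result is a formal consequence of Theorem~\ref{th:AzeroExp}, and the only thing that requires care is index bookkeeping---verifying that each $\kappa$ appearing in the difference stays within the valid range $0 \leq m \leq T-t$, which holds exactly because the corollary restricts attention to $\tau \leq T-t-1$.
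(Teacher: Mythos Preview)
Your proposal is correct and follows essentially the same approach as the paper: the paper's proof also handles $\tau=0$ via $\kappa(0,t-1)=0$ and obtains the case $\tau>0$ by writing $\EXP_{\pi}[R_{t+1+\tau}\mid s_{t-1},a_{t-1}]=\kappa(\tau,t-1)-\kappa(\tau-1,t-1)=0-0=0$. Your additional remark about the index range $\tau\leq T-t-1$ keeping both $\kappa$-terms admissible is a nice extra bit of care not spelled out in the paper.
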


\begin{proof}
For $\tau=0$ we use $\kappa(m,t-1) = 0$ from Theorem~\ref{th:AzeroExp}
with $m=0$:
\begin{align}
  \EXP_{\pi} \left[ R_{t+1}  \mid
  s_{t-1}, a_{t-1} \right] \ = \  \kappa(0,t-1) \ = \ 0 \ . 
\end{align}

For $\tau>0$,
we also use $\kappa(m,t-1) = 0$ from Theorem~\ref{th:AzeroExp}:
\begin{align}
  \EXP_{\pi} \left[ R_{t+1+\tau}  \mid
  s_{t-1}, a_{t-1} \right] \ &= \ 
   \EXP_{\pi} \left[ \sum_{k=0}^{\tau} R_{t+1+k} \ - \ 
    \sum_{k=0}^{\tau-1} R_{t+1+k} \mid
  s_{t-1}, a_{t-1} \right] \\ \nonumber
  &= \    \EXP_{\pi} \left[ \sum_{k=0}^{\tau} R_{t+1+k} \mid
  s_{t-1}, a_{t-1} \right] \ - \ 
    \EXP_{\pi} \left[ \sum_{k=0}^{\tau-1} R_{t+1+k} \mid
  s_{t-1}, a_{t-1} \right] \\ \nonumber
  &= \ \kappa(\tau,t-1) \ - \ \kappa(\tau-1,t-1) \ = \ 0 \ - \ 0 \ = \ 0 \ .
\end{align}

\end{proof}

A related approach is to ensure zero return by 
reward shaping if the exact value function is known \cite{Schulman:15}.

% SEPP new
The next theorem states the major advantage of an
optimal reward redistribution:
$\tilde{q}^\pi(s_t,a_t)$ can be estimated with an offset that 
depends only on $s_t$ 
by estimating the expected immediate redistributed reward.
Thus, $Q$-value estimation becomes trivial and the
the advantage function of the MDP $\tilde{\cP}$ can be readily computed.
% SEPP end new
\begin{theoremA}
\label{th:AOptReturnDecomp}
If the reward redistribution is 
optimal, then the $Q$-values 
of the SDP $\cP$ are given by 
\begin{align}
\label{eq:Aqvalue}
   q^\pi(s_t,a_t) \ &= \   r(s_t,a_t) \ = \  
   \tilde{q}^\pi(s_t,a_t) \ - \ 
    \EXP_{s_{t-1},a_{t-1}} \left[ \tilde{q}^\pi(s_{t-1},a_{t-1}) \mid s_t \right] \\ \nonumber
    &= \ \tilde{q}^\pi(s_t,a_t) \ - \ \psi^\pi(s_t) \ .
\end{align} 
The SDP $\cP$ and the original MDP $\tilde{\cP}$ 
have the same advantage function.
% SEPP new
Using a behavior policy 
$\breve{\pi}$ the expected immediate reward is
\begin{align}
\label{eq:Abehavior}
   \EXP_{\breve{\pi}} \left[ R_{t+1} \mid s_t,a_t \right] \ &= \
    \tilde{q}^\pi(s_t,a_t) \ - \ \psi^{\pi,\breve{\pi}}(s_t) \ .
  \end{align}
% SEPP end new
\end{theoremA}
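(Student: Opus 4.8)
The plan is to derive all three statements from Theorem~\ref{th:AzeroExp}, whose characterization $\EXP\left[R_{t+1}\mid s_{t-1},a_{t-1},s_t,a_t\right]=\tilde{q}^\pi(s_t,a_t)-\tilde{q}^\pi(s_{t-1},a_{t-1})$ of the optimal second-order Markov redistribution does the heavy lifting. The equality $q^\pi(s_t,a_t)=r(s_t,a_t)$ is immediate: optimality means $\kappa(T-t-1,t)=0$, and the Bellman-type identity $q^\pi(s_t,a_t)=r(s_t,a_t)+\kappa(T-t-1,t)$ then collapses to $q^\pi=r$. For the second equality I would compute $r(s_t,a_t)=\EXP\left[R_{t+1}\mid s_t,a_t\right]$ by conditioning on the predecessor pair and using the tower rule together with the second-order Markov property:
\begin{align}
r(s_t,a_t) &= \EXP_{s_{t-1},a_{t-1}}\left[\EXP\left[R_{t+1}\mid s_{t-1},a_{t-1},s_t,a_t\right]\mid s_t,a_t\right] \nonumber \\
&= \tilde{q}^\pi(s_t,a_t) \ - \ \EXP_{s_{t-1},a_{t-1}}\left[\tilde{q}^\pi(s_{t-1},a_{t-1})\mid s_t,a_t\right] \ .
\end{align}

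It remains to show the inner expectation depends on $s_t$ alone, which I regard as the crux of the argument. The key observation is that, because the policy is Markov, $a_t$ is drawn as $\pi(a_t\mid s_t)$ and is therefore conditionally independent of the history given $s_t$; hence $p(s_{t-1},a_{t-1}\mid s_t,a_t)=p(s_{t-1},a_{t-1}\mid s_t)$. I would verify this by a short Bayes computation: the factor $p(a_t\mid s_{t-1},a_{t-1},s_t)=\pi(a_t\mid s_t)$ cancels against the $\pi(a_t\mid s_t)$ in $p(a_t,s_t)=\pi(a_t\mid s_t)\,p(s_t)$, leaving $p(s_{t-1},a_{t-1},s_t)/p(s_t)$. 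This licenses replacing the conditioning on $(s_t,a_t)$ by conditioning on $s_t$, so the offset is $\psi^\pi(s_t):=\EXP_{s_{t-1},a_{t-1}}\left[\tilde{q}^\pi(s_{t-1},a_{t-1})\mid s_t\right]$, completing Eq.~\eqref{eq:Aqvalue}.

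For the advantage-function claim I would simply subtract the state-value from both sides. Since $\psi^\pi(s_t)$ is independent of $a_t$, it is common to $q^\pi(s_t,a_t)$ and to $v^\pi(s_t)=\sum_{a}\pi(a\mid s_t)\,q^\pi(s_t,a)$, so it cancels in the difference, giving $q^\pi(s_t,a_t)-v^\pi(s_t)=\tilde{q}^\pi(s_t,a_t)-\tilde{v}^\pi(s_t)$; that is, $\cP$ and $\tilde{\cP}$ share the advantage function.

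Finally, for the behavior-policy version I would repeat the conditioning argument, now under $\breve{\pi}$. The identity $\EXP\left[R_{t+1}\mid s_{t-1},a_{t-1},s_t,a_t\right]=\tilde{q}^\pi(s_t,a_t)-\tilde{q}^\pi(s_{t-1},a_{t-1})$ is a property of the redistribution itself and so is unaffected by which policy generates the trajectory; only the distribution over predecessors changes. Since $\breve{\pi}$ is likewise Markov, the same conditional-independence identity $p_{\breve{\pi}}(s_{t-1},a_{t-1}\mid s_t,a_t)=p_{\breve{\pi}}(s_{t-1},a_{t-1}\mid s_t)$ holds, yielding $\EXP_{\breve{\pi}}\left[R_{t+1}\mid s_t,a_t\right]=\tilde{q}^\pi(s_t,a_t)-\psi^{\pi,\breve{\pi}}(s_t)$ with $\psi^{\pi,\breve{\pi}}(s_t):=\EXP_{\breve{\pi},s_{t-1},a_{t-1}}\left[\tilde{q}^\pi(s_{t-1},a_{t-1})\mid s_t\right]$. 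The main obstacle throughout is the conditional-independence step; everything else is bookkeeping with the tower rule.
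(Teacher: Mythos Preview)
Your proposal is correct and matches the paper's proof in all essential respects: the tower-rule computation of $r(s_t,a_t)$ from Theorem~\ref{th:AzeroExp}, the Bayes argument that $p(s_{t-1},a_{t-1}\mid s_t,a_t)$ drops $a_t$, the cancellation of $\psi^\pi(s_t)$ in the advantage, and the repetition under $\breve{\pi}$. The only cosmetic difference is how $q^\pi=r$ is obtained: you invoke $q^\pi(s_t,a_t)=r(s_t,a_t)+\kappa(T-t-1,t)$ with $\kappa=0$ directly, whereas the paper re-derives $q^\pi$ via the telescoping sum $\sum_{\tau=0}^{T-t}R_{t+1+\tau}=\tilde{q}^\pi(s_T,a_T)-\tilde{q}^\pi(s_{t-1},a_{t-1})$ and then matches it to the separately computed $r(s_t,a_t)$; both routes are equivalent.
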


\begin{proof}
The expected reward $r(s_t,a_t)$ is computed for $0\leq t \leq T$, 
where $s_{-1},a_{-1}$ are states and actions, which are introduced 
for formal reasons at the beginning of an episode. 
The expected reward $r(s_t,a_t)$ 
is with $\tilde{q}^\pi(s_{-1},a_{-1})=0$:
\begin{align}
   r(s_t,a_t) \ &= \  \EXP_{r_{t+1}}  \left[R_{t+1} \mid s_t,a_t \right]
   \ = \ \EXP_{s_{t-1},a_{t-1}} \left[ \tilde{q}^\pi(s_t,a_t) \ - \ 
   \tilde{q}^\pi(s_{t-1},a_{t-1}) \mid s_t,a_t \right] \\ \nonumber 
   &= \ \tilde{q}^\pi(s_t,a_t) \ - \ 
   \EXP_{s_{t-1},a_{t-1}} \left[ \tilde{q}^\pi(s_{t-1},a_{t-1}) 
   \mid s_t,a_t \right] \ .
\end{align} 

The expectations
$\EXP_{\pi}\left[.\mid  s_t,a_t\right]$
like
$\EXP_{\pi}\left[\tilde{R}_{T+1}\mid  s_t,a_t\right]$
are expectations over all episodes starting in $(s_t,a_t)$
and ending in some $(s_T,a_T)$.

The $Q$-values for the SDP $\cP$
are defined for $0\leq t \leq T$ as:
\begin{align}
    q^\pi(s_t,a_t) \ &= \ 
    \EXP_{\pi}  \left[ \sum_{\tau=0}^{T-t} R_{t+1+\tau} \mid s_t,a_t \right] 
    \\\nonumber &= \ \EXP_{\pi}  \left[ \tilde{q}^\pi(s_T,a_T) \ - \  
    \tilde{q}^\pi(s_{t-1},a_{t-1}) \mid s_t,a_t \right] 
    \\\nonumber &= \ 
    \EXP_{\pi}  \left[ \tilde{q}^\pi(s_T,a_T) \mid s_t,a_t \right]  \ - \  
    \EXP_{\pi}  \left[ \tilde{q}^\pi(s_{t-1},a_{t-1}) \mid s_t,a_t \right] 
    \\\nonumber &= \ 
     \tilde{q}^\pi(s_t,a_t) \ - \  
    \EXP_{s_{t-1},a_{t-1}}  \left[ \tilde{q}^\pi(s_{t-1},a_{t-1}) 
    \mid s_t,a_t \right] 
    \\\nonumber &= \ r(s_t,a_t) \ .
\end{align} 
The second equality uses
\begin{align}
  \sum_{\tau=0}^{T-t} R_{t+1+\tau} \ &= \  \sum_{\tau=0}^{T-t}
  \tilde{q}^\pi(s_{t+\tau},a_{t+\tau}) \ - \    
   \tilde{q}^\pi(s_{t+\tau-1},a_{t+\tau-1}) \\ \nonumber
  &= \ \tilde{q}^\pi(s_T,a_T) \ - \  \tilde{q}^\pi(s_{t-1},a_{t-1}) \ .
\end{align} 

The posterior  $p(s_{t-1},a_{t-1} \mid s_t, a_t)$ is
\begin{align}
 p(s_{t-1},a_{t-1} \mid s_t, a_t)  \ &= \
  \frac{ p(s_t,a_t \mid s_{t-1}, a_{t-1}) \ p(s_{t-1},a_{t-1})}{
    p(s_t,a_t)} \\ \nonumber
  &= \  \frac{ p(s_t \mid s_{t-1}, a_{t-1}) \ p(s_{t-1},a_{t-1})}{
    p(s_t)}  \ = \ p(s_{t-1},a_{t-1} \mid s_t )  \ ,
\end{align} 
where we used $p(s_t,a_t \mid s_{t-1}, a_{t-1}) = \pi(a_t \mid s_t)
p(s_t \mid s_{t-1}, a_{t-1})$ and  $p(s_t,a_t)= \pi(a_t \mid s_t) p(s_t)$.
The posterior does no longer contain $a_t$.
We can express the mean of previous $Q$-values
by the posterior  $p(s_{t-1},a_{t-1} \mid s_t, a_t)$:
\begin{align}
  &\EXP_{s_{t-1},a_{t-1}} \left[ \tilde{q}^\pi(s_{t-1},a_{t-1}) \mid s_t, a_t \right] \ = \ 
  \sum_{s_{t-1},a_{t-1}} p(s_{t-1},a_{t-1} \mid s_t, a_t) \
  \tilde{q}^\pi(s_{t-1},a_{t-1}) \\ \nonumber
  &= \ \sum_{s_{t-1},a_{t-1}} 
   \ p(s_{t-1},a_{t-1} \mid s_t ) \
  \tilde{q}^\pi(s_{t-1},a_{t-1}) \ = \  
   \EXP_{s_{t-1},a_{t-1}} \left[ \tilde{q}^\pi(s_{t-1},a_{t-1}) \mid s_t \right]
   \ = \ \psi^\pi(s_t) \ , 
\end{align} 
with 
\begin{align}
  \psi^\pi(s_t) \ &= \
  \EXP_{s_{t-1},a_{t-1}} \left[ \tilde{q}^\pi(s_{t-1},a_{t-1}) \mid s_t \right] \ .
\end{align}

The SDP $\cP$ and the MDP
$\tilde{\cP}$ have the same advantage function, 
since the value functions are the expected $Q$-values across the actions
and follow the equation $v^\pi(s_t) = \tilde{v}^\pi(s_t)+\psi^\pi(s_t)$.
Therefore $\psi^\pi(s_t)$ cancels in the advantage function of the SDP $\cP$.

% SEPP new
~~\newline

Using a behavior policy 
$\breve{\pi}$ the expected immediate reward is
\begin{align}
  \EXP_{\breve{\pi}} \left[ R_{t+1} \mid s_t,a_t \right] \ &= \
  \EXP_{r_{t+1},\breve{\pi}}  \left[R_{t+1} \mid s_t,a_t \right]
   \ = \ \EXP_{s_{t-1},a_{t-1},\breve{\pi}} \left[ \tilde{q}^\pi(s_t,a_t) \ - \ 
   \tilde{q}^\pi(s_{t-1},a_{t-1}) \mid s_t,a_t \right] \\ \nonumber 
   &= \ \tilde{q}^\pi(s_t,a_t) \ - \ 
   \EXP_{s_{t-1},a_{t-1},\breve{\pi}} \left[ \tilde{q}^\pi(s_{t-1},a_{t-1}) 
   \mid s_t,a_t \right] \ .
\end{align} 
The posterior  $p_{\breve{\pi}}(s_{t-1},a_{t-1} \mid s_t, a_t)$ is
\begin{align}
 p_{\breve{\pi}}(s_{t-1},a_{t-1} \mid s_t, a_t)  \ &= \
  \frac{ p_{\breve{\pi}}(s_t,a_t \mid s_{t-1}, a_{t-1}) \
  p_{\breve{\pi}}(s_{t-1},a_{t-1})}{
    p_{\breve{\pi}}(s_t,a_t)} \\ \nonumber
  &= \  \frac{ p(s_t \mid s_{t-1}, a_{t-1}) \ p_{\breve{\pi}}(s_{t-1},a_{t-1})}{
    p_{\breve{\pi}}(s_t)}  \ = \ p_{\breve{\pi}}(s_{t-1},a_{t-1} \mid s_t )  \ ,
\end{align} 
where we used $p_{\breve{\pi}}(s_t,a_t \mid s_{t-1}, a_{t-1}) = 
\breve{\pi}(a_t \mid s_t)
p(s_t \mid s_{t-1}, a_{t-1})$ and  $p_{\breve{\pi}}(s_t,a_t)= 
\breve{\pi}(a_t \mid s_t) p_{\breve{\pi}}(s_t)$.
The posterior does no longer contain $a_t$.
We can express the mean of previous $Q$-values
by the posterior  $p_{\breve{\pi}}(s_{t-1},a_{t-1} \mid s_t, a_t)$:
\begin{align}
  &\EXP_{s_{t-1},a_{t-1},\breve{\pi}} 
  \left[ \tilde{q}^\pi(s_{t-1},a_{t-1}) \mid s_t, a_t \right] \ = \ 
  \sum_{s_{t-1},a_{t-1}} p_{\breve{\pi}}(s_{t-1},a_{t-1} \mid s_t, a_t) \
  \tilde{q}^\pi(s_{t-1},a_{t-1}) \\ \nonumber
  &= \ \sum_{s_{t-1},a_{t-1}} 
   \ p_{\breve{\pi}}(s_{t-1},a_{t-1} \mid s_t ) \
  \tilde{q}^\pi(s_{t-1},a_{t-1}) \ = \  
   \EXP_{s_{t-1},a_{t-1},\breve{\pi}} 
   \left[ \tilde{q}^\pi(s_{t-1},a_{t-1}) \mid s_t \right]
   \ = \ \psi^{\pi,\breve{\pi}}(s_t) \ , 
\end{align} 
with 
\begin{align}
  \psi^{\pi,\breve{\pi}}(s_t) \ &= \
   \EXP_{s_{t-1},a_{t-1},\breve{\pi}} 
   \left[ \tilde{q}^\pi(s_{t-1},a_{t-1}) \mid s_t \right] \ .
\end{align} 
Therefore we have
\begin{align}
   \EXP_{\breve{\pi}} \left[ R_{t+1} \mid s_t,a_t \right] \ &= \
    \tilde{q}^\pi(s_t,a_t) \ - \ \psi^{\pi,\breve{\pi}}(s_t) \ .
  \end{align}
% SEPP end new

\end{proof}

\subsection{Novel Learning Algorithms based on Reward Redistributions}
\label{sec:Alearning}
We assume $\gamma=1$ and a finite horizon or absorbing state
original MDP $\tilde{\cP}$ with delayed reward.
% SEPP new
According to Theorem~\ref{th:AOptReturnDecomp}, 
$\tilde{q}^\pi(s_t,a_t)$ can be estimated with an offset that 
depends only on $s_t$ 
by estimating the expected immediate redistributed reward.
Thus, $Q$-value estimation becomes trivial and the
the advantage function of the MDP $\tilde{\cP}$ can be readily computed.
All reinforcement learning methods like policy gradients that use 
$\arg\max_{a_t} \tilde{q}^\pi(s_t,a_t)$ or 
the advantage function 
$\tilde{q}^\pi(s_t,a_t) - \EXP_{a_t} \tilde{q}^\pi(s_t,a_t)$
of the original MDP $\tilde{\cP}$
can be used. These methods either rely on Theorem~\ref{th:AOptReturnDecomp}
and either estimate $q^\pi(s_t,a_t)$ according to Eq.~\eqref{eq:Aqvalue} 
or the expected immediate reward 
according to Eq.~\eqref{eq:Abehavior}. 
Both approaches estimate 
$\tilde{q}^\pi(s_t,a_t)$ with an offset 
that depends only on $s_t$ (either $\psi^{\pi}(s_t)$ 
or $\psi^{\pi,\breve{\pi}}(s_t)$).
Behavior policies like
``greedy in the limit with infinite exploration'' (GLIE) or
``restricted rank-based randomized'' (RRR) allow to prove
convergence of SARSA \cite{Singh:00}.
These policies can be used with reward redistribution.
GLIE policies can be realized by a softmax with exploration coefficient 
on the $Q$-values, 
therefore $\psi^\pi(s_t)$ or $\psi^{\pi,\breve{\pi}}(s_t)$ cancels.
RRR policies select actions probabilistically according to 
the ranks of their $Q$-values, where the 
greedy action has highest probability. Therefore $\psi(s_t)$
or $\psi^{\pi,\breve{\pi}}(s_t)$
is not required.
% SEPP end new 
For function approximation, 
convergence of the $Q$-value estimation together 
with reward redistribution and GLIE or RRR policies 
can under standard assumptions be proven by the stochastic 
approximation theory for 
two time-scale update rules \cite{Borkar:97,Karmakar:17}.
Proofs for convergence to an optimal policy are in general 
difficult, since locally stable attractors 
may not correspond to optimal policies.

Reward redistribution can be used for
\begin{itemize}
\item
(A) $Q$-value estimation, 
\item
(B) policy gradients, and 
\item
(C) $Q$-learning.
\end{itemize}

% SEPP new
\subsubsection{Q-Value Estimation}
\label{sec:QestimateA}
Like SARSA, RUDDER learning continually predicts 
$Q$-values to improve the policy. 
Type (A) methods estimate $Q$-values and are divided 
into variants (i), (ii), and (iii).
Variant (i) assumes an optimal reward redistribution
and estimates $\tilde{q}^\pi(s_t,a_t)$ with an offset
depending only on $s_t$.
The estimates are based on Theorem~\ref{th:AOptReturnDecomp}
either by on-policy direct $Q$-value estimation according to Eq.~\eqref{eq:Aqvalue}
or by off-policy immediate reward estimation according to Eq.~\eqref{eq:Abehavior}.
Variant (ii) methods assume a non-optimal reward redistribution and 
correct Eq.~\eqref{eq:Aqvalue} by estimating $\kappa$.
Variant (iii) methods use eligibility traces for the redistributed reward.
% SEPP end new

% SEPP new
\paragraph{Variant (i): Estimation of $\tilde{q}^\pi(s_t,a_t)$ 
with an offset assuming optimality.}
Theorem~\ref{th:AOptReturnDecomp} justifies the estimation
of $\tilde{q}^\pi(s_t,a_t)$ with an offset
by on-policy direct $Q$-value estimation via Eq.~\eqref{eq:Aqvalue} or 
by off-policy immediate reward estimation via Eq.~\eqref{eq:Abehavior}.
RUDDER learning can be based on policies like
``greedy in the limit with infinite exploration'' (GLIE) or
``restricted rank-based randomized'' (RRR) \cite{Singh:00}. 
GLIE policies change toward greediness with respect to the $Q$-values
during learning.
% SEPP end new 

\paragraph{Variant (ii): TD-learning of $\kappa$ and correction of the redistributed reward.}
For non-optimal reward redistributions $\kappa(T-t-1,t)$ can be estimated 
to correct the $Q$-values.
{\bf TD-learning of $\kappa$.}
The expected sum of delayed rewards $\kappa(T-t-1,t)$ can be formulated as
\begin{align}
  \kappa(T-t-1,t) \ &= \ 
  \EXP_{\pi} \left[\sum_{\tau=0}^{T-t-1} R_{t+2+\tau} \mid s_t,a_t\right]\\ \nonumber
  &= \ 
  \EXP_{\pi} \left[ R_{t+2} \ + \ \sum_{\tau=0}^{T-(t+1)-1} R_{(t+1)+2+\tau} \mid s_t,a_t\right]\\ \nonumber
  &= \ 
  \EXP_{s_{t+1},a_{t+1},r_{t+2}} \left[ R_{t+2}  \ + \ 
  \EXP_{\pi} \left[\sum_{\tau=0}^{T-(t+1)-1} R_{(t+1)+2+\tau} \mid s_{t+1},a_{t+1}\right] 
  \mid s_t,a_t\right]\\ \nonumber
  &= \ \EXP_{s_{t+1},a_{t+1},r_{t+2}} \left[ R_{t+2}  \ + \ \kappa(T-t-2,t+1) \mid s_t,a_t\right] \ .
\end{align}  
Therefore, $\kappa(T-t-1,t)$ can be estimated by $R_{t+2}$ and $\kappa(T-t-2,t+1)$, 
if the last two are drawn together, 
i.e.\ considered as pairs. 
Otherwise the expectations of $R_{t+2}$ and $\kappa(T-t-2,t+1)$ given
$(s_t,a_t)$ must be estimated.
We can use TD-learning if the immediate reward and 
the sum of delayed rewards are drawn as pairs, that is, simultaneously.  
The TD-error $\delta_{\kappa}$ becomes
\begin{align}
    \delta_{\kappa}(T-t-1,t) \ &= \ R_{t+2}  \ + \ \kappa(T-t-2,t+1) \ - \ \kappa(T-t-1,t) \ .
\end{align}  

We now define eligibility traces for $\kappa$.
Let the $n$-step return samples of $\kappa$ for $1\leq n \leq T-t$ be
\begin{align}
 \kappa^{(1)}(T-t-1,t) \ &= \  R_{t+2}  \ + \ \kappa(T-t-2,t+1)  \\ \nonumber
 \kappa^{(2)}(T-t-1,t) \ &= \  R_{t+2}  \ + \  R_{t+3}  \ + \ \kappa(T-t-3,t+2)  \\ \nonumber
 &\ldots \\ \nonumber
 \kappa^{(n)}(T-t,t) \ &= \  R_{t+2}  \ + \  R_{t+3}  \ + \ \ldots 
  \ + \  R_{t+n+1} \ + \ \kappa(T-t-n-1,t+n)  \ . 
\end{align}  
The $\lambda$-return for $\kappa$ is
\begin{align}
  \kappa^{(\lambda)} (T-t-1,t)\ &= \  (1 - \lambda) \  
  \sum_{n=1}^{T-t-1} \lambda^{n-1} \ \kappa^{(n)}(T-t-1,t)   \ + \ 
  \lambda^{T-t-1} \ \kappa^{(T-t)}(T-t-1,t)  \ . 
\end{align}  
We obtain
\begin{align}
  \kappa^{(\lambda)}(T-t-1,t) \ &= \  R_{t+2} \ + \ \kappa(T-t-2,t+1)  \\ \nonumber
  &+ \ \lambda \ \left(R_{t+3}  \ + \ \kappa(T-t-3,t+2) 
  \ - \ \kappa(T-t-2,t+1) \right) \\ \nonumber
  &+ \ \lambda^2 \ \left(R_{t+4}  \ + \ \kappa(T-t-4,t+3) 
  \ - \ \kappa(T-t-3,t+2) \right) \\ \nonumber
  \ldots \\ \nonumber
   &+ \ \lambda^{T-1-t} \ \left(R_{T+1}  \ + \ \kappa(0,T-1) 
  \ - \ \kappa(1,T-2) \right)  \ . 
\end{align}  
We can reformulate this as
\begin{align}
  \kappa^{(\lambda)}(T-t-1,t) \ &= \  \kappa(T-t-1,t) \ 
  + \ \sum_{n=0}^{T-t-1}  \lambda^{n} \  \delta_{\kappa}(T-t-n-1,t+n) \ .
\end{align}  
The $\kappa$ error $\Delta_{\kappa}$ is
\begin{align}
  \Delta_{\kappa}(T-t-1,t) \ &= \ \kappa^{(\lambda)}(T-t-1,t) \ - \  \kappa(T-t-1,t)  \
  = \ \sum_{n=0}^{T-t-1}  \lambda^{n} \  \delta_{\kappa}(T-t-n-1,t+n) \ .
\end{align}  
The derivative of 
\begin{align}
 1/2 \  \Delta_{\kappa}(T-t-1,t)^2 \ &= \ 1/2 \
 \left( \kappa^{(\lambda)}(T-t-1,t) \ - \  \kappa(T-t-1,t;\Bw) \right)^2  
\end{align}  
with respect to $\Bw$ is
\begin{align}
 &- \left(\kappa^{(\lambda)}(T-t-1,t) \ - \  \kappa(T-t-1,t;\Bw)  \right)
  \ \nabla_{w} \kappa(T-t-1,t;\Bw) \\ \nonumber 
  &= \
  - \ \sum_{n=0}^{T-t-1}  \lambda^{n} \  \delta_{\kappa}(T-t-n-1,t+n) \ 
  \nabla_{w} \kappa(T-t-1,t;\Bw) \ .
\end{align}  

The full gradient of the sum of $\kappa$ errors is
\begin{align}
 &1/2 \ \nabla_{w}  \sum_{t=0}^{T-1}\Delta_{\kappa}(T-t-1,t)^2 \\ \nonumber
 &= \  
  - \ \sum_{t=0}^{T-1} \ \sum_{n=0}^{T-t-1}  \lambda^{n} \  \delta_{\kappa}(T-t-n-1,t+n) \ 
  \nabla_{w} \kappa(T-t-1,t;\Bw) \\ \nonumber
   &= \ - \ \sum_{t=0}^{T-1} \ \sum_{\tau=t}^{T-1}  \lambda^{\tau-t} \  
  \delta_{\kappa}(T-\tau-1,\tau) \ \nabla_{w} \kappa(T-t-1,t;\Bw) 
  \\ \nonumber
  &= \ - \ \sum_{\tau=0}^{T-1} \  \delta_{\kappa}(T-\tau-1,\tau) \
    \sum_{t=0}^{\tau} \lambda^{\tau-t} \ \nabla_{w} \kappa(T-t-1,t;\Bw) \ .
\end{align} 
We set $n=\tau-t$, so that $n=0$ becomes $\tau=t$ and $n=T-t-1$ becomes 
$\tau=T-1$.
The recursion
\begin{align}
  f(t) \ &= \ \lambda \ f(t-1) \ + \ a_t \ , \qquad
  f(0) \ = \ 0 
 \end{align}  
can be written as
\begin{align}
  f(T) \ &= \ \sum_{t=1}^{T} \lambda^{T-t} \ a_t \ .
\end{align}  

Therefore, we can use following update rule for minimizing 
$\sum_{t=0}^{T-1}\Delta_{\kappa}(T,t)^2$ with respect to $\Bw$ with
$1\leq \tau \leq T-1$:
\begin{align}
  \Bz_{-1} \ &= \ 0 \\
  \Bz_{\tau} \ &= \ \lambda \ \Bz_{\tau-1} \ + \ 
  \nabla_{w} \kappa(T-\tau,\tau;\Bw) \\
  \delta_{\kappa}(T-\tau,\tau) \ &= \ R_{\tau+2}  \ + \ 
  \kappa(T-\tau-1,\tau+1;\Bw) \ - \ \kappa(T-\tau,\tau;\Bw) \\
  \Bw^{\nn} \ &= \ \Bw \ + \ \alpha \ \delta_{\kappa}(T-\tau,\tau) \ \Bz_{\tau} \ .
\end{align}

{\bf Correction of the reward redistribution.}
For correcting the redistributed reward, we apply 
a method similar to reward shaping or look-back advice.
This method ensures that the corrected redistributed reward 
leads to an SDP that is has the same return per sequence as the
SDP $\cP$.
The reward correction is 
\begin{align}
 F(s_t,a_t,s_{t-1},a_{t-1}) \ &= \ \kappa(m,t) \ - \ \kappa(m,t-1) \ ,
\end{align}
we define the corrected redistributed reward as
\begin{align}
  &R^{\Rc}_{t+1} \ = \ R_{t+1} \ + \ F(s_t,a_t,s_{t-1},a_{t-1}) \ = \ 
  R_{t+1} \ + \ \kappa(m,t) \ - \ \kappa(m,t-1) \ .
\end{align}
We assume that $\kappa(m,-1)=\kappa(m,T+1)=0$, therefore
\begin{align}
 \sum_{t=0}^{T+1} F(s_t,a_t,s_{t-1},a_{t-1})  \ &= \  
 \sum_{t=0}^{T+1} \kappa(m,t) \ - \ \kappa(m,t-1) \ = \ 
 \kappa(m,T+1) \ - \ \kappa(m,-1) \ = \  0 \ . 
\end{align} 
Consequently, the corrected redistributed reward $R^{\Rc}_{t+1}$ does not
change the expected return for a sequence, therefore, the resulting SDP has the
same optimal policies as the SDP without correction.

For a predictive reward of $\rho$ at time $t=k$, which
can be predicted from time $t=l<k$ to time $t=k-1$, we have:
\begin{align}
  \kappa(m,t)  \ &= \
  \begin{cases}
    0 \ , & \text{for } t < l \ , \\
    \rho \ , & \text{for } l \leq t < k \ ,\\
    0 \ , & \text{for } t \geq k \ .
  \end{cases} 
\end{align}
The reward correction is
\begin{align}
  F(s_t,a_t,s_{t-1},a_{t-1}) \ &= \
  \begin{cases}
    0 \ , & \text{for } t < l \ , \\
    \rho \ , & \text{for } t = l  \ , \\
    0 \ , & \text{for } l < t < k \ , \\
    -\rho \ , & \text{for } t = k  \ , \\
    0 \ , & \text{for } t > k \ .
  \end{cases} 
\end{align}

{\bf Using $\kappa$ as auxiliary task 
in predicting the return for return decomposition.}
A $\kappa$ prediction can serve as additional output of 
the function $g$ that predicts the return and
is the basis of the return decomposition. 
Even a partly prediction of $\kappa$ means that 
the reward can be distributed further back. 
If $g$ can partly predict $\kappa$, then $g$
has all information to predict the return earlier in the 
sequence. If the return is predicted 
earlier, then the reward will be distributed further back.
Consequently, the reward redistribution 
comes closer to an optimal reward redistribution.
However, at the same time, $\kappa$ can no longer be predicted.  
The function $g$ must find another $\kappa$ that can be predicted.
If no such $\kappa$ is found, then optimal reward redistribution is
indicated.

\paragraph{Variant (iii): Eligibility traces assuming optimality.}
We can use eligibility traces to further distribute the reward back.
For an optimal reward redistribution, we have
$\EXP_{s_{t+1}} [V(s_{t+1})]=0$. 
The new returns $\cR_t$ are given by the recursion
\begin{align}
\label{th:Aeligibility}
\cR_t \ &= \  r_{t+1} \ + \ \lambda \  \cR_{t+1} \ , \\
\cR_{T+2} \ &= \ 0 \ .
\end{align}
The expected policy gradient updates with the new returns $\cR$ are
$\EXP_{\pi}\left[ \nabla_{\theta} \log \pi(a_t \mid s_t;\Bth)
  \cR_t  \right]$.
To avoid an estimation of the value function $V(s_{t+1})$, 
we assume optimality,
which might not be valid. However, the error should be small if the 
return decomposition works well.
Instead of estimating a value function, we can use a correction 
as it is shown in
next paragraph.

\subsubsection{Policy Gradients}
\label{sec:PolicyA}
Type (B) methods are policy gradients.
 In the expected updates  
$\EXP_{\pi}\left[ \nabla_{\theta} \log \pi(a \mid s;\Bth)  
q^\pi(s,a) \right]$ of policy gradients, the value $q^\pi(s,a)$ 
is replaced by an estimate of $r(s,a)$ or by
samples of the redistributed reward. 
Convergence to optimal policies is guaranteed even with the
offset $\psi^{\pi}(s)$ in Eq.~\eqref{eq:Aqvalue} 
similar to baseline normalization for policy gradients.
With baseline normalization,
the baseline $b(s)=\EXP_{a}[r(s,a)]=\sum_a \pi(a\mid s) r(s,a)$ 
is subtracted from $r(s,a)$, which gives the policy gradient  
$\EXP_{\pi}\left[ \nabla_{\theta} \log \pi(a \mid s;\Bth)  (r(s,a) - b(s))
\right]$. 
With eligibility traces using $\lambda \in
[0,1]$ for $G_t^{\lambda}$ \cite{Sutton:18book}, we have 
the new returns $\cG_t =  r_t + \lambda \cG_{t+1}$ with $\cG_{T+2} = 0$.
The expected updates with the new returns $\cG$ are
$\EXP_{\pi}\left[ \nabla_{\theta} \log \pi(a_t \mid s_t;\Bth)
\cG_t  \right]$.

\subsubsection{Q-Learning}
\label{sec:qlearningA}
The type (C) method is $Q$-learning with the redistributed reward. 
Here, $Q$-learning is justified if
immediate and future reward are drawn together,
as typically done.
Also other temporal difference methods are justified when
immediate and future reward are drawn together.

\subsection{Return Decomposition to construct a Reward Redistribution}

We now propose methods to construct reward redistributions which 
ideally would be optimal. 
Learning with non-optimal reward redistributions {\em does work} since the 
optimal policies do not change according to Theorem~\ref{th:AEquivT}.
However reward redistributions that are optimal considerably speed up learning,
since future expected rewards introduce 
biases in TD-methods and the high variance in MC-methods.
The expected optimal redistributed reward is according to Eq.~\eqref{eq:AdiffQ} 
the difference of $Q$-values. 
The more a reward redistribution deviates from these differences,
the larger are the absolute $\kappa$-values and, in turn, the less optimal
is the reward redistribution.
Consequently we aim at identifying the largest $Q$-value differences to
construct a reward redistribution which is close to optimal.
Assume a grid world where you have to take a key to later open a door
to a treasure room. Taking the key increases the chances to receive the
treasure and, therefore, is associated with a large positive $Q$-value difference.
Smaller positive $Q$-value difference are steps toward the key location.

\paragraph{Reinforcement Learning as Pattern Recognition.}
We want to transform the reinforcement learning problem into
a pattern recognition problem to employ deep learning approaches.
The sum of the $Q$-value differences gives the 
difference between expected return at sequence begin and
the expected return at sequence end (telescope sum).
Thus, $Q$-value differences allow to predict the 
expected return of the whole state-action sequence.
Identifying the largest $Q$-value differences 
reduce the prediction error most.
$Q$-value differences are assumed to be associated with
patterns in state-action transitions 
like taking the key in our example. 
The largest $Q$-value differences 
are expected to be found more frequently in sequences
with very large or very low return.
The resulting task is to predict the expected return
from the whole sequence and identify which 
state-action transitions contributed most to the prediction.
This pattern recognition task is utilized to
construct a reward redistribution, where redistributed reward
corresponds to the contribution.

\subsubsection{Return Decomposition Idea}

The {\em return decomposition idea} is 
to predict the realization of the return or its
expectation by a function $g$ from the state-action sequence 
\begin{align}
 (s,a)_{0:T} \ &:= \  (s_0,a_0,s_1,a_1,\ldots,s_T,a_T) \ .
\end{align}
The return is the accumulated reward along the whole sequence
$(s,a)_{0:T}$.
The function $g$ depends on the policy $\pi$ that is used to 
generate the state-action sequences.
Subsequently, the prediction or the realization of the return
is distributed over
the sequence with the help of $g$.
One important advantage of a deterministic function $g$ is 
that it predicts with proper loss functions and if being perfect 
the expected return. Therefore, it 
removes the sampling variance of returns. 
In particular the variance of probabilistic rewards is averaged out. 
Even an imperfect function $g$ removes the variance as it is deterministic.
As described later, the sampling variance may be 
reintroduced when strictly return-equivalent SDPs are ensured.
We want to determine for each sequence element 
its contribution to the prediction of the function $g$. 
Contribution analysis computes the contribution of each 
state-action pair to the prediction, that is, the information of each
state-action pair about the prediction. 
In principle, we can use any contribution analysis method.
However, we prefer three methods:
(A) Differences in predictions.
If we can ensure that $g$ predicts the sequence-wide return
at every time step. 
The difference of two consecutive predictions is a measure of
the contribution of the current state-action pair to the return prediction.
The difference of consecutive predictions is the redistributed reward.
(B) Integrated gradients (IG) \cite{Sundararajan:17}.
(C) Layer-wise relevance propagation (LRP) \cite{Bach:15}.
The methods (B) and (C) use information later in the sequence for
determining the contribution of the current state-action pair. Therefore,
they introduce a non-Markov reward. 
However, the non-Markov reward can be viewed as probabilistic reward.
Since probabilistic reward increases the variance, we prefer method (A).

\paragraph{Explaining Away Problem.}
\label{para:AexplainingAway}
We still have to tackle the problem that reward causing actions do not receive redistributed rewards
since they are explained away by later states.
To describe the problem, assume an MDP $\tilde{\cP}$ with the only 
reward at sequence end.
To ensure the Markov property, states in $\tilde{\cP}$ have to store 
the reward contributions of previous state-actions;
e.g.\ $s_T$ has to store all previous contributions such that the expectation $\tilde{r}(s_T,a_T)$
is Markov.
The explaining away problem is that later states
are used for return prediction, while reward causing
earlier actions are missed.
To avoid explaining away,
between the state-action pair $(s_t,a_t)$ and its predecessor $(s_{t-1},a_{t-1})$, where
$(s_{-1},a_{-1})$ are introduced for starting an episode.
The sequence of differences is defined as
\begin{align}
  \Delta_{0:T} \ &:= \  
  \big(\Delta(s_{-1},a_{-1},s_0,a_0),\ldots,\Delta(s_{T-1},a_{T-1},s_T,a_T)\big) \ .
\end{align}
We assume that the differences $\Delta$ 
are mutually independent \cite{Hyvarinen:01}:
\begin{align}
  \label{eq:indept}
  &p \left(\Delta(s_{t-1},a_{t-1},s_t,a_t) \mid
    \Delta(s_{-1},a_{-1},s_0,a_0),\ldots,
    \Delta(s_{t-2},a_{t-2},s_{t-1},a_{t-1}), \right. \\ \nonumber
    &\left. \Delta(s_t,a_t,s_{t+1},a_{t+1})\ldots,
    \Delta(s_{T-1},a_{T-1},s_T,a_T)  \right) \ = \ 
    p\left(\Delta(s_{t-1},a_{t-1},s_t,a_t) \right) \ .
\end{align}
The function $g$ predicts the realization of the sequence-wide
return or its expectation from the sequence $\Delta_{0:T}$:
\begin{align}
  \label{eq:simpleA1}
   g\big(\Delta_{0:T}\big) \ &= \ 
  \EXP \left[\tilde{R}_{T+1} \mid s_T,a_T\right] \ = \ 
  \tilde{r}_{T+1}\ .
\end{align} 
{\bf Return decomposition} deconstructs $g$ into contributions
$h_t=h(\Delta(s_{t-1},a_{t-1},s_t,a_t)$ at time $t$:
\begin{align}
  \label{eq:simpleA}
  g\big(\Delta_{0:T}\big) \ &= \
  \sum_{t=0}^T  h(\Delta(s_{t-1},a_{t-1},s_t,a_t) )
   \ = \ \tilde{r}_{T+1}\ .
\end{align} 
If we can assume that $g$ can predict the return at every time step:
\begin{align}
  g\big(\Delta_{0:t}\big) \ &= \ 
  \EXP_{\pi} \left[\tilde{R}_{T+1} \mid s_t,a_t\right] \ ,
\end{align} 
then we use the contribution analysis method "differences of return predictions",
where the contributions are defined as:
\begin{align}
 h_0 \ &= \ h(\Delta(s_{-1},a_{-1},s_0,a_0) ) \ := \ g\big(\Delta_{0:0}\big) \\
 h_t \ &= \ h(\Delta(s_{t-1},a_{t-1},s_t,a_t) ) \ := \ 
 g\big(\Delta_{0:t}\big) \ - \ g\big(\Delta_{0:(t-1)}\big)\ .
\end{align} 

We assume that the sequence-wide return cannot be predicted from the
last state. The reason is that either immediate rewards are given only 
at sequence end without storing them in the states 
or information is removed from the states.
Therefore, a relevant event for predicting the final reward must be identified by
the function $g$. 
The prediction errors at the end of the episode become, in general, smaller since the
future is less random. Therefore, prediction errors later 
in the episode are up-weighted while early predictions ensure that information is
captured in $h_t$ for being used later.
The prediction at time $T$ has the largest weight and 
relies on information from the past.

If $g$ does predict the return at every time step,
contribution analysis decomposes $g$.
For decomposing a linear $g$ one can use the Taylor decomposition 
(a linear approximation) of $g$ with respect to the $h$
\cite{Bach:15,Montavon:17taylor}.
A non-linear $g$ can be decomposed by 
layerwise relevance propagation (LRP)
\cite{Bach:15,Montavon:17} or integrated gradients (IG)
\cite{Sundararajan:17}.

\subsubsection{Reward Redistribution based on Return Decomposition}

We assume a return decomposition
\begin{align}
  g\big(\Delta_{0:T}\big) \ &= \ \sum_{t=0}^T  h_t \ ,
\end{align} 
with
\begin{align}
  h_0 \ &= \ h(\Delta(s_{-1},a_{-1},s_0,a_0)) \ , \\
  h_t \ &= \ h(\Delta(s_{t-1},a_{t-1},s_t,a_t))
   \  \ \text{ for } 0 < t \leq T \ .
\end{align} 
We use these contributions for redistributing the reward. 
The reward redistribution is given by 
the random variable $R_{t+1}$ for the reward at time $t+1$.
These new redistributed rewards
$R_{t+1}$ must have the contributions $h_t$ as mean:
\begin{align}
\EXP \left[ R_{t+1} \mid s_{t-1},a_{t-1},s_t,a_t \right]  \ &=  \ h_t
\end{align} 

The reward $\tilde{R}_{T+1}$ of $\tilde{\cP}$
is probabilistic and 
the function $g$ might not be perfect,
therefore neither $g(\Delta_{0:T}) = \tilde{r}_{T+1}$ for the return
realization $\tilde{r}_{T+1}$ nor
$g(\Delta_{0:T}) = \tilde{r}(s_T,a_T)$ for the expected return
holds.
To assure strictly return-equivalent SDPs,
we have to compensate for both a probabilistic reward $\tilde{R}_{T+1}$ 
and an imperfect function $g$.
The compensation is given by
\begin{align}
  \tilde{r}_{T+1} \ - \ \sum_{\tau=0}^T h_t \ .
\end{align} 
We compensate with an extra reward $R_{T+2}$ at time $T+2$ 
which is immediately given
after $R_{T+1}$ at time $T+1$ after the state-action pair $(s_T,a_T)$.
The new redistributed reward $R_{t+1}$ is 
\begin{align}
   \EXP \left[ R_1 \mid s_0,a_0 \right] \ &= \ h_0  \ , \\
   \EXP \left[ R_{t+1} \mid s_{t-1},a_{t-1},s_t,a_t \right] 
   \ &= \ h_t  \  \ \text{ for } 0 < t \leq T \ , \\
   R_{T+2}  \ &= \ 
   \tilde{R}_{T+1} \ - \ \sum_{t=0}^T h_t  \ ,
\end{align}
where the realization $\tilde{r}_{T+1}$ is replaced by its
random variable $\tilde{R}_{T+1}$.
If the the prediction of $g$ is perfect, then we can set
$R_{T+2}=0$ and redistribute the expected return which is
the predicted return.
$R_{T+2}$ compensates for both a probabilistic reward $\tilde{R}_{T+1}$ 
and an imperfect function $g$.
Consequently all variance of sampling the return is moved to  $R_{T+2}$.
Only the imperfect function $g$ must be corrected while the variance
does not matter. 
However, we cannot distinguish, e.g.\ in early learning phases, 
between errors of $g$ and random reward. 
{\bf A perfect $g$ results in an optimal reward redistribution.}

Next theorem
shows that Theorem~\ref{th:AzeroExp} holds also for
the correction $R_{T+2}$. 
\begin{theoremA}
\label{th:AzeroExpCorr}
The optimality conditions
hold also for reward redistributions with corrections:
\begin{align}
\label{eq:AoptimalCon2}
  \kappa(T-t+1,t-1) \ &= \  0 \ . 
\end{align} 
\end{theoremA}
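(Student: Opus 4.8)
The plan is to reduce the statement to Theorem~\ref{th:AzeroExp} by splitting the corrected future return into the main redistributed rewards $R_1,\dots,R_{T+1}$ and the single terminal correction $R_{T+2}$, and then to show that the correction contributes nothing in expectation. Throughout I assume the underlying redistribution is optimal, i.e.\ $g$ is perfect, so that the main rewards satisfy statement~(II) of Theorem~\ref{th:AzeroExp}: $\EXP\left[R_{t+1}\mid s_{t-1},a_{t-1},s_t,a_t\right]=h_t=\tilde{q}^\pi(s_t,a_t)-\tilde{q}^\pi(s_{t-1},a_{t-1})$ for $0\leq t\leq T$ (with $\tilde{q}^\pi(s_{-1},a_{-1})=0$). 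I will establish $\kappa(T-t+1,t-1)=0$ for $1\leq t\leq T+1$.

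First I would unfold the definition of $\kappa$ for the corrected SDP and observe that the enlarged index $T-t+1$ is exactly the one that reaches the extra reward $R_{T+2}$ at time $T+2$:
\begin{align}
 \kappa(T-t+1,t-1) \ &= \ \EXP_{\pi} \left[\sum_{\tau=0}^{T-t} R_{t+1+\tau} \mid s_{t-1},a_{t-1}\right] \ + \ \EXP_{\pi} \left[R_{T+2}\mid s_{t-1},a_{t-1}\right] \ ,
\end{align}
since $t+1+(T-t)=T+1$ and $t+1+(T-t+1)=T+2$. For $1\leq t\leq T$ the first summand is precisely $\kappa(T-t,t-1)$ formed from the optimal main redistribution $R_1,\dots,R_{T+1}$, which vanishes by Eq.~\eqref{eq:AoptimalCon1} of Theorem~\ref{th:AzeroExp} (that argument uses only the conditional means $h_t$ and the Markov property of $\tilde{\cP}$, so it applies verbatim to the main rewards alone).

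It then remains to show the correction has conditional mean zero. Using the telescoping identity $\sum_{\tau=0}^T h_\tau=\tilde{q}^\pi(s_T,a_T)$ together with $R_{T+2}=\tilde{R}_{T+1}-\sum_{\tau=0}^T h_\tau$, I would write
\begin{align}
 \EXP_{\pi} \left[R_{T+2}\mid s_{t-1},a_{t-1}\right] \ &= \ \EXP_{\pi} \left[\tilde{R}_{T+1}\mid s_{t-1},a_{t-1}\right] \ - \ \EXP_{\pi} \left[\tilde{q}^\pi(s_T,a_T)\mid s_{t-1},a_{t-1}\right] \ .
\end{align}
Because $\tilde{\cP}$ emits reward only at sequence end, $\tilde{q}^\pi(s_\tau,a_\tau)=\EXP_{\pi}[\tilde{R}_{T+1}\mid s_\tau,a_\tau]$, so the first term equals $\tilde{q}^\pi(s_{t-1},a_{t-1})$ and, by the tower property along the Markov state-action sequence, the second term equals $\EXP_{\pi}[\EXP_{\pi}[\tilde{R}_{T+1}\mid s_T,a_T]\mid s_{t-1},a_{t-1}]=\tilde{q}^\pi(s_{t-1},a_{t-1})$ as well. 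The two cancel, giving $\kappa(T-t+1,t-1)=0$. The boundary case $t=T+1$ is consistent: the main sum is empty and $\kappa(0,T)=\EXP_{\pi}[R_{T+2}\mid s_T,a_T]=\tilde{r}(s_T,a_T)-\tilde{q}^\pi(s_T,a_T)=0$ by the same cancellation.

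The conceptual content is light once one recognizes that $R_{T+2}$ is a mean-zero residual under a perfect $g$; the anticipated obstacle is purely bookkeeping — tracking the index shift so that $\kappa(T-t+1,t-1)$ genuinely spans $R_{t+1}$ through $R_{T+2}$, and invoking the martingale/tower step correctly, which hinges on the original reward being purely terminal so that $\{\tilde{q}^\pi(s_\tau,a_\tau)\}_\tau$ is a Doob martingale of $\tilde{R}_{T+1}$.
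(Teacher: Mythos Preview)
Your proof is correct and follows essentially the same idea as the paper. The only cosmetic difference is that the paper telescopes the whole sum $R_{t+1}+\cdots+R_{T+2}$ in one shot (replacing the main rewards by $\tilde q^\pi$-differences and the correction by $\tilde R_{T+1}-\tilde q^\pi(s_T,a_T)$, so the intermediate $\tilde q^\pi(s_T,a_T)$ cancels directly), whereas you split off the correction, invoke Eq.~\eqref{eq:AoptimalCon1} for the main part, and then show $\EXP_\pi[R_{T+2}\mid s_{t-1},a_{t-1}]=0$ separately via the tower property; both routes amount to the same telescoping/martingale computation, and you handle the boundary case $t=T+1$ exactly as the paper does.
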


\begin{proof}
The expectation of 
$\kappa(T-t+1,t-1)=\sum_{\tau=0}^{T-t+1} R_{t+1+\tau}$, 
that is $\kappa(m,t-1)$ with $m=T-t+1$.
\begin{align}
  &\EXP_{\pi} \left[ \sum_{\tau=0}^{T-t+1} R_{t+1+\tau}  
  \mid s_{t-1},a_{t-1}\right]\\ \nonumber
  &= \
  \EXP_{\pi} \left[\tilde{R}_{T+1} \ - \ 
  \tilde{q}^\pi(s_T,a_T) \ + \ \sum_{\tau=0}^{T-t}
  \left(\tilde{q}^\pi(s_{\tau+t},a_{\tau+t}) \ - \
    \tilde{q}^\pi(s_{\tau+t-1},a_{\tau+t-1}) \right)  \mid
  s_{t-1},a_{t-1}\right] \\ \nonumber
  &= \
  \EXP_{\pi} \left[ \tilde{R}_{T+1}  \ - \
    \tilde{q}^\pi(s_{t-1},a_{t-1})
    \mid s_{t-1},a_{t-1}\right]\\ \nonumber
   &= \ \EXP_{\pi}\left[\tilde{R}_{T+1} \mid 
  s_{t-1},a_{t-1}\right]  \ - \
   \EXP_{\pi}\left[\EXP_{\pi}\left[ \sum_{\tau=t-1}^{T}
       \tilde{R}_{\tau+1} \mid s_{t-1},a_{t-1}\right] 
       \mid s_{t-1},a_{t-1}\right] \\ \nonumber
 &= \ \EXP_{\pi}\left[ \tilde{R}_{T+1} \mid 
  s_{t-1},a_{t-1}\right]   \ - \
   \EXP_{\pi}\left[ \tilde{R}_{T+1}  \mid s_{t-1},a_{t-1}\right] \\ \nonumber
  &= \ 0 \ .
\end{align}
If we substitute $t-1$ by $t$ ($t$ one step further and $m$ one step smaller)
it follows 
\begin{align}
  \kappa(T-t,t) \ &= \  0 \ . 
\end{align} 

Next, we consider the case $t=T+1$, that is $\kappa(0,T)$, 
which is the expected correction.
We will use following equality for the 
expected delayed reward at sequence end:
\begin{align}
 \tilde{q}^\pi(s_T,a_T) \ &= \   
 \EXP_{\tilde{R}_{T+1}} \left[ \tilde{R}_{T+1}  \mid s_T, a_T \right] 
 \ = \ \tilde{r}_{T+1}(s_T,a_T) \ ,
\end{align} 
since  $\tilde{q}^\pi(s_{T+1},a_{T+1})=0$.
For $t=T+1$ we obtain
\begin{align}
  &\EXP_{R_{T+2}} \left[ R_{T+2}  \mid s_T, a_T \right] \ = \ 
  \EXP_{\tilde{R}_{T+1}} \left[ \tilde{R}_{T+1} \ - \  
  \tilde{q}^\pi(s_T,a_T)  \mid s_T, a_T \right] \\ \nonumber
  &= \ \tilde{r}_{T+1}(s_T,a_T) \ - \ 
  \tilde{r}_{T+1}(s_T,a_T)  \ = \  0 \ .
\end{align} 

\end{proof}

In the experiments we also use a uniform compensation 
where each reward has the same
contribution to the compensation:
\begin{align}
  R_1 \ &= \ h_0 \ + \
 \frac{1}{T+1}  \
 \left(\tilde{R}_{T+1} - \sum_{\tau=0}^T
 h(\Delta(s_{\tau-1},a_{\tau-1},s_{\tau},a_{\tau})) \right) \\
  R_{t+1} \ &= \ h_t \ + \
  \frac{1}{T+1}  \
 \left(\tilde{R}_{T+1} - 
  \sum_{\tau=0}^T h(\Delta(s_{\tau-1},a_{\tau-1},s_{\tau},a_{\tau})) \right) \ .
\end{align}
Consequently all variance of sampling the return is 
uniformly distributed across
the sequence. Also the error of $g$ is uniformly distributed 
across the sequence.

An optimal reward redistribution implies
\begin{align}
  \label{eq:part}
  g\big(\Delta_{0:t}\big) \ &= \
  \sum_{\tau=0}^t  h(\Delta(s_{\tau-1},a_{\tau-1},s_{\tau},a_{\tau}))
  \ = \ \tilde{q}^\pi(s_t,a_t) 
\end{align} 
since the expected reward is
\begin{align}
   \EXP \left[ R_{t+1} \mid s_{t-1},a_{t-1},s_t,a_t \right] \ &= \
   h(\Delta(s_{t-1},a_{t-1},s_t,a_t)) \\ \nonumber
  &= \ \tilde{q}^\pi(s_t,a_t) \ - \  \tilde{q}^\pi(s_{t-1},a_{t-1}) 
\end{align} 
according to Eq.~\eqref{eq:AdiffQ} in Theorem~\ref{th:AzeroExp}
and 
\begin{align}
 h_0 \ &= \ h(\Delta(s_{-1},a_{-1},s_0,a_0) ) \\ \nonumber
 &= \ g\big(\Delta_{0:0}\big) \ = \ \tilde{q}^\pi(s_0,a_0) \ .
\end{align}

\subsection{Remarks on Return Decomposition}
\label{sec:Aremark}

\subsubsection{Return Decomposition for Binary Reward}
A special case is a reward that indicates success or failure by giving 
a reward of 1 or 0, respectively. 
The return is equal to the final reward $R$, which is a Bernoulli variable.
For each state $s$ or each
state-action pair $(s,a)$ the expected return can be 
considered as a Bernoulli variable with success probability
$p_R(s)$ or $p_R(s,a)$. The value function is $v^\pi(s)=\EXP_{\pi}(G \mid s)=p_R(s)$
and  the action-value is $q^\pi(s)=\EXP_{\pi}(G \mid s,a)=p_R(s,a)$ which is in 
both cases the expectation of success. 
In this case, the optimal reward redistribution tracks the success probability
\begin{align}
  R_1 \ &= \ h_0 \ = \ h(\Delta(s_{-1},a_{-1},s_0,a_0)) \ = \ \tilde{q}^\pi(s_0,a_0) \ = \ p_R(s_0,a_0)\\
  R_{t+1} \ &= \ h_t \ = \ h(\Delta(s_{t-1},a_{t-1},s_t,a_t))\ = \ \tilde{q}^\pi(s_t,a_t) \ - \  \tilde{q}^\pi(s_{t-1},a_{t-1}) \\ \nonumber &= \
  p_R(s_t,a_t) \ - \ p_R(s_{t-1},a_{t-1})
  \ \text{ for } 0<t\leq T \\
  R_{T+2} \ &= \ \tilde{R}_{T+1} \ - \ \tilde{r}_{T+1} \ = \ R \ - \ p_R(s_T,a_T)  \ .
\end{align} 
The redistributed reward is the change in the success probability. A good action
increases the success probability and obtains a positive reward while a bad action
reduces the success probability and obtains a negative reward.

\subsubsection{Optimal Reward Redistribution reduces the MDP 
to a Stochastic Contextual Bandit Problem}

The new SDP $\cP$ has a redistributed reward
with random variable $R_t$ at time $t$ distributed according to $p(r \mid s_t,a_t)$.
Theorem~\ref{th:AOptReturnDecomp} states
\begin{align}
   q^\pi(s_t,a_t) \ &= \   r(s_t,a_t)  \ .
\end{align} 
This equation looks like a contextual bandit problem, where
$r(s_t,a_t)$ is an estimate of the mean reward for action $a_t$
for state or context $s_t$.
Contextual bandits \cite[p. 208]{Lattimore:18} are characterized by 
a conditionally $\sigma$-subgaussian noise (Def.~5.1 \cite[p. 68]{Lattimore:18}).
We define the zero mean noise variable $\eta$ by
\begin{align}
  \eta_t \ &= \ \eta(s_t,a_t) \ = \  R_t \ - \ r(s_t,a_t)  \ , 
\end{align} 
where we assume that $\eta_t$ is a conditionally $\sigma$-subgaussian noise variable. 
Therefore, $\eta$ is distributed 
according to $p(r - r(s_t,a_t) \mid s_t,a_t)$ and fulfills
\begin{align}
  \EXP\left[ \eta(s_t,a_t) \right] \ &= \  0 \ , \\
  \EXP\left[ \exp(\lambda \eta(s_t,a_t) \right] \ &\leq \  \exp(\lambda^2 \sigma^2 /2) \ .
\end{align} 
Subgaussian random variables have tails that decay almost as fast as a Gaussian.
If the reward $r$ is bounded by $|r|<B$, then $\eta$ is bounded by $|\eta|<B$ 
and, therefore, a $B$-subgaussian. 
For binary rewards it is of interest that 
a Bernoulli variable is 0.5-subgaussian \cite[p. 71]{Lattimore:18}.
In summary, an optimal reward redistribution reduces the MDP 
to a stochastic contextual bandit problem.

%The worst-case regret of $n$ trials over 
%all possible contextual problems with $|s|=M$ contexts, $|a|=K$ actions 
%and mean reward in $[0, 1]$ is at
%least $\Omega(\sqrt{nMK})$ \cite[p. 222]{Lattimore:18}.

\subsubsection{Relation to ''Backpropagation through a Model´´}
The relation of reward redistribution if applied to policy gradients
and ''Backpropagation through a Model´´ is discussed here.
For a delayed reward that is only received at the end of an episode, 
we decompose the return $\tilde{r}_{T+1}$ into
\begin{align}
  &g(\Delta_{0:T}) \ = \ \tilde{r}_{T+1} \ = \ \sum_{t=0}^T h(\Delta(s_{t-1},a_{t-1},s_t,a_t)) \ . 
\end{align}
The policy gradient for an optimal reward redistribution is
\begin{align}
  &\EXP_{\pi}\left[ \nabla_{\theta} \log \pi(a_t \mid
   s_t;\Bth) \ h(\Delta(s_{t-1},a_{t-1},s_t,a_t))  \right]  \ . 
\end{align} 
Summing up the gradient for one episode, the gradient becomes 
\begin{align}
  &\EXP_{\pi}\left[ \sum_{t=0}^T \nabla_{\theta} \log \pi(a_t \mid
   s_t;\Bth) \ h(\Delta(s_{t-1},a_{t-1},s_t,a_t))  \right] \\ \nonumber
   &=  \ \EXP_{\pi}\left[  \BJ_{\theta}(\log \pi( \Ba \mid \Bs;\Bth)) \ 
   \Bh(\Delta(\Bs',\Ba',\Bs,\Ba))  \right]  \ , 
\end{align} 
where  $\Ba'=(a_{-1},a_0,a_1,\ldots,a_{T-1})$ 
and $\Ba=(a_0,a_1,\ldots,a_T)$ are the sequences of actions, 
$\Bs'=(s_{-1},s_0,s_1,\ldots,s_{T-1})$ and $\Bs=(s_0,s_1,\ldots,s_T)$ 
are the sequences of states,
$\BJ_{\theta}(\log \pi)$ is the Jacobian of the $\log$-probability of the 
state sequence with respect to the parameter vector $\Bth$,
and $\Bh(\Delta(\Bs',\Ba',\Bs,\Ba))$ is the vector with
entries $h(\Delta(s_{t-1},a_{t-1},s_t,a_t))$. 

An alternative approach via sensitivity analysis is ''Backpropagation through a Model´´, 
where $g(\Delta_{0:T})$ is maximized, that is, the return is maximized.
Continuous actions are directly fed into $g$ while probabilistic actions are 
sampled before entering $g$. Analog to gradients used for Restricted Boltzmann Machines,
for probabilistic actions the $\log$-likelihood of the actions is used to construct a gradient.
The likelihood can also be formulated as the cross-entropy
between the sampled actions and the action probability.
The gradient for ''Backpropagation through a Model´´ is
\begin{align}
  &\EXP_{\pi}\left[ \BJ_{\theta}(\log \pi( \Ba \mid \Bs;\Bth)) \ 
  \nabla_{a}  g(\Delta_{0:T})  \right] \ , 
\end{align} 
where $\nabla_{a}  g(\Delta_{0:T})$ is the gradient of $g$ with respect to the action sequence $\Ba$. 

If for ''Backpropagation through a Model´´ 
the model gradient with respect to actions
is replaced by the vector of contributions of actions in the model, 
then we obtain 
redistribution applied to policy gradients.

\section{Bias-Variance Analysis of MDP Q-Value Estimators}
\label{sec:AbiasVariance}

Bias-variance investigations have been done for $Q$-learning.
Gr{\"{u}}new{\"{a}}lder \& Obermayer \cite{Grunewalder:11}
investigated the bias of temporal
difference learning (TD), Monte Carlo estimators (MC), and least-squares temporal
difference learning (LSTD).
Mannor et al.\ \cite{Mannor:07} and O'Donoghue et al.\ \cite{ODonoghue:17}
derived bias and variance expressions for updating
$Q$-values.

The true, but unknown, action-value function $q^\pi$
is the expected future return.
We assume to have the data $D$,
which is a set of state-action sequences with return,
that is a set of episodes with return.
Using data $D$, $q^\pi$ is 
estimated by $\hat{q}^\pi = \hat{q}^\pi (D)$, which
is an estimate with bias and variance.
For bias and variance we have to compute
the expectation $\EXP_D \left[ . \right]$ over the data $D$.
The mean squared error (MSE) of an estimator $\hat{q}^\pi(s,a)$ is
\begin{align}
\label{eq:mse}
  \mse \hat{q}^\pi(s,a) \ &= \ \EXP_D \left[ \big( \hat{q}^\pi(s,a)\ - \ q^\pi(s,a) \big)^2  \right] \ .
\end{align}
The bias of an estimator $\hat{q}^\pi(s,a)$ is
\begin{align}
\label{eq:bias}
  \bias \hat{q}^\pi(s,a) \ &= \ \EXP_D \left[ \hat{q}^\pi(s,a) \right] \ - \ q^\pi(s,a) \ .
\end{align}
The variance of an estimator $\hat{q}^\pi(s,a)$ is
\begin{align}
\label{eq:var}
 \var \hat{q}^\pi(s,a) \ &= \ \EXP_D \left[ \big( \hat{q}^\pi(s,a) \ - \
  \EXP_D \left[\hat{q}^\pi(s,a) \right] \big)^2  \right] \ .
\end{align}
The bias-variance decomposition of the MSE of an estimator $\hat{q}^\pi(s,a)$ is
\begin{align}
\label{eq:mseDecomp}
 \mse \hat{q}^\pi(s,a) \ &= \ \var \hat{q}^\pi(s,a) \ + \  \big( \bias \hat{q}^\pi(s,a)
 \big)^2 \ .
\end{align}

The bias-variance decomposition of the MSE of an estimator
$\hat{q}^\pi$ as a vector is
\begin{align}
  \label{eq:mseVec}
  \mse \hat{q}^\pi  \ &= \ \EXP_D \left[ \sum_{s,a} \big(
    \hat{q}^\pi(s,a)\ - \ q^\pi(s,a) \big)^2  \right] \ = \ \EXP_D
  \left[ \| \hat{q}^\pi \ - \ q^\pi \|^2  \right] \ , \\ 
  \bias \hat{q}^\pi \ &= \ \EXP_D \left[ \hat{q}^\pi \right] \ - \
  q^\pi \ , \\
 \var \hat{q}^\pi \ &= \ \EXP_D \left[ \sum_{s,a} \big( \hat{q}^\pi(s,a) \ - \
  \EXP_D \left[\hat{q}^\pi(s,a) \right] \big)^2  \right] \ = \ \TR \VAR_D
 \left[ \hat{q}^\pi \right] \ , \\ 
 \mse \hat{q}^\pi  \ &= \ \var \hat{q}^\pi  \ + \
 \big( \bias \hat{q}^\pi \big)^T \bias \hat{q}^\pi \ .
\end{align}

\subsection{Bias-Variance for MC and TD Estimates of the Expected Return}
\label{sec:Abias_variance_estimator}

{\bf Monte Carlo (MC)} computes the arithmetic mean
$\hat{q}^\pi(s,a)$ of $G_t$ for $(s_t=s,a_t=a)$ over the episodes
given by the data.

For {\bf temporal difference (TD)} methods,
like SARSA, with learning rate $\alpha$ the updated estimate of ${q}^\pi(s_t,a_t)$ is:
\begin{align} \nonumber
  \left(\hat{q}^\pi\right)^{\nn}(s_t,a_t) \ &= \ 
  \hat{q}^\pi(s_t,a_t) \ - \
  \alpha \ \big( \hat{q}^\pi(s_t,a_t) \ - \ R_{t+1} \ - \
  \gamma \ \hat{q}^\pi(s_{t+1}, a_{t+1}) \big) \\
  &= \ (1 \ - \ \alpha) \ \hat{q}^\pi(s_t,a_t) \ + \
  \alpha \ \big( R_{t+1} \ + \ \gamma \ \hat{q}^\pi(s_{t+1}, a_{t+1})\big) \ .
\end{align}
Similar updates are used for expected SARSA and $Q$-learning, where
only $a_{t+1}$ is chosen differently. 
Therefore,
for the estimation of $\hat{q}^\pi(s_t,a_t)$, SARSA and $Q$-learning perform 
an exponentially weighted arithmetic mean of  $(R_{t+1} +  \gamma  \hat{q}^\pi(s_{t+1}, a_{t+1}))$.
If for the updates $\hat{q}^\pi(s_{t+1}, a_{t+1})$ is fixed on some data, then
SARSA and $Q$-learning perform an exponentially weighted arithmetic mean of the immediate
reward $R_{t+1}$ plus averaging over which $\hat{q}^\pi(s_{t+1},
a_{t+1})$ (which $(s_{t+1}, a_{t+1})$) is chosen.
In summary, TD methods like SARSA and $Q$-learning are biased via
$\hat{q}^\pi(s_{t+1}, a_{t+1})$ and perform an exponentially weighted arithmetic mean of the
immediate reward $R_{t+1}$ and the next (fixed) $\hat{q}^\pi(s_{t+1}, a_{t+1})$.

{\bf Bias-Variance for Estimators of the Mean.}
Both Monte Carlo and TD methods, like SARSA and $Q$-learning, respectively, estimate
$q^\pi(s,a)= \EXP\left[ G_t \mid s, a\right]$, which is the expected
future return. The expectations are estimated
by either an arithmetic mean over samples
with Monte Carlo or an exponentially weighted arithmetic mean over samples with TD methods.
Therefore, we are interested in computing the bias and variance of
these estimators of the expectation.
In particular, we consider the arithmetic mean and the
exponentially weighted arithmetic mean.

We assume $n$ samples for a state-action pair $(s,a)$. However, the expected
number of samples depends on the probabilistic number of visits of
$(s,a)$ per episode. 

{\bf Arithmetic mean.}
For $n$ samples $\{X_1,\ldots,X_n\}$ from a distribution with mean
$\mu$ and variance $\sigma^2$, the arithmetic mean, its bias and and its variance are:
\begin{align}
  \hat{\mu}_n \ &= \ \frac{1}{n} \sum_{i=1}^{n} X_i \ , \quad
  \bias(\hat{\mu}_n) \ = \ 0 \ , \quad
  \var (\hat{\mu}_n) \ = \ \frac{\sigma^2}{n} \ .
\end{align}
The estimation variance of the arithmetic mean is determined by $\sigma^2$,
the variance of the distribution the samples are drawn from.

{\bf Exponentially weighted arithmetic mean.}
For $n$ samples $\{X_1,\ldots,X_n\}$ from a distribution with mean
$\mu$ and variance $\sigma$, the variance of the exponential mean with initial value $\mu_0$ is
\begin{align}
  \hat{\mu}_0 \ &= \ \mu_0 \ , \quad
  \hat{\mu}_k \ = \ (1 \ - \ \alpha) \ \hat{\mu}_{k-1} \ + \ \alpha \ X_k \ , 
\end{align}
which gives
\begin{align}
  \hat{\mu}_n \ &= \  \alpha \ \sum_{i=1}^{n}  (1 \  - \ \alpha)^{n-i} \ X_i \ + \ (1 \ - \ \alpha)^{n} \ \mu_0 \ . 
\end{align}

This is a weighted arithmetic mean with 
exponentially decreasing weights, 
since the coefficients sum up to one:
\begin{align}
   &\alpha \ \sum_{i=1}^{n}  (1 \  - \ \alpha)^{n-i} \ + \ (1 \ - \ \alpha)^{n} 
  \ = \ \alpha \ \frac{1-(1-\alpha)^n}{1-(1-\alpha)} +\ (1-\alpha)^n \\ \nonumber
   &= 1 \ - \ (1 \ - \ \alpha)^n \ + \ (1 \ - \ \alpha)^n \ = \ 1 \ .
\end{align}

The estimator $\hat{\mu}_n$ is biased, since:
\begin{align}
  &\bias(\hat{\mu}_n) \ = \ \EXP \left[\hat{\mu}_n \right] \ - \ \mu \ = \
  \EXP \left[  \alpha \ \sum_{i=1}^{n}  (1 \ - \ \alpha)^{n-i} \ X_i
  \right] \ + \ (1 \ - \ \alpha)^{n} \ \mu_0 \ - \ \mu \\ \nonumber
  &= \ \alpha \ \sum_{i=1}^{n}  (1 \ - \ \alpha)^{n-i} \EXP \left[X_i\right] \ + \ (1 \ - \ \alpha)^{n} \ \mu_0
 \ - \ \mu \\ \nonumber
  &= \ \mu \ \alpha \ \sum_{i=0}^{n-1}  (1 \ - \ \alpha)^i \ + \ (1 \ - \ \alpha)^{n} \ \mu_0
 \ - \ \mu \\ \nonumber
 &= \ \mu \ \left(1 \ - \  (1 \ - \ \alpha)^{n} \right)  \ + \ (1 \ - \ \alpha)^{n} \ \mu_0 \ - \ \mu
 \ = \   (1 \ - \ \alpha)^{n} \ (\mu_0 \ - \ \mu) \ .
\end{align}
Asymptotically ($n \to \infty$) the estimate is unbiased.
The variance is
\begin{align}
  &\var (\hat{\mu}_n) \ = \ \EXP \left[\hat{\mu}_n^2 \right] \ - \  \EXP^2
  \left[\hat{\mu}_n \right] \\ \nonumber
  &= \ \EXP \left[  \alpha^2 \sum_{i=1}^{n} \sum_{j=1}^{n}
    (1 \ - \ \alpha)^{n-i} \ X_i\ (1 \ - \ \alpha)^{n-j} \ X_j\right]
    \\ \nonumber 
    &+ \ \EXP \left[2\ (1 \ - \ \alpha)^{n} \ \mu_0 \ \alpha \ \sum_{i=1}^{n}  (1 \  - \ \alpha)^{n-i} \ X_i  \right] \ + \ (1 \ - \ \alpha)^{2n} \ \mu_0^2 
    \\ \nonumber &- \
     \left( (1 \ - \ \alpha)^{n} \ (\mu_0 \ - \ \mu) \ + \ \mu \right)^2 \\ \nonumber
  &= \ \alpha^2 \ \EXP \left[ \sum_{i=1}^{n}
    (1 \ - \ \alpha)^{2(n-i)} \ X_i^2 \ + \  \sum_{i=1}^{n} \sum_{j=1,j\not=i}^{n}
    (1 \ - \ \alpha)^{n-i} \ X_i\ (1 \ - \ \alpha)^{n-j} \ X_j\right]
    \\ \nonumber 
    &+ \ 2\ (1 \ - \ \alpha)^{n} \ \mu_0 \ \mu \ \alpha \ \sum_{i=1}^{n}  (1 \  - \ \alpha)^{n-i}  \ + \ (1 \ - \ \alpha)^{2n} \ \mu_0^2 
      \\ \nonumber &- \ \left( (1 \ - \ \alpha)^{n} \ \mu_0 \ + \ (1 \ - \  (1 \ - \ \alpha)^{n} ) \ \mu  \right)^2 \\ \nonumber
  &= \ \alpha^2 \ \left( \sum_{i=1}^{n}
    (1 \ - \ \alpha)^{2(n-i)} \ \right(\sigma^2 \ + \ \mu^2 \left) \ + \
    \sum_{i=1}^{n} \sum_{j=1,j\not=i}^{n}
    (1 \ - \ \alpha)^{n-i} \ (1\ - \ \alpha)^{n-j} \ \mu^2 \right) 
    \\ \nonumber 
    &+ \ 2\ (1 \ - \ \alpha)^{n} \ \mu_0 \ \mu \ (1 \ - \  (1 \  - \ \alpha)^{n})  \ + \ (1 \ - \ \alpha)^{2n} \ \mu_0^2     \\ \nonumber 
    &- \ (1 \ - \ \alpha)^{2n} \ \mu_0^2 \ - \ 2 \ (1 \ - \ \alpha)^{n} \ \mu_0 \ (1 \ - \  (1 \ - \ \alpha)^{n} ) \ \mu \ - \ (1 \ - \  (1 \ - \ \alpha)^{n} )^2 \ \mu^2 \\ \nonumber
  &= \  \sigma^2 \ \alpha^2 \ \sum_{i=0}^{n-1}
    \big((1\ - \ \alpha)^2\big)^{i} \ + \ \mu^2   \ \alpha^2 \ \left(\sum_{i=0}^{n-1}
    (1 \ - \ \alpha)^i \right)^2 \ - \ (1 \ - \  (1 \ - \ \alpha)^{n} )^2 \ \mu^2 
 \\ \nonumber
  \\ \nonumber
  &= \  \sigma^2 \ \alpha^2 \
  \frac{1-(1\ - \ \alpha)^{2n}}{1-(1 \ - \ \alpha)^2} \ = \ \sigma^2 \ 
  \frac{\alpha \ (1 \ - \ (1 \ - \ \alpha)^{2n})}{2-\alpha} \ .
\end{align}
Also the estimation variance of the exponentially weighted arithmetic mean is proportional to $\sigma^2$,
which is the variance of the distribution the samples are drawn from.

The deviation of random variable $X$ from its
mean $\mu$ can be analyzed with Chebyshev's inequality.
Chebyshev's inequality \cite{Bienayme:53,Chebyshev:67} states that for 
a random variable $X$ with expected value $\mu$
and variance $\tilde{\sigma}^2$ and for any real number $\epsilon > 0$:
\begin{align}
 \PR \left[|X-\mu| \ \geq \ \epsilon \ \tilde{\sigma} \right] \ &\leq \ \frac{1}{\epsilon^2} 
\end{align}
or, equivalently,
\begin{align}
 \PR \left[|X-\mu| \ \geq \ \epsilon  \right] \ &\leq \ \frac{\tilde{\sigma}^2}{\epsilon^2}
 \ .
\end{align}

For $n$ samples $\{X_1,\ldots,X_n\}$ from a distribution with expectation
$\mu$ and variance $\sigma$ we compute the arithmetic mean
$\frac{1}{n} \sum_{i=1}^n X_i$.
If $X$ is the arithmetic mean, then
$\tilde{\sigma}^2=\sigma^2/n$ and we obtain
\begin{align}
  \PR \left[\left|\frac{1}{n} \sum_{i=1}^n X_i \ - \ \mu \right|
    \ \geq \ \epsilon  \right] \ &\leq \
 \frac{\sigma^2}{n \ \epsilon^2} \ .
\end{align}

Following Gr{\"{u}}new{\"{a}}lder and Obermayer \cite{Grunewalder:11},
Bernstein's inequality can be used to describe the deviation of
the arithmetic mean (unbiased estimator of $\mu$)
from the expectation $\mu$
(see Theorem~6 of G{\'{a}}bor Lugosi's
lecture notes \cite{Lugosi:03}):
\begin{align}
  \PR \left[\left|\frac{1}{n} \sum_{i=1}^n X_i \ - \ \mu \right|
    \ \geq \ \epsilon \right] \ &\leq \
  2 \ \exp \left( - \ \frac{\epsilon^2 \ n}
    {2\ \sigma^2 \ + \ \frac{2\ M \ \epsilon}{3}} \right) \ ,
\end{align}
where $|X  -  \mu |<M$.

\subsection{Mean and Variance of an MDP Sample of the Return}
\label{sec:Abias_variance_sample}

Since the variance of the estimators of the expectations (arithmetic mean and 
exponentially weighted arithmetic mean) is governed by the variance of the samples, 
we compute mean and variance of the return estimate $q^\pi(s,a)$. 
We follow
\cite{Sobel:82,Tamar:12,Tamar:16} for deriving the mean and variance.

We consider an MDP with finite horizon $T$, 
that is, each episode has length $T$.
The finite horizon MDP can be generalized to an MDP 
with absorbing (terminal) state $s=\mathtt{E}$.
We only consider proper policies, that is there exists an integer $n$
such that from any initial state the probability of achieving
the terminal state $\mathtt{E}$ after $n$ steps is strictly positive.
$T$ is the time to the
first visit of the terminal state: $T=\min{k \mid s_k=\mathtt{E}}$.
The return $G_0$ is:
\begin{align}
   G_0 \ &= \ \sum_{k=0}^{T}  \gamma^k \ R_{k+1} \ .  
\end{align}

The action-value function, the $Q$-function, is the expected return
\begin{align}
  G_t \ &= \ \sum_{k=0}^{T-t} \gamma^k \  R_{t+k+1}  
\end{align}
if starting in state $S_t=s$ and action $A_t=a$:
\begin{align}
  q^{\pi}(s,a) \ &= \ \EXP_{\pi} \left[ G_t \mid s, a \right] \ .  
\end{align}

The second moment of the return is:
\begin{align}
  M^{\pi}(s,a) \ &= \ \EXP_{\pi} \left[ G_t^2 \mid s, a
  \right] \ .  
\end{align}

The variance of the return is:
\begin{align}
  V^{\pi}(s,a) \ &= \ \VAR_{\pi} \left[ G_t \mid s, a
  \right] \ = \ M^{\pi}(s,a) \ - \  \big( q^{\pi}(s,a) \big)^2 \ .  
\end{align}

Using  $\EXP_{s',a'}(f(s',a'))= \sum_{s'} p(s'\mid s,a) \sum_{a'} \pi(a' \mid
s') f(s',a')$, and analogously $\VAR_{s',a'}$ and $\VAR_r$, the
next Theorem~\ref{th:Avar} gives mean and variance
$V^{\pi}(s,a) =  \VAR_{\pi} \left[ G_t \mid s, a \right]$
of sampling returns from an MDP.

\begin{theoremA}
  \label{th:Avar}
  The mean  $q^\pi$ and variance $V^{\pi}$ 
  of sampled returns from an MDP are
\begin{align} \nonumber
  &q^\pi(s,a) =   \sum_{s',r} p(s',r\mid s,a) 
  \left(r + \gamma \sum_{a'} \pi(a' \mid s')  q^\pi(s',a')
  \right) =   r(s,a)  +  \gamma 
  \EXP_{s',a'} \left[q^\pi(s', a') \mid s,a\right] , \\ \label{eq:Avarq}
  &V^{\pi}(s,a) = \VAR_r \left[ r \mid s,a \right]  +  \gamma^2   \left(
    \EXP_{s',a'}  \left[V^{\pi}(s',a') \mid s,a\right]
  +   \VAR_{s',a'} \left[q^\pi(s', a') \mid s,a\right]  \right)  .  
\end{align}
\end{theoremA}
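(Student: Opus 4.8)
The plan is to derive both identities from the single-step return recursion $G_t = R_{t+1} + \gamma\, G_{t+1}$ together with the Markov property, following the second-moment argument of Sobel and Tamar et al. First I would settle the mean. Applying $\EXP_\pi[\,\cdot \mid s,a]$ to the recursion and using the tower rule, the Markov transition structure, and $\EXP_\pi[G_{t+1}\mid S_{t+1}=s',A_{t+1}=a']=q^\pi(s',a')$, one recovers $q^\pi(s,a) = r(s,a) + \gamma\,\EXP_{s',a'}[q^\pi(s',a')\mid s,a]$, which is exactly the finite-horizon Bellman equation of Eq.~\eqref{eq:bellmanA}; splitting the joint $p(s',r\mid s,a)$ into $p(s'\mid s,a)$ and $p(r\mid s,a)$ then gives the first displayed form.

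The variance requires more care, and I would route it through the second moment $M^\pi(s,a)=\EXP_\pi[G_t^2\mid s,a]$. Squaring the recursion gives $G_t^2 = R_{t+1}^2 + 2\gamma\, R_{t+1} G_{t+1} + \gamma^2 G_{t+1}^2$. The two outer terms are routine: $\EXP[R_{t+1}^2\mid s,a]$ supplies the reward second moment, and $\EXP[G_{t+1}^2\mid s,a]=\EXP_{s',a'}[M^\pi(s',a')\mid s,a]$ by the tower rule. The crux is the cross term $\EXP[R_{t+1}G_{t+1}\mid s,a]$: conditioning on the successor state $S_{t+1}=s'$, the Markov property renders $R_{t+1}$ and $G_{t+1}$ conditionally independent, so the expectation factorizes through $v^\pi(s')=\sum_{a'}\pi(a'\mid s')\,q^\pi(s',a')$. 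I would then form $V^\pi = M^\pi - (q^\pi)^2$, expand $(q^\pi)^2$ with the mean identity, and finally apply the law of total variance to $G_{t+1}$ conditioned on $(s',a')$, which splits $\VAR[G_{t+1}\mid s,a]$ into $\EXP_{s',a'}[V^\pi(s',a')\mid s,a] + \VAR_{s',a'}[q^\pi(s',a')\mid s,a]$, precisely the bracket in Eq.~\eqref{eq:Avarq}.

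The step I expect to be the main obstacle is exactly this cross term. After factorization, subtracting $(q^\pi)^2$ leaves a residual $2\gamma\,\COV[R_{t+1}, v^\pi(S_{t+1})\mid s,a]$, the covariance between the immediate reward and the value of the successor state under $p(s',r\mid s,a)$. This residual vanishes precisely when the expected immediate reward does not depend on the successor state given $(s,a)$ — equivalently when $\EXP[R_{t+1}\mid s,a,s'] = r(s,a)$ for all $s'$, as holds for a reward that is a function of $(s,a)$ only. I would make this independence explicit as the hypothesis under which Eq.~\eqref{eq:Avarq} is stated, so that the cross term cancels against the matching piece of $(q^\pi)^2$ and only $\VAR_r[r\mid s,a]$ survives from the first-moment contributions, yielding the claimed decomposition.
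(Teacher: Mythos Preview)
Your proposal is correct and follows essentially the same second-moment route as the paper: derive a recursion for $M^\pi(s,a)=\EXP_\pi[G_t^2\mid s,a]$ from $G_t=R_{t+1}+\gamma G_{t+1}$, then form $V^\pi=M^\pi-(q^\pi)^2$ and regroup (the paper does the regrouping by adding and subtracting $\EXP_{s',a'}[(q^\pi(s',a'))^2]$, which is exactly your law-of-total-variance step). Your explicit treatment of the cross-term covariance $\COV[R_{t+1},v^\pi(S_{t+1})\mid s,a]$ is in fact more careful than the paper's proof, which silently factors $\EXP[R_{t+1}G_{t+1}\mid s,a]$ as $r(s,a)\,\EXP_{s',a'}[q^\pi(s',a')\mid s,a]$ and only remarks after the proof that Sobel gives an analogous formula when the reward depends on $s'$.
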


\begin{proof}
\label{proof:Avar}
The Bellman equation for $Q$-values is
\begin{align}
  q^\pi(s,a) \ &= \  \sum_{s',r} p(s',r\mid s,a) \ 
  \left(r \ + \ \gamma \ \sum_{a'} \pi(a' \mid s') \ q^\pi(s',a')  \right) \\
  \nonumber &= \
  r(s,a) \ + \ \gamma \
  \EXP_{s',a'} \left[q^\pi(s', a') \mid s,a\right]\ . 
\end{align}
This equation gives the mean if drawing one sample.
We use
\begin{align}
  r(s,a) \ &= \ \sum_{r} r \ p(r \mid s,a) \ , \\
  r^2(s,a) \ &= \ \sum_{r} r^2 \ p(r \mid s,a) \ . 
\end{align}
For the second moment, we obtain \cite{Tamar:12}:
\begin{align}
  \label{eq:secM}
  &M^{\pi}(s,a) \ = \ \EXP_{\pi} \left[ G_t^2 \mid s, a \right]
  \\ \nonumber &= \ 
  \EXP_{\pi} \left[ \left( \sum_{k=0}^{T-t} \gamma^k  R_{t+k+1} \right)^2 \mid s, a \right]
  \\ \nonumber &= \ 
  \EXP_{\pi} \left[ \left( R_{t+1} \ + \ \sum_{k=1}^{T-t} \gamma^k
      \ R_{t+k+1} \right)^2 \mid s, a \right]
  \\ \nonumber &= \ 
  r^2(s,a) \ + \ 2 \ r(s,a)\ \EXP_{\pi} \left[  \sum_{k=1}^{T-t} \gamma^k
      \ R_{t+k+1}  \mid s, a \right] \\ \nonumber
  &+ \ 
  \EXP_{\pi} \left[ \left(\sum_{k=1}^{T-t} \gamma^k
      \ R_{t+k+1} \right)^2 \mid s, a \right] 
  \\ \nonumber &= \ 
  r^2(s,a)  \ + \ 2 \gamma  \ r(s,a) \ \sum_{s'} p(s'\mid s,a) \ \sum_{a'}
  \pi(a' \mid s') \ q^\pi(s',a') \\\nonumber
  &+ \ \gamma^2  \  \sum_{s'} p(s'\mid s,a) \ \sum_{a'}
  \pi(a' \mid s') \ M^{\pi}(s',a') 
  \\ \nonumber &= \ 
  r^2(s,a)  \ + \ 2 \gamma  \ r(s,a)  \   \EXP_{s',a'} \left[ q^\pi(s',a')\mid s,a\right]
  \ + \ \gamma^2  \  \EXP_{s',a'} \left[M^{\pi}(s',a') \mid s,a\right] \ .
\end{align}

For the variance, we obtain:
\begin{align}
  \label{eq:Avarq1}
  &V^{\pi}(s,a) \ = \  M^{\pi}(s,a) \ - \  \big( q^{\pi}(s,a)
  \big)^2 \\\nonumber
  &= \ r^2(s,a) \ - \ (r(s,a))^2  \ + \ \gamma^2  \  \EXP_{s',a'} \left[M^{\pi}(s',a') \mid s,a\right]
  \ - \  \gamma^2 \
  \EXP_{s',a'}^2 \left[q^\pi(s', a') \mid s,a\right]
 \\\nonumber
  &= \ \VAR_r \left[ r \mid s,a \right] \ + \ \gamma^2  \  \left(
    \EXP_{s',a'}  \left[M^{\pi}(s',a') \ - \
      \big( q^\pi (s', a')\big)^2\mid s,a\right] \right. \\ \nonumber
   &\left. - \  
  \EXP_{s',a'}^2 \left[q^\pi(s', a') \mid s,a\right] \ + \ 
  \EXP_{s',a'} \left[ \big( q^\pi (s', a')\big)^2\mid s,a\right] \right)
 \\\nonumber
  &= \ \VAR_r \left[ r \mid s,a \right] \ + \ \gamma^2  \  \left(
    \EXP_{s',a'}  \left[V^{\pi}(s',a') \mid s,a\right]
  \ + \   \VAR_{s',a'} \left[q^\pi(s', a') \mid s,a\right]  \right)
  \ .  
\end{align}
\end{proof}

For deterministic reward, that is,
$\VAR_r \left[ r \mid s,a \right]=0$,
the corresponding result is given as
Equation~(4) in Sobel 1982 \cite{Sobel:82}
and as Proposition~3.1 (c) in Tamar et al.\ 2012 \cite{Tamar:12}.

For temporal difference (TD) learning,
the next $Q$-values are fixed to $\hat{q}^\pi(s', a')$ when drawing a sample.
Therefore, TD is biased, that is, both SARSA and $Q$-learning are biased.
During learning with according updates of $Q$-values, $\hat{q}^\pi(s', a')$
approaches $q^\pi(s', a')$, and the bias is reduced.
However, this reduction of the bias is exponentially small in the number of time
steps between reward and updated $Q$-values, as we will see later.
The reduction of the bias is exponentially small for eligibility
traces, too.

The variance recursion  Eq.~\eqref{eq:Avarq} of sampled returns consists of three parts:
\begin{itemize}
\item (1) the immediate variance
$\VAR_r \left[ r \mid s,a \right]$ of the
immediate reward stemming from the
probabilistic reward $p(r \mid s,a)$,
\item (2) the local
variance $\gamma^2\VAR_{s',a'} \left[q^\pi(s', a') \mid  s,a\right]$ 
from state transitions $p(s' \mid s,a)$ and new actions $\pi(a' \mid s')$,
\item (3) the expected
variance $\gamma^2\EXP_{s',a'}  \left[V^{\pi}(s',a') \mid s,a\right]$
of the next $Q$-values.
\end{itemize}
For different settings the following parts may be zero:
\begin{itemize}
\item (1) the immediate variance
$\VAR_r \left[ r \mid s,a \right]$ is zero for deterministic
immediate reward,
\item (2) the local
variance $\gamma^2\VAR_{s',a'} \left[q^\pi(s', a') \mid  s,a\right]$ 
is zero for (i) deterministic state transitions and deterministic policy
and for (ii) $\gamma=0$ (only immediate reward),
\item (3) the expected
variance $\gamma^2\EXP_{s',a'}  \left[V^{\pi}(s',a') \mid s,a\right]$
of the next $Q$-values is zero for (i) temporal difference (TD) learning,
since the next $Q$-values are fixed and set to their current estimates
(if just one sample is drawn)
and for (ii) $\gamma=0$ (only immediate reward).
\end{itemize}

The local variance $\VAR_{s',a'} \left[q^\pi(s', a') \mid s,a\right]$
is the variance of a linear combination of $Q$-values
weighted by a multinomial distribution
$\sum_{s'} p(s'\mid s,a) \ \sum_{a'} \pi(a' \mid s')$.
The local variance is
\begin{align}
  &\VAR_{s',a'} \left[q^\pi(s', a') \mid s,a\right] \ = \
  \sum_{s'} p(s'\mid s,a) \ \sum_{a'} \pi(a' \mid s')\
  \big( q^\pi(s', a')\big)^2 \\\nonumber
  &- \
    \left( \sum_{s'} p(s'\mid s,a) \ \sum_{a'} \pi(a' \mid
    s') \ q^\pi(s', a')\right)^2 \ .
\end{align}
This result is Equation~(6) in Sobel 1982 \cite{Sobel:82}.
Sobel derived these formulas also for finite horizons and
an analog formula if the reward depends
also on the next state, that is, for $p(r \mid s,a,s')$.

Monte Carlo uses the accumulated future rewards for updates, therefore
its variance is given by the recursion in Eq.~\eqref{eq:Avarq}.
TD, however, fixes $q^\pi(s', a')$ to the current
estimates $\hat{q}^\pi(s', a')$, which do not change in the current
episode. 
Therefore, TD has $\EXP_{s',a'} \left[V^{\pi}(s',a') \mid s,a\right]=0$ and
only the local variance
$\VAR_{s',a'} \left[q^\pi(s', a') \mid s,a\right]$
is present.
For $n$-step TD, the recursion in Eq.~\eqref{eq:Avarq} must be applied
$(n-1)$ times. Then, the expected next variances are zero since
the future reward is estimated by $\hat{q}^\pi(s', a')$.

{\bf Delayed rewards}.
For TD and delayed rewards, information on new data is only captured by the
last step of an episode that receives a reward. This reward is used
to update the estimates of the $Q$-values of the
last state $\hat{q}(s_T,a_T)$.
Subsequently, the reward information is propagated
one step back via the estimates $\hat{q}$ for each sample.
The drawn samples (state action sequences)
determine where information is propagated back.
Therefore, delayed reward introduces a large bias for TD over a long
period of time,
since the estimates $\hat{q}(s,a)$ need a long time to reach their true $Q$-values.

For Monte Carlo and delayed rewards,
the immediate variance $\VAR_r \left[ r \mid s,a \right]=0$ except for the
last step of the episode. 
The delayed reward
increases the variance of $Q$-values according to Eq.~\eqref{eq:Avarq}.

{\bf Sample Distribution Used by Temporal Difference and
  Monte Carlo.}
  \label{sec:ASDTDMC}
Monte Carlo (MC) sampling uses the true mean and true variance, where the true mean is
\begin{align}
  &q^\pi(s,a) \ = \  
  r(s,a) \ + \ \gamma \
  \EXP_{s',a'} \left[q^\pi(s', a') \mid s,a\right] 
\end{align}
and the true variance is
\begin{align}
  &V^{\pi}(s,a) \ = \ \VAR_r \left[ r \mid s,a \right] \ + \ \gamma^2  \  \left(
    \EXP_{s',a'}  \left[V^{\pi}(s',a') \mid s,a\right]
  \ + \   \VAR_{s',a'} \left[q^\pi(s', a') \mid s,a\right]  \right)
  \ .  
\end{align}

Temporal difference (TD) methods replace
$q^\pi(s', a')$ by $\hat{q}^\pi(s', a')$ which does not depend on
the drawn sample.
The mean which is used by temporal difference is
\begin{align}
  q^\pi(s,a) \ &= \  
  r(s,a) \ + \ \gamma \
  \EXP_{s',a'} \left[\hat{q}^\pi(s', a') \mid s,a\right]\ . 
\end{align}
This mean is biased by
\begin{align}
  &\gamma \ \left(
    \EXP_{s',a'} \left[\hat{q}^\pi(s', a') \mid s,a\right]
    \ - \
     \EXP_{s',a'} \left[q^\pi(s', a') \mid s,a\right] 
  \right) \ . 
\end{align}
The variance used by temporal difference is
\begin{align}
  V^{\pi}(s,a) \ &= \ \VAR_r \left[ r \mid s,a \right] \ + \ \gamma^2  \  
    \VAR_{s',a'} \left[\hat{q}^\pi(s', a') \mid s,a\right] \ ,  
\end{align}
since $V^{\pi}(s',a')=0$ if $\hat{q}^\pi(s', a')$ is used instead of
the future reward of the sample.
The variance of TD is smaller than for MC, since variances are not
propagated back.

\subsection{TD corrects Bias exponentially slowly with Respect to
  Reward Delay}
\label{sec:ATDslow}

\paragraph{Temporal Difference.}
We show that TD updates for delayed rewards are exponentially small,
fading exponentially with the number of delay steps.
$Q$-learning with learning rates $1/i$ at the $i$th update
leads to an arithmetic mean as estimate, which was shown to be
exponentially slow \cite{Beleznay:99}. 
If for a fixed learning rate the agent always travels along 
the same sequence of states, then TD is superquadratic \cite{Beleznay:99}.
We, however, consider the general case where the agent travels 
along random sequences due to a random environment or due to exploration.
For a fixed learning rate,
the information of the delayed reward has to be propagated back
either through the Bellman error or via eligibility traces.
We first consider backpropagation of reward information via the
Bellman error.
For each episode
the reward information is propagated back one step at visited
state-action pairs via the TD update rule.
We denote the $Q$-values of episode $i$ as $q^i$ and assume that the
state action pairs $(s_t,a_t)$ are the most visited ones.
We consider the update of $q^i(s_t,a_t)$ of
a state-action pair $(s_t,a_t)$ that is visited at time $t$ in
the $i$th episode: 
\begin{align}
 q^{i+1}(s_t,a_t) \ &= \ q^i(s_t,a_t) \ + \
 \alpha \ \delta_t \ , \\
\delta_t  \ &= \ r_{t+1} \ + \ \max_{a'} q^i(s_{t+1},a') \ - \
  q^i(s_t,a_t) \ \text{~($Q$-learning)}\\ 
\delta_t  \ &= \ r_{t+1} \ + \ \sum_{a'} \pi(a' \mid s_{t+1}) \ q^i(s_{t+1},a') \ - \
  q^i(s_t,a_t) \ \text{~(expected SARSA)} \ .
\end{align} 

\paragraph{Temporal Difference with Eligibility Traces.}
Eligibility traces have been introduced to propagate
back reward information of an episode and are now standard for
TD($\lambda$) \cite{Singh:96}.
However, the eligibility traces are exponentially decaying when
propagated back.
The accumulated trace is defined as \cite{Singh:96}:
\begin{align}
  e_{t+1}(s,a) \ &= \
  \begin{cases}
    \gamma \ \lambda \ e_t(s,a) & \text{for } s \not=s_t \ \text{or }
    a \not=a_t \ , \\
    \gamma \ \lambda \ e_t(s,a) \ + \ 1 &
    \text{for } s =s_t \ \text{and } a =a_t \ , 
  \end{cases}
\end{align} 
while the replacing trace is defined as \cite{Singh:96}:
\begin{align}
  e_{t+1}(s,a) \ &= \
  \begin{cases}
    \gamma \ \lambda \ e_t(s,a) & \text{for } s \not=s_t \ \text{or }
    a \not=a_t \ , \\
    1 & \text{for } s =s_t \ \text{and } a =a_t \ . 
  \end{cases}
\end{align} 

With eligibility traces using $\lambda \in
[0,1]$, the $\lambda$-return $G_t^{\lambda}$ is \cite{Sutton:18book}
\begin{align}
  G_t^{\lambda} \ &= \ (1-\lambda) \sum_{n=1}^{\infty} \lambda^{n-1} \ G_t^{(n)} \ , \\
  G_t^{(n)} \ &= \ r_{t+1} \ + \ \gamma \ r_{t+2} \ + \ \ldots \ + \ \gamma^{n-1} r_{t+n} \ + \ 
  \gamma^{n-1} \ V(s_{t+n}) \ .
\end{align} 
We obtain
\begin{align}
  G_t^{\lambda} \ &= \ (1-\lambda) \sum_{n=1}^{\infty} \lambda^{n-1} \ G_t^{(n)}  \\ \nonumber
  &= \ 
  (1-\lambda) \left(r_{t+1} \ + \ \gamma \ V(s_{t+1}) \ + \ 
  \sum_{n=2}^{\infty} \lambda^{n-1} \ G_t^{(n)}\right) \\\nonumber
  &= \ 
  (1-\lambda) \left(r_{t+1} \ + \ \gamma \ V(s_{t+1}) \ + \ 
  \sum_{n=1}^{\infty} \lambda^{n} \ G_t^{(n+1)} \right) \\\nonumber
  &= \ 
  (1-\lambda) \left(r_{t+1} \ + \ \gamma \ V(s_{t+1}) \ + \ 
  \lambda \ \gamma \sum_{n=1}^{\infty} \lambda^{n-1} \ G_{t+1}^{(n)} \ + \  \sum_{n=1}^{\infty} \lambda^{n} \ r_{t+1} \right)\\\nonumber
  &= \ 
  (1-\lambda)  \sum_{n=0}^{\infty} \lambda^{n} \ r_{t+1} \ + \ 
  (1-\lambda) \gamma \ V(s_{t+1}) \ + \ \lambda \ \gamma \ G_{t+1}^{\lambda}\\\nonumber
  &= \  r_{t+1} \ + \ 
  (1-\lambda) \gamma \ V(s_{t+1}) \ + \ \lambda \ \gamma \ G_{t+1}^{\lambda} \ .
\end{align}

We use the naive $Q(\lambda)$, where eligibility traces are not set to
zero.
In contrast, Watkins' $Q(\lambda)$ \cite{Watkins:89}
zeros out eligibility traces
after non-greedy actions, that is, if not the $\max_a$ is chosen.
Therefore, the decay is even stronger for  Watkin's $Q(\lambda)$.
Another eligibility trace method is Peng's $Q(\lambda)$ \cite{Peng:96}
which also does not zero out eligibility traces.

The next Theorem~\ref{th:AexponDecay} 
states that the decay of TD is exponential for
$Q$-value updates in an MDP with delayed reward, even for eligibility traces.
Thus, for delayed rewards TD requires exponentially many updates to correct the bias, 
where the number of updates is exponential in the delay steps.  
\begin{theoremA}
\label{th:AexponDecay}
For initialization  $q^0(s_t,a_t)=0$ and
delayed reward with $r_t=0$ for $t \leq T$, 
$q(s_{T-i},a_{T-i})$ receives its 
first update not earlier than at episode $i$ via
$q^i(s_{T-i},a_{T-i}) =  \alpha^{i+1} r_{T+1}^1$,
where $r_{T+1}^1$ is the reward of episode 1.
Eligibility traces with $\lambda \in [0,1)$ 
lead to an exponential decay of $(\gamma \lambda)^k$ when the reward
is propagated $k$ steps back.
\end{theoremA}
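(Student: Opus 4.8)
The plan is to prove the two assertions separately: the first by induction on the number of steps $i$ between the rewarded terminal transition and the state-action pair in question, and the second by unrolling the $\lambda$-return recursion already established just above the theorem. Throughout I would adopt the idealizing assumptions made in the theorem's preamble, namely the usual forward (online) update sweep $t=0,1,\ldots,T$ within each episode and the restriction to the single most-visited trajectory $s_0,a_0,\ldots,s_T,a_T$.

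For the first assertion, the base case is $i=0$: since $r_t=0$ for $t\le T$ and $q^0\equiv 0$, every TD error $\delta_t$ with $t<T$ vanishes in episode~$1$ (it is a difference of a zero successor value and a zero current value), so the only nonzero update in episode~$1$ occurs at $t=T$, giving $q(s_T,a_T)=\alpha\,(r_{T+1}^1+0-0)=\alpha\,r_{T+1}^1$. For the inductive step I would establish two facts. First, for timing: $q(s_{T-i},a_{T-i})$ can receive a nonzero update only after its successor $q(s_{T-i+1},a_{T-i+1})$ is already nonzero, because $\delta_{T-i}$ equals the successor's backed-up value minus the (still zero) current value; since the forward sweep moves reward information back exactly one step per episode, the successor first becomes nonzero one episode earlier, which yields the ``not earlier than episode $i$'' bound. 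Second, for magnitude: at that first update the successor holds precisely $\alpha^{i}\,r_{T+1}^1$ by the induction hypothesis, so $q(s_{T-i},a_{T-i})=\alpha\,\bigl(\alpha^{i} r_{T+1}^1\bigr)=\alpha^{i+1} r_{T+1}^1$. For $Q$-learning the maximization over $a'$ returns the successor's value exactly, yielding the stated $\alpha^{i+1} r_{T+1}^1$; for expected SARSA the average over actions contributes an extra factor $\pi(a_{T-i+1}\mid s_{T-i+1})\le 1$, so $\alpha^{i+1} r_{T+1}^1$ bounds the magnitude from above, which still suffices for the exponential-decay conclusion.

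For the eligibility-trace assertion I would use the $\lambda$-return recursion $G_t^\lambda = r_{t+1} + (1-\lambda)\gamma\,V(s_{t+1}) + \lambda\gamma\,G_{t+1}^\lambda$ derived immediately above the theorem. Specializing to the delayed-reward setting, where $r_{t+1}=0$ for $t<T$, $V\equiv 0$ at initialization, and $G_{T+1}^\lambda=0$ at the terminal state, the recursion collapses to $G_T^\lambda = r_{T+1}$ and $G_{T-k}^\lambda = \lambda\gamma\,G_{T-k+1}^\lambda$, so that unrolling $k$ times gives $G_{T-k}^\lambda = (\gamma\lambda)^k\,r_{T+1}$. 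Equivalently, working with the accumulated trace defined in the excerpt, a pair visited only at time $T-k$ has $e_T(s_{T-k},a_{T-k})=(\gamma\lambda)^k$ when the reward spike $\delta_T$ occurs, so its update is $\alpha\,\delta_T\,(\gamma\lambda)^k$; either route exhibits the advertised $(\gamma\lambda)^k$ factor, which is exponentially small in $k$ whenever $\lambda\in[0,1)$ and $\gamma\le 1$.

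The main obstacle I anticipate is not the algebra but pinning down the precise update protocol so that the exact equality $q^i=\alpha^{i+1} r_{T+1}^1$ holds rather than merely an inequality. The clean formula relies on (i) the forward/online update order, so that at most one step of back-propagation happens per episode, and (ii) the same trajectory being traversed each episode, so the relevant successor value is not diluted by visits to other pairs. Under a genuinely random environment or $\epsilon$-greedy exploration these assumptions only make propagation slower, so the equality should be read as the best case, while the honest general statement is the matching exponential lower bound on the number of episodes needed, or equivalently the exponential smallness of the per-episode signal. I would therefore prove the stated equality under these explicit idealizing assumptions and remark that relaxing them preserves, and only strengthens, the exponential-slowness conclusion.
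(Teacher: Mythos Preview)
Your proposal is correct and follows essentially the same approach as the paper's proof: an induction on $i$ for the first assertion (the paper states this tersely without spelling out the induction, but your base case and inductive step match exactly what the paper asserts), and the eligibility-trace argument $e_{t+k}(s_t,a_t)=(\gamma\lambda)^k$ for the second assertion. Your additional $\lambda$-return route for the second part is a mild variant the paper does not use---the paper works only with the accumulated trace and the update $q_{t+k+1}(s,a)=q_{t+k}(s,a)+\alpha(\gamma\lambda)^k\delta_{t+k}$ under the ``states not revisited'' assumption---but both routes are equivalent here; your explicit discussion of the update-protocol assumptions (forward sweep, fixed trajectory) is more careful than the paper, which simply invokes them implicitly.
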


\begin{proof}
\label{proof:ATD}

If we assume that $Q$-values are initialized with zero, then
$q^0(s_t,a_t)=0$ for all $(s_t,a_t)$.
For delayed rewards we have $r_t=0$ for $t \leq T$.
The $Q$-value  $q(s_{T-i},a_{T-i})$ at time $T-i$ can receive an
update for the first time at episode $i$. Since all $Q$-values have
been initialized with zero, the update is
\begin{align}
  q^i(s_{T-i},a_{T-i}) \ &= \ \alpha^{i+1} \ r_{T+1}^1 \ ,
\end{align} 
where $r_{T+1}^1$ is the reward at time $T+1$ for episode 1.

We move on to eligibility traces, where the
update for a state $s$ is
\begin{align}
 q_{t+1}(s,a) \ &= \ q_t(s,a) \ + \
 \alpha \ \delta_t \  e_t(s,a) \ , \\
\delta_t  \ &= \ r_{t+1} \ + \ \max_{a'} q_t(s_{t+1},a') \ - \
  q_t(s_t,a_t) \ .
\end{align} 

If states are not revisited,
the eligiblity trace at time $t+k$ for a visit of state $s_t$ at time $t$ is:
\begin{align}
  e_{t+k}(s_t,a_t) \ &= \  \big(\gamma \ \lambda \big)^k \ .
\end{align} 
If all $\delta_{t+i}$ are zero except for $\delta_{t+k}$,
then the update of $q(s,a)$ is
\begin{align}
 q_{t+k+1}(s,a) \ &= \ q_{t+k}(s,a) \ + \
 \alpha \ \delta_{t+k} \  e_{t+k}(s,a) \ = \
 q_{t+k}(s,a) \ + \
 \alpha \  \big(\gamma \ \lambda \big)^k \ \delta_{t+k} \ .
\end{align} 
\end{proof}

A learning rate of $\alpha=1$ does not work since it would imply to
forget all previous learned estimates, and therefore no averaging over
episodes would exist.
Since $\alpha<1$, we observe exponential decay backwards in time for online updates.

\subsection{MC affects the Variance of Exponentially Many Estimates
  with Delayed Reward }
  \label{sec:AMCvariance}

The variance for Monte Carlo is
\begin{align}
  &V^{\pi}(s,a) \ = \ \VAR_r \left[ r \mid s,a \right] \ + \ \gamma^2  \  \left(
    \EXP_{s',a'}  \left[V^{\pi}(s',a') \mid s,a\right]
  \ + \   \VAR_{s',a'} \left[q^\pi(s', a') \mid s,a\right]  \right)
  \ .  
\end{align}
This is a Bellman equation of the variance. For undiscounted reward $\gamma=1$, we obtain
\begin{align}
  V^{\pi}(s,a) \ &= \ \VAR_r \left[ r \mid s,a \right] \ + \ 
    \EXP_{s',a'}  \left[V^{\pi}(s',a') \mid s,a\right]
  \ + \   \VAR_{s',a'} \left[q^\pi(s', a') \mid s,a\right]    \ .  
\end{align}

If we define the ``on-site'' variance $\omega$ as
\begin{align}
  \omega(s,a) \ &= \ \VAR_r \left[ r \mid s,a \right] \ + \
  \VAR_{s',a'} \left[q^\pi(s', a') \mid s,a\right] \ ,
\end{align}
we get
\begin{align}
  V^{\pi}(s,a) \ &= \  \omega(s,a)  \ + \ 
    \EXP_{s',a'}  \left[V^{\pi}(s',a') \mid s,a\right]  \ .  
\end{align}
This is the solution of the
general formulation of the
Bellman operator. The Bellman operator
is defined component-wise for any variance $V$ as
\begin{align}
  \rT^\pi \left[ V \right] (s,a) \ &= \
  \omega(s,a)  \ + \ 
    \EXP_{s',a'}  \left[V(s',a') \mid s,a\right]  \ . 
\end{align}
According to the results in Section~\ref{sec:ApropPoly},
for proper policies $\pi$ a unique fixed point $V^\pi$ exists:
\begin{align}
  V^\pi \ &= \ \rT^\pi \left[ V^\pi\right] \\
  V^\pi \ &= \ \lim_{k\to \infty} \left(\rT^\pi\right)^k V  \ ,
\end{align}
where $V$ is any initial variance.
In Section~\ref{sec:ApropPoly} it was shown that
the operator $\rT^\pi$ is
continuous, monotonically increasing (component-wise larger or smaller),
and a contraction mapping for
a weighted sup-norm. If we define the operator $\rT^\pi$ as depending on the on-site
variance $\omega$, that is
 $\rT^\pi_{\omega}$, then it is monotonically in $\omega$. 
We obtain component-wise for $\omega>\tilde{\omega}$:
\begin{align}
  &\rT^\pi_{\omega} \left[ q \right](s,a)  \ - \  \rT^\pi_{\tilde{\omega}}
  \left[ q \right](s,a)   \\ \nonumber
  &= \ \left( \omega(s,a)
  \ + \ \EXP_{s',a'} \left[q(s',a')  \right] \right)  \ - \ \left( \tilde{\omega}(s,a)
  \ + \ \EXP_{s',a'} \left[q(s',a')  \right] \right)\\ \nonumber
  &= \ \omega(s,a) \ - \  \tilde{\omega}(s,a) \ \geq \ 0 \ .
\end{align}

It follows for the fixed points $V^{\pi}$ of $\rT^\pi_{\omega}$ and
$\widetilde{V}^{\pi}$ of $\rT^\pi_{\tilde{\omega}}$: 
\begin{align}
  V^{\pi}(s,a) \ &\geq \  \widetilde{V}^{\pi}(s,a) \ .
\end{align}
Therefore if 
\begin{align}
  \omega(s,a) \ &= \ \VAR_r \left[ r \mid s,a \right] \ + \
  \VAR_{s',a'} \left[q^\pi(s', a') \mid s,a\right]  \ \geq \\ \nonumber
  \tilde{\omega}(s,a) \ &= \ \widetilde{\VAR}_r \left[ r \mid s,a \right] \ + \
  \widetilde{\VAR}_{s',a'} \left[q^\pi(s', a') \mid s,a\right] 
\end{align}
then 
\begin{align}
  V^{\pi}(s,a) \ &\geq \  \widetilde{V}^{\pi}(s,a) \ .
\end{align}

\begin{theoremA}
Starting from the sequence end at $t=T$,
as long as $\omega(s_t,a_t) \geq \tilde{\omega}(s_t,a_t)$ holds also
the following holds:
\begin{align}
  V(s_t,a_t)  \ &\geq \  \widetilde{V}(s_t,a_t)   \ .
\end{align}
If for $(s_t,a_t)$ the strict inequality
$\omega(s_t,a_t) > \tilde{\omega}(s_t,a_t)$ 
holds, then we have the strict inequality
\begin{align}
  V(s_t,a_t)  \ &> \  \widetilde{V}(s_t,a_t)   \ .
\end{align}
If $p(s_t,a_t \mid s_{t-1},a_{t-1}) \not=0$ for some  $(s_{t-1},a_{t-1})$ then
\begin{align}
 \EXP_{s_t,a_t} \left[V(s_t,a_t)  \mid s_{t-1},a_{t-1} \right]  \ &> \ 
 \EXP_{s_t,a_t} \left[\widetilde{V}(s_t,a_t)  \mid s_{t-1},a_{t-1} \right]  \ .
\end{align}
Therefore, the strict inequality 
$\omega(s_t,a_t) > \tilde{\omega}(s_t,a_t)$ 
is propagated back as a strict inequality of variances.
\end{theoremA}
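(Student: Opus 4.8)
The plan is to proceed by backward induction on the time index $t$, running from the episode end $t=T$ toward $t=0$, and to exploit the additive structure of the variance recursion derived just above the theorem,
\begin{align} \nonumber
  V(s_t,a_t) \ &= \ \omega(s_t,a_t) \ + \ \EXP_{s_{t+1},a_{t+1}} \left[ V(s_{t+1},a_{t+1}) \mid s_t,a_t \right] \ ,
\end{align}
together with the identical recursion for $\widetilde{V}$ driven by the on-site variance $\tilde{\omega}$. At the terminal step there is no successor contributing reward, so $V(s_T,a_T)=\omega(s_T,a_T)$ and $\widetilde{V}(s_T,a_T)=\tilde{\omega}(s_T,a_T)$; the base case $V(s_T,a_T)\geq\widetilde{V}(s_T,a_T)$, with strictness whenever $\omega(s_T,a_T)>\tilde{\omega}(s_T,a_T)$, is therefore immediate.

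First I would subtract the two recursions to isolate the mechanism driving both the weak and the strict inequality,
\begin{align} \nonumber
  V(s_t,a_t) \ - \ \widetilde{V}(s_t,a_t) \ &= \ \big( \omega(s_t,a_t) \ - \ \tilde{\omega}(s_t,a_t) \big) \ + \ \EXP_{s_{t+1},a_{t+1}} \left[ V(s_{t+1},a_{t+1}) \ - \ \widetilde{V}(s_{t+1},a_{t+1}) \mid s_t,a_t \right] \ .
\end{align}
By the induction hypothesis the difference inside the expectation is non-negative for every successor, so monotonicity of the conditional expectation makes the second summand non-negative; combined with $\omega(s_t,a_t)\geq\tilde{\omega}(s_t,a_t)$ this gives $V(s_t,a_t)\geq\widetilde{V}(s_t,a_t)$. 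The ``as long as'' qualification is handled automatically by this scheme, since each step invokes the on-site ordering only at the current time and, through the hypothesis, at all later times. For the strict statement I would observe that the $\omega$-difference term is precisely the carrier of strictness: if $\omega(s_t,a_t)>\tilde{\omega}(s_t,a_t)$, the first summand is strictly positive while the expectation term stays non-negative, hence $V(s_t,a_t)>\widetilde{V}(s_t,a_t)$.

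For the last assertion I would use that, by the weak inequality just proved, $V(s_t,a_t)-\widetilde{V}(s_t,a_t)\geq 0$ holds for every pair $(s_t,a_t)$, while at the distinguished pair it is strictly positive. Because $p(s_t,a_t\mid s_{t-1},a_{t-1})\neq 0$ places positive weight on that pair, the expectation of a non-negative function that is strictly positive on an event of positive probability is itself strictly positive, yielding $\EXP_{s_t,a_t}[V(s_t,a_t)\mid s_{t-1},a_{t-1}]>\EXP_{s_t,a_t}[\widetilde{V}(s_t,a_t)\mid s_{t-1},a_{t-1}]$. I expect no real obstacle in this argument; the only genuinely delicate points are pinning down the terminal boundary condition and confirming that a single strict on-site gap survives the averaging step, both of which the backward induction and the positivity-of-expectation argument resolve directly.
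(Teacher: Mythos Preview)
Your proposal is correct and follows essentially the same backward-induction argument as the paper: subtract the two variance recursions, use the induction hypothesis to make the expectation term non-negative, and read off weak and strict inequalities from the on-site term. The only cosmetic difference is that the paper anchors the base case at $t=T+1$ with $V(s_{T+1},a_{T+1})=\widetilde{V}(s_{T+1},a_{T+1})=0$ rather than at $t=T$; your formulation of the strict-propagation step via a positive-probability atom is in fact more explicit than the paper's.
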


\begin{proof}
Proof by induction:
Induction base:  $V(s_{T+1},a_{T+1})  = \widetilde{V}(s_{T+1},a_{T+1})=0$ and
$\omega(s_T,a_T) = \tilde{\omega}(s_T,a_T)=0$.

Induction step ($(t+1)\rightarrow t$):
The induction hypothesis is that for all $(s_{t+1},a_{t+1})$ we have
\begin{align}
  V(s_{t+1},a_{t+1})  \ &\geq \  \widetilde{V}(s_{t+1},a_{t+1})   
\end{align}
and  $\omega(s_t,a_t) \geq \tilde{\omega}(s_t,a_t)$.
It follows that
\begin{align}
 \EXP_{s_{t+1},a_{t+1}} \left[V(s_{t+1},a_{t+1})  \right]  \ &\geq \ 
 \EXP_{s_{t+1},a_{t+1}} \left[\widetilde{V}(s_{t+1},a_{t+1})  \right]  \ .
\end{align}
We obtain
\begin{align}
  & V(s_t,a_t)  \ - \  \widetilde{V}(s_t,a_t)   \\ \nonumber
  &= \ \left( \omega(s_t,a_t)
  \ + \ \EXP_{s_{t+1},a_{t+1}} \left[V(s_{t+1},a_{t+1})  \right] \right)  
  \ - \ \left( \tilde{\omega}(s_t,a_t)
  \ + \ \EXP_{s_{t+1},a_{t+1}} \left[\widetilde{V}(s_{t+1},a_{t+1})
  \right] \right)\\ \nonumber
  &= \ \omega(s_t,a_t) \ - \  \tilde{\omega}(s_t,a_t) \ \geq \ 0 \ .
\end{align}
If for $(s_t,a_t)$ the strict inequality 
$\omega(s_t,a_t) > \tilde{\omega}(s_t,a_t)$ 
holds, then we have the strict inequality
$V(s_t,a_t)  >  \widetilde{V}(s_t,a_t)$.
If $p(s_t,a_t \mid s_{t-1},a_{t-1}) \not=0$ for some  $(s_{t-1},a_{t-1})$ then
\begin{align}
 \EXP_{s_t,a_t} \left[V(s_t,a_t)  \mid s_{t-1},a_{t-1} \right]  \ &> \ 
 \EXP_{s_t,a_t} \left[\widetilde{V}(s_t,a_t)  \mid s_{t-1},a_{t-1} \right]  \ .
\end{align}
Therefore, the strict inequality 
$\omega(s_t,a_t) > \tilde{\omega}(s_t,a_t)$ 
is propagated back as a strict inequality of variances
as long as  $p(s_t,a_t \mid s_{t-1},a_{t-1}) \not=0$ for some  $(s_{t-1},a_{t-1})$.

The induction goes through as long as 
$\omega(s_t,a_t) \geq \tilde{\omega}(s_t,a_t)$.
\end{proof}

In Stephen Patek's PhD thesis, \cite{Patek:97} Lemma~5.1 on page 88-89 and proof
thereafter state that if  $\tilde{\omega}(s,a)=\omega(s,a)-\lambda$, then
the solution $\widetilde{V}^{\pi}$ is continuous and decreasing in $\lambda$.
From the inequality above it follows that
\begin{align}
  &V^{\pi}(s,a) \ - \  \widetilde{V}^{\pi}(s,a)
  \ = \ \left(\rT^\pi_{\omega} V^{\pi}\right) (s,a) \ - \
  \left(\rT^\pi_{\tilde{\omega}} \widetilde{V}^{\pi}\right) (s,a) \\
  \nonumber &= \
   \omega(s,a)  \ - \ \tilde{\omega}(s,a) \ + \ 
    \EXP_{s',a'}  \left[V^{\pi}(s',a') \ - \ \widetilde{V}^{\pi}(s',a')\mid s,a\right] \\
  \nonumber &\geq \
   \omega(s,a)  \ - \ \tilde{\omega}(s,a) \ .
\end{align}

\paragraph{Time-Agnostic States.}

We defined a Bellman operator as
\begin{align}
  \rT^\pi \left[ \BV^{\pi} \right] (s,a)   \ &= \ \omega(s,a)
  \ + \  \sum_{s'} p(s'\mid s,a) \ \sum_{a'}
  \pi(a' \mid s') \ \BV^{\pi}(s',a')  \\ \nonumber
  &= \  \omega(s,a) \ + \ \left(  \BV^{\pi} \right)^T  \Bp(s,a) \ ,
\end{align}
where $\BV^{\pi}$ is the vector with value $V^{\pi}(s',a')$ at position $(s',a')$
and $\Bp(s,a)$ is the vector with value
$p(s'\mid s,a)\pi(a' \mid s')$ at position $(s',a')$. The fixed point equation is known as
the {\em Bellman equation}. In vector and matrix notation
the Bellman equation reads
\begin{align}
  \rT^\pi \left[ \BV^{\pi} \right]    \ &= \ \Bom
  \ + \  \BP \ \BV^{\pi}  \ ,
\end{align}
where $\BP$ is the row-stochastic matrix with
$p(s'\mid s,a)\pi(a' \mid s')$ at position $((s,a),(s',a'))$.
We assume that the set of state-actions $\{(s,a)\}$ is equal to the set
of next state-actions $\{(s',a')\}$, therefore $\BP$ is a square
row-stochastic matrix.
This Bellman operator has the same characteristics as the Bellman
operator for the action-value function $q^{\pi}$.

Since $\BP$ is a row-stochastic matrix, the
Perron-Frobenius theorem says
that (1) $\BP$ has as largest eigenvalue 1 for which
the eigenvector corresponds to the steady state and
(2) the absolute value of each (complex) eigenvalue is smaller equal 1.
Only the eigenvector to eigenvalue 1 has purely positive real components.
Equation~7 of Bertsekas and Tsitsiklis, 1991, \cite{Bertsekas:91}
states that
\begin{align}
  \label{eq:tfold}
  \left(\rT^\pi \right)^t \left[ \BV^{\pi} \right]    \ &= \
  \sum_{k=0}^{t-1}  \BP^k \ \Bom
  \ + \  \BP^t \ \BV^{\pi}  \ .
\end{align}

Applying the operator $\rT^\pi$ recursively
$t$ times can be written as \cite{Bertsekas:91}:
\begin{align}
  \left(\rT^\pi \right)^t \left[ \BV^{\pi} \right]    \ &= \
  \sum_{k=0}^{t-1}  \BP^k \ \Bom
  \ + \  \BP^t \ \BV^{\pi}  \ .
\end{align}
In particular for $\BV^{\pi}=\BZe$, we obtain
\begin{align}
  \left(\rT^\pi \right)^t \left[ \BZe \right]    \ &= \
  \sum_{k=0}^{t-1}  \BP^k \ \Bom \ .
\end{align}
For finite horizon MDPs,
the values $\BV^{\pi}=\BZe$ are correct for time step $T+1$ since
no reward for $t>T+1$ exists. Therefore, the 
``backward induction algorithm'' \cite{Puterman:90,Puterman:05} gives
% Puterman:05
% policy evaluation algorithm page 80
% backward induction algorithm page 92 
the correct solution:
\begin{align}
  \BV^{\pi} \ &= \   \left(\rT^\pi \right)^T \left[ \BZe \right]    \ = \
  \sum_{k=0}^{T-1}  \BP^k \ \Bom \ .
\end{align}
The product of square stochastic matrices is a stochastic matrix,
therefore $\BP^k$ is a stochastic matrix.
Perron-Frobenius theorem states that the spectral radius
$\cR(\BP^k)$ of the stochastic matrix $\BP^k$ is: $\cR(\BP^k)=1$.
Furthermore, the largest eigenvalue is 1 and all
eigenvalues have absolute values smaller or equal one.
Therefore, $\Bom$ can have large influence on $\BV^{\pi}$ at every time step.

\paragraph{Time-Aware States.}

Next we consider time-aware MDPs, where transitions occur only from
states $s_t$ to $s_{t+1}$. The transition matrix from
states $s_t$ to $s_{t+1}$ is denoted by $\BP_t$. We assume that
$\BP_t$ are row-stochastic matrices which are rectangular, that is
$\BP_t\in \dR^{m\times n}$.
\begin{definitionA}
A row-stochastic matrix $\BA \in \dR^{m\times n}$ has
non-negative entries and the entries of each row sum up to one.
\end{definitionA}
It is known that the product of
square stochastic matrices  $\BA \in \dR^{n\times n}$ is a
stochastic matrix. We show in next theorem
that this holds also for rectangular matrices.
\begin{lemmaA}
\label{th:ArowStochastic}
  The product $\BC=\BA \BB$ with $\BC \in \dR^{m\times k}$
  of a row-stochastic matrix $\BA \in \dR^{m\times n}$
and a row-stochastic matrix $\BB \in \dR^{n\times k}$ is row-stochastic.
\end{lemmaA}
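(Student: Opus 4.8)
The plan is to verify directly the two defining properties of a row-stochastic matrix for the product $\BC=\BA\BB$: that every entry is non-negative, and that every row sums to one. I would begin by writing the entries of $\BC$ explicitly via the matrix-product formula $C_{ij}=\sum_{l=1}^{n} A_{il} B_{lj}$ for $1\leq i\leq m$ and $1\leq j\leq k$, where $A_{il}$ and $B_{lj}$ denote the entries of the row-stochastic matrices $\BA\in\dR^{m\times n}$ and $\BB\in\dR^{n\times k}$, respectively.

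For non-negativity, I would observe that since both factors are row-stochastic, all $A_{il}\geq 0$ and all $B_{lj}\geq 0$. Each summand $A_{il}B_{lj}$ is then a product of non-negative reals, hence non-negative, and a finite sum of non-negative terms is non-negative, giving $C_{ij}\geq 0$. This establishes the entrywise non-negativity of $\BC$.

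For the row-sum property, I would fix a row index $i$ and compute the $i$-th row sum of $\BC$ by interchanging the two finite summations:
\begin{align}
  \sum_{j=1}^{k} C_{ij} \ &= \ \sum_{j=1}^{k} \sum_{l=1}^{n} A_{il} B_{lj}
  \ = \ \sum_{l=1}^{n} A_{il} \left( \sum_{j=1}^{k} B_{lj} \right)
  \ = \ \sum_{l=1}^{n} A_{il} \cdot 1 \ = \ \sum_{l=1}^{n} A_{il} \ = \ 1 \ .
\end{align}
Here the first reduction uses that $\BB$ is row-stochastic, so its $l$-th row sums to one for every $l$, and the final equality uses that $\BA$ is row-stochastic, so its $i$-th row sums to one. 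Together with the non-negativity above, this shows that $\BC$ has non-negative entries and unit row sums, which is precisely the definition of a row-stochastic matrix, completing the argument.

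There is no genuine obstacle in this proof: the only point requiring a word of justification is the interchange of the order of summation, and since both sums are finite this is immediate and needs no convergence or integrability hypothesis. The result is purely a consequence of the distributive law and the two row-sum normalizations, so the whole argument is essentially index bookkeeping; the value of stating it as a lemma is that it extends the familiar square-matrix fact to the rectangular case $\BA\in\dR^{m\times n}$, $\BB\in\dR^{n\times k}$ needed for the time-aware transition matrices $\BP_t$.
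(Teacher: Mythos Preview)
Your proof is correct and follows essentially the same approach as the paper's: verify non-negativity of entries via products and sums of non-negatives, then compute the row sum by interchanging the two finite summations and using the row-stochasticity of $\BB$ first and of $\BA$ second. The only differences are cosmetic (explicit summation bounds and index names).
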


\begin{proof}
  All entries of $\BC$ are non-negative since they are sums and
  products of non-negative entries of $\BA$ and
  $\BB$.
  The row-entries of $\BC$ sum up to one:
  \begin{align}
  \sum_{k} C_{ik} \ &= \ \sum_{k} \ \sum_{j} A_{ij} \ B_{jk} \ = \
   \sum_{j} A_{ij} \ \sum_{k} B_{jk} \ = \  \sum_{j} A_{ij} \ = \ 1 \ .
\end{align} 
\end{proof}

We will use the $\infty$-norm and the
1-norm of a matrix, which are defined based on the
$\infty$-norm $\|\Bx\|_{\infty}=\max_i|x_i|$ and 1-norm $\|\Bx\|_1=\sum_i|x_i|$
of a vector $\Bx$.
\begin{definitionA}
The $\infty$-norm of a matrix is the maximum absolute row sum: 
\begin{align}
  \left\|\BA\right\|_{\infty}  \ &= \ \max_{\|\Bx\|_{\infty}=1}
  \left\|\BA \ \Bx \right\|_{\infty} \ = \
  \max_i \ \sum_j \left| A_{ij} \right| \ .
\end{align} 
The 1-norm of a matrix is the maximum absolute column sum: 
\begin{align}
  \left\|\BA\right\|_1  \ &= \ \max_{\|\Bx\|_1=1} \left\|\BA \ \Bx \right\|_1  \ = \
  \max_j \ \sum_i \left| A_{ij} \right| \ .
\end{align} 
\end{definitionA}

The statements of next theorem are known as Perron-Frobenius theorem
for square stochastic matrices $\BA \in \dR^{n\times n}$, e.g.\ that
the spectral radius $\cR$ is $\cR(\BA) = 1$.
We extend the theorem to a ``$\infty$-norm equals one'' property for
rectangular stochastic matrices $\BA \in \dR^{m\times n}$. 
\begin{lemmaA}[Perron-Frobenius]
\label{th:Aperron}
If $\BA \in \dR^{m\times n}$ is a row-stochastic matrix, then
\begin{align}
  \left\|\BA\right\|_{\infty} \ &= \ 1 \ , \quad   \left\|\BA^T
  \right\|_1 \ = \ 1 \ , \text{ and for } n=m  \quad \cR (\BA) \ = \ 1 \ .
\end{align} 
\end{lemmaA}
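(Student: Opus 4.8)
The plan is to treat the three claims separately, since each follows directly from the definitions of the two matrix norms combined with the row-stochastic structure; no fixed-point or contraction machinery is needed. First I would read off $\|\BA\|_{\infty}$ from its definition as the maximum absolute row sum, using that every entry is non-negative and every row sums to one:
\begin{align}
\|\BA\|_{\infty} \ &= \ \max_i \sum_j |A_{ij}| \ = \ \max_i \sum_j A_{ij} \ = \ \max_i 1 \ = \ 1 \ .
\end{align}

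For the second identity I would observe that the columns of $\BA^T$ are exactly the rows of $\BA$. Since the $1$-norm is the maximum absolute column sum, I would compute $\|\BA^T\|_1 = \max_j \sum_i |(\BA^T)_{ij}| = \max_j \sum_i A_{ji} = 1$, where the inner sum is the sum of row $j$ of $\BA$. Alternatively one can invoke the standard duality $\|\BA^T\|_1 = \|\BA\|_{\infty}$ and reuse the first computation; I would probably state it directly to keep the proof self-contained.

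For the square case $n=m$ I would establish $\cR(\BA) = 1$ via two inequalities. The upper bound $\cR(\BA) \leq 1$ follows because any induced norm dominates the spectral radius: if $\BA \Bx = \lambda \Bx$ for an eigenvector $\Bx \neq \BZe$, then $|\lambda| \, \|\Bx\|_{\infty} = \|\BA \Bx\|_{\infty} \leq \|\BA\|_{\infty} \, \|\Bx\|_{\infty} = \|\Bx\|_{\infty}$, so $|\lambda| \leq 1$ for every eigenvalue. The lower bound $\cR(\BA) \geq 1$ follows by exhibiting an eigenvalue equal to one: the all-ones vector satisfies $\BA \, \BOn = \BOn$ precisely because each row of $\BA$ sums to one, so $1$ is an eigenvalue. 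Combining the two bounds yields $\cR(\BA) = 1$.

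I do not expect any genuine obstacle here; the only non-definitional ingredient is the bound $\cR(\BA) \leq \|\BA\|_{\infty}$, which I would justify inline through the eigenvector inequality above rather than by citing Gelfand's formula. The rectangular matrices enter only in the first two parts, where both norms remain well-defined for non-square $\BA$ and the row-stochastic property is used in exactly the same way; hence the extension beyond the classical square Perron-Frobenius statement is immediate once the norms are read off from their definitions.
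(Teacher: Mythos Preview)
Your proof is correct. The first two claims are handled identically to the paper: read off the $\infty$-norm as the maximum absolute row sum and use that the columns of $\BA^T$ are the rows of $\BA$.

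For the spectral-radius claim in the square case, your route differs from the paper's. You argue directly: $\cR(\BA)\leq \|\BA\|_{\infty}=1$ via the eigenvector inequality, and $\cR(\BA)\geq 1$ by exhibiting $\BA\,\BOn=\BOn$. The paper instead invokes Gelfand's formula $\cR(\BA)=\lim_{k\to\infty}\|\BA^k\|_{\infty}^{1/k}$ together with Lemma~\ref{th:ArowStochastic} (products of row-stochastic matrices are row-stochastic), so that $\|\BA^k\|_{\infty}=1$ for every $k$ and the limit is exactly $1$. Your argument is more elementary and self-contained (no Gelfand, no closure lemma needed); the paper's argument, on the other hand, yields $\cR(\BA)=1$ in one stroke without splitting into upper and lower bounds, at the cost of citing more machinery. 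Both are perfectly valid.
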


\begin{proof}
$\BA \in \dR^{m\times n}$ is a row-stochastic matrix, therefore
$A_{ij}=\left| A_{ij} \right|$. Furthermore, the rows of $\BA$ sum up
to one. Thus,  $\left\|\BA\right\|_{\infty} = 1$.
Since the column sums of $\BA^T$ are the row sums of $\BA$, it follows
that  $\left\|\BA^T \right\|_1 = 1$.

For square stochastic matrices, that is $m=n$,
Gelfand's Formula (1941) says that
for any matrix norm $\| . \|$, for the spectral norm
$\cR(\BA)$ of a matrix $\BA \in \dR^{n\times n}$ we obtain:
\begin{align}
 \cR (\BA) \ &= \ \lim_{k\to \infty } \left\|\BA^k\right\|^{1/ k} \ .
\end{align}

Since the product of row-stochastic matrices is a row-stochastic
matrix, $\BA^k$ is a row-stochastic matrix.
Consequently $\left\|\BA^k\right\|_{\infty}=1$ and $\left\|\BA^k\right\|_{\infty}^{1/ k}=1$. 
Therefore, the spectral norm
$\cR(\BA)$ of a row-stochastic matrix $\BA \in \dR^{n\times n}$
is
\begin{align}
 \cR (\BA) \ &= \  1 \ .
\end{align} 

The last statement follows from
Perron-Frobenius theorem, which says that the spectral radius of $\BP$ is 1. 
\end{proof}

Using random matrix theory, we can guess how much the spectral radius
of a rectangular matrix deviates from that of a square matrix.
Let $\BA \in \dR^{m \times n}$ be a matrix whose entries are independent copies of some random
variable with zero mean, unit variance, and finite fourth moment.
The Marchenko-Pastur quarter circular law for rectangular
matrices says that for $n=m$ the maximal singular value
is $2 \sqrt{m}$ \cite{Marchenko:67}.
Asymptotically we have for the maximal
singular value $s_{\max}(\BA) \propto \sqrt{m} + \sqrt{n}$ \cite{Rudelson:10}. 
A bound on the largest singular value is given by \cite{Soshnikov:02}:
\begin{align}
 s_{\max}^2(\BA) \ &\leq  \ (\sqrt{m} \ + \ \sqrt{n})^2 \ + \ O(\sqrt{n}
\ \log(n))  \ \text{a.s.} 
\end{align}
Therefore, a rectangular matrix modifies the largest singular value by a factor of 
$a = 0.5 (1 + \sqrt{n/m})$ compared to a $m\times m$ square matrix.
In the case that tstates are time aware, transitions only occur from
states $s_t$ to $s_{t+1}$. The transition matrix from
states $s_t$ to $s_{t+1}$ is denoted by $\BP_t$.

{\bf States affected by the on-site variance $\Bom_k$ (reachable states).}
Typically,   states in $s_t$ have only few predecessor states in
$s_{t-1}$ compared to $N_{t-1}$, the number of possible states in $s_{t-1}$.
Only for those states in $s_{t-1}$ the transition probability to the
state in $s_t$ is larger than zero.
That is, each $i \in s_{t+1}$ has only few $j \in s_t$ for which
$p_t(i \mid j)>0$.
We now want to know how many states have increased variance due to
$\Bom_k$, that is how many states are affected by $\Bom_k$.
In a general setting, we assume random connections.

Let $N_t$ be the number of all states $s_t$ that are reachable after $t$ time steps
of an episode.
$\bar{N}=1/k \sum_{t=1}^k N_{t}$ is the arithmetic mean of $N_t$.
Let $c_t$ be the
average connectivity of a state in $s_t$ to states in $s_{t-1}$ and
$\bar{c}= \big(\prod_{t=1}^k c_t\big)^{1/k}$ the geometric mean of
the $c_t$.
Let $n_t$ be the number of states in $s_t$ that are affected by the
% "states in $s_t$" -> "states $s_t$"
% Let $n_t$ be the number of states $s_t$ that are affected by the
on-site variance $\Bom_k$ at time $k$ 
for $t \leq k$.
The number of states affected by $\Bom_k$ is $a_k=\sum_{t=0}^k n_t$.
We assume that $\Bom_k$ only has one component larger than zero, that
is, only one state at time $t=k$ is affected: $n_k=1$.
The number of affected edges from $s_t$ to $s_{t-1}$ is $c_t n_t$.
However, states in $s_{t-1}$ may be affected multiple times by different
affected states in $s_t$.
Figure~\ref{fig:affectedStates} shows examples of how affected states affect states in a
previous time step. The left panel shows no overlap
since affected states in $s_{t-1}$ connect only to
one affected state in $s_t$. The right panel shows some overlap
since affected states in $s_{t-1}$ connect to multiple
affected states in $s_t$.

\begin{figure}[htp]
\centering
\includegraphics[angle=0,width=0.49\textwidth]{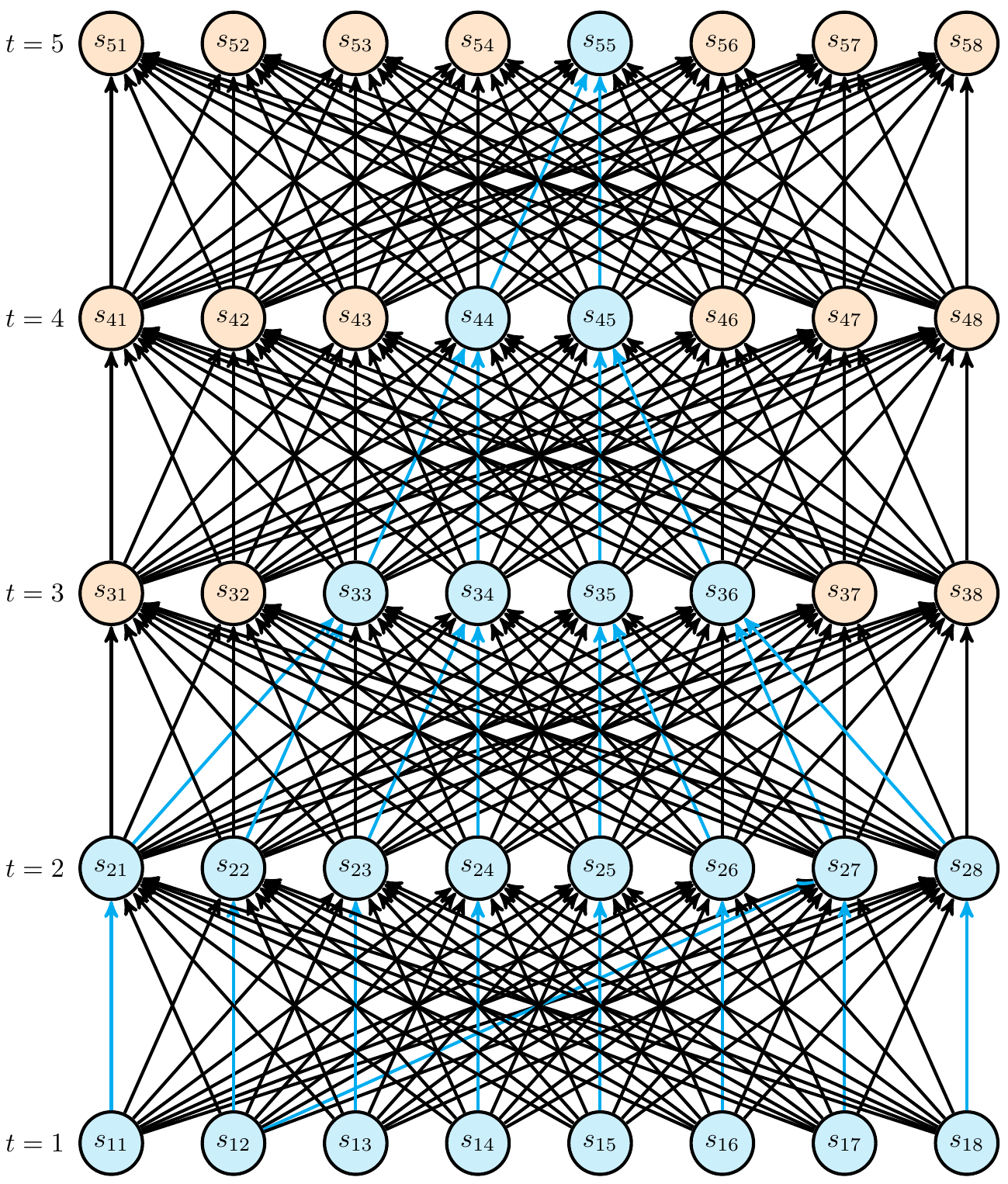}
\includegraphics[angle=0,width=0.49\textwidth]{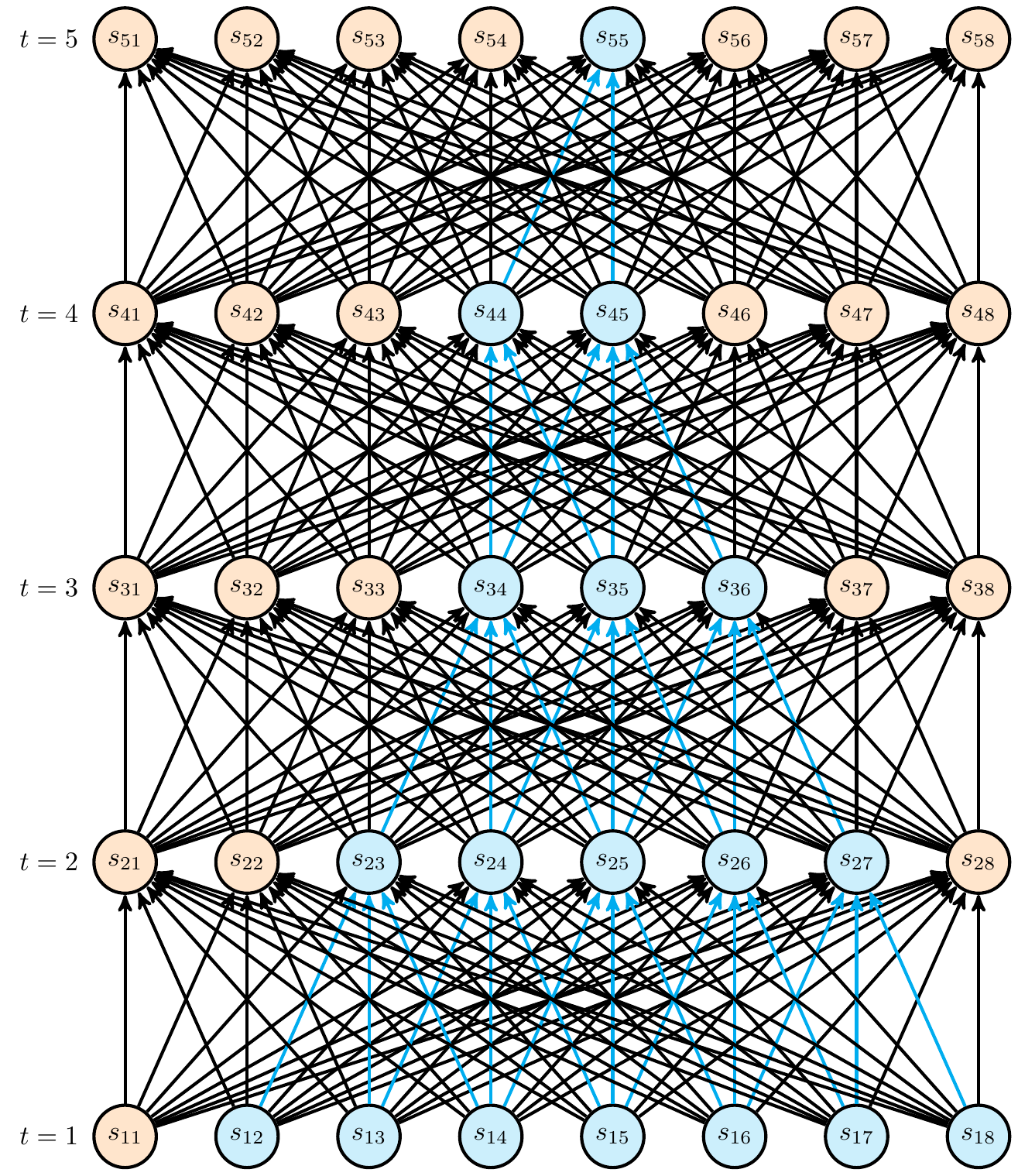}
\caption{Examples of how affected states (cyan) affect states in a
  previous time step (indicated by cyan edges) starting with $n_5=1$
  (one affected state).  The left panel shows no overlap
  since affected states in $s_{t-1}$ connect only to
  one affected state in $s_t$. The right panel shows some overlap
  since affected states in $s_{t-1}$ connect to multiple
  affected states in $s_t$.
\label{fig:affectedStates}}
\end{figure}

\begin{figure}[htb]
\centering
\includegraphics[angle=0,width=0.8\textwidth]{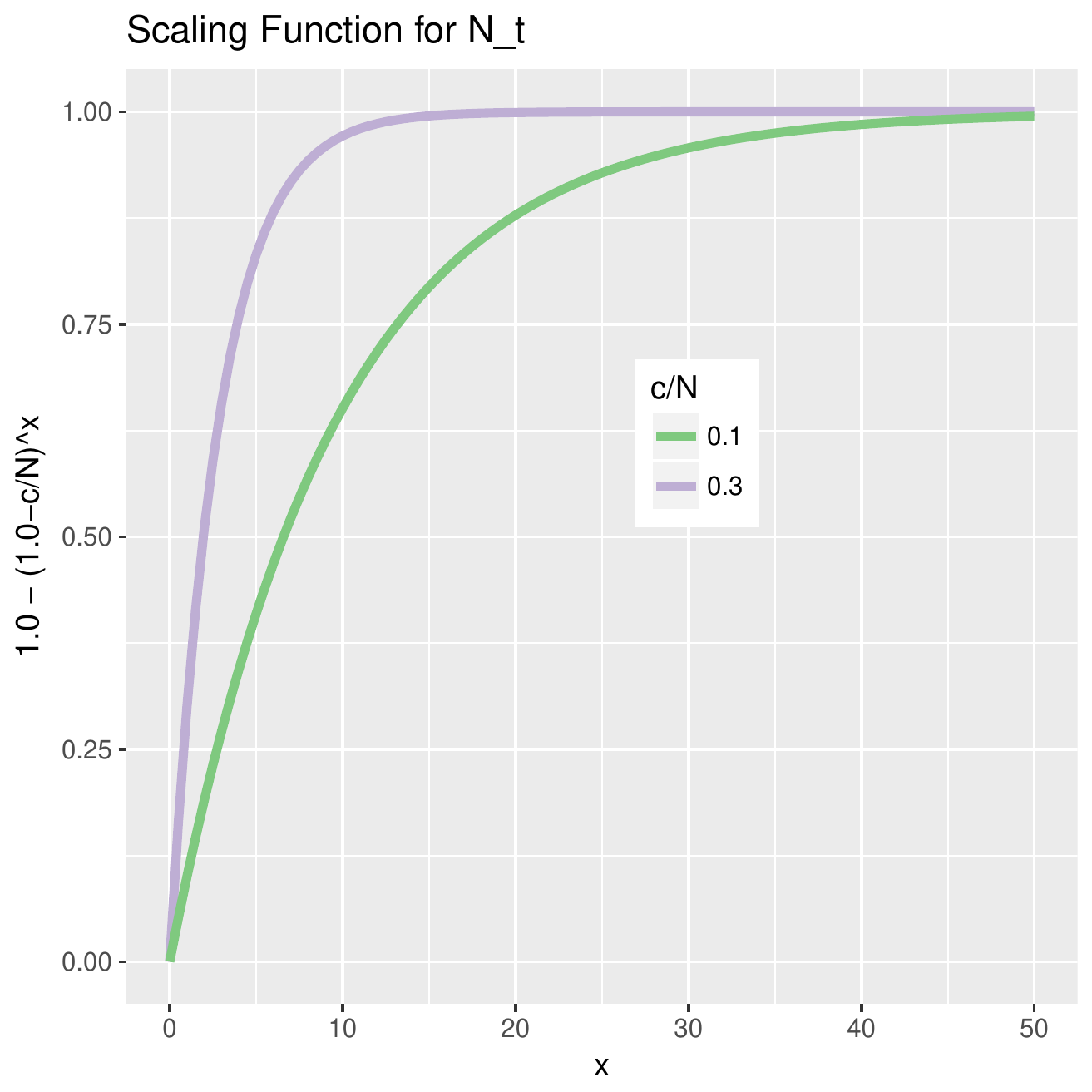}
\caption{The function $\left( 1 -   \left( 1 -
      \frac{c_t}{N_{t-1}} \right)^{n_t} \right)$ which scales $N_{t-1}$ in
    Theorem~\ref {th:Aaffect}. This function determines the growth of
    $a_k$, which is exponentially at the beginning, and then linearly
    when the function approaches 1.
\label{fig:growth}}
\end{figure}

The next theorem states that the on-site
variance $\Bom_k$ can have large effect on the variance of
each previous state-action pair. 
Furthermore, for small $k$ the number of affected
states grows exponentially, while for large $k$ it grows only 
linearly after some time $\hat{t}$. Figure~\ref{fig:growth} shows the
function which determines how much $a_k$ grows with $k$. 

\begin{theoremA}
\label{th:Aaffect}
  For $t \leq k$, $\Bom_k$ contributes to $\BV_t^{\pi}$
  by the term $\BP_{t \leftarrow k}  \ \Bom_k$, where $\left\| \BP_{t\leftarrow k} \right\|_{\infty}=1$.

The number $a_k$ of states affected by the on-site variance
$\Bom_k$ is
\begin{align}
 a_k   \ &= \ \sum_{t=0}^k \left( 1 \ - \   \left( 1 \ - \ \frac{c_t}{N_{t-1}} \right)^{n_t}
  \right) \ N_{t-1} \ .
\end{align}
\end{theoremA}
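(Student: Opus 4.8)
The plan is to treat the two assertions separately, both building on the variance Bellman equation of Theorem~\ref{th:Avar}. For the first claim I specialise that recursion to time-aware states, where transitions occur only from $s_t$ to $s_{t+1}$ through the rectangular row-stochastic matrix $\BP_t$. With $\gamma=1$ and the on-site variance $\Bom_t$ as defined, the recursion reads in vector form $\BV_t^{\pi} = \Bom_t + \BP_t \, \BV_{t+1}^{\pi}$. Unrolling it up to the horizon gives
\begin{align*}
  \BV_t^{\pi} \ &= \ \sum_{j \geq t} \BP_{t \leftarrow j} \, \Bom_j \ , \qquad \BP_{t \leftarrow j} \ := \ \BP_t \, \BP_{t+1} \cdots \BP_{j-1} \ ,
\end{align*}
with the convention $\BP_{t\leftarrow t} := \BI$. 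The coefficient carrying $\Bom_k$ into $\BV_t^{\pi}$ is therefore exactly $\BP_{t\leftarrow k}$. Since each $\BP_t$ is row-stochastic, Lemma~\ref{th:ArowStochastic} shows that the product $\BP_{t\leftarrow k}$ of the $k-t$ such matrices is again row-stochastic, and Lemma~\ref{th:Aperron} then gives $\|\BP_{t\leftarrow k}\|_{\infty}=1$. This is the undamped back-propagation of variance: the row sums stay at one, so $\Bom_k$ reaches every earlier reachable state-action pair without attenuation in the $\infty$-norm.

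For the second claim I would set up a random-connectivity occupancy argument, propagating the affected set backward layer by layer from the single affected state at time $k$ (so $n_k=1$). Each state in $s_t$ has on average $c_t$ predecessors in $s_{t-1}$, so under a random-edge model a fixed $j \in s_{t-1}$ is a predecessor of a fixed affected $i \in s_t$ with probability $c_t/N_{t-1}$. The crucial point is not to count the $c_t\,n_t$ affected edges directly, which would overcount the states in $s_{t-1}$ reached from several affected parents (the overlap of Figure~\ref{fig:affectedStates}). Instead, by complementary counting over the $n_t$ independent connection events, $j$ is missed by all affected parents with probability $(1-c_t/N_{t-1})^{n_t}$, so the expected number of affected states in the previous layer is
\begin{align*}
  n_{t-1} \ &= \ \left( 1 \ - \ \left( 1 \ - \ \frac{c_t}{N_{t-1}} \right)^{n_t} \right) N_{t-1} \ .
\end{align*}
Each summand of the asserted formula is exactly this $n_{t-1}$, so that summing over $0 \leq t \leq k$, seeded by $n_k=1$, accumulates the affected states layer by layer and yields $a_k$.

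The hard part will be the overlap handling in the counting step. A naive edge count gives the layer recursion $n_{t-1}=c_t\,n_t$, hence an $a_k$ that grows like a product of the connectivities and is wrong once $n_t$ becomes large enough that predecessors are hit repeatedly; the factor $1-(1-c_t/N_{t-1})^{n_t}$ is precisely what interpolates between the exponential early regime, where $1-(1-p)^{n_t}\approx p\,n_t$ so each affected state recruits $c_t$ fresh predecessors, and the saturating late regime, where the factor tends to $1$ and $n_{t-1}\to N_{t-1}$, matching the growth curve of Figure~\ref{fig:growth}. Making this rigorous only requires stating the random-connection model and the independence of the connection events explicitly; the remaining algebra is routine.
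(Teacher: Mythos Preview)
Your proposal is correct and follows essentially the same route as the paper: the paper likewise unrolls the time-aware variance recursion via backward induction into $\BV_t^{\pi}=\sum_{k\geq t}\BP_{t\leftarrow k}\,\Bom_k$, then invokes Lemmas~\ref{th:ArowStochastic} and~\ref{th:Aperron} for the row-stochasticity and $\infty$-norm, and for the second part derives the same layer recursion $n_{t-1}=\bigl(1-(1-c_t/N_{t-1})^{n_t}\bigr)N_{t-1}$ by the identical complementary-counting argument (which the paper attributes to Cox et~al.\ \cite{Cox:09}). Your discussion of the early-exponential versus late-saturating regimes is in fact the content of the paper's subsequent Corollary~\ref{th:Acoro}.
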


\begin{proof}
The ``backward induction algorithm'' \cite{Puterman:90,Puterman:05}
gives with $\BV_{T+1}^{\pi}=\BZe$ and on-site variance $\Bom_{T+1} =\BZe$:
\begin{align}
  \BV_t^{\pi} \ &= \   \sum_{k=t}^{T}  \prod_{\tau=t}^{k-1} \BP_{\tau} \
  \Bom_k \ ,
\end{align}
where we define $\prod_{\tau=t}^{t-1} \BP_{\tau}= \BI$ and $[\Bom_k]_{(s_k,a_k)}=\omega(s_k,a_k)$.

Since the product of two row-stochastic matrices is a row-stochastic
matrix according to Lemma~\ref{th:ArowStochastic},
$\BP_{t\leftarrow k}= \prod_{\tau=t}^{k-1} \BP_{\tau}$ is a row-stochastic matrix.
Since $\left\| \BP_{t\leftarrow k} \right\|_{\infty}=1$ according to Lemma~\ref{th:Aperron},
each on-site variance $\Bom_k$ with $t \leq k$ can have large effects on
$\BV_t^{\pi}$.
Using the row-stochastic matrices $\BP_{t\leftarrow k}$, we can
reformulate the variance:
\begin{align}
  \BV_t^{\pi} \ &= \   \sum_{k=t}^{T}  \BP_{t \leftarrow k} \ \Bom_k \ ,
\end{align}
with $\left\| \BP_{t\leftarrow k} \right\|_{\infty}=1$.
The on-site variance $\Bom_k$ at step $k$
increases all variances $\BV_t^{\pi}$ with $t \leq k$.

Next we proof the second part of the theorem, which considers the
growth of $a_k$.
To compute $a_k$ we first have
to know $n_t$. For computing $n_{t-1}$ from $n_t$, we want to know how
many states are affected in $s_{t-1}$ if $n_t$
states are affected in $s_t$.
The answer to this question is the expected
coverage when searching a document collection
using a set of independent computers \cite{Cox:09}.  
We follow the approach of Cox~et~al.\ \cite{Cox:09}.
The minimal number of affected states in $s_{t-1}$ is $c_t$, where
each of the $c_t$ affected states in $s_{t-1}$ 
connects to each of the $n_t$ states in $s_t$ (maximal overlap).
The maximal number of affected states in $s_{t-1}$ is
$c_t n_t$, where each affected state in $s_{t-1}$
connects to only one affected state in $s_t$ (no overlap).
We consider a single state in $s_t$.
The probability of a state in $s_{t-1}$ being connected to this single
state in $s_t$ is $c_t/N_{t-1}$ and being not connected to this state in
$s_t$ is $1 - c_t/N_{t-1}$.
The probability of a state in $s_{t-1}$
being not connected to any of the $n_t$ affected states in $s_t$ is
\begin{align}
  \left( 1 \ - \ \frac{c_t}{N_{t-1}} \right)^{n_t} \ .
\end{align}
The probability of a state in $s_{t-1}$
being at least connected to one of the $n_t$ affected states in $s_t$ is
\begin{align}
  1 \ - \   \left( 1 \ - \ \frac{c_t}{N_{t-1}} \right)^{n_t} \ .
\end{align}
Thus, the expected number of distinct states in $s_{t-1}$ being
connected to one of the $n_t$ affected states in $s_t$ is
\begin{align}
 n_{t-1}   \ &= \ \left( 1 \ - \   \left( 1 \ - \ \frac{c_t}{N_{t-1}} \right)^{n_t}
  \right) \ N_{t-1} \ .
\end{align}
The number $a_k$ of affected states by $\Bom_k$ is
\begin{align}
 a_k   \ &= \ \sum_{t=0}^k \left( 1 \ - \   \left( 1 \ - \ \frac{c_t}{N_{t-1}} \right)^{n_t}
  \right) \ N_{t-1} \ .
\end{align}
\end{proof}

\begin{corollaryA}
\label{th:Acoro}
 For small $k$, the number $a_k$ of states affected by
the on-site variance $\Bom_k$ at step $k$
grows exponentially with $k$ by a factor of $\bar{c}$: 
\begin{align}
  a_k \ &> \ \bar{c}^k \ .
\end{align}
For large $k$ and after some time $t>\hat{t}$,
the number $a_k$ of states affected by
$\Bom_k$ grows linearly with $k$ with a factor of $\bar{N}$: 
\begin{align}
  a_k \ &\approx \  a_{\hat{t}-1} \ + \ (k-\hat{t}+1) \ \bar{N} \ .
\end{align}
\end{corollaryA}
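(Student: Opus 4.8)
The plan is to analyze the backward recursion established in Theorem~\ref{th:Aaffect}, namely $n_{t-1} = \left(1 - \left(1 - c_t/N_{t-1}\right)^{n_t}\right) N_{t-1}$ with boundary value $n_k = 1$, together with $a_k = \sum_{t=0}^{k} n_t$, in two separate regimes governed by the scaling function $\phi_t(n) = 1 - (1 - c_t/N_{t-1})^{n}$ plotted in Figure~\ref{fig:growth}. It is convenient to index by the backward distance $m = k-t$ from the on-site variance $\Bom_k$: then $n_{k-m}$ starts at $1$ for $m=0$, and I will show that it grows geometrically while it remains a small fraction of its layer, and then flattens out at the layer size once it saturates. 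Throughout I assume the interesting case $\bar c > 1$, in which exponential growth can occur.

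For the exponential claim (small $k$), I would work in the regime where every affected set is a small fraction of its layer, i.e.\ $c_t n_t \ll N_{t-1}$. There the Bernoulli/Taylor estimate $\phi_t(n_t) = c_t n_t/N_{t-1} + O\big((c_t n_t/N_{t-1})^2\big)$ linearizes the recursion to $n_{t-1} \approx c_t n_t$. Unrolling from $n_k = 1$ gives $n_t \approx \prod_{\tau=t+1}^{k} c_\tau$, hence $n_0 \approx \prod_{\tau=1}^{k} c_\tau = \bar c^{\,k}$. Summing the resulting (super-)geometric series, $a_k = \sum_{t=0}^{k} n_t \approx \sum_{m=0}^{k} \bar c^{\,m} = \frac{\bar c^{\,k+1} - 1}{\bar c - 1}$, which carries a prefactor $\bar c/(\bar c - 1) > 1$ in front of $\bar c^{\,k}$. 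The strict inequality $a_k > \bar c^{\,k}$ then follows because this constant margin dominates the cumulative multiplicative error $\prod_t (1 - \varepsilon_t)$ coming from the neglected second-order terms, which stays close to $1$ throughout the unsaturated regime.

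For the linear claim (large $k$), I would identify the saturation distance $\hat t$ at which $\bar c^{\,\hat t} \approx \bar N$, i.e.\ where $n_t$ first becomes an appreciable fraction of $N_{t-1}$. Beyond this distance the base $1 - c_t/N_{t-1} < 1$ is raised to a large power, so $\phi_t(n_t) \to 1$ and $n_{t-1} \to N_{t-1} \approx \bar N$: essentially every state in the earlier layer becomes reachable, and (via the first part of Theorem~\ref{th:Aaffect}, where $\left\| \BP_{t\leftarrow k} \right\|_\infty = 1$ by Lemma~\ref{th:Aperron}) each such layer contributes its full on-site term without decay. Splitting $a_k$ into the now-constant exponential head $a_{\hat t - 1}$ and the saturated tail, the tail contributes $\approx \bar N$ over each of the $k - \hat t + 1$ remaining layers, which yields $a_k \approx a_{\hat t - 1} + (k - \hat t + 1)\,\bar N$.

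The main obstacle is the crossover region, where neither approximation is sharp. Making the exponential estimate rigorous requires controlling the accumulated second-order error $\prod_t(1 - \varepsilon_t)$ and showing it cannot erase the $\bar c/(\bar c - 1)$ geometric prefactor, while making the linear estimate rigorous requires pinning down $\hat t$ and bounding the width of the transition band in which $\phi_t(n_t)$ moves from $\approx c_t n_t/N_{t-1}$ up to $\approx 1$. The connectivity heterogeneity (distinct $c_t$ and $N_t$) is absorbed throughout by replacing the step-wise quantities with the geometric mean $\bar c$ and arithmetic mean $\bar N$ introduced just before the theorem.
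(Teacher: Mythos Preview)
Your proposal is correct and follows essentially the same route as the paper: linearize the recursion to $n_{t-1}\approx c_t n_t$ in the unsaturated regime, sum the resulting geometric series to get $a_k\approx(\bar c^{\,k+1}-1)/(\bar c-1)>\bar c^{\,k}$, and then use saturation $n_t\approx N_t$ once $c_t n_t$ exceeds $N_{t-1}$ to obtain the linear tail. The paper's argument remains at the level of $\approx$ throughout and does not attempt the second-order error control you flag as an obstacle, nor does it invoke the $\|\BP_{t\leftarrow k}\|_\infty=1$ bound from Lemma~\ref{th:Aperron} (that bound concerns the magnitude of the propagated variance, not the count of affected states, and is unnecessary for this corollary).
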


\begin{proof}
For small $n_t$ with $\frac{c_t  n_t}{N_{t-1}} \ll 1$, we have
\begin{align}
  \left( 1 \ - \ \frac{c_t}{N_{t-1}} \right)^{n_t} \ &\approx \ 1 \ -
  \ \frac{c_t \ n_t}{N_{t-1}} \ ,
\end{align}
thus
\begin{align}
 n_{t-1}   \ &\approx \  c_t \ n_t \ .
\end{align}

For large $N_{t-1}$ compared to the number of connections $c_t$ of a
single state in $s_t$ to states in $s_{t-1}$,
we have the approximation
\begin{align}
  \left( 1 \ - \ \frac{c_t}{N_{t-1}} \right)^{n_t} \ &= \
  \left( \left( 1 \ + \ \frac{-c_t}{N_{t-1}} \right)^{N_{t-1}} \right)^{n_t/N_{t-1}} \ \approx \
  \exp(- (c_t \ n_t)/N_{t-1}) \ .
\end{align}
We obtain
\begin{align}
n_{t-1}   \ &= \ \left( 1 \ - \  \exp(-(c_t \ n_t)/N_{t-1}) \right) \ N_{t-1} \ .
\end{align}
For small $n_t$, we again have
\begin{align}
  n_{t-1}   \ &\approx \ c_t \ n_t \ .
\end{align}
Therefore, for small $k-t$, we obtain
\begin{align}
  n_{t}   \ &\approx \ \prod_{\tau=t}^k c_{\tau}  \ \approx \ \bar{c}^{k-t}\ .
\end{align}
Thus, for small $k$ the number $a_k$ of states affected by $\Bom_k$ is
\begin{align}
  a_k \ &= \ \sum_{t=0}^k n_t \ \approx \  \sum_{t=0}^k \bar{c}^{k-t}
  \ = \ \sum_{t=0}^k \bar{c}^t \ = \ \frac{\bar{c}^{k+1}-1}{\bar{c}-1} \ >
  \ \bar{c}^k \ .
\end{align}
Consequently, for small $k$ the number $a_k$ of states affected by $\Bom_k$ grows exponentially with $k$ by a factor of $\bar{c}$.  
For large $k$, at a certain time $t>\hat{t}$, $n_t$ has grown such that
$c_t n_t > N_{t-1}$, yielding
$\exp(-(c_t n_t)/N_{t-1}) \approx 0$, and thus
\begin{align}
  n_t   \ &\approx \  N_t \ .
\end{align}
Therefore 
\begin{align}
  a_k \ - \ a_{\hat{t}-1} \ &= \ \sum_{t=\hat{t}}^k n_t \ \approx \  \sum_{t=\hat{t}}^k N_{t} \
  \approx \ (k-\hat{t}+1) \ \bar{N} \ .
\end{align}
Consequently, for large $k$ the number $a_k$ of states affected by
$\Bom_k$ grows linearly with $k$ by a factor of $\bar{N}$.  
\end{proof}

Therefore, we aim for decreasing the on-site variance $\Bom_k$ for large $k$, in order to
reduce the variance.
In particular, we want to avoid delayed rewards and provide the reward
as soon as possible in each episode.
Our goal is to give the reward as early as possible in each episode to reduce the
variance of action-values that are affected by late rewards and
their associated immediate and local variances.

\pagebreak
\section{Experiments}

\subsection{Artificial Tasks}
\label{sec:Aexp}
This section provides more details for the artificial tasks (I), (II) and (III) in the main paper. Additionally, we include artificial task (IV) characterized by deterministic reward and state transitions, and artificial task (V) which is solved using policy gradient methods.

%%%%%%%%%%%%%%%%%%%%%%%%%%%%%%%%%%%%%%%%%%%%%%%%%
\subsubsection{Task (I): Grid World}
\label{sec:AGW}
This environment is characterized by probabilistic delayed rewards.
It illustrates a situation, where a time bomb
explodes at episode end.
The agent has to defuse the bomb
and then run away as far as possible since
defusing fails with a certain probability.
Alternatively, the agent can immediately run away, 
which, however, leads to less reward on average 
since the bomb always explodes.
The Grid World is a quadratic $31\times 31$ grid with
{\em bomb} at coordinate $[30,15]$ and
{\em start} at $[30-d, 15]$, where $d$ is the delay of the task.
The agent can move in four different directions 
({\em up}, {\em right}, {\em left}, and {\em down}).
Only moves are allowed that keep the agent on the grid. 
The episode finishes after $1.5 d$ steps.
At the end of the episode, 
with a given probability of 0.5, % necessary to cover the 'it may explode' few lines above.
the agent receives a reward of 1000
if it has visited {\em bomb}.
At each time step the agent receives 
an immediate reward of $c\cdot t \cdot h$, 
where the factor $c$ depends on the chosen action, 
$t$ is the current time step, and $h$ is the Hamming distance to {\em bomb}.
Each move of the agent, which reduces the Hamming distance to {\em bomb}, 
is penalized by the immediate reward via $c=-0.09$. 
Each move of the agent, which increases the Hamming distance to {\em bomb}, 
is rewarded by the immediate reward via $c=0.1$.
The agent is forced to learn the $Q$-values precisely,
since the immediate reward of directly running away hints at a sub-optimal policy.

For non-deterministic reward, the agent receives the delayed reward for
having visited {\em bomb} with probability $p(r_{T+1}=100 \mid s_T,a_T)$. 
For non-deterministic transitions, the probability of transiting to next state $s'$ 
is $p(s'  \mid s,a)$. 
For the deterministic environment these probabilities were either 1 or zero.

\paragraph{Policy evaluation: learning the action-value estimator for a fixed policy.} 
First, the theoretical statements on bias and variance of 
estimating the action-values by TD in Theorem~\ref{th:AexponDecay} and 
by MC in Theorem~\ref{th:Aaffect} are experimentally verified for a fixed policy.
Secondly, we consider the bias and variance of TD and MC estimators 
of the transformed MDP with optimal reward redistribution according to Theorem~\ref{th:AOptReturnDecomp}.

The new MDP with an optimal reward redistribution has advantages over
the original MDP both for TD and MC. For TD, the new MDP corrects the bias exponentially faster and 
for MC it has fewer number of action-values with high variance. 
Consequently, estimators for the new MDP learn faster 
than the same estimators in the original MDP.

Since the bias-variance analysis is defined for a particular number of samples 
drawn from a fixed distribution, we need to fix the policy for sampling. 
We use an $\epsilon$-greedy version of the optimal policy, where
$\epsilon$ is chosen such that on average in 10\% of the episodes the agent 
visits {\em bomb}. For the analysis, the delay ranges from 5 to 30 in steps of 5.
The true $Q$-table for each delay is computed 
by backward induction and we use 10 different action-value estimators 
for computing bias and variance.

For the TD update rule we use the exponentially weighted arithmetic mean that is sample-updates,
with initial value $q^0(s,a)=0$.
We only monitor the mean and the variance for action-value 
estimators at the first time step, 
since we are interested in the time required for correcting the bias.  
10 different estimators are run for 10,000 episodes. Figure~\ref{fig:A_TDbias} shows the bias correction for different delays, normalized by the first error.

For the MC update rule we use the arithmetic mean for policy evaluation
(later we will use constant-$\alpha$ MC for learning the optimal policy).
For each delay, a test set of state-actions for each delay is generated 
by drawing 5,000 episodes with the $\epsilon$-greedy optimal policy. 
For each action-value estimator the mean and the variance 
is monitored every 10 visits. If every action-value has 500 updates (visits), 
learning is stopped. Bias and variance are computed based on 10 different 
action-value estimators.
As expected from Section~\ref{sec:Abias_variance_estimator}, in Figure~\ref{fig:A_MCvar}
the variance decreases by $1/n$, where $n$ is the number of samples.
Figure~\ref{fig:A_MCvar} shows that the number of state-actions with a variance larger than a threshold 
increases exponentially with the delay.
This confirms the statements of Theorem~\ref{th:Aaffect}. 

\begin{figure}[htp]
 \centering
 \begin{subfigure}{.49\textwidth}
  \centering%
  \resizebox{1\textwidth}{!}{\includegraphics[]{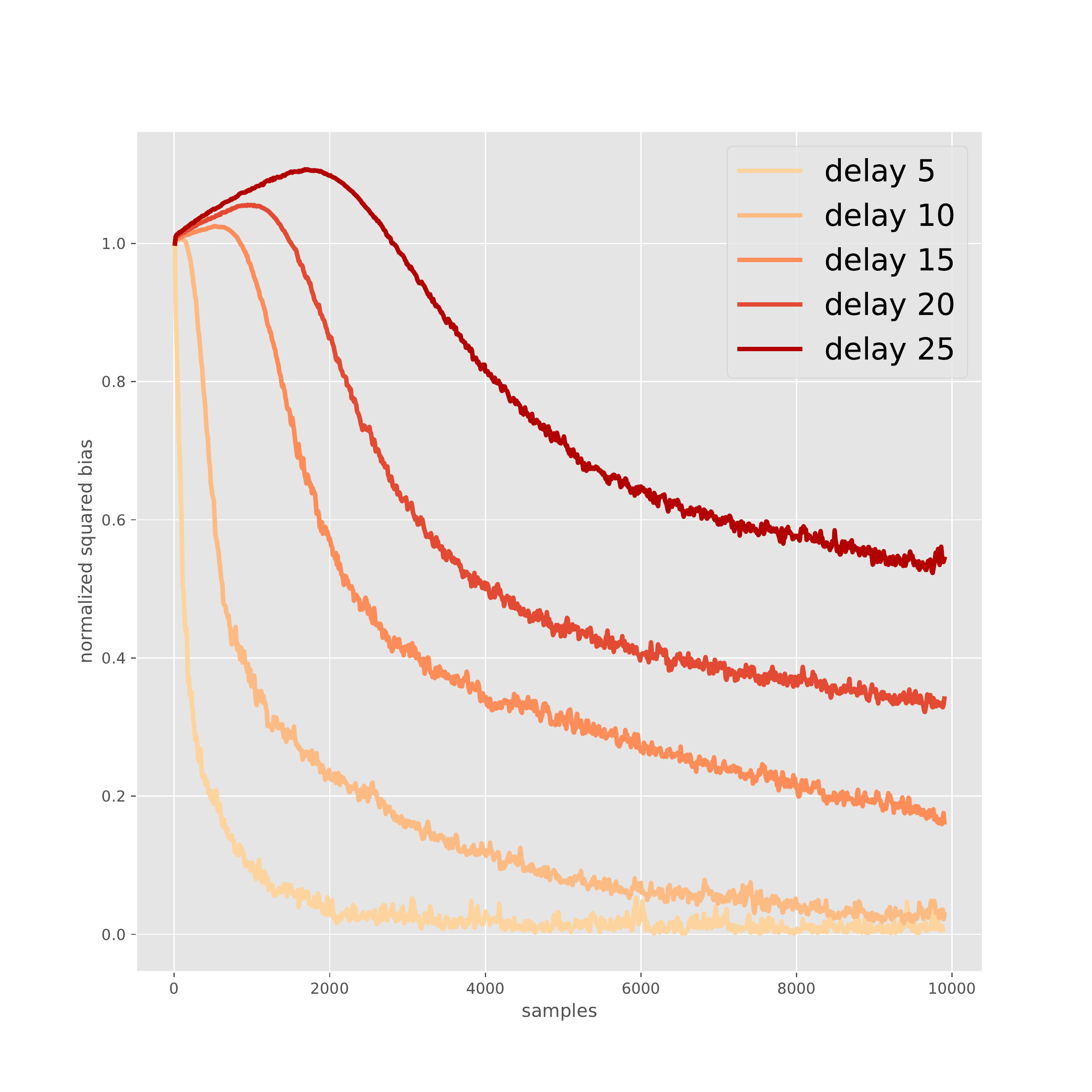} }%TDbias.pgf
  \caption{\label{fig:A_TDbias}}
 \end{subfigure}
 \begin{subfigure}{.49\textwidth}
  \centering%
  \resizebox{1\textwidth}{!}{\includegraphics[]{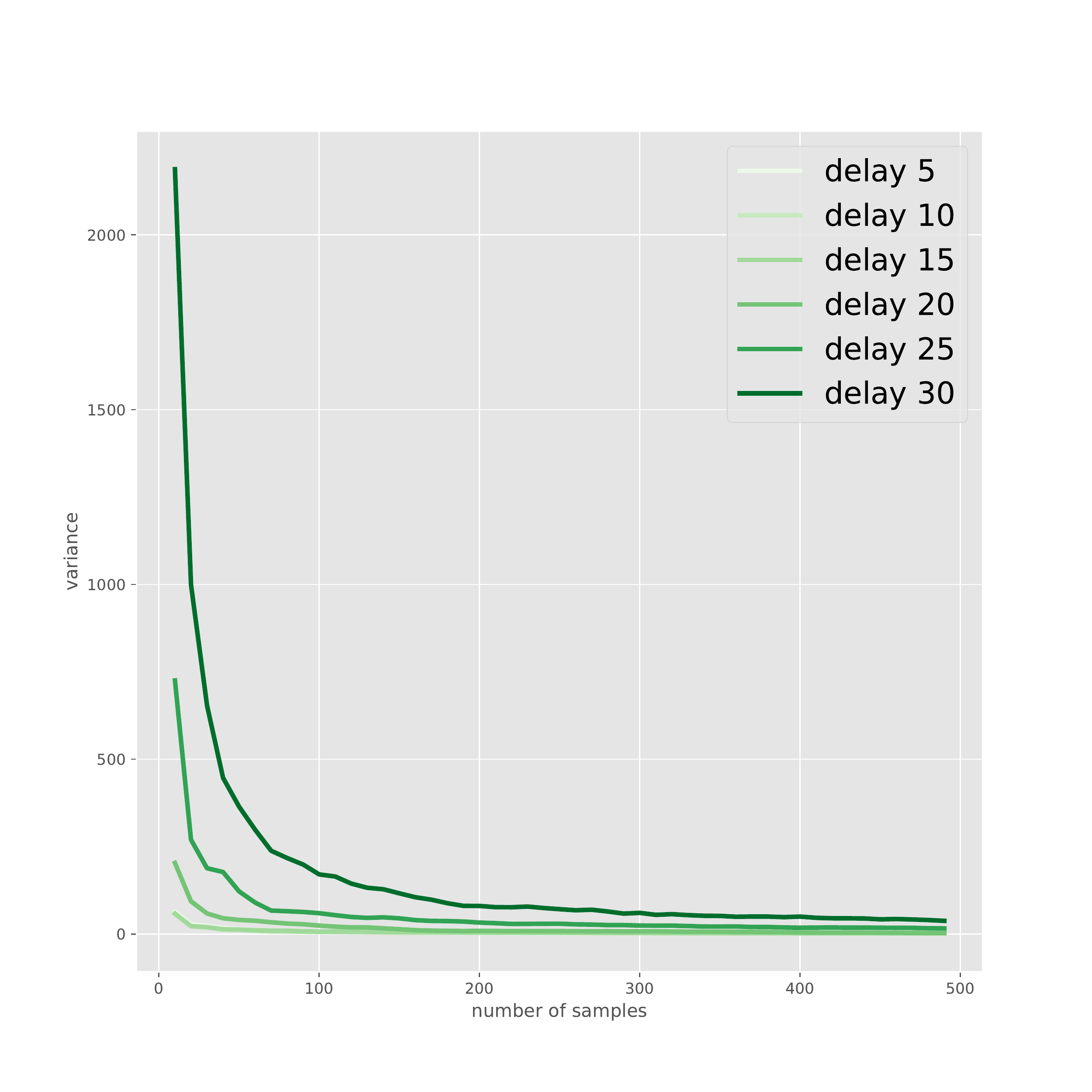}}%MCvar.pgf
  \caption{\label{fig:A_MCvar}}
 \end{subfigure}%
 \caption{(a) Experimental evaluation of bias and variance of
          different $Q$-value estimators on the Grid World.
          (b) Normalized bias reduction for different delays.
          Right: Average variance reduction for the 10th highest values.
          }
          
\end{figure}

\paragraph{Learning the optimal policy.} 
For finding the optimal policy for the Grid World task, 
we apply Monte Carlo Tree Search (MCTS), 
$Q$-learning, and Monte Carlo (MC). 
We train until the greedy policy reaches 90\% of the return of the optimal policy. The learning time is measured by the number of episodes.
We use {\em sample updates} for $Q$-learning and MC \cite{Sutton:18book}. 
For MCTS the greedy policy uses $0$ for the exploration constant in UCB1 \cite{Kocsis:06}.
The greedy policy is evaluated in 100 episodes intervals. 
The MCTS selection step begins in the start state, 
which is the root of the game tree 
that is traversed using UCB1 \cite{Kocsis:06} as the tree policy. 
If a tree-node gets visited the first time, 
it is expanded with an initial value obtained by $100$ simulated trajectories that start at this node. 
These simulations use a uniform random policy whose average Return is calculated. 
The backpropagation step uses the MCTS(1) update rule \cite{Khandelwal:16}. 
The tree policies exploration constant is $\sqrt{2}$.
$Q$-learning and MC use a learning rate of $0.3$ and an 
$\epsilon$-greedy policy with $\epsilon=0.3$.
For RUDDER the optimal reward redistribution using 
a return decomposition as stated 
in Section~\ref{sec:Aopt_rew_red}
is used. For each {\em delay} and each method, 300 runs with different seeds are performed to obtain statistically relevant results.

\paragraph{Estimation of the median learning time and quantiles.}
\label{sec:Alr_estimation}
The performance of different methods is measured by 
the median learning time in terms of episodes.
We stop training at $100$ million episodes. 
Some runs, especially for long delays, have taken too long 
and have thus been stopped. To resolve this bias the quantiles of the 
learning time are estimated by fitting a distribution using right censored data~\cite{gijbels2010censored} .%\cite{Delignette-Muller:15}. 
The median is still robustly estimated 
if more than 50\% of runs have finished, 
which is the case for all plotted datapoints.
We find that for delays where all runs have finished 
the learning time follows a Log-normal distribution. 
Therefore, we fit a Log-normal distribution on the right censored data. 
We estimate the median from the existing data, 
and use maximum likelihood estimation to obtain 
the second distribution parameter $\sigma^2$. 
The start value of the $\sigma^2$ estimation is calculated by 
the measured variance of the existing data 
which is algebraically transformed to get the $\sigma$ parameter.

%%%%%%%%%%%%%%%%%%%%%%%%%%%%%%%%%%%%%%
% Here The Choice
\subsubsection{Task (II): The Choice}
\label{c:thechoice}
In this experiment we compare RUDDER, temporal difference (TD) and Monte Carlo (MC) in 
an environment with delayed deterministic reward 
and probabilistic state transitions to investigate how reward information
is transferred back to early states.
This environment is a variation of our introductory
pocket watch example and reveals problems of TD and MC,
while contribution analysis excels. 
In this environment, only the first action at the very beginning 
determines the reward at the end of the episode.

\paragraph{ The environment} is an MDP consisting of two actions $a \in \cA = \{+,- \}$,
an initial state $s^{0}$, two {\em charged} states $s^+$, $s^-$, 
two {\em neutral} states $s^{\oplus}$, $s^{\ominus}$, and a final state $s^{\Rf}$.
After the first action $a_0 \in \cA = \{+,- \}$ in state $s^{0}$,
the agent transits to state $s^+$ for action $a_0 =+$ and to $s^-$ for 
action $a_0=-$. 
%SEPP What are the probabilities for $a_0=+$ and $a_0=-$?
Subsequent state transitions are probabilistic and independent on actions. 
With probability $p_C$ the agent stays in the {\em charged} states $s^+$ or $s^-$,
and with probability $(1-p_C)$ it transits from $s^+$ or $s^-$
to the {\em neutral} states $s^{\oplus}$ or $s^{\ominus}$, respectively. 
The probability to go from {\em neutral} states 
to {\em charged} states is $p_C$, 
and the probability to stay in {\em neutral} states is $(1-p_C)$. 
Probabilities to transit from 
$s^+$ or $s^{\oplus}$ to $s^-$ or $s^{\ominus}$ or vice versa are zero.
Thus, the first action determines whether that agent stays in "$+$"-states or 
"$-$"-states.
The reward is determined by how many times the agent visits {\em charged} states 
plus a bonus reward depending on the agent's first action.
The accumulative reward is given at sequence end and is deterministic.
After $T$ time steps, the agent is in the final state $s^{\Rf}$, 
in which the reward $R_{T+1}$ is provided.
 $R_{T+1}$ is the sum of 3 deterministic terms: 
 \begin{enumerate}
     \item $R_0$, the baseline reward associated to the first action; 
     \item $R_C$, the collected reward across states, 
     which depends on the number of visits $n$ to the {\em charged} states;
     \item $R_b$, a bonus if the first action $a_0=+$.
 \end{enumerate}
The expectations of the accumulative rewards for $R_0$ and $R_C$ 
have the same absolute value but opposite signs, 
therefore they cancel in expectation over episodes.
Thus, the expected return of an episode is the expected reward $R_b$: $p(a_0=+) b$. 
The rewards are defined as follows:
\begin{align}
    c_0 \ &=  \ \begin{cases} 
     ~~~1 & \text{if }  \ a_0 = + \\
     -1 & \text{if }  \ a_0 = - \ ,
     \end{cases} \\
     R_b \ &=  \ \begin{cases} 
     b & \text{if }  \ a_0 = + \\
     0 & \text{if }  \ a_0 = - \ ,
     \end{cases} \\
    R_C \ &= \ c_0 \ C \ n \ ,  \\
    R_0 \ &= \ - \ c_0 \ C \ p_C  \ T \ , \\
    R_{T+1} \ &= \ R_C \ + \ R_0 \ + \ R_b \ ,
\end{align}
where $C$ is the baseline reward for {\em charged} states, 
and $p_C$ the probability of staying in or transiting to {\em charged} states.
The expected visits of charged states is $\EXP[n]= p_C  T$ and
$\EXP[R_{T+1}]= \EXP[R_b]=p(a_0=+) b$.

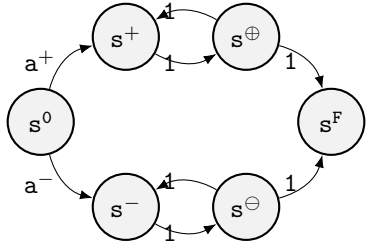
\begin{figure}
\centering
\begin{tikzpicture}[node distance=1.6cm]
  \tikzstyle{every state}=[thick, fill=gray!10]
  \tikzset{edge/.style = {->,> = latex'}}
  \tikzstyle{initial text}=[" "]
  \node[state] (s100)                    {$\mathtt{s^{0}}$};
  \node[state]         (s201) [above right of=s100] {$\mathtt{s^+}$};
  \node[state]         (s202) [below right of=s100] {$\mathtt{s^-}$};
  \node[state]         (s301) [right of=s201] {$\mathtt{s^{\oplus}}$};
  \node[state]         (s302) [right of=s202] {$\mathtt{s^{\ominus}}$};
 \node[state]         (s400) [below right of=s301] {$\mathtt{s^{F}}$};
 
\path   (s201) edge [-Latex,bend right]      node[left]  {$\mathtt{1}$} (s301) ;
\path   (s301) edge [-Latex,bend right]      node[left]  {$\mathtt{1}$} (s201) ;
\path   (s202) edge [-Latex,bend right]      node[left]  {$\mathtt{1}$} (s302) ;
\path   (s302) edge [-Latex,bend right]      node[left]  {$\mathtt{1}$} (s202) ;
\path   (s100) edge [-Latex,bend left]      node[ left] {$\mathtt{a^+}$}  (s201);
\path   (s100) edge [-Latex,bend right]      node[left]  {$\mathtt{a^-}$} (s202);
\path   (s302) edge [-Latex,bend right]      node[left]  {$\mathtt{1}$} (s400) ; 
\path   (s301) edge [-Latex,bend left]      node[left]  {$\mathtt{1}$} (s400) ;
\end{tikzpicture}
\caption{State transition diagram for The Choice task. 
The diagram is a simplification of the actual MDP.}
\end{figure}

\paragraph{Methods compared: }
The following methods are compared:
\begin{enumerate}
    \item $Q$-learning with eligibility traces according to Watkins \cite{Watkins:89}, 
   \item Monte Carlo,
    \item RUDDER with reward redistribution.
\end{enumerate}

%{\bf RUDDER Implementation.}
For RUDDER, we use an LSTM without lessons buffer 
and without safe exploration. 
Contribution analysis is realized by differences of return predictions.
For MC, $Q$-values are the exponential moving average of the episode return.
For RUDDER, the $Q$-values are estimated by an exponential moving average of the reward redistribution.

\paragraph{Performance evaluation and results.}

The task is considered as solved 
when the exponential moving average of the selection of the desired action 
at time $t = 0$  is equal to $1-\epsilon$, where $\epsilon$ is the exploration rate.
The performances of the compared methods are measured by the average learning time
in the number of episodes required to solve the task.
A Wilcoxon signed-rank test is performed between the learning time of RUDDER 
and those of the other methods.
Statistical significance p-values are obtained by Wilcoxon signed-rank test.
RUDDER with reward redistribution is significantly faster than all other methods with p-values $<10^{-8}$. 
Table~\ref{tab:res1} reports the number of episodes required by different
methods to solve the task.  
RUDDER with reward redistribution clearly outperforms all other methods. 

%%%%%%%%%%%%%% FIGURE %%%%%%%%%%%%
%\begin{figure}[htp]
% \centering%
% \includegraphics[width=\linewidth]{figures/thechoice_lineal.pdf}
% \caption{Number of time steps to solve the task for MC, TD($\lambda$) and RUDDER }
%\end{figure}

%%%%%%%%%%%%%% TABLE %%%%%%%%%%%%
\newpage
\begin{landscape}
\begin{table}[htp]
\begin{flushleft}
\caption{Number of episodes required by different 
 methods to solve the grid world task with delayed reward. Numbers give the  
 mean and the standard deviation over 100 trials.
 RUDDER with reward redistribution clearly outperforms all other TD methods.}
\label{tab:res1}%
\begin{tabular}{*{1}{>{\raggedright}p{6em}}*{1}{>{\columncolor{mColor1}\raggedleft}p{4em}}*{1}{>{\columncolor{mColor1}\raggedleft}p{5em}}*{1}{>{\columncolor{mColor1}\raggedright}p{5em}}*{1}{>{\raggedleft}p{4em}}*{1}{>{\raggedleft}p{5em}}*{1}{>{\raggedright}p{5em}}*{1}{>{\columncolor{mColor1}\raggedleft}p{4em}}*{1}{>{\columncolor{mColor1}\raggedleft}p{5em}}*{1}{>{\columncolor{mColor1}\raggedright}p{5em}}*{1}{>{\raggedright}p{0.01em}}}
\toprule[1pt]
\addlinespace[2pt]
{\bf Method} &\multicolumn{3}{c}{\bf Delay 10} &\multicolumn{3}{c}{\bf Delay 15} &\multicolumn{3}{c}{\bf Delay 20} &\\
\toprule[1pt]
RUDDER & 3520.06 & {\small $\pm$ 2343.79} & {\small p = 5.00E-01} & 3062.07 & {\small $\pm$ 1278.92} & {\small p = 5.00E-01} & 3813.96 & {\small $\pm$ 2738.18} & {\small p = 5.00E-01} &  \\
MC & 10920.64 & {\small $\pm$ 7550.04} & {\small p = 5.03E-24} & 17102.89 & {\small $\pm$ 12640.09} & {\small p = 1.98E-30} & 22910.85 & {\small $\pm$ 19149.02} & {\small p = 1.25E-28} &  \\
Q & 66140.76 & {\small $\pm$ 1455.33} & {\small p = 1.28E-34} & 115352.25 & {\small $\pm$ 1962.20} & {\small p = 1.28E-34} & 171571.94 & {\small $\pm$ 2436.25} & {\small p = 1.28E-34} &  \\
\addlinespace[1pt]

\end{tabular}
\end{flushleft}

\begin{flushleft}
\begin{tabular}{*{1}{>{\raggedright}p{6em}}*{1}{>{\raggedleft}p{4em}}*{1}{>{\raggedleft}p{5em}}*{1}{>{\raggedright}p{5em}}*{1}{>{\columncolor{mColor1}\raggedleft}p{4em}}*{1}{>{\columncolor{mColor1}\raggedleft}p{5em}}*{1}{>{\columncolor{mColor1}\raggedright}p{5em}}*{1}{>{\raggedleft}p{4em}}*{1}{>{\raggedleft}p{5em}}*{1}{>{\raggedright}p{5em}}*{1}{>{\raggedright}p{0.01em}}}
\toprule[1pt]
\addlinespace[2pt]
{\bf Method} &\multicolumn{3}{c}{\bf Delay 25} &\multicolumn{3}{c}{\bf Delay 30} &\multicolumn{3}{c}{\bf Delay 35} &\\
\toprule[1pt]
MC & 39772 & {\small $\pm$ 47460} & {\small p < 1E-29} & 41922 & {\small $\pm$ 36618} & {\small p < 1E-30} & 50464 & {\small $\pm$ 60318} & {\small p < 1E-30} &  \\
Q & 234912 & {\small $\pm$ 2673} & {\small p < 1E-33} & 305894 & {\small $\pm$ 2928} & {\small p < 1E-33} & 383422 & {\small $\pm$ 4346} & {\small p < 1E-22} &  \\
RUDDER & 4112 & {\small $\pm$ 3769} &  & 3667 & {\small $\pm$ 1776} &  & 3850 & {\small $\pm$ 2875} & &  \\
\addlinespace[1pt]
\end{tabular}
\end{flushleft}

\begin{flushleft}
\begin{tabular}{*{1}{>{\raggedright}p{6em}}*{1}{>{\columncolor{mColor1}\raggedleft}p{4em}}*{1}{>{\columncolor{mColor1}\raggedleft}p{5em}}*{1}{>{\columncolor{mColor1}\raggedright}p{5em}}*{1}{>{\raggedleft}p{4em}}*{1}{>{\raggedleft}p{5em}}*{1}{>{\raggedright}p{5em}}*{1}{>{\columncolor{mColor1}\raggedleft}p{4em}}*{1}{>{\columncolor{mColor1}\raggedleft}p{5em}}*{1}{>{\columncolor{mColor1}\raggedright}p{5em}}*{1}{>{\raggedright}p{0.01em}}}
\toprule[1pt]
\addlinespace[2pt]
{\bf Method} &\multicolumn{3}{c}{\bf Delay 40} &\multicolumn{3}{c}{\bf Delay 45} &\multicolumn{3}{c}{\bf Delay 50} &\\
\toprule[1pt]
MC & 56945 & {\small $\pm$ 54150} & {\small p < 1E-30} & 69845 & {\small $\pm$ 79705} & {\small p < 1E-31} & 73243 & {\small $\pm$ 70399} & {\small p = 1E-31} &  \\
Q & 466531 & {\small $\pm$ 3515} & {\small p = 1E-22} &  &  &  &  &  &  &  \\
RUDDER & 3739 & {\small $\pm$ 2139} &  & 4151 & {\small $\pm$ 2583} &  & 3884 & {\small $\pm$ 2188} &  &  \\
\addlinespace[1pt]
\end{tabular}
\end{flushleft}

\begin{flushleft}
\begin{tabular}{*{1}{>{\raggedright}p{14em}}*{1}{>{\columncolor{mColor1}\raggedleft}p{4em}}*{1}{>{\columncolor{mColor1}\raggedleft}p{5em}}*{1}{>{\columncolor{mColor1}\raggedright}p{5em}}*{1}{>{\raggedleft}p{4em}}*{1}{>{\raggedleft}p{5em}}*{1}{>{\raggedright}p{5em}}*{1}{>{\raggedright}p{0.01em}}}
\toprule[1pt]
\addlinespace[2pt]
{\bf Method} &\multicolumn{3}{c}{\bf Delay 100} &\multicolumn{3}{c}{\bf Delay 500} &\\
\toprule[1pt]
MC & 119568 & {\small $\pm$ 110049} & {\small p < 1E-11} & 345533 & {\small $\pm$ 320232} & {\small p < 1E-16} &  \\
RUDDER & 4147 & {\small $\pm$ 2392} &  & 5769 & {\small $\pm$ 4309} &  &  \\
\addlinespace[1pt]
\end{tabular}
\end{flushleft}
\end{table}

\end{landscape}

\newpage
\subsubsection{Task(III): Trace-Back}
This section supports the artificial task (III) -- {\bf Trace-Back} -- in the main paper.
RUDDER is compared to potential-based reward shaping methods.
In this experiment, we compare reinforcement learning methods that
have to transfer back information about a delayed reward.
These methods comprise RUDDER, TD($\lambda$) and potential-based reward shaping approaches.
For potential-based reward shaping we compare the original 
{\em reward shaping} \cite{Ng:99}, {\em look-forward advice}, 
and {\em look-back advice}~\cite{Wiewiora:03}
with three different potential functions.
Methods that transfer back reward information
are characterized by low variance estimates
of the value function or the action-value function, 
since they use an estimate of the future return instead of the
future return itself.
To update the estimates of the future returns, 
reward information has to be transferred back.
The task in this experiment can be solved by Monte Carlo
estimates very fast, which do not transfer back information but use samples of 
the future return for the estimation instead. 
However, Monte Carlo methods have high variance, which is
not considered in this experiment. 
% Is it intended to have blank spaces surrounding the paragraph text?
\paragraph{ The environment } is a 15$\times$15 grid, where 
actions move the agent from its current position 
in 4 adjacent positions ({\em up, down, left, right}), 
except the agent would be moved outside the grid. 
The number of steps (moves) per episode is $T=20$. 
The starting position is $(7,7)$ in the middle of the grid. 
The maximal return is a combination of negative immediate reward 
and positive delayed reward.
To obtain the maximum return, 
the policy must move the agent {\em up} in the time step $t=1$ and  
{\em right} in the following time step $t=2$. In this case, the 
agent receives an immediate reward of -50 at $t=2$ and 
a delayed reward of 150 at the end of the episode at $t=20$, 
that is, a return of 100.
Any other combination of actions gives the agent immediate reward of 50 at $t=2$ without
any delayed reward, that is, a return of 50.
To ensure Markov properties the position of the agent, the time, as well as the delayed reward
are coded in the state. The future reward discount rate $\gamma$ is set to 1.
The state transition probabilities are deterministic for the first two moves.
For $t>2$ and for each action, state transition probabilities 
are equal for each possible next state (uniform distribution), 
meaning that 
actions after $t=2$ do not influence the return.
For comparisons of long delays, 
both the size of the grid and the length of the episode are increased. 
For a delay of $n$, a $(3n/4) \times (3n/4)$ grid is used
with an episode length of $n$, 
and starting position $(3n/8, 3n/8)$.

\paragraph{Compared methods.}
We compare different TD($\lambda$) and potential-based reward shaping methods.
For TD($\lambda$), the baseline is $Q(\lambda)$, 
with eligibility traces $\lambda=0.9$ and $\lambda=0$ and Watkins' implementation \cite{Watkins:89}. 
The potential-based reward shaping methods are the original reward shaping, 
look-ahead advice as well as look-back advice. 
For look-back advice, we use SARSA($\lambda$)~\cite{Rummery:94} instead of $Q(\lambda)$ 
as suggested by the authors~\cite{Wiewiora:03}. 
$Q$-values are represented by a state-action table, that is, we consider only
tabular methods.
In all experiments an $\epsilon$-greedy policy with $\epsilon = 0.2$ is used.
All three reward shaping methods 
require a potential function $\phi$,
which is based on the reward redistribution ($\tilde{r}_t$) 
in three different ways:
%SEPP: the following i do not understand
%implementations (expectations are taken over episodes) and the prediction 
%of the LSTM network $\hat{q}(s,a)$ as a third implementation of the potential 
%function $\phi$:

(I) The Potential function $\phi$ is the difference of LSTM predictions, 
which is the redistributed reward $R_t$:
\begin{align}
    \phi (s_t) \ &= \ \EXP \left[ R_{t+1} \mid s_t \right]\ \ \mbox{ or } \\ 
    \phi (s_t,a_t) \ &= \ \EXP \left[ R_{t+1} \mid s_t , a_t \right] \ .
\end{align}
(II) The potential function $\phi$ is the sum of 
future redistributed rewards, i.e.\ 
the q-value of the redistributed rewards. 
In the optimal case, this coincides with implementation (I):
\begin{align}
    \phi (s_t) \ &= \ \EXP  \left[ \sum^{T}_{\tau = t} R_{\tau +1} \mid s_t \right] 
    \ \ \mbox{ or } \\  
    \phi (s_t,a_t) \ &= \ \EXP  \left[ \sum^{T}_{\tau = t} R_{\tau+1} \mid s_t, a_t \right] \ .
\end{align}
(III) The potential function $\phi$ corresponds to the LSTM predictions. In the optimal case this corresponds
to the accumulated reward up to $t$ plus the q-value of the delayed MDP:
\begin{align}
    \phi (s_t) \ &= \ \EXP  \left[ \sum^{T}_{\tau = 0} \tilde{R}_{\tau +1} \mid s_t \right] 
    \ \ \mbox{ or } \\ 
    \phi (s_t,a_t) \ &= \ \EXP  \left[ \sum^{T}_{\tau = 0} \tilde{R}_{\tau+1} \mid s_t, a_t \right] \ .
\end{align}

%{\bf Note:}
%Only implementation 3 makes sense in my opinion, following by the implementation 2.

The following methods are compared:
\begin{enumerate}
    \item $Q$-learning with eligibility traces according to Watkins ($Q(\lambda)$), 
    \item SARSA with eligibility traces (SARSA($\lambda$)),
    \item Reward Shaping with potential functions (I), (II), or (III) 
    according to $Q$-learning and eligibility traces according to Watkins,
    \item Look-ahead advise with potential functions (I), (II), or (III) 
    with $Q(\lambda)$,
    \item Look-back advise with potential functions (I), (II), or (III) 
    with SARSA($\lambda$),
    \item RUDDER with reward redistribution for $Q$-value estimation and RUDDER applied on top of $Q$-learning.
\end{enumerate}

RUDDER is implemented with an LSTM architecture 
without output gate nor forget gate. For this experiments, 
RUDDER does not use lessons buffer nor safe exploration. 
For contribution analysis we use differences of return predictions.
For RUDDER, the $Q$-values are estimated by an exponential moving average 
(RUDDER $Q$-value estimation) or alternatively by $Q$-learning. 

\paragraph{ Performance evaluation: }
The task is considered solved when the exponential moving average of the return 
is above 90, which is 90\% of the maximum return. 
Learning time is the number of episodes required to solve the task. 
The first evaluation criterion is the average learning time.
The $Q$-value differences at time step $t=2$ are monitored. 
The $Q$-values at $t=2$ are the most important ones, since they have to predict 
whether the maximal return will be received or not.
At $t=2$ the immediate reward acts as a distraction since it is -50 for the action leading  
to the maximal return ($a^+$) and 50 for all other actions ($a^-$).  
At the beginning of learning, the $Q$-value difference 
between $a^+$ and $a^-$ is about -100, since the immediate reward is -50 and 50, respectively.
Once the $Q$-values converge to the optimal policy, the difference approaches 50. 
However, the task will already be correctly solved as soon as this difference is positive.
The second evaluation criterion is the $Q$-value differences at time step $t=2$,
since it directly shows to what extend the task is solved.

\paragraph{ Results: }
Table~\ref{tab:res1} reports the number of episodes required by different
methods to solve the task. 
The mean and the standard deviation over 100 trials are given.
%%% JOHANNES: 100 vs 50 TRIALS
A Wilcoxon signed-rank test is performed between the learning time of RUDDER 
and those of the other methods.
Statistical significance p-values are obtained by Wilcoxon signed-rank test.
RUDDER with reward redistribution is significantly faster than all other methods with p-values $<10^{-17}$. 
Tables~\ref{tab:Ares1},\ref{tab:Ares2} report the results for all methods.
%Figure~\ref{fig:res1} shows curves along the learning time of the 
%difference of $Q$-values at time $t=2$ for different methods
%averaged over 100 trials.
%RUDDER with reward redistribution clearly learns much faster 
%than all other TD based methods.

%from here
\newpage
\begin{landscape}
\begin{table}[htp]
\begin{center}
\caption{Number of episodes required by different 
 methods to solve the Trace-Back task with delayed reward. The numbers represent the  
 mean and the standard deviation over 100 trials.
 RUDDER with reward redistribution significantly outperforms all other methods.
 %%% JOHANNES: AGAIN, MEAN AND STANDARD DEVIATION???? 
 }
\label{tab:Ares1}%
\begin{tabular}{*{1}{>{\raggedright}p{14em}}*{1}{>{\columncolor{mColor1}\raggedleft}p{4em}}*{1}{>{\columncolor{mColor1}\raggedleft}p{5em}}*{1}{>{\columncolor{mColor1}\raggedright}p{5em}}*{1}{>{\raggedleft}p{4em}}*{1}{>{\raggedleft}p{5em}}*{1}{>{\raggedright}p{5em}}*{1}{>{\columncolor{mColor1}\raggedleft}p{4em}}*{1}{>{\columncolor{mColor1}\raggedleft}p{5em}}*{1}{>{\columncolor{mColor1}\raggedright}p{5em}}*{1}{>{\raggedright}p{0.01em}}}
\toprule[1pt]
\addlinespace[2pt]
{\bf Method} &\multicolumn{3}{c}{\bf Delay 6} &\multicolumn{3}{c}{\bf Delay 8} &\multicolumn{3}{c}{\bf Delay 10} &\\
\toprule[1pt]
Look-back I & 6074 & {\small $\pm$ 952} & {\small p = 1E-22} & 13112 & {\small $\pm$ 2024} & {\small p = 1E-22} & 21715 & {\small $\pm$ 4323} & {\small p = 1E-06} &  \\
Look-back II & 4584 & {\small $\pm$ 917} & {\small p = 1E-22} & 9897 & {\small $\pm$ 2083} & {\small p = 1E-22} & 15973 & {\small $\pm$ 4354} & {\small p = 1E-06} &  \\
Look-back III & 4036.48 & {\small $\pm$ 1424.99} & {\small p = 5.28E-17} & 7812.72 & {\small $\pm$ 2279.26} & {\small p = 1.09E-23} & 10982.40 & {\small $\pm$ 2971.65} & {\small p = 1.03E-07} &  \\
Look-ahead I & 14469.10 & {\small $\pm$ 1520.81} & {\small p = 1.09E-23} & 28559.32 & {\small $\pm$ 2104.91} & {\small p = 1.09E-23} & 46650.20 & {\small $\pm$ 3035.78} & {\small p = 1.03E-07} &  \\
Look-ahead II & 12623.42 & {\small $\pm$ 1075.25} & {\small p = 1.09E-23} & 24811.62 & {\small $\pm$ 1986.30} & {\small p = 1.09E-23} & 43089.00 & {\small $\pm$ 2511.18} & {\small p = 1.03E-07} &  \\
Look-ahead III & 16050.30 & {\small $\pm$ 1339.69} & {\small p = 1.09E-23} & 30732.00 & {\small $\pm$ 1871.07} & {\small p = 1.09E-23} & 50340.00 & {\small $\pm$ 2102.78} & {\small p = 1.03E-07} &  \\
Reward Shaping I & 14686.12 & {\small $\pm$ 1645.02} & {\small p = 1.09E-23} & 28223.94 & {\small $\pm$ 3012.81} & {\small p = 1.09E-23} & 46706.50 & {\small $\pm$ 3649.57} & {\small p = 1.03E-07} &  \\
Reward Shaping II & 11397.10 & {\small $\pm$ 905.59} & {\small p = 1.09E-23} & 21520.98 & {\small $\pm$ 2209.63} & {\small p = 1.09E-23} & 37033.40 & {\small $\pm$ 1632.24} & {\small p = 1.03E-07} &  \\
Reward Shaping III & 12125.48 & {\small $\pm$ 1209.59} & {\small p = 1.09E-23} & 23680.98 & {\small $\pm$ 1994.07} & {\small p = 1.09E-23} & 40828.70 & {\small $\pm$ 2748.82} & {\small p = 1.03E-07} & \\
$Q(\lambda)$ & 14719.58 & {\small $\pm$ 1728.19} & {\small p = 1.09E-23} & 28518.70 & {\small $\pm$ 2148.01} & {\small p = 1.09E-23} & 44017.20 & {\small $\pm$ 3170.08} & {\small p = 1.03E-07} &  \\
SARSA($\lambda$) & 8681.94 & {\small $\pm$ 704.02} & {\small p = 1.09E-23} & 23790.40 & {\small $\pm$ 836.13} & {\small p = 1.09E-23} & 48157.50 & {\small $\pm$ 1378.38} & {\small p = 1.03E-07} &  \\
RUDDER $Q(\lambda)$ & 726.72 & {\small $\pm$ 399.58} & {\small p = 3.49E-04} & 809.86 & {\small $\pm$ 472.27} & {\small p = 3.49E-04} & 906.13 & {\small $\pm$ 514.55} & {\small p = 3.36E-02} &  \\
RUDDER & 995.59 & {\small $\pm$ 670.31} & {\small p = 5.00E-01} & 1128.82 & {\small $\pm$ 741.29} & {\small p = 5.00E-01} & 1186.34 & {\small $\pm$ 870.02} & {\small p = 5.00E-01} &  \\
\end{tabular}

\begin{tabular}{*{1}{>{\raggedright}p{14em}}*{1}{>{\raggedleft}p{4em}}*{1}{>{\raggedleft}p{5em}}*{1}{>{\raggedright}p{5em}}*{1}{>{\columncolor{mColor1}\raggedleft}p{4em}}*{1}{>{\columncolor{mColor1}\raggedleft}p{5em}}*{1}{>{\columncolor{mColor1}\raggedright}p{5em}}*{1}{>{\raggedleft}p{4em}}*{1}{>{\raggedleft}p{5em}}*{1}{>{\raggedright}p{5em}}*{1}{>{\raggedright}p{0.01em}}}
\toprule[1pt]
\addlinespace[2pt]
{\bf Method} &\multicolumn{3}{c}{\bf Delay 12} &\multicolumn{3}{c}{\bf Delay 15} &\multicolumn{3}{c}{\bf Delay 17} &\\
\toprule[1pt]
Look-back I & 33082.56 & {\small $\pm$ 7641.57} & {\small p = 1.09E-23} & 49658.86 & {\small $\pm$ 8297.85} & {\small p = 1.28E-34} & 72115.16 & {\small $\pm$ 21221.78} & {\small p = 1.09E-23} &  \\
Look-back II & 23240.16 & {\small $\pm$ 9060.15} & {\small p = 1.09E-23} & 29293.94 & {\small $\pm$ 7468.94} & {\small p = 1.28E-34} & 42639.38 & {\small $\pm$ 17178.81} & {\small p = 1.09E-23} &  \\
Look-back III & 15647.40 & {\small $\pm$ 4123.20} & {\small p = 1.09E-23} & 20478.06 & {\small $\pm$ 5114.44} & {\small p = 1.28E-34} & 26946.92 & {\small $\pm$ 10360.21} & {\small p = 1.09E-23} &  \\
Look-ahead I & 66769.02 & {\small $\pm$ 4333.47} & {\small p = 1.09E-23} & 105336.74 & {\small $\pm$ 4977.84} & {\small p = 1.28E-34} & 136660.12 & {\small $\pm$ 5688.32} & {\small p = 1.09E-23} &  \\
Look-ahead II & 62220.56 & {\small $\pm$ 3139.87} & {\small p = 1.09E-23} & 100505.05 & {\small $\pm$ 4987.16} & {\small p = 1.28E-34} & 130271.88 & {\small $\pm$ 5397.61} & {\small p = 1.09E-23} &  \\
Look-ahead III & 72804.44 & {\small $\pm$ 4232.40} & {\small p = 1.09E-23} & 115616.59 & {\small $\pm$ 5648.99} & {\small p = 1.28E-34} & 149064.68 & {\small $\pm$ 7895.48} & {\small p = 1.09E-23} &  \\
Reward Shaping I & 68428.04 & {\small $\pm$ 3416.12} & {\small p = 1.09E-23} & 107399.17 & {\small $\pm$ 5242.88} & {\small p = 1.28E-34} & 137032.14 & {\small $\pm$ 6663.12} & {\small p = 1.09E-23} &  \\
Reward Shaping II & 56225.24 & {\small $\pm$ 3778.86} & {\small p = 1.09E-23} & 93091.44 & {\small $\pm$ 5233.02} & {\small p = 1.28E-34} & 122224.20 & {\small $\pm$ 5545.63} & {\small p = 1.09E-23} &  \\
Reward Shaping III & 60071.52 & {\small $\pm$ 3809.29} & {\small p = 1.09E-23} & 99476.40 & {\small $\pm$ 5607.08} & {\small p = 1.28E-34} & 130103.50 & {\small $\pm$ 6005.61} & {\small p = 1.09E-23} &  \\
$Q(\lambda)$ & 66952.16 & {\small $\pm$ 4137.67} & {\small p = 1.09E-23} & 107438.36 & {\small $\pm$ 5327.95} & {\small p = 1.28E-34} & 135601.26 & {\small $\pm$ 6385.76} & {\small p = 1.09E-23} &  \\
SARSA($\lambda$) & 78306.28 & {\small $\pm$ 1813.31} & {\small p = 1.09E-23} & 137561.92 & {\small $\pm$ 2350.84} & {\small p = 1.28E-34} & 186679.12 & {\small $\pm$ 3146.78} & {\small p = 1.09E-23} &  \\
RUDDER $Q(\lambda)$ & 1065.16 & {\small $\pm$ 661.71} & {\small p = 3.19E-01} & 972.73 & {\small $\pm$ 702.92} & {\small p = 1.13E-04} & 1101.24 & {\small $\pm$ 765.76} & {\small p = 1.54E-01} &  \\
RUDDER & 1121.70 & {\small $\pm$ 884.35} & {\small p = 5.00E-01} & 1503.08 & {\small $\pm$ 1157.04} & {\small p = 5.00E-01} & 1242.88 & {\small $\pm$ 1045.15} & {\small p = 5.00E-01} &  \\
\end{tabular}
\end{center}
\end{table}
\end{landscape}

\begin{landscape}
\begin{table}[htp]
\begin{center}
\caption{Cont. Number of episodes required by different 
 methods to solve the Trace-Back task with delayed reward. The numbers represent the  
 mean and the standard deviation over 100 trials.
 RUDDER with reward redistribution significantly outperforms all other methods.
 %%% JOHANNES: AGAIN, MEAN AND STANDARD DEVIATION???? 
 }
\label{tab:Ares2}%
\begin{tabular}{*{1}{>{\raggedright}p{14em}}*{1}{>{\columncolor{mColor1}\raggedleft}p{4em}}*{1}{>{\columncolor{mColor1}\raggedleft}p{5em}}*{1}{>{\columncolor{mColor1}\raggedright}p{5em}}*{1}{>{\raggedleft}p{4em}}*{1}{>{\raggedleft}p{5em}}*{1}{>{\raggedright}p{5em}}*{1}{>{\raggedright}p{0.01em}}}
\toprule[1pt]
\addlinespace[2pt]
{\bf Method} &\multicolumn{3}{c}{\bf Delay 20} &\multicolumn{3}{c}{\bf Delay 25} &\\
\toprule[1pt]
Look-back I & 113873.30 & {\small $\pm$ 31879.20} & {\small p = 1.03E-07} &  &  &  &  \\
Look-back II & 56830.30 & {\small $\pm$ 19240.04} & {\small p = 1.03E-07} & 111693.34 & {\small $\pm$ 73891.21} & {\small p = 1.09E-23} &  \\
Look-back III & 35852.10 & {\small $\pm$ 11193.80} & {\small p = 1.03E-07} &  &  &  &  \\
Look-ahead I & 187486.50 & {\small $\pm$ 5142.87} & {\small p = 1.03E-07} &  &  &  &  \\
Look-ahead II & 181974.30 & {\small $\pm$ 5655.07} & {\small p = 1.03E-07} & 289782.08 & {\small $\pm$ 11984.94} & {\small p = 1.09E-23} &  \\
Look-ahead III & 210029.90 & {\small $\pm$ 6589.12} & {\small p = 1.03E-07} &  &  &  &  \\
Reward Shaping I & 189870.30 & {\small $\pm$ 7635.62} & {\small p = 1.03E-07} & 297993.28 & {\small $\pm$ 9592.30} & {\small p = 1.09E-23} &  \\
Reward Shaping II & 170455.30 & {\small $\pm$ 6004.24} & {\small p = 1.03E-07} & 274312.10 & {\small $\pm$ 8736.80} & {\small p = 1.09E-23} &  \\
Reward Shaping III & 183592.60 & {\small $\pm$ 6882.93} & {\small p = 1.03E-07} & 291810.28 & {\small $\pm$ 10114.97} & {\small p = 1.09E-23} &  \\
$Q(\lambda)$ & 186874.40 & {\small $\pm$ 7961.62} & {\small p = 1.03E-07} &  &  &  &  \\
SARSA($\lambda$) & 273060.70 & {\small $\pm$ 5458.42} & {\small p = 1.03E-07} & 454031.36 & {\small $\pm$ 5258.87} & {\small p = 1.09E-23} &  \\
RUDDER I & 1048.97 & {\small $\pm$ 838.26} & {\small p = 5.00E-01} & 1236.57 & {\small $\pm$ 1370.40} & {\small p = 5.00E-01} &  \\
RUDDER II & 1159.30 & {\small $\pm$ 731.46} & {\small p = 8.60E-02} & 1195.75 & {\small $\pm$ 859.34} & {\small p = 4.48E-01} &  \\
\end{tabular}

\end{center}
\end{table}
\end{landscape}

% to here

%\begin{figure}[htp]
% \centering%
% \includegraphics[width=\linewidth]{\figpath q_value_diff.pdf}
% \caption{Difference of $Q$-values at time $t=2$ for different methods
% applied to the grid world task with delayed reward. Curves are 
% averaged over 100 trials.
% RUDDER with reward redistribution clearly learns much faster than all other TD methods. \label{fig:res1} }
%\end{figure}

%\subsubsection{RUDDER compared to TD, MC, and MCTS}
%\label{sec:Aartificial}

%This section supports the artificial tasks (I) and (II) in the main paper. 
%Moreover, we report and additional artificial task (Charge-Discharge) 
%which is characterized by deterministic state transitions and deterministic rewards.

%%%%%%%%%%%%%%%%%%%%%%%%%%%%%%%%%%%%%%%%%%%%%%%%%
\subsubsection{Task (IV): Charge-Discharge}
\label{sec:Acharge_discharge}
The Charge-Discharge task depicted in Figure~\ref{fig:charge}
%We test RUDDER on another task, %explained before
is characterized by
deterministic reward and state transitions.
The environment consists of two states: 
{\em charged} $\mathtt{C}$ / {\em discharged} $\mathtt{D}$ 
and two actions {\em charge} $\mathtt{c}$ / {\em discharge}
$\mathtt{d}$.
The deterministic reward is $r(\mathtt{D},\mathtt{d})=1, r(\mathtt{C},\mathtt{d})=10,
r(\mathtt{D},\mathtt{c})=0$,
and $r(\mathtt{C},\mathtt{c})=0$. 
The reward $r(\mathtt{C},\mathtt{d})$ is accumulated 
for the whole episode and given only at time $T+1$,
%with $T \in \{3,5,\dots,17\}$, which 
%determines the maximal delay of a reward.
where $T$ corresponds to the maximal delay of the reward.
The optimal policy alternates between charging and discharging
to accumulate a reward of 10 every other time step. 
The smaller immediate reward of $1$ distracts the agent from 
the larger delayed reward. The distraction forces the agent to learn 
the value function well enough to distinguish between the contribution of the 
immediate and the delayed reward to the final return.  
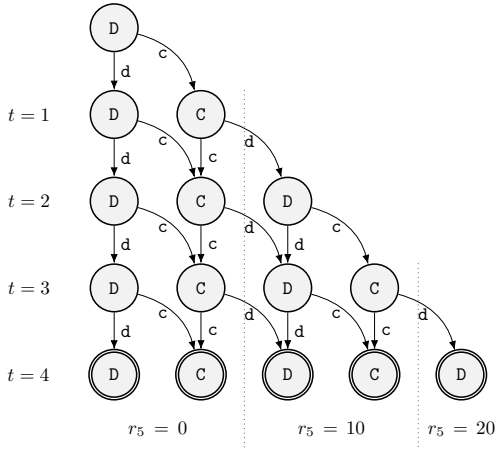
\begin{figure}[htp]%
 \centering%
 \resizebox{0.5\textwidth}{!}{\begin{tikzpicture}[node distance=1.6cm]
  \tikzstyle{every state}=[thick, fill=gray!10]
  %\tikzstyle{->}
  \tikzset{edge/.style = {->,> = latex'}}
  %\tikzstyle{>}=['stealth']
  \tikzstyle{initial text}=[" "]
  %\tikzstyle{every finalstate}=[fill=red,draw=none,text=white]

  \node[state] (s100)                    {$\mathtt{D}$};
  \node[state]         (s101) [below of=s100] {$\mathtt{D}$};
  \node[state]         (s201) [right of=s101] {$\mathtt{C}$};
  \node[state]         (s102) [below of=s101] {$\mathtt{D}$};
  \node[state]         (s202) [right of=s102] {$\mathtt{C}$};
  \node[state]         (s112) [right of=s202] {$\mathtt{D}$};
  %\node[state]         %(s212) [right of=s112] {$\mathtt{C}$};
  \node[state]         (s103) [below of=s102] {$\mathtt{D}$};
  \node[state]         (s203) [right of=s103]       {$\mathtt{C}$};
  \node[state]         (s113) [right of=s203] {$\mathtt{D}$};
  \node[state]         (s213) [right of=s113] {$\mathtt{C}$};
  \node[state,accepting]          (s104) [below of=s103] {$\mathtt{D}$};
  \node[state,accepting]          (s204) [right of=s104]       {$\mathtt{C}$};
  \node[state,accepting]          (s114) [right of=s204] {$\mathtt{D}$};
  \node[state,accepting]          (s214) [right of=s114] {$\mathtt{C}$};
  \node[state,accepting]         (s124) [right of=s214] {$\mathtt{D}$};
  %\node[state]         (s1) [below right of=B] {$\mathtt{D}$};
  %\node[state]         (s2) [below of=D]       {$\mathtt{C}$};
  %\coordinate (t1) at below 1cm of s100

  %\path (A) edge              node {0,1,L} (B)
  %          edge              node {1,1,R} (C)
  %      (B) edge [loop above] node {1,1,L} (B)
  %          edge              node {0,1,L} (C)
  %      (C) edge              node {0,1,L} (D)
  %          edge [bend left]  node {1,0,R} (E)
  %      (D) edge [loop below] node {1,1,R} (D)
  %          edge              node {0,1,R} (A)
  %      (E) edge [bend left]  node {1,0,R} (A);
  
\path 
    (s100.south) -- coordinate (t1) (s100.south|-s101.north);
    %\draw[dotted,opacity=0.8] ([xshift=-2cm]s101.north east|-t1) -- ([xshift=+1cm]s201.north west|-t1);    
\path
    (s101.south) -- coordinate (t2) (s101.south|-s102.north);
    %\draw[dotted,opacity=0.8] ([xshift=-2cm]s102.north east|-t2) -- ([xshift=+1cm]s112.north west|-t2); 
\path
    (s102.south) -- coordinate (t3) (s102.south|-s103.north);
    %\draw[dotted,opacity=0.8] ([xshift=-2cm]s103.north east|-t3) -- ([xshift=+1cm]s213.north west|-t3); 
\path
    (s103.south) -- coordinate (t4) (s103.south|-s104.north);
    %\draw[dotted,opacity=0.8] ([xshift=-2cm]s104.north east|-t4) -- ([xshift=+1cm]s124.north west|-t4);     
\path
    (s203.west) -- coordinate (c1) (s203.west-|s113.east);
    \draw[dotted,opacity=0.8] (s101.north-|c1) -- ([yshift=-1cm]s104.south-|c1);   
\path
    (s213.west) -- coordinate (c2) (s213.west-|s124.east);
    \draw[dotted,opacity=0.8] (s103.north-|c2) -- ([yshift=-1cm]s104.south-|c2);   
    
\path   (s100) edge [-Latex,bend left]      node[ left] {$\mathtt{c}$}  (s201)
        (s201) edge  [-Latex,bend left]      node[left]  {$\mathtt{d}$} (s112)
        (s112) edge  [-Latex,bend left]      node[left]  {$\mathtt{c}$} (s213)
        (s213) edge  [-Latex,bend left]      node[left]  {$\mathtt{d}$} (s124)
        
        (s201) edge [-Latex]      node[right]  {$\mathtt{c}$} (s202)
        (s202) edge [-Latex]      node[right]  {$\mathtt{c}$} (s203)
        (s203) edge [-Latex]      node[right]  {$\mathtt{c}$} (s204)

        (s101) edge  [-Latex, bend left]      node[left] {$\mathtt{c}$} (s202)
        (s202) edge  [-Latex, bend left]     node[left] {$\mathtt{d}$} (s113)
        (s113) edge  [-Latex, bend left]      node[left] {$\mathtt{c}$} (s214)
        
        (s102) edge  [-Latex, bend left]      node[left] {$\mathtt{c}$} (s203)
        (s203) edge  [-Latex, bend left]      node[left] {$\mathtt{d}$} (s114)
        
        (s103) edge  [-Latex, bend left]      node[left] {$\mathtt{c}$} (s204)
        
        (s100) edge [-Latex]      node[right]  {$\mathtt{d}$} (s101)
        (s101) edge [-Latex]      node[right]  {$\mathtt{d}$} (s102)
        (s102) edge [-Latex]      node[right]  {$\mathtt{d}$} (s103)
        (s103) edge [-Latex]      node[right]  {$\mathtt{d}$} (s104)

        (s112) edge [-Latex]      node[right]  {$\mathtt{d}$} (s113)
        (s113) edge [-Latex]      node[right]  {$\mathtt{d}$} (s114)
        
        (s213) edge [-Latex]      node[right]  {$\mathtt{c}$} (s214)
;

\node[text width=1cm] at ([xshift=-1cm]s101.west) {$t =1$};
\node[text width=1cm] at ([xshift=-1cm]s102.west) {$t =2$};
\node[text width=1cm] at ([xshift=-1cm]s103.west) {$t=3$};
\node[text width=1cm] at ([xshift=-1cm]s104.west) {$t=4$};

\path
    (s103.west) -- coordinate (cd1) (s103.west-|s203.east);
\path
    (s114.west) -- coordinate (cd2) (s114.west-|s214.east);
\path
    (s124.west) -- coordinate (cd3) (s124.west-|s124.east);

\node[text width=2cm,align=center] at ([yshift=-1cm]s104-|cd1) { $r_5=0$};
\node[text width=2cm, align=center] at ([yshift=-1cm]s104-|cd2) {$r_5=10$};
\node[text width=2cm, align=center] at ([yshift=-1cm]s104-|cd3) {$r_5=20$};

\end{tikzpicture}}%
 \caption{The Charge-Discharge task with two basic states: {\em charged} $\mathtt{C}$ and 
 {\em discharged} $\mathtt{D}$.
 In each state the actions charge $\mathtt{c}$ leading to 
 the charged state $\mathtt{C}$ 
 and discharge $\mathtt{d}$ leading to discharged state $\mathtt{D}$ are possible. 
 Action $\mathtt{d}$ in the discharged state $\mathtt{D}$ leads to a small immediate reward of $1$ and in the charged state $\mathtt{C}$ 
 to a delayed reward of $10$. 
 After sequence end $T=4$, the accumulated delayed reward $r_{T+1}=r_5$ is given. \label{fig:charge}}
\end{figure}%

For this task, 
the RUDDER backward analysis is based on 
monotonic LSTMs
and on layer-wise relevance propagation (LRP). 
The reward redistribution provided by RUDDER uses an 
LSTM which consists of $5$ memory cells 
and is trained with Adam and a learning rate of $0.01$. 
The reward redistribution is used to learn an optimal policy by $Q$-learning and by MC
with a learning rate of $0.1$ and an exploration rate of $0.1$. 
Again, we use {\em sample updates} for $Q$-learning and MC \cite{Sutton:18book}.
The learning is stopped either if the agent achieves $90$\% of the reward of the optimal policy or after a maximum number of $10$ million episodes. 
For each $T$ and each method, 100 runs with different seeds are performed to obtain statistically relevant results. For delays with runs which did not finish within 100m episodes we estimate parameters like described in Paragraph~\ref{sec:Alr_estimation}.

% [JAMnew]
\subsubsection{Task (V): Solving Trace-Back using policy gradient methods}
In this experiment, we compare policy gradient methods instead of $Q$-learning based methods. 
These methods comprise RUDDER on top of PPO with and without GAE, and a baseline PPO using GAE.
The environment and performance evaluation are the same as reported in Task III.
Again, RUDDER is exponentially faster than PPO. RUDDER on top of PPO is slightly better with GAE than without.

\begin{figure}[h]
 \centering%
 \resizebox{0.7\linewidth}{!}{%
    \input{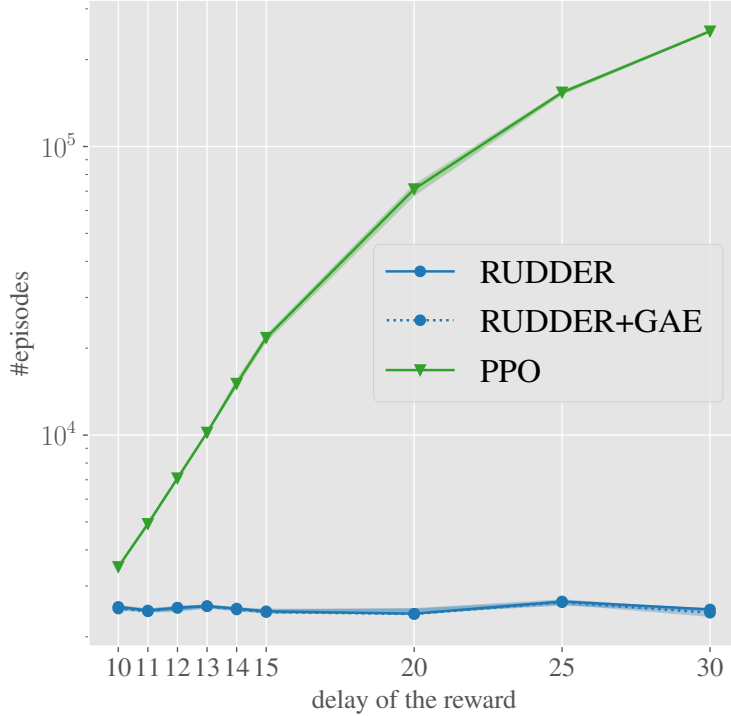}% 
 }
\caption{Comparison of performance of RUDDER with GAE (RUDDER+GAE) and without GAE (RUDDER) and PPO with GAE (PPO) on artificial task V
with respect to the learning time in episodes (median of 100 trials) in log scale
vs.\ the delay of the reward.
The shadow bands indicate the $40\%$ and $60\%$ quantiles.
Again, RUDDER significantly outperforms all other methods.\label{fig:ppo}}%
\end{figure}%

%%%%%%%%%%%%%%%%%%%%%%%%%%%%%%%%%%%%%%%%%%%%%%%%%%%%%%%%%%

% Will be reworked to something like this structure:

% Input data (Atari, episode ends, maxframes, num games, versions)

% Architecture
%  -2vf!

% Lessons buffer

% Exploration
%  -Safe exploration
%  -Began Scaling

% Game processing and Update/Target Design
%  - Chunking
%  - TBPTT
%  - Aux. Targets, Target, Loss
%  - Quality measure
%  - vf orig + vf rr

% Evaluation
%  - no safe exploration!
%  - Model Selection
%  - Testing

% Results
%  - Best Models
%  - Training graphs
%  - Detailed test scores

\subsection{Atari Games}
\label{sec:Aatari}
In this section we describe the implementation of RUDDER for Atari games.
The implementation is largely based on the {\em OpenAI baselines}
package \cite{Dhariwal:17} for the RL components and our package
for the LSTM reward redistribution model, which will be announced upon publication.
If not specified otherwise, standard input processing, such as skipping $3$ frames and stacking $4$ frames, 
is performed by the {\em OpenAI baselines} package.

We consider the 52 Atari games that were compatible with OpenAI baselines,
Arcade Learning Environment (ALE) \cite{Bellemare:13}, and OpenAI Gym \cite{Brockman:16}.
Games are divided into episodes,
i.e.\ the loss of a life or the exceeding of 108k frames
trigger the start of a new episode without resetting the environment.
Source code will be made available at upon publication.

\subsubsection{Architecture}
\label{sec:Aatari-arch}
We use a modified PPO architecture and a separate reward redistribution model. While parts of the two could be combined, this separation allows for better comparison between the PPO baseline with and without RUDDER.

\paragraph{PPO architecture.}
The design of the policy and the value network relies 
on the {\em ppo2} implementation \cite{Dhariwal:17}, which  
is depicted in Figure~\ref{fig:atari-arch} and 
summarized in Table~\ref{tab:atari-arch}.
The network input, 4 stacked Atari game frames \cite{Mnih:15},
is processed by 3 convolution layers with ReLU activation functions,
followed by a fully connected layer with ReLU activation functions.
For PPO with RUDDER,
2 output units, for the original and redistributed reward value function,
and another set of output units for the policy prediction are applied.
For the PPO baseline without RUDDER,
the output unit for the redistributed reward value function is omitted.

\paragraph{Reward redistribution model.}
Core of the reward redistribution model
is an LSTM layer containing 64 memory cells
with sigmoid gate activations, tanh input nonlinearities, 
and identity output activation functions, 
as illustrated in Figure~\ref{fig:atari-arch} 
and summarized in Table~\ref{tab:atari-arch}.
This LSTM implementation omits output gate and forget gate to simplify the network dynamics.
Identity output activation functions were chosen to support 
the development of linear counting dynamics within the LSTM layer, 
as is required to count the reward pieces during an episode chunk.
Furthermore, the input gate is only connected recurrently to other LSTM blocks and the
cell input is only connected to forward connections from the lower layer.
For the vision system the same architecture was used as with the PPO network,
with the first convolution layer being doubled to process
$\Delta$ frames and full frames separately in the first layer.
Additionally, the memory cell layer receives the vision feature activations
of the PPO network, the current action, and the approximate in-game time as inputs.
No gradients from the reward redistribution network are propagated over the connections to the PPO network.
After the LSTM layer, the reward redistribution model
has one output node for the prediction $\widehat{\returnrealization}$
of the return realization $\returnrealization$ of the return variable $G_0$.
The reward redistribution model has 4 additional
output nodes for the auxiliary tasks as described in Section~\ref{sec:Aatari-taupdate}.

\begin{figure}[htp]%
 \centering%
 \resizebox{0.5\textwidth}{0.5\textwidth}{\input{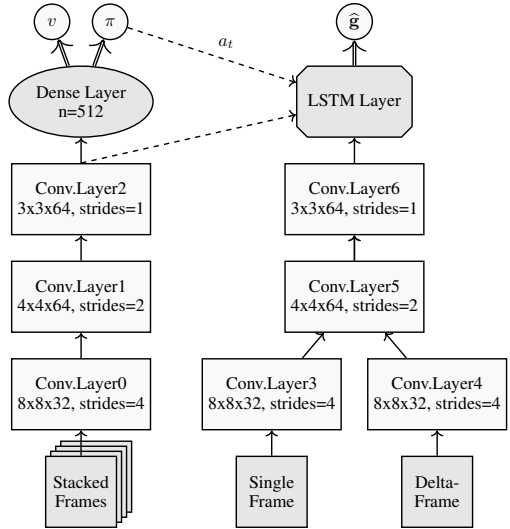}}% alternative: a2c_rr_architecture.tik
 \caption{RUDDER architecture for Atari games as described in Section~\ref{sec:Aatari-arch}. 
 Left: The {\em ppo2} implementation \cite{Dhariwal:17}.
 Right: LSTM reward redistribution architecture.
 The reward redistribution network has access to the PPO vision features (dashed lines)
 but no gradient is propagated between the networks.
 The LSTM layer receives the current action and an approximate 
 in-game-time as additional input.
 The PPO outputs $v$ for value function prediction and $\pi$ for policy prediction each represent multiple output nodes: the original and redistributed reward value function prediction for $v$ and the outputs for all of the available actions for $\pi$.
 Likewise, the reward redistribution network output $\widehat{\returnrealization}$ represents multiple outputs, 
 as described in Section~\ref{sec:Aatari-taupdate}
 Details on layer configuration are given in Table~\ref{tab:atari-arch}.\label{fig:atari-arch}}%
\end{figure}%

\begin{table}[htp]%
  \resizebox{\linewidth}{!}{%
  \begin{tabular}[t]{lllllll}
    \toprule
Layer&Specifications&&Layer&Specifications&\\
    \midrule
Conv.Layer 0&features&32&Conv.Layer 4&features&32\\
&kernelsize&8x8&&kernelsize&8x8\\
&striding&4x4&&striding&4x4\\
&act&ReLU&&act&ReLU\\
&initialization&orthogonal, \lstinline{gain}=$\sqrt{2}$&&initialization&orthogonal, \lstinline{gain}=$0.1$\\
Conv.Layer 1&features&64&Conv.Layer 5&features&64\\
&kernelsize&4x4&&kernelsize&4x4\\
&striding&2x2&&striding&2x2\\
&act&ReLU&&act&ReLU\\
&initialization&orthogonal, \lstinline{gain}=$\sqrt{2}$&&initialization&orthogonal, \lstinline{gain}=$0.1$\\
Conv.Layer 2&features&64&Conv.Layer 6&features&64\\
&kernelsize&3x3&&kernelsize&3x3\\
&striding&1x1&&striding&1x1\\
&act&ReLU&&act&ReLU\\
&initialization&orthogonal, \lstinline{gain}=$\sqrt{2}$&&initialization&orthogonal, \lstinline{gain}=$0.1$\\
Dense Layer&features&512&LSTM Layer&cells&64\\
&act&ReLU&&gate act.&sigmoid\\
&initialization&orthogonal, \lstinline{gain}=$\sqrt{2}$&&ci act.&tanh\\
Conv.Layer 3&features&32&&output act.&linear\\
&kernelsize&8x8&&bias ig&trunc.norm., \lstinline{mean}$=-5$\\
&striding&4x4&&bias ci&trunc.norm., \lstinline{mean}$=0$\\
&act&ReLU&&fwd.w. ci&trunc.norm., \lstinline{scale}$=0.0001$\\
&initialization&orthogonal, \lstinline{gain}=$0.1$&&fwd.w. ig&omitted\\
&&&&rec.w. ci&omitted\\
&&&&rec.w. ig&trunc.norm., \lstinline{scale}$=0.001$\\
&&&&og&omitted\\
&&&&fg&omitted\\
    \bottomrule
  \end{tabular}}%
  \caption{Specifications of PPO and RUDDER architectures 
as shown in Figure~\ref{fig:atari-arch}.
Truncated normal initialization has the default values 
\lstinline{mean}$=0$, \lstinline{stddev}$=1$ and is optionally multiplied by a factor \lstinline{scale}.
\label{tab:atari-arch}}%
\end{table}%

\subsubsection{Lessons Replay Buffer}
\label{sec:Alessonbuffer}
The lessons replay buffer is realized as
a priority-based buffer
containing up to $128$ samples.
New samples are added to the buffer if (i) the buffer is not filled or
if (ii) the new sample is considered more important than the least important sample in the buffer,
in which case the new sample replaces the least important sample.

Importance of samples for the buffer is determined based on a combined ranking of
(i) the reward redistribution model error and
(ii) the difference of the sample return to the mean return of all samples in the lessons buffer.
Each of these two rankings contributes equally to the final ranking of the sample. Samples with higher loss and greater difference to the mean return achieve a higher ranking.

Sampling from the lessons buffer is performed as a sampling from a softmax function on the sample-losses in the buffer. Each sample is a sequence of $512$ consecutive transitions, as described in the last paragraph of Section~\ref{sec:Aatari-taupdate}.

\subsubsection{Game Processing, Update Design, and Target Design}\label{sec:Aatari-taupdate}
Reward redistribution is performed in an online fashion as new transitions are sampled from the environment. This allows to keep the original update schema of the PPO baseline, while still using the redistributed reward for the PPO updates. Training of the reward redistribution model is done separately on the lessons buffer samples from Section~\ref{sec:Alessonbuffer}. These processes are described in more detail in the following paragraphs.

\paragraph{Reward Scaling.}
As described in the main paper,
rewards for the PPO baseline and RUDDER are scaled based on the maximum return per sample encountered during training so far.
With $i$ samples sampled from the environment
and a maximum return of
$\returnrealization_{i}^{\max} = \max_{1\leq j \leq i}\{\ABS{\returnrealization_j}\}$
encountered, the scaled reward $r_{\nn}$ is
\begin{align}
    r_{\nn} \ &= \ \frac{10 \ r}{\returnrealization_{i}^{\max}} \ .
\end{align}

Goal of this scaling is to normalize the reward $r$ to range $[-10, 10]$ 
with a linear scaling, suitable for training the PPO and reward redistribution model.
Since the scaling is linear, the original proportions between rewards are kept.
Downside to this approach is that if a new
maximum return is encountered,
the scaling factor is updated, and the models have to readjust.

\paragraph{Reward redistribution.}
Reward redistribution is performed using differences of return predictions of the LSTM network.
That is, the differences of the reward redistribution model prediction $\widehat{\returnrealization}$
at time step $t$ and $t-1$ serve as contribution analysis and thereby give the redistributed reward $r_t = \widehat{\returnrealization}_t - \widehat{\returnrealization}_{t-1}$.
This allows for online reward redistribution on the sampled transitions
before they are used to train the PPO network,
without waiting for the game sequences to be completed.

To assess the current quality of the reward redistribution model,
a quality measure based on the relative absolute error of 
the prediction $\widehat{\returnrealization}_T$ at the last time step $T$ is introduced:
\begin{align}
    \text{\tt quality} \ &= \ 1 \ - \ \frac{\ABS{\returnrealization - \widehat{\returnrealization}_T}}{\mu} \  \frac{1}{1-\epsilon},
\end{align}
with $\epsilon$ as quality threshold of $\epsilon=80\%$ and the maximum
possible error $\mu$ as $\mu=10$ due to the reward scaling applied.
{\tt quality} is furthermore clipped to be within range $[0, 1]$.

\paragraph{PPO model.}
\label{c:ppomodel}
The {\em ppo2} implementation \cite{Dhariwal:17}
samples from the environment using multiple agents in parallel.
These agents play individual environments but share all weights,
i.e.\ they are distinguished by random effects in the environment
or by exploration.
The value function and policy network is trained online on a batch of transitions
sampled from the environment.
Originally, the policy/value function network updates are adjusted using a policy loss,
a value function loss, and an entropy term, each with dedicated scaling factors \cite{Schulman:17}.
To decrease the number of hyperparameters,
the entropy term scaling factor is adjusted automatically using Proportional Control
to keep the policy entropy in a predefined range.

We use two value function output units to predict the value functions of 
the original and the redistributed reward.
For the PPO baseline without RUDDER, the output unit for the redistributed reward is omitted.
Analogous to the {\em ppo2} implementation, these two value function predictions 
serve to compute the advantages used to scale the policy gradient updates.
For this, the advantages for original reward $a_{o}$ and redistributed reward $a_{r}$ are combined
as a weighted sum $a = a_{o} \  (1 - \text{\tt qualityv}) + a_{r} \ \text{\tt quality}$.
The PPO value function loss term $L_{v}$ is replaced by the sum of the value function $v_o$
loss $L_{o}$ for the original reward  and
the scaled value function $v_r$ loss $L_{r}$ for the redistributed reward,
such that $L_{v} = L_{o} + L_{r} \ \text{\tt quality}$.
Parameter values were taken from
the original paper \cite{Schulman:17} and implementation \cite{Dhariwal:17}.
Additionally, a coarse hyperparameter search was performed with value function coefficients 
$\{0.1, 1, 10\}$
and replacing the static entropy coefficient by a Proportional Control scaling of 
the entropy coefficient.
The Proportional Control target entropy was linearly decreased from $1$ to $0$ over 
the course of training.
PPO baseline hyperparamters were used for PPO with RUDDER without changes.

Parameter values are listed in Table~\ref{tab:atari_params}.

\paragraph{Reward redistribution model.}
The loss of the reward redistribution model for a sample is composed of four parts.
(i) The main loss $L_m$,
which is the squared prediction loss of $\returnrealization$ at the last time step $T$ of the episode
\begin{align}
    L_m \ &= \ \left(\returnrealization \ - \ \widehat{\returnrealization}_T\right)^2 \ ,
\end{align}
(ii) the continuous prediction loss $L_c$ of $\returnrealization$ at each time step
\begin{align}
    L_c \ &= \ \frac{1}{T+1} \ \sum_{t=0}^T\left(\returnrealization ´
    \ -  \ \widehat{\returnrealization}_t\right)^2\ ,
\end{align}
(iii) the loss $L_e$ of the prediction of the output at $t+10$ at each time step $t$
\begin{align}
    L_e \ &= \ \frac{1}{T-9} \ \sum_{t=0}^{T-10}\left(\widehat{\returnrealization}_{t+10}
    \ - \ \widehat{\left(\widehat{\returnrealization}_{t+10}\right)}_t\right)^2 \ ,
\end{align}
as well as (iv) the loss on 3 auxiliary tasks.
At every time step $t$,
these auxiliary tasks are
(1) the prediction of the action-value function $\widehat{q}_t$,
(2) the prediction of the accumulated original reward $\tilde{r}$ in the next 10 frames 
$\sum_{i=t}^{t+10} \tilde{r}_i$,
and (3) the prediction of the accumulated reward in the next 50 frames
$\sum_{i=t}^{t+50} \tilde{r}_i$,
resulting in the final auxiliary loss $L_a$ as
\begin{align}
    L_{a1} \ &= \ \frac{1}{T+1} \ \sum_{t=0}^T\left(q_t \ - \ \widehat{q}_t\right)^2 \ ,\\
    L_{a2} \ &= \ \frac{1}{T-9} \ \sum_{t=0}^{T-10}\left(\sum_{i=t}^{t+10} \tilde{r}_i 
    \ - \ \widehat{\left(\sum_{i=t}^{t+10} \tilde{r}_i\right)}_t\right)^2 \ ,\\
    L_{a3} \ &= \ \frac{1}{T-49} \ \sum_{t=0}^{T-50}\left(\sum_{i=t}^{t+50} \tilde{r}_i 
    \ - \ \widehat{\left(\sum_{i=t}^{t+50} \tilde{r}_i\right)}_t\right)^2\ ,\\
    L_a \ &= \ \frac{1}{3} \ \left(L_{a1} \ + \ L_{a2}\ +\ L_{a3}\right)\ .
\end{align}

The final loss for the reward redistribution model is then computed as
\begin{align}
    L \ &= \ L_m \ + \ \frac{1}{10} \ \left(L_c \ + \ L_e \ + \ L_a \right) \ .
\end{align}

The continuous prediction and earlier prediction losses $L_c$ and $L_e$
push the reward redistribution model toward performing an optimal reward redistribution.
This is because important events that are redundantly encoded 
in later states are stored as early as possible.
Furthermore, the auxiliary loss $L_a$ speeds up learning 
by adding more information about the original immediate rewards to the updates.

The reward redistribution model is only trained on the lessons buffer.
Training epochs on the lessons buffer are performed every $10^4$ PPO updates or if a new sample was added to the lessons buffer.
For each such training epoch, 8 samples are sampled from the lessons buffer.
Training epochs are repeated until the reward redistribution quality 
is sufficient ($\text{\tt quality} > 0$) for all replayed samples in the last 5 training epochs.

The reward redistribution model is not trained or used until
the lessons buffer contains at least 32 samples
and samples with different return have been encountered.

Parameter values are listed in Table~\ref{tab:atari_params}.

\begin{table}[htp]
\begin{center}
\begin{tabular}{lrclr}
\toprule
\multicolumn{2}{c}{PPO} & & \multicolumn{2}{c}{RUDDER}\\
learning rate & $2.5 \cdot 10^{-4}$ & & learning rate & $10^{-4}$ \\
%SEPP: is it $2.5 \cdot 10^{-4}$
policy coefficient & $1.0$ & & $L_2$ weight decay  & $10^{-7}$ \\
initial entropy coefficient & $0.01$ & & gradient clipping  & $0.5$ \\
value function coefficient & $1.0$ & & optimization & ADAM \\
\bottomrule
\end{tabular}
\end{center}
\caption{Left: Update parameters for PPO model.
Entropy coefficient is scaled via Proportional Control 
with the target entropy linearly annealed from $1$ to $0$ over the course of learning.
Unless stated otherwise,
default parameters of {\em ppo2} implementation \cite{Dhariwal:17} are used.
Right: Update parameters for reward redistribution model of RUDDER.
\label{tab:atari_params}}%
% \caption{Update parameters for reward redistribution model. \label{tab:atari_params}}% training frames
\end{table}

\paragraph{Sequence chunking and Truncated Backpropagation Through Time (TBPTT).}
Ideally, RUDDER would be trained on completed game sequences,
to consequently redistribute the reward within a completed game.
To shorten computational time for learning the reward redistribution model,
the model is not trained on completed game sequences
but on sequence chunks consisting of $512$ time steps.
The beginning of such a chunk is treated as beginning of a new episode for the model
and ends of episodes within this chunk reset the state of the LSTM,
so as to not redistribute rewards between episodes.
To allow for updates on sequence chunks even if the game sequence is not completed,
the PPO value function prediction is used to estimate the expected future reward at the end of the chunk.

Utilizing TBPTT to further speed up LSTM learning,
gradients for the reward redistribution LSTM are cut after every 128 time steps.

%The final loss $L_t$ for training the LSTM model is
%\begin{align} \nonumber
%  l1 \ &= \ \sum_{i=t}^{t+10} r_i \ , \quad
%  l2 \ &= \ \sum_{i=t}^{t+10} r_i \ , \quad
%  l3 \ &= \ \sum_{i=t}^{t+50} r_i \ , \quad
%  r_{0,t} \ = \ \sum_{i=0}^{t} r_i  \ , \quad
%  r_{t+1,T-t} \ = \ \sum_{i=t}^{T} r_{i+1} \ , \quad
%  G_0 \ = \ \sum_{i=0}^{T+1} r_i  \ , \\ \nonumber
%  L \ &= \ \left(G_0 -\widehat{G_0} \right)^2
%  \\ \nonumber &+ \ \frac{1}{3\ T} \ \left(
%  \sum_{t=0}^T \left(r_{t+1,T-t}-\widehat{r_{t+1,T-t}}\right)^2
%  \ + \ \sum_{t=0}^T \left( r_{t,10} \ - \ \widehat{r_{t,10}} \right)^2
%  \ + \ \sum_{t=0}^T \left( r_{0,t}  \ - \ \widehat{r_{0,t}}\right)^2 \right) \ .
%\end{align}
%
%We used hats to indicate predictions where all prediction are made at time $t$,
%except $G_0$ which is predicted at sequence end.
%$r_0$ serves to redistribute
%reward to the very beginning of the episode,
%e.g.\ the average reward of the current policy.
%The action-value function is $q_t=\EXP[r_{t+1,T-t}]$ and
%$r_{0,t}$ is the $Q$-value difference between immediate and
%delayed reward according to Proposition~\ref{th:AreturnEqu}.
%The occasionally very high raw reward from the environment
%is divided by a constant factor of 100 for numerical stability.

\subsubsection{Exploration}
\label{sec:Aatari-exploration}
Safe exploration to increase the likelihood of observing 
delayed rewards is an important feature of RUDDER.
We use a safe exploration strategy, which is realized by normalizing the output 
of the policy network to range $[0,1]$ and randomly picking one of the actions 
that is above a threshold $\theta$. 
Safe exploration is activated once per sequence at a random sequence position 
for a random duration between 0 and the average game length $\bar{l}$. 
Thereby we encourage long but safe off-policy trajectories within parts of the game sequences.
% $\theta$ is linearly increased from an agent 
% with the strongest exploration $\theta=\theta_{\min}=0.001$ 
% to an on-policy agent with $\theta=1$.
Only 2 of the 8 parallel actors use safe exploration with $\theta_1 = 0.001$ and $\theta_1 = 0.5$, respectively.
All actors sample from the softmax policy output.

To avoid policy lag during safe exploration transitions,
we use those transitions only to update
the reward redistribution model but not the PPO model.

\subsubsection{Results}
Training curves for 3 random seeds for PPO baseline and PPO with RUDDER are shown in Figure~\ref{fig:atari_training} and scores are listed in Table~\ref{tab:full_atari_scores} for all 52 Atari games. Training was conducted over 200M game frames (including skipped frames), as described in the experiments section of the main paper. 

We investigated failures and successes of RUDDER in different Atari games.
RUDDER failures were observed to be mostly due to LSTM failures and comprise e.g. slow learning in Breakout, explaining away in Double Dunk, spurious redistributed rewards in Hero, overfitting to the first levels in Qbert, and exploration problems in MontezumaRevenge.
RUDDER successes were observed to be mostly due to redistributing rewards to important key actions that would otherwise not receive reward, such as moving towards the built igloo in Frostbite, diving up for refilling oxygen in Seaquest, moving towards the treasure chest in Venture, and shooting at the shield of the enemy boss UFO, thereby removing its shield.

\begin{figure}[htp]%{0.95\textheight}
 \centering%
 \resizebox{\textwidth}{!}{\input{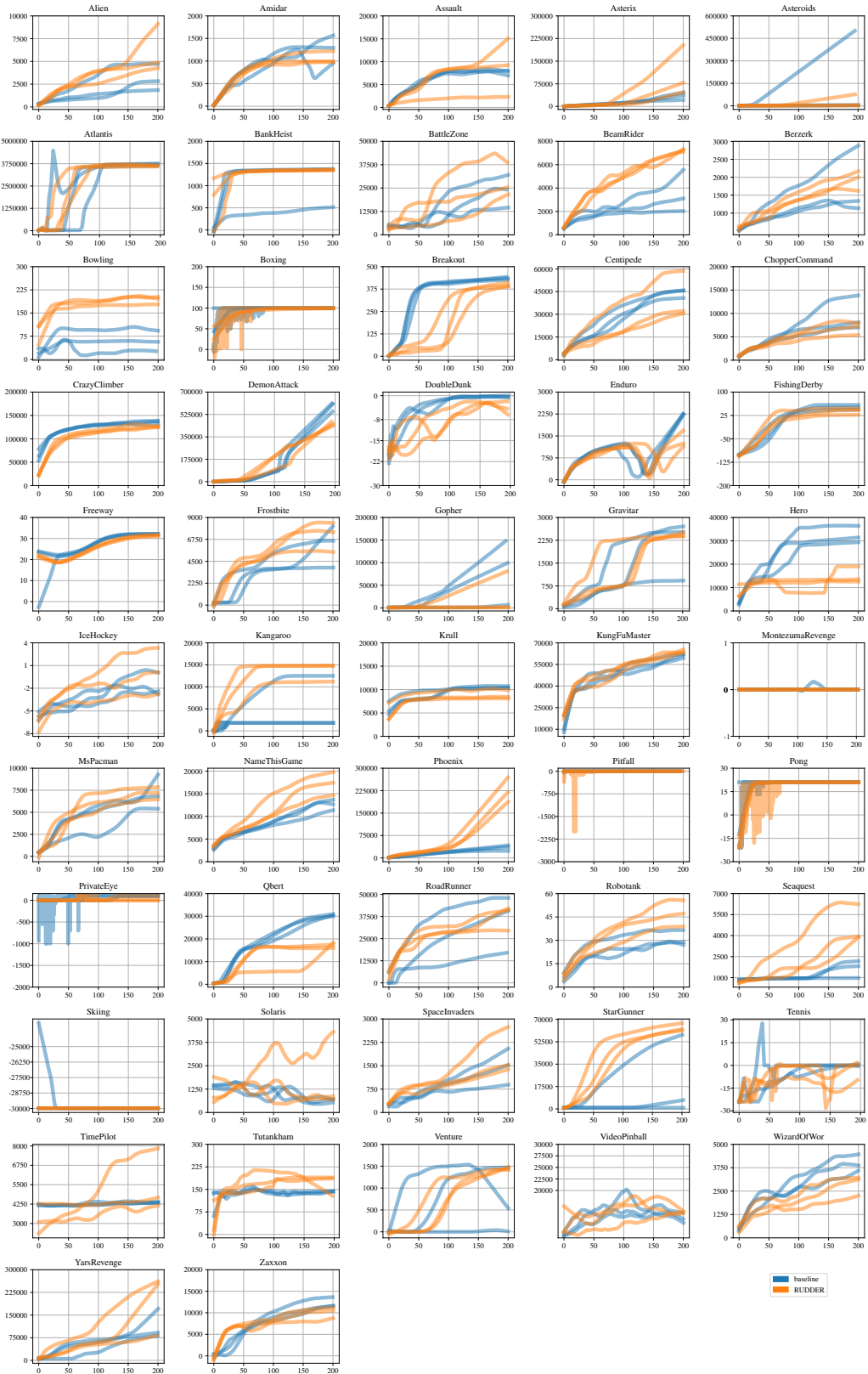}}% alternative: a2c_rr_architecture.tik
 \caption{Training curves for PPO baseline and PPO with RUDDER over 200M game frames, 3 runs with different random seeds each. Curves show scores during training of a single agent that does not use safe exploration, smoothed using Locally Weighted Scatterplot Smoothing (y-value estimate using 20\% of data with 10 residual-based re-weightings).\label{fig:atari_training}}%
\end{figure}%

% \begin{figure}[htp]
%  \centering%
%  \includegraphics[width=1\textwidth]{\figpath atari_training_curves.pdf}
%  \caption{Training curves for PPO baseline and PPO with RUDDER over 100M game frames, 3 runs with different random seeds each. Curves show scores during training of a single agent that does not use safe exploration, smoothed using Locally Weighted Scatterplot Smoothing (y-value estimate using 10\% of data with 10 residual-based re-weightings).\label{fig:atari_training}}%
% \end{figure}

\begin{table}[htp]
\begin{center}
\resizebox{0.9\textwidth}{!}{%
\begin{tabular}{lrrrrrr}
\toprule
 & \multicolumn{3}{c}{{\em average}} & \multicolumn{3}{c}{{\em final}}\\
 & baseline & RUDDER & \% & baseline & RUDDER & \%\\
Alien & 1,878 & 3,087 & 64.4 & 3,218 & 5,703 & 77.3 \\
Amidar & 787 & 724 & -8.0 & 1,242 & 1,054 & -15.1 \\
Assault & 5,788 & 4,242 & -26.7 & 10,373 & 11,305 & 9.0 \\
Asterix & 10,554 & 18,054 & 71.1 & 29,513 & 102,930 & 249 \\
Asteroids & 22,065 & 4,905 & -77.8 & 310,505 & 154,479 & -50.2 \\
Atlantis & 1,399,753 & 1,655,464 & 18.3 & 3,568,513 & 3,641,583 & 2.0 \\
BankHeist & 936 & 1,194 & 27.5 & 1,078 & 1,335 & 23.8 \\
BattleZone & 12,870 & 17,023 & 32.3 & 24,667 & 28,067 & 13.8 \\
BeamRider & 2,372 & 4,506 & 89.9 & 3,994 & 6,742 & 68.8 \\
Berzerk & 1,261 & 1,341 & 6.4 & 1,930 & 2,092 & 8.4 \\
Bowling & 61.5 & 179 & 191 & 56.3 & 192 & 241 \\
Boxing & 98.0 & 94.7 & -3.4 & 100 & 99.5 & -0.5 \\
Breakout & 217 & 153 & -29.5 & 430 & 352 & -18.1 \\
Centipede & 25,162 & 23,029 & -8.5 & 53,000 & 36,383 & -31.4 \\
ChopperCommand & 6,183 & 5,244 & -15.2 & 10,817 & 9,573 & -11.5 \\
CrazyClimber & 125,249 & 106,076 & -15.3 & 140,080 & 132,480 & -5.4 \\
DemonAttack & 28,684 & 46,119 & 60.8 & 464,151 & 400,370 & -13.7 \\
DoubleDunk & -9.2 & -13.1 & -41.7 & -0.3 & -5.1 & -1,825 \\
Enduro & 759 & 777 & 2.5 & 2,201 & 1,339 & -39.2 \\
FishingDerby & 19.5 & 11.7 & -39.9 & 52.0 & 36.3 & -30.3 \\
Freeway & 26.7 & 25.4 & -4.8 & 32.0 & 31.4 & -1.9 \\
Frostbite & 3,172 & 4,770 & 50.4 & 5,092 & 7,439 & 46.1 \\
Gopher & 8,126 & 4,090 & -49.7 & 102,916 & 23,367 & -77.3 \\
Gravitar & 1,204 & 1,415 & 17.5 & 1,838 & 2,233 & 21.5 \\
Hero & 22,746 & 12,162 & -46.5 & 32,383 & 15,068 & -53.5 \\
IceHockey & -3.1 & -1.9 & 39.4 & -1.4 & 1.0 & 171 \\
Kangaroo & 2,755 & 9,764 & 254 & 5,360 & 13,500 & 152 \\
Krull & 9,029 & 8,027 & -11.1 & 10,368 & 8,202 & -20.9 \\
KungFuMaster & 49,377 & 51,984 & 5.3 & 66,883 & 78,460 & 17.3 \\
MontezumaRevenge & 0.0 & 0.0 & 38.4 & 0.0 & 0.0 & 0.0 \\
MsPacman & 4,096 & 5,005 & 22.2 & 6,446 & 6,984 & 8.3 \\
NameThisGame & 8,390 & 10,545 & 25.7 & 10,962 & 17,242 & 57.3 \\
Phoenix & 15,013 & 39,247 & 161 & 46,758 & 190,123 & 307 \\
Pitfall & -8.4 & -5.5 & 34.0 & -75.0 & 0.0 & 100 \\
Pong & 19.2 & 18.5 & -3.9 & 21.0 & 21.0 & 0.0 \\
PrivateEye & 102 & 34.1 & -66.4 & 100 & 33.3 & -66.7 \\
Qbert & 12,522 & 8,290 & -33.8 & 28,763 & 16,631 & -42.2 \\
RoadRunner & 20,314 & 27,992 & 37.8 & 35,353 & 36,717 & 3.9 \\
Robotank & 24.9 & 32.7 & 31.3 & 32.2 & 47.3 & 46.9 \\
Seaquest & 1,105 & 2,462 & 123 & 1,616 & 4,770 & 195 \\
Skiing & -29,501 & -29,911 & -1.4 & -29,977 & -29,978 & 0.0 \\
Solaris & 1,393 & 1,918 & 37.7 & 616 & 1,827 & 197 \\
SpaceInvaders & 778 & 1,106 & 42.1 & 1,281 & 1,860 & 45.2 \\
StarGunner & 6,346 & 29,016 & 357 & 18,380 & 62,593 & 241 \\
Tennis & -13.5 & -13.5 & 0.2 & -4.0 & -5.3 & -32.8 \\
TimePilot & 3,790 & 4,208 & 11.0 & 4,533 & 5,563 & 22.7 \\
Tutankham & 123 & 151 & 22.7 & 140 & 163 & 16.3 \\
Venture & 738 & 885 & 20.1 & 820 & 1,350 & 64.6 \\
VideoPinball & 19,738 & 19,196 & -2.7 & 15,248 & 16,836 & 10.4 \\
WizardOfWor & 3,861 & 3,024 & -21.7 & 6,480 & 5,950 & -8.2 \\
YarsRevenge & 46,707 & 60,577 & 29.7 & 109,083 & 178,438 & 63.6 \\
Zaxxon & 6,900 & 7,498 & 8.7 & 12,120 & 10,613 & -12.4 \\

\bottomrule
\end{tabular}
}%
\caption{Scores on all 52 considered Atari games for the PPO baseline and PPO with RUDDER and the improvement by using RUDDER in percent (\%).
Agents are trained for 200M game frames (including skipped frames) with {\em no-op starting condition},
i.e.\ a random number of up to 30 no-operation actions at the start of
each game.
Episodes are prematurely terminated if a maximum of 108K frames is reached.  
Scoring metrics are (a) {\em average},
the average reward per completed game throughout training, which favors fast learning \cite{Schulman:17} 
and (b) {\em final}, 
the average over the last 10 consecutive games at the end of training, 
which favors consistency in learning.
Scores are shown for one agent without safe exploration.}%
\label{tab:full_atari_scores}%
\end{center}
\end{table}

% [JAMnew]
\clearpage
{\bf Visual Confirmation of Detecting Relevant Events by Reward Redistribution.} 
We visually confirm a meaningful and helpful redistribution
of reward in both Bowling and Venture during training.
As illustrated in Figure~\ref{fig:ventureexample},
RUDDER is capable of redistributing a reward to key events in a game,
drastically shortening the delay of the reward and quickly steering
the agent toward good policies.
Furthermore, it enriches sequences that were sparse in reward
with a dense reward signal. Video demonstrations are available at \url{https://goo.gl/EQerZV}.
\begin{figure}[htp]
\includegraphics[angle=0,width=0.49\textwidth]{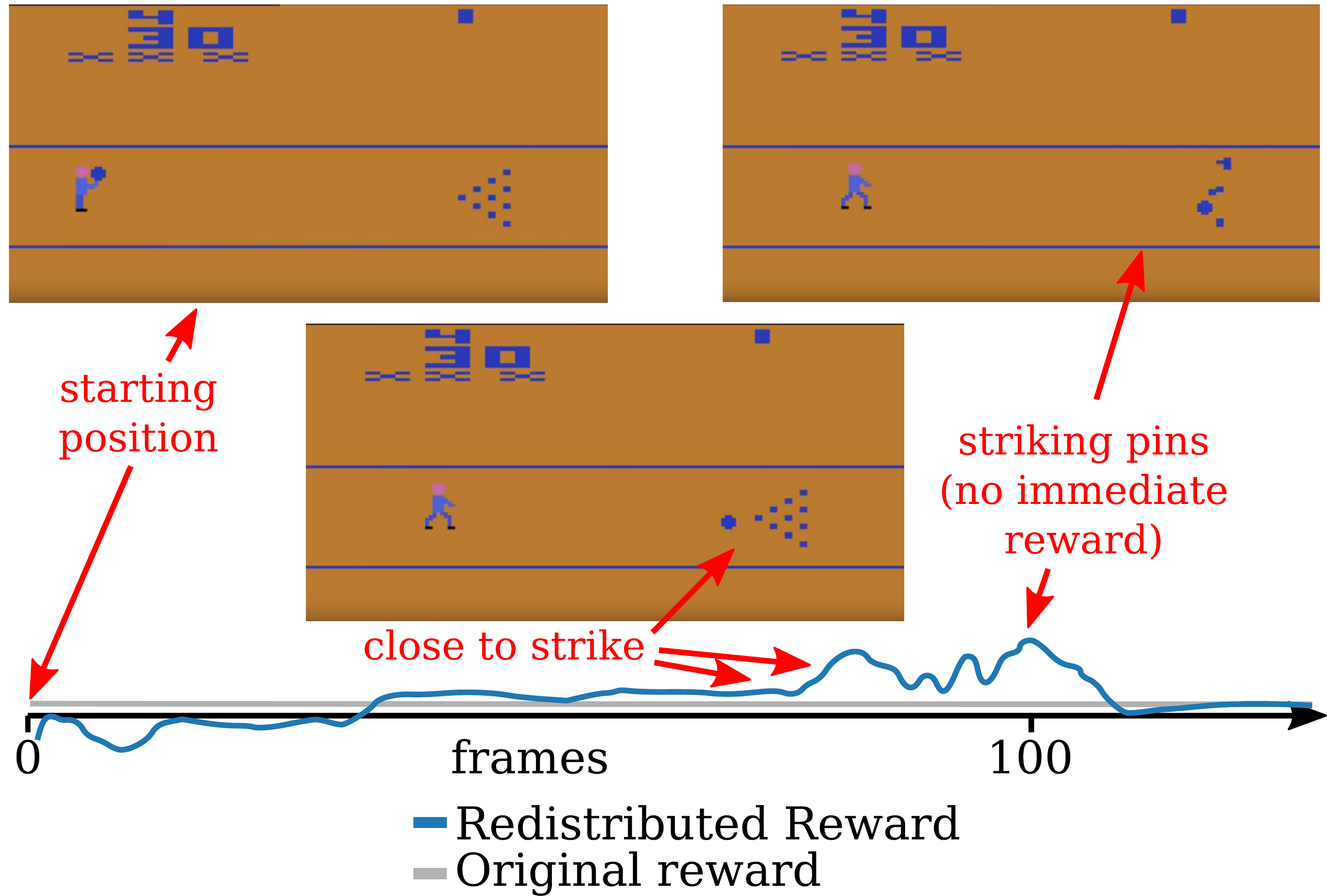} \hfill
\includegraphics[angle=0,width=0.49\textwidth]{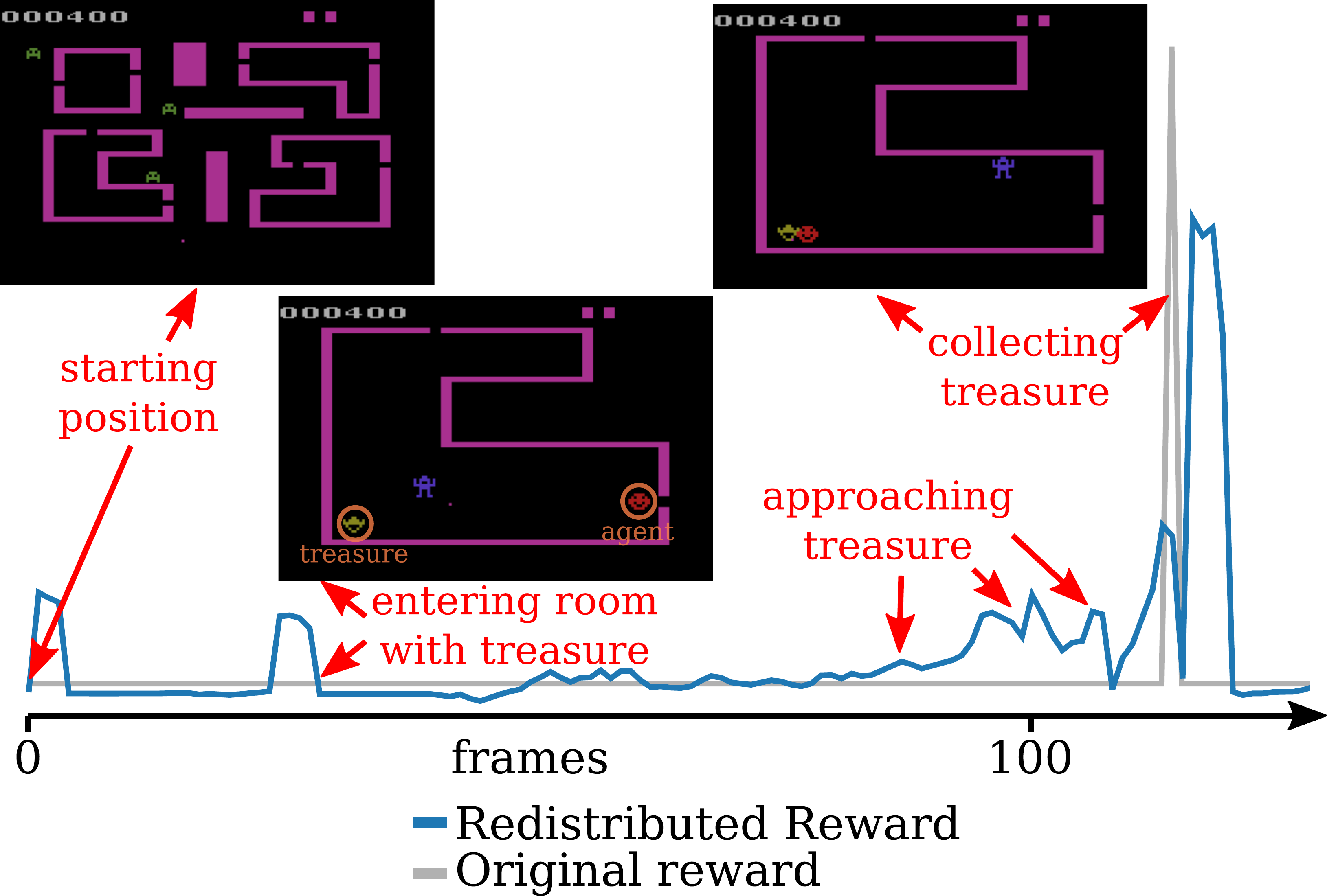}
\caption{Observed return decomposition by RUDDER in two Atari games with long delayed rewards.
{\bf Left:} In the game Bowling, reward is only given after a turn which consist of multiple rolls.
RUDDER identifies the actions that 
guide the ball in the right direction to hit all pins. 
Once the ball hit the pins, RUDDER detects the delayed reward 
associated with striking the pins down. 
In the figure only 100 frames are represented 
but the whole turn spans more than 200 frames. 
In the original game, the reward is given only at the end of the turn.
{\bf Right:} In the game Venture, reward is only obtained after picking the treasure.
RUDDER guides the agent (red) towards the treasure (golden) via reward redistribution.
Reward is redistributed to entering a room with treasure.
Furthermore, the redistributed reward gradually increases 
as the agent approaches the treasure.
For illustration purposes, the green curve shows the
return redistribution before applying lambda. 
The environment only gives reward at the event of 
collecting treasure (blue). \label{fig:ventureexample}}
\end{figure}
% [JAMnew]

%%%%%%%%%%%%%%%%%%%%%%%%%%%%%%%%% Discussion %%%%%%%%%%%%%%%%%%%%%%
\clearpage
\pagebreak
\section{Discussion and Frequent Questions}

\paragraph{RUDDER and reward rescaling.}
RUDDER works with no rescaling, various rescalings, and sign function 
as we have confirmed in additional experiments.
Rescaling ensures similar reward magnitudes across different Atari games, 
therefore the same hyperparameters can be used for all games. 
For LSTM and PPO, we only scale the original return by a constant factor, 
therefore do not change the problem and do not simplify it. 
The sign function, in contrast, may simplify the problem 
but may change the optimal policy.

\paragraph{RUDDER for infinite horizon: Continual Learning.}
RUDDER assumes a finite horizon problem.
For games and for most tasks in real world these assumptions apply: 
did you solve the task? (make tax declaration, convince a customer to buy, 
design a drug, drive a car to a location, 
assemble a car, build a building, clean the room, cook a meal, pass the Turing test). 
In general our approach can be extended to continual learning with discounted reward. 
Only the transformation of an immediate reward MDP 
to an MDP with episodic reward is no longer possible. 
However the delayed reward problem becomes more obvious and also 
more serious when not discounting the reward.

\paragraph{Is the LSTM in RUDDER a state-action value function?}
For reward redistribution we assume an MDP with one reward (=return) 
at sequence end which can be predicted from the last state-action pair. 
When introducing the $\Delta$-states, 
the reward cannot be predicted from the last $\Delta$ 
and the task is no longer Markov. 
However the return can be predicted from the sequence of $\Delta$s. 
Since the $\Delta$s are mutually independent, 
the contribution of each $\Delta$ to the return must be stored 
in the hidden states of the LSTM to predict the final reward. 
The $\Delta$ can be generic as states and actions can be numbered 
and then the difference of this numbers can be used for $\Delta$.

In the applications like Atari with immediate rewards 
we give the accumulated reward at the end of the episode without enriching the states. 
This has a similar effect as using $\Delta$. 
We force the LSTM to build up an internal state 
which tracks the already accumulated reward. 

True, the LSTM is the value function at time $t$ based on the 
$\Delta$ sub-sequence up to t. 
The LSTM prediction can be decomposed into two sub-predictions. 
The first sub-prediction is the contribution of the already known 
$\Delta$ sub-sequence up to t to the return (backward view). 
The second sub-prediction is the expected contribution 
of the unknown future sequence from t+1 onwards 
to the return (forward view). 
However, we are not interested in the second sub-prediction 
but only in the contribution of $\Delta_t$ to the prediction of the expected return. 
The second sub-prediction is irrelevant for our approach. 
We cancel the second sub-prediction via the differences of predictions. 
The difference at time t gives the contribution of $\Delta_t$ to the expected return.

Empirical confirmation:
Four years ago, 
we started this research project with using LSTM as a value function, but we failed. 
This was the starting point for RUDDER. 
In the submission, we used LSTM predictions in artificial task (IV) 
as potential function for reward shaping, look-ahead advice, and look-back advice. 
Furthermore, we investigated LSTM as a value function for artificial task (II) 
but these results have not been included.
At the time where RUDDER already solved the task, 
the LSTM error was too large to allow learning via a value function. 
Problem is the large variance of the returns at the beginning of the sequence 
which hinders LSTM learning (forward view). 
RUDDER LSTM learning was initiated by propagating back prediction errors at the sequence end, 
where the variance of the return is lower (backward view). 
These late predictions initiated the storing of key events at the sequence beginning 
even with high prediction errors. 
The redistributed reward at the key events led RUDDER solve the task. 
Concluding: at the time RUDDER solved the task, 
the early predictions are not learned due to the high variance of the returns. 
Therefore using the predictions as value function does not help (forward view).

%[remove?] 
Example: The agent has to take a key to open the door. 
Since it is an MDP, 
the agent is always aware to have the key indicated by a key bit to be on. 
The reward can be predicted in the last step. 
Using differences $\Delta$ the key bit is zero, 
except for the step where the agent takes the key. 
Thus, the LSTM has to store this event and will transfer reward to it.

\paragraph{Compensation reward.} % Maybe we remove this, right?
The compensation corrects for prediction errors of $g$ ($g$ is the sum of $h$). 
The prediction error of $g$ can have two sources: 
(1) the probabilistic nature of the reward, 
(2) an approximation error of g for the expected reward. 
We aim to make (2) small and then the correction is only for the probabilistic nature of the reward.
The compensation error depends on $g$, which, in turn, depends on the whole sequence. 
The dependency on state-action pairs from $t=0$ to $T-1$ is viewed as random effect, 
therefore the compensation reward only depends on the last state-action pair. 

That $h_t$ and $R_{t+1}$ depends only on $(s_t,a_t,s_{t-1},a_{t-1})$ 
is important to prove Theorem 3. % Correct number. Add reference.
Then $a_{t-1}$ cancels and the advantage function remains the same.

\paragraph{Connection theory and algorithms.}
Theorem 1 and Theorem 2 %correct and reference 
ensure that the algorithms are correct since 
the optimal policies do not change even for non-optimal return decompositions. 
In contrast to TD methods which are biased, 
Theorem 3 %correct and reference
shows that the update rule $Q$-value estimation 
is unbiased when assuming optimal decomposition. 
Theorem 4 %correct and reference
explicitly derives optimality conditions 
for the expected sum of delayed rewards “kappa” 
and measures the distance to optimality. 
This “kappa” is used for learning and 
is explicitly estimated to correct learning 
if an optimal decomposition cannot be assured. 
The theorems are used to justify following learning methods (A) and (B):

(A) Q-value estimation: 
(i) Direct Q-value estimation (not Q-learning) according to Theorem 3 %correct and reference
is given in Eq. (9) %correct and reference
when an optimal decomposition is assumed.
(ii) Q-value estimation with correction by kappa according to Theorem 4, %correct and reference
when optimal decomposition is not assumed. 
Here kappa is learned by TD as given in Eq. (10). %correct and reference
(iii) Q-value estimation using eligibility traces.
(B) Policy gradient: Theorems are used as for Q-value estimation as in (A) 
but now the Q-values serve for policy gradient.
(C) Q-learning: Here the properties in Theorem 3 %correct and reference
and Theorem 4 %correct and reference
are ignored.

We also shows variants (not in the main paper) on page 31 and 32 %correct and reference
of using kappa “Correction of the reward redistribution” 
by reward shaping with kappa and 
“Using kappa as auxiliary task in predicting the return for return decomposition”.

% I Think this paragraph is now obsolete. We don't have opt. return decomposition anymore.
\paragraph{Optimal Return Decomposition, contributions and policy.} %opt. REWARD redistr.
The Q-value $q^\pi$ depends on a particular policy $\pi$. 
The function h depends on policy $\pi$ since 
h predicts the expected return ($E_{\pi}[\tilde R_{T+1}]$) 
which depends on $\pi$. 
Thus, both return decomposition and optimal return decomposition 
are defined for a particular policy $\pi$. 
A reward redistribution from a return decomposition 
leads to a return equivalent MDP. 
Return equivalent MDPs are defined via all policies 
even if the reward redistribution was derived from a particular policy. 
A reward redistribution depends only on the state-action sequence 
but not on the policy that generated this sequence. 
Also $\Delta$ does not depend on a policy.

\paragraph{Optimal policies are preserve for every state.}
We assume all states are reachable via at least one non-zero transition probability 
to each state and policies that have a non-zero probability 
for each action due to exploration. 
For an MDP being optimal in the initial state 
is the same as being optimal in every reachable state. 
This follows from recursively applying the Bellman optimality equation 
to the initial value function. 
The values of the following states must be optimal 
otherwise the initial value function is smaller. 
Only states to which the transition probability is zero 
the Bellman optimality equation does not determine the optimality.

% To glue
All RL algorithms are suitable. 
For example we applied TD, Monte Carlo, Policy Gradient, 
which all work faster with the new MDP.

%[JAMnew]
\paragraph{Limitations.}
In all of the experiments reported in this manuscript, we show that RUDDER significantly outperforms other methods for delayed reward problems. However, RUDDER might not be effective when the reward is not delayed since LSTM learning takes extra time and has problems with very long sequences. Furthermore, reward redistribution may introduce disturbing spurious reward signals.
%[JAMnew]

\clearpage
\pagebreak
\section{Additional Related Work}

\paragraph{Delayed Reward.}

To learn delayed rewards there are three phases to consider:
(i) discovering the delayed reward,
(ii) keeping information about the delayed reward,
(iii) learning to receive the delayed reward to secure it for the future.
Recent successful reinforcement learning methods provide solutions to one
or more of these phases.
Most prominent are
Deep $Q$-Networks (DQNs) \cite{Mnih:13,Mnih:15}, which
combine $Q$-learning with convolutional neural networks for 
visual reinforcement learning \cite{Koutnik:13}.
The success of DQNs is attributed to
{\em experience replay} \cite{Lin:93}, which stores
observed state-reward transitions and then samples from them.
Prioritized experience replay \cite{Schaul:15,Horgan:18} advanced the
sampling from the replay memory.
Different policies perform exploration in parallel for the Ape-X DQN 
and share a prioritized experience replay memory \cite{Horgan:18}.
DQN was extended to double DQN (DDQN) \cite{Hasselt:10,Hasselt:16}
which helps exploration as the overestimation bias is reduced.
Noisy DQNs \cite{Fortunato:18} explore by
a stochastic layer in the policy network (see \cite{Hochreiter:90,Schmidhuber:90diff}).
Distributional $Q$-learning \cite{Bellemare:17} profits from noise since means that
have high variance are more likely selected.
The dueling network architecture \cite{Wang:15,Wang:16} separately
estimates state values and action advantages,
which helps exploration in unknown states.
Policy gradient approaches \cite{Williams:92} explore via parallel
policies, too. 
A2C has been improved by IMPALA through parallel actors and
correction for policy-lags between actors and learners \cite{Espeholt:18}.
A3C with asynchronous gradient descent \cite{Mnih:16}
and  Ape-X DPG \cite{Horgan:18} also rely on parallel policies.
Proximal policy optimization (PPO) extends A3C by a surrogate
objective and a trust region optimization that is realized by clipping or
a Kullback-Leibler penalty \cite{Schulman:17}.

Recent approaches aim to solve learning problems caused
by delayed rewards.
Function approximations of value functions or critics \cite{Mnih:15,Mnih:16}
bridge time intervals if states associated
with rewards are similar to states that were encountered many steps earlier.
For example, assume a function that has learned to predict a large reward
at the end of an episode
if a state has a particular feature. 
The function can generalize this
correlation to the beginning of an episode and predict already
high reward for states possessing the same feature.
Multi-step temporal difference (TD) learning \cite{Sutton:88td,Sutton:18book} 
improved both DQNs
and policy gradients \cite{Hessel:17,Mnih:16}.
AlphaGo and AlphaZero learned to play Go and Chess better than
human professionals using Monte Carlo Tree Search (MCTS) \cite{Silver:16,Silver:17}.
MCTS simulates games from a time point until the end of 
the game or an evaluation point and therefore captures long delayed rewards.
Recently, world models using an evolution strategy were successful \cite{Ha:18}.
These forward view approaches are not
feasible in probabilistic environments with a high branching factor of state transition.

\paragraph{Backward View.}

We propose learning from a backward view, which 
either learns a separate model or analyzes a forward model. 
Examples of learning a separate model are 
to trace back from known goal states
\cite{Edwards:18} or from high reward states \cite{Goyal:18}.
However, learning a backward model is very challenging.
When analyzing a forward model that predicts the return then either
sensitivity analysis or contribution analysis may be utilized.
The best known backward view approach
is sensitivity analysis (computing the gradient)
like ''Backpropagation through a Model´´
\cite{Munro:87,Robinson:89,RobinsonFallside:89,Werbos:90,Bakker:07}. 
Sensitivity analysis has several drawbacks:
local minima, instabilities, exploding or vanishing
gradients, and proper exploration
\cite{Hochreiter:90,Schmidhuber:90diff}.
The major drawback is that
the relevance of actions is missed
since sensitivity analysis does not consider their contribution to 
the output but
only their effect on the output when slightly perturbing them.

We use contribution analysis
since sensitivity analysis has serious drawbacks.
Contribution analysis determines how much a state-action pair
contributes to the final prediction.
To focus on state-actions which are most relevant for learning 
is known from prioritized sweeping 
for model-based reinforcement learning \cite{Moore:93}. 
Contribution analysis can be done 
by computing differences of return predictions when adding another input,
by zeroing out an input and then compute the change in the prediction,
by contribution-propagation \cite{Landecker:13},
by a contribution approach \cite{Poulin:06},
by excitation backprop \cite{Zhang:16},
by layer-wise relevance propagation (LRP) \cite{Bach:15},
by Taylor decomposition \cite{Bach:15,Montavon:17taylor}, or 
by integrated gradients (IG) \cite{Sundararajan:17}.

\paragraph{LSTM.}

LSTM was already used in reinforcement learning \cite{Schmidhuber:15}
for advantage learning \cite{Bakker:02}, for constructing a potential 
function for reward shaping by representing the return by a sum of 
LSTM outputs across an episode \cite{Su:15}, and
learning policies \cite{Hausknecht:15,Mnih:16,Heess:16}.

\paragraph{Reward Shaping, Look-Ahead Advice, Look-Back Advice.}

Redistributing the reward is fundamentally different from
reward shaping \cite{Ng:99,Wiewiora:03}, look-ahead advice and
look-back advice \cite{Wiewiora:03icml}. 
However, these methods can be viewed as a special case of 
reward redistribution that result in an MDP that is return-equivalent 
to the original MDP as is shown in Section~\ref{sec:AReturnEquivalent}. 
On the other hand every reward function can be expressed as
look-ahead advice \cite{Harutyunyan:15}.
In contrast to these methods, 
reward redistribution is not limited to potential functions, where
the additional reward is the potential difference, therefore it is a more
general concept than shaping reward or look-ahead/look-back advice.
The major difference of reward redistribution to reward shaping,
look-ahead advice, and look-back advice 
is that the last three keep the original rewards. 
Both look-ahead advice and look-back advice 
have not been designed for replacing for the original rewards.
Since the original reward is kept, the reward redistribution is
not optimal according to Section~\ref{sec:Aopt_rew_red}.
The original rewards 
may have long delays that cause an exponential slow-down of learning. 
The added reward improves sampling but a delayed original reward must
still be transferred to the $Q$-values of early states that caused the reward.
The concept of return-equivalence of SDPs resulting from 
reward redistributions allows to 
eliminate the original reward completely.
Reward shaping can replace the original reward.
However, it only depends on states but not on actions, 
and therefore, it cannot identify
relevant actions without the original reward.

%Wiewiora writes:
%"Also, it should be stressed that our method does
%not act as a replacement for the original rewards in the
%MDP."

\clearpage
\pagebreak

\section{Markov Decision Processes with Undiscounted Rewards}

We focus on Markov Decision Processes (MDPs)
with undiscounted rewards, since
the relevance but also the problems
of a delayed reward can be considerably decreased by discounting it.
Using discounted rewards both the bias correction in TD as well as the variance 
of MC are greatly reduced. The correction amount decreases exponentially with 
the delay steps, and also the variance contribution to one state 
decreases exponentially 
with the delay of the reward.

MDPs with undiscounted rewards are either finite horizon or process absorbing 
states without reward. The former can always be described by the latter.

\subsection{Properties of the Bellman Operator in MDPs with Undiscounted Rewards}
\label{sec:ApropPoly}

At each time $t$ the environment is in some state $s=s_t \in \cS$. The agent
takes an action $a=a_t \in \cA$ according to policy $\pi$, which causes a transition of
the environment to state $s'=s_{t+1} \in \cS$
and a reward $r=r_{t+1} \in \cR$ for the agent
with probability $p(s',r\mid s,a)$.

The Bellman operator maps a action-value function $q=q(s,a)$ to another
action-value function. We do not require that $q$ are $Q$-values and
that $r$ is the actual reward. 
We define the Bellman operator $\rT^\pi$ for policy $\pi$ as:
\begin{align}
  \rT^\pi \left[ q \right] (s,a)   \ &= \ \sum_{s',r} p(s',r\mid s,a) \ 
  \left[r \ + \ \sum_{a'} \pi(a' \mid s') \ q(s',a')  \right] \ .
\end{align} 

We often rewrite the operator as
\begin{align}
  \rT^\pi \left[ q \right] (s,a)   \ &= \ r(s,a)
\ + \ \EXP_{s',a'} \left[q(s',a')  \right] \ ,
\end{align} 
where
\begin{align}
  r(s,a) \ &=  \
  \sum_{r} r \ p(r\mid s,a) \ , \\ 
  \EXP_{s',a'} \left[q(s',a')  \right] \ &= \
  \sum_{s'} p(s'\mid s,a) \ \sum_{a'} \pi(a' \mid s') \ q(s',a') \ . 
\end{align} 
We did not explicitly express the dependency on the policy $\pi$ and
the state-action pair $(s,a)$ in the
expectation $\EXP_{s',a'}$. A more precise way would be to write
$\EXP^\pi_{s',a'}\left[ . \mid s,a \right]$.

More generally, we have
\begin{align}
\label{eq:bellGen}
  \rT^\pi \left[ q \right] (s,a)   \ &= \ g(s,a)
 \ + \ \EXP_{s',a'} \left[q(s',a')  \right] \ .
\end{align} 
In the following we show properties for this general formulation.

\subsubsection{Monotonically Increasing and Continuous}
\label{sec:ApropPolyFP}

We assume the general formulation Eq.~\eqref{eq:bellGen} of the
Bellman operator.
Proposition 2.1 on pages 22-23 in
Bertsekas and Tsitsiklis, 1996, \cite{Bertsekas:96} shows that
a fixed point $q^\pi$ of the Bellman operator exists and that for every
$q$:
\begin{align}
  q^\pi \ &= \ \rT^\pi \left[ q^\pi\right] \\
  q^\pi \ &= \ \lim_{k\to \infty} \left(\rT^\pi\right)^k q  \ .
\end{align} 
The fixed point equation 
\begin{align}
  q^\pi \ &= \ \rT^\pi \left[ q^\pi\right] 
\end{align}
is called {\em Bellman equation} or {\em Poisson equation}.
For the Poisson equation see
Equation~33 to Equation~37 for the undiscounted case and Equation~34 and Equation~43 for the
discounted case in Alexander Veretennikov, 2016,
\cite{Veretennikov:16}.
This form of the Poisson equation describes the Dirichlet boundary
value problem. The Poisson equation is
\begin{align}
\label{eq:poisson}
 q^\pi(s,a) \ + \ \bar{g} \ &= \  g(s,a)
 \ + \ \EXP_{s',a'} \left[q(s',a') \mid s,a  \right] \ ,
\end{align} 
where $\bar{g}$ is the long term average reward or the expected 
value of the reward for the stationary distribution:
\begin{align}
 \bar{g} \ &= \  \lim_{T\to \infty} \frac{1}{T+1} \sum_{t=0}^T g(s_t,a_t) \ .
\end{align} 
We assume $\bar{g}=0$ since after some time the agent does no longer
receive reward in MDPs with finite time horizon or in MDPs with
absorbing states that have zero reward.

$\rT^\pi$ is {\em monotonically increasing} in its arguments \cite{Bertsekas:96}.
For $q_1$ and $q_2$ with the component-wise condition $q_1 \geq q_2$, we have
\begin{align}
  &\rT^\pi \left[ q_1\right](s,a)  \ - \  \rT^\pi \left[ q_2\right](s,a)  \\ \nonumber
  &= \ \left( g(s,a)
 \ + \ \EXP_{s',a'} \left[q_1(s',a')  \right] \right)  \ - \ \left( g(s,a)
 \ + \ \EXP_{s',a'} \left[q_2(s',a')  \right] \right)\\ \nonumber
 &= \ \EXP_{s',a'} \left[q_1(s',a') \ - \ q_2(s',a') \right]
  \ \geq \ 0 \ ,
\end{align}
where ``$\geq$'' is component-wise. The last inequality follows from
the component-wise condition $q_1 \geq q_2$.

We define the norm $\| . \|_{\infty}$, which gives the maximal difference of
the $Q$-values:
\begin{align}
  \| q_1 \ - \ q_2 \|_{\infty} \ &= \
  \underset{s,a}{\max}
  \left| q_1(s,a) \ - \  q_2(s,a) \right| \ .
\end{align}

$\rT$ is a {\em non-expansion mapping} for $q_1$ and $q_2$:
\begin{align}
  &\| \rT^\pi \left[ q_1\right]  \ - \  \rT^\pi \left[ q_2\right] \|_{\infty} \ = \
  \underset{s,a}{\max}
  \left| \rT[q_1](s,a) \ - \  \rT[q_2](s,a) \right| \\ \nonumber
  &= \ \underset{s,a}{\max}
  \left| \left[ g(s,a) \ + \
  \sum_{s'} p(s'\mid s,a) \ 
   \sum_{a'} \pi(a' \mid s') \ q_1(s',a')  \right]
  \ - \ \right.  \\ \nonumber
  & \ \ \ \left. \left[ g(s,a) \ + \ \sum_{s'} p(s'\mid s,a) \sum_{a'} \pi(a' \mid
    s') \ q_2(s',a')   \right]  
  \right|  \\ \nonumber
  &= \ \underset{s,a}{\max}
  \left|
  \sum_{s'} p(s'\mid s,a) \ 
  \sum_{a'} \pi(a' \mid s') \ \left( q_1(s',a')  \ - \ q_2(s',a') \right)
  \right|  \\ \nonumber
  &\leq \ \underset{s,a}{\max}
  \sum_{s'} p(s'\mid s,a) \ 
  \sum_{a'} \pi(a' \mid s') \ \left|q_1(s',a')  \ - \ q_2(s',a') \right|
  \\ \nonumber 
  &\leq \ \underset{s',a'}{\max}
  \left|q_1(s',a')  \ - \ q_2(s',a') \right| \ = \ \| q_1 \ - \ q_2
  \|_{\infty} \ .
\end{align} 

The first inequality is valid since
the absolute value is moved into the sum.
The second inequality is valid since
the expectation depending on $(s,a)$ is replaced by a maximum that
does not depend on $(s,a)$.
Consequently, the operator $\rT^\pi$ is continuous.

\subsubsection{Contraction for Undiscounted Finite Horizon}
\label{sec:ApropPolyCon}

For time-aware states, we can define another norm with
$0<\eta<1$ which
allows for a contraction mapping:
\begin{align}
  \| q_1 \ - \ q_2 \|_{\infty,t} \ &= \
  \max_{t=0}^{T} \eta^{T-t+1} \
 \underset{s_t,a}{\max}
 \left| q_1(s_t,a) \ - \  q_2(s_t,a) \right| \ .
\end{align}

$\rT^\pi$ is a {\em contraction mapping} for $q_1$ and $q_2$ \cite{Bertsekas:96}:
\begin{align}
  &\| \rT^\pi \left[ q_1\right]  \ - \  \rT^\pi \left[ q_2\right] \|_{\infty,t} \ = \
  \max_{t=0}^{T} \ \eta^{T-t+1} \ \underset{s_t,a}{\max}
  \left| \rT[q_1](s_t,a) \ - \  \rT[q_2](s_t,a) \right| \\ \nonumber
  &= \ \max_{t=0}^{T} \ \eta^{T-t+1} \ \underset{s_t,a}{\max}
  \left|\left[g(s_t,a) \ + \
  \sum_{s_{t+1}} p(s_{t+1}\mid s_t,a) \ 
  \sum_{a'} \pi(a' \mid s') \ q_1(s_{t+1},a')  \right]
  \ - \ \right. \\\nonumber
  & \ \ \  \left. \left[g(s_t,a) \ + \ \sum_{s_{t+1}} p(s_{t+1}\mid s_t,a)
    \sum_{a'} \pi(a' \mid s') \ q_2(s_{t+1},a')  \right]  
  \right|  \\ \nonumber
  &= \ \max_{t=0}^{T} \ \eta^{T-t+1} \ \underset{s_t,a}{\max}
  \left|
  \sum_{s_{t+1}} p(s_{t+1}\mid s_t,a) \ 
  \sum_{a'} \pi(a' \mid s') \ \left[ q_1(s_{t+1},a')  \ - \  q_2(s_{t+1},a') \right]
  \right|  \\ \nonumber
  &\leq \ \max_{t=0}^{T} \ \eta^{T-t+1} \ \underset{s_t,a}{\max}
  \sum_{s_{t+1}} p(s_{t+1}\mid s_t,a) \ 
  \sum_{a'} \pi(a' \mid s') \ \left| q_1(s_{t+1},a')  \ - \  q_2(s_{t+1},a') \right|
   \\ \nonumber
  &\leq \
  \max_{t=0}^{T} \ \eta^{T-t+1} \  \underset{s_{t+1},a'}{\max}
  \ \left| q_1(s_{t+1},a')  \ - \  q_2(s_{t+1},a') \right|
   \\ \nonumber
  &\leq \
  \max_{t=0}^{T} \ \eta \  \eta^{T-(t+1)+1} \ \underset{s_{t+1},a'}{\max}
  \left|q_1(s_{t+1},a')  \ - \ q_2(s_{t+1},a') \right|\\ \nonumber
  &= \  \eta \ \max_{t=1}^{T+1} \  \eta^{T-t+1} \underset{s_{t},a'}{\max}
  \left|q_1(s_{t},a')  \ - \ q_2(s_{t},a') \right|\\ \nonumber
  &= \  \eta \ \max_{t=0}^{T} \  \eta^{T-t+1} \ \underset{s_{t},a'}{\max}
  \left|q_1(s_{t},a')  \ - \ q_2(s_{t},a') \right|\\ \nonumber
  &= \  \eta \  \| q_1 \ - \ q_2 \|_{\infty,t} \ .
\end{align} 
The equality in the last but one line stems from the fact that
all $Q$-values at $t=T+1$ are zero and that all $Q$-values at $t=1$ have
the same constant value.

Furthermore, all $q$ values are equal to
zero for additionally introduced
states at $t=T+1$ since for $t>T+1$ all rewards are zero.
We have
\begin{align}
q^\pi \ &= \ \rT^T\left[ q \right] \ ,
\end{align} 
which is correct for additionally introduced states at time $t=T+1$ since they are zero.
Then, in the next iteration $Q$-values of states at time $t=T$ are correct.
After iteration $i$, $Q$-values of states at time $t=T-i+1$ are correct.
This iteration is called the ``backward induction algorithm'' \cite{Puterman:90,Puterman:05}.
If we perform this iteration for a policy $\pi$ instead of
the optimal policy, then this procedure is called ``policy evaluation
algorithm'' \cite{Puterman:90,Puterman:05}.

% Puterman:05
% policy evaluation algorithm page 80
% backward induction algorithm page 92 

\subsubsection{Contraction for Undiscounted Infinite Horizon With Absorbing States}
\label{sec:ApropPolyCon2}

A stationary policy is {\em proper} if there exists an integer $n$
such that from any initial state $x$ the probability of achieving
the terminal state after $n$ steps is strictly positive.

If all terminal states are absorbing and cost/reward free and
if all stationary policies are proper the Bellman operator is a contraction
mapping with respect to a weighted sup-norm.

The fact that the Bellman operator is a contraction
mapping with respect to a weighted sup-norm
has been proved in Tseng, 1990, in Lemma 3 with equation (13) and
text thereafter \cite{Tseng:90Journal}. Also Proposition 1
in Bertsekas and Tsitsiklis, 1991, \cite{Bertsekas:91},
Theorems 3 and 4(b) \& 4(c) in Tsitsiklis, 1994, \cite{Tsitsiklis:94},
and Proposition 2.2 on pages 23-24 in
Bertsekas and Tsitsiklis, 1996, \cite{Bertsekas:96} have proved
the same fact.

\subsubsection{Fixed Point of Contraction is Continuous wrt Parameters}
\label{sec:ApropPolyFPcon}

The mean  $q^\pi$ and variance $V^{\pi}$ are continuous with respect to $\pi$, that is
$\pi(a' \mid s')$, with
respect to the reward distribution $p(r \mid s,a)$ and with respect to
the transition probabilities $p(s' \mid s,a)$. 

A complete metric space or a Cauchy space is a space where every
Cauchy sequence of points has a limit in the space, that is,
every Cauchy sequence converges in the space.
The Euclidean space $\dR^n$  with the usual distance metric is complete.
Lemma 2.5 in Jachymski, 1996, is \cite{Jachymski:96}:
\begin{theoremA}[Jachymski: complete metric space]
Let $(X, d)$ be a complete metric space, and let $(P, d_P)$ be a
metric space. Let $F: P \times X \to X$
be continuous in the first variable and
contractive in the second variable with the same Lipschitz constant
$\alpha <1$. For
$\Bp \in P$, let $\Bx^*(\Bp)$
be the unique fixed point of the map $\Bx \to F(\Bp, \Bx)$. Then the
mapping $\Bx^*$ is continuous.
\end{theoremA}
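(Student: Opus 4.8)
The plan is to fix an arbitrary base parameter $\Bp_0 \in P$ and prove continuity of $\Bx^*$ at $\Bp_0$; since $\Bp_0$ is arbitrary, this yields continuity on all of $P$. First I would record that $\Bx^*(\Bp)$ is well-defined for every $\Bp$: by hypothesis $\Bx \mapsto F(\Bp,\Bx)$ is a contraction on the complete metric space $(X,d)$ with constant $\alpha<1$, so the Banach fixed point theorem supplies a unique fixed point. The entire continuity statement then reduces to estimating $d\big(\Bx^*(\Bp),\Bx^*(\Bp_0)\big)$ as $\Bp$ approaches $\Bp_0$.

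The core of the argument is a single estimate obtained by inserting the intermediate point $F(\Bp,\Bx^*(\Bp_0))$. Using the fixed-point identities $\Bx^*(\Bp)=F(\Bp,\Bx^*(\Bp))$ and $\Bx^*(\Bp_0)=F(\Bp_0,\Bx^*(\Bp_0))$ together with the triangle inequality gives
\begin{align}
d\big(\Bx^*(\Bp), \Bx^*(\Bp_0)\big) \ &\leq \ d\big(F(\Bp,\Bx^*(\Bp)), F(\Bp,\Bx^*(\Bp_0))\big) \\ \nonumber
&+ \ d\big(F(\Bp,\Bx^*(\Bp_0)), F(\Bp_0,\Bx^*(\Bp_0))\big) \ .
\end{align}
The first summand is bounded by $\alpha\, d\big(\Bx^*(\Bp),\Bx^*(\Bp_0)\big)$ by contractivity in the second variable, whereas the second summand varies only the first argument, with the second argument pinned at $\Bx^*(\Bp_0)$. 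Collecting the fixed-point distance on the left-hand side and dividing by $1-\alpha>0$ yields
\begin{align}
d\big(\Bx^*(\Bp),\Bx^*(\Bp_0)\big) \ &\leq \ \frac{1}{1-\alpha} \ d\big(F(\Bp,\Bx^*(\Bp_0)), F(\Bp_0,\Bx^*(\Bp_0))\big) \ .
\end{align}

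Finally I would let $\Bp \to \Bp_0$. By continuity of $F$ in its first argument at the fixed point $\Bx^*(\Bp_0)$, the right-hand side converges to $\frac{1}{1-\alpha}\,d\big(F(\Bp_0,\Bx^*(\Bp_0)), F(\Bp_0,\Bx^*(\Bp_0))\big)=0$, which forces $d\big(\Bx^*(\Bp),\Bx^*(\Bp_0)\big)\to 0$. This is exactly continuity of $\Bx^*$ at $\Bp_0$, and hence everywhere. I do not expect a genuine obstacle in this proof; the only two points that deserve attention are that the Lipschitz constant $\alpha$ must be \emph{uniform} in $\Bp$ so that the factor $1/(1-\alpha)$ stays bounded as $\Bp$ varies, and that continuity of $F$ in the first variable is invoked only at the single point $\Bx^*(\Bp_0)$ rather than uniformly. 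Both requirements are precisely what the hypotheses grant, so the splice between contractivity and first-variable continuity goes through cleanly.
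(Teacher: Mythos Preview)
Your argument is correct and is precisely the standard proof of continuous dependence of fixed points on a parameter for uniform contractions. The paper, however, does not supply its own proof of this statement at all: it merely quotes the result as Lemma~2.5 of Jachymski (1996) and cross-references Theorem~2.3 of Frigon (2007). So there is no ``paper proof'' to compare against; your proposal fills in exactly the classical derivation that the cited references contain, with the key inequality $d(\Bx^*(\Bp),\Bx^*(\Bp_0))\leq (1-\alpha)^{-1} d(F(\Bp,\Bx^*(\Bp_0)),F(\Bp_0,\Bx^*(\Bp_0)))$ and the observation that uniformity of $\alpha$ is what makes the bound useful.
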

This theorem is Theorem 2.3 in Frigon, 2007,  \cite{Frigon:07}.
Corollary 4.2 in Feinstein, 2016, generalized the theorem
to set valued operators, that is, these operators may have
more than one fixed point \cite{Feinstein:16}
(see also \cite{Kirr:97}).
All mappings $F(p,.)$ must have the same Lipschitz constant
$\alpha <1$.

A locally compact space is a space where  every point has a compact neighborhood.
$\dR^n$ is locally compact as a consequence of the Heine-Borel theorem.
Proposition 3.2 in Jachymski, 1996, is \cite{Jachymski:96}:
\begin{theoremA}[Jachymski: locally compact complete metric space]
Let $(X, d)$ be a locally compact complete metric space,
and let $(P, d_P)$ be a
metric space. Let $F: P \times X \to X$
be continuous in the first variable and
contractive in the second variable with not necessarily the
same Lipschitz constant.
For $\Bp \in P$, let $\Bx^*(\Bp)$
be the unique fixed point of the map $\Bx \to F(\Bp, \Bx)$. Then the
mapping $\Bx^*$ is continuous.
\end{theoremA}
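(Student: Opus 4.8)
The plan is to fix an arbitrary $\Bp_0 \in P$, write $\Bx_0 = \Bx^*(\Bp_0)$, and prove that $\Bx^*(\Bp) \to \Bx_0$ as $\Bp \to \Bp_0$; since $\Bp_0$ is arbitrary this gives continuity of $\Bx^*$. Existence and uniqueness of each $\Bx^*(\Bp)$ is Banach's fixed point theorem on the complete space $X$. The first thing I would record is that $F$ is in fact \emph{jointly} continuous, even though only separate continuity in the first variable is assumed: for $(\Bp_n,\Bx_n) \to (\Bp_0,\Bx_0)$ one bounds $d(F(\Bp_n,\Bx_n),F(\Bp_0,\Bx_0))$ by $d(F(\Bp_n,\Bx_n),F(\Bp_n,\Bx_0)) + d(F(\Bp_n,\Bx_0),F(\Bp_0,\Bx_0))$, where the first summand is at most $d(\Bx_n,\Bx_0) \to 0$ because every contraction constant is below $1$, and the second vanishes by continuity in the first variable.

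Next I would invoke local compactness: choose $r > 0$ so small that the closed ball $\bar B = \{\Bx : d(\Bx,\Bx_0) \le r\}$ lies inside a compact neighborhood of $\Bx_0$ and is therefore itself compact; it is also complete, being closed in $X$. Using joint continuity on the compact set $\{\Bp_0\} \times \bar B$ together with a routine subsequence-extraction argument, I would show that $M(\Bp) := \sup_{\Bx \in \bar B} d(F(\Bp,\Bx),F(\Bp_0,\Bx)) \to 0$ as $\Bp \to \Bp_0$. Writing $\alpha_0 < 1$ for the contraction constant of $F(\Bp_0,\cdot)$, every $\Bx \in \bar B$ then satisfies $d(F(\Bp,\Bx),\Bx_0) \le M(\Bp) + \alpha_0 r$, so once $\Bp$ is close enough that $M(\Bp) < (1-\alpha_0) r$, the map $F(\Bp,\cdot)$ sends $\bar B$ into $\bar B$. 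Its unique fixed point in $\bar B$ must coincide with the global fixed point $\Bx^*(\Bp)$, which confines $\Bx^*(\Bp)$ to $\bar B$.

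With $\Bx^*(\Bp) \in \bar B$ secured, the estimate closes cleanly:
\begin{align*}
d(\Bx^*(\Bp),\Bx_0) &\le d(F(\Bp,\Bx^*(\Bp)),F(\Bp_0,\Bx^*(\Bp))) + d(F(\Bp_0,\Bx^*(\Bp)),F(\Bp_0,\Bx_0)) \\
&\le M(\Bp) + \alpha_0 \, d(\Bx^*(\Bp),\Bx_0) ,
\end{align*}
whence $d(\Bx^*(\Bp),\Bx_0) \le M(\Bp)/(1-\alpha_0) \to 0$. The decisive point, and the main obstacle relative to the uniform-Lipschitz theorem proved just above, is that the constants $\alpha(\Bp)$ need not be bounded away from $1$, so the naive bound $d(\Bx^*(\Bp),\Bx_0) \le M(\Bp)/(1-\alpha(\Bp))$ may fail to vanish. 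Local compactness is exactly what lets me replace the uncontrolled $\alpha(\Bp)$ by the fixed $\alpha_0 = \alpha(\Bp_0)$: trapping the fixed points inside the compact ball $\bar B$ means only the contraction property of $F(\Bp_0,\cdot)$ is ever used in the final inequality. I expect the technical heart of the argument to be verifying the uniform limit $M(\Bp) \to 0$ and the self-mapping of $\bar B$, both of which rely essentially on the compactness of $\bar B$.
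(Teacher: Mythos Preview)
Your proposal is a correct and self-contained proof, but note that the paper itself does \emph{not} prove this theorem: it merely states the result and cites Jachymski (1996, Proposition~3.2), Frigon (2007, Theorem~2.5), and Kwieci\'nski (1992, Theorem~2) for the proof. So there is no ``paper's proof'' to compare against.

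On the substance of your argument: the strategy is sound and is in fact close to how Kwieci\'nski proves it. The key observations you make are exactly the right ones --- joint continuity follows from separate continuity here because the contraction constants are all bounded by $1$, and local compactness is precisely what lets you trap the nearby fixed points in a compact ball and thereby replace the uncontrolled $\alpha(\Bp)$ by the fixed $\alpha_0$. The subsequence argument for $M(\Bp)\to 0$ works: if $\Bp_n\to\Bp_0$, $\Bx_n\in\bar B$, and $d(F(\Bp_n,\Bx_n),F(\Bp_0,\Bx_n))\ge\epsilon$, pass to $\Bx_n\to\Bx_\infty$ by compactness and split via $F(\Bp_0,\Bx_\infty)$; joint continuity and the Lipschitz property of $F(\Bp_0,\cdot)$ force both pieces to zero. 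One small point worth making explicit when you write it up: the fixed point of $F(\Bp,\cdot)\!\restriction_{\bar B}$ on the complete space $\bar B$ must coincide with the \emph{global} fixed point $\Bx^*(\Bp)$ by uniqueness of the latter in all of $X$; you state this but it is the hinge of the whole argument.
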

This theorem is Theorem 2.5 in Frigon, 2007,  \cite{Frigon:07}
and Theorem 2 in Kwiecinski, 1992, \cite{Kwiecinski:92}.
The mappings $F(p,.)$ can have different Lipschitz constants.

\subsubsection{t-fold Composition of the Operator}
\label{sec:At-fold}

We define the Bellman operator as
\begin{align}
  \rT^\pi \left[ q \right] (s,a)   \ &= \ g(s,a)
  \ + \  \sum_{s'} p(s'\mid s,a) \ \sum_{a'}
  \pi(a' \mid s') \ q(s',a')  \\ \nonumber
  &= \  g(s,a) \ + \ \Bq^T  \Bp(s,a) \ ,
\end{align}
where $\Bq$ is the vector with value $q(s',a')$ at position $(s',a')$
and $\Bp(s,a)$ is the vector with value
$p(s'\mid s,a)\pi(a' \mid s')$ at position $(s',a')$.

In vector notation we obtain
the {\em Bellman equation} or {\em Poisson equation}.
For the Poisson equation see
Equation~33 to Equation~37 for the undiscounted case and Equation~34 and Equation~43 for the
discounted case in Alexander Veretennikov, 2016,
\cite{Veretennikov:16}.
This form of the Poisson equation describes the Dirichlet boundary
value problem.
The {\em Bellman equation} or {\em Poisson equation} is
\begin{align}
  \rT^\pi \left[ \Bq \right]    \ &= \ \Bg
  \ + \  \BP \ \Bq  \ ,
\end{align}
where $\BP$ is the row-stochastic matrix with
$p(s'\mid s,a)\pi(a' \mid s')$ at position $((s,a),(s',a'))$.

The Poisson equation is
\begin{align}
\label{eq:poisson1}
 \Bq^\pi \ + \ \bar{g} \BOn \ &= \  \Bg \ + \  \BP \ \Bq  \ ,
\end{align} 
where $\BOn$ is the vector of ones and
$\bar{\Bg}$ is the long term average reward or the expected 
value of the reward for the stationary distribution:
\begin{align}
 \bar{g} \ &= \  \lim_{T\to \infty} \frac{1}{T+1} \sum_{t=0}^T g(s_t,a_t) \ .
\end{align} 
We assume $\bar{g}=0$ since after some time the agent does no longer
receive reward for MDPs with finite time horizon or MDPs with
absorbing states that have zero reward.

Since $\BP$ is a row-stochastic matrix, the
Perron-Frobenius theorem says that (1) $\BP$ has as largest eigenvalue 1 for which
the eigenvector corresponds to the steady state and
(2) the absolute value of each (complex) eigenvalue is smaller or equal 1.
Only the eigenvector to the eigenvalue 1 has purely positive real components.

Equation~7 of Bertsekas and Tsitsiklis, 1991, \cite{Bertsekas:91}
states
\begin{align}
  \label{eq:tfoldA}
  \left(\rT^\pi \right)^t \left[ \Bq \right]    \ &= \
  \sum_{k=0}^{t-1}  \BP^k \ \Bg
  \ + \  \BP^t \ \Bq  \ .
\end{align}

If $\Bp$ is the stationary distribution vector for $\BP$, that is,
\begin{align}
  \lim_{k \to \infty} \BP^k \ &= \ \BOn \ \Bp^T \\
  \lim_{k \to \infty} \Bp_0^T \BP^k \ &= \  \Bp^T 
\end{align}
then
\begin{align}
  \lim_{k \to \infty} \frac{1}{k} \sum_{i=0}^{k-1}  \BP^i \ &= \ \BOn
  \ \Bp^T \\
  \lim_{k \to \infty} \frac{1}{k} \sum_{i=0}^{k-1}  \Bp_0^T \BP^i \ &=
  \ \Bp^T \ .
\end{align}

\subsection{Q-value Transformations: Shaping Reward, Baseline, and Normalization}
\label{sec:qtransform}

The Bellman equation for the action-value function $q^\pi$ is
\begin{align}
  q^\pi(s,a) \ &= \ \sum_{s',r} p(s',r \mid s,a) \ \left[r \ + \ \sum_{a'} \pi(a' \mid s')
    \ q^\pi(s',a') \right] \ .
\end{align}

The expected return at time $t=0$ is:
\begin{align}
  v_0 \ &= \ \sum_{s_0} p(s_0) \ v(s_0)   \ .
\end{align}
As introduced for the REINFORCE algorithm,
we can subtract a baseline $v_0$ from the return.
We subtract the baseline $v_0$ from the last reward.
Therefore, for the new reward $\bar{R}$
we have $\bar{R}_t=R_t$ for $t \leq T$ and
$\bar{R}_{T+1} = R_{T+1} - v_0$.
Consequently, $\bar{q}(s_t,a_t)=q(s_t,a_t)-v_0$ for $t \leq T$.

The TD update rules are:
\begin{align}
  q(s_t,a_t) \ &\longleftarrow \ q(s_t,a_t) \ + \ \alpha \left(r_t \ +
    \ \sum_{a} \pi(a \mid s_{t+1}) \
    q(s_{t+1},a) \ - \ q(s_t,a_t) \right) \ .
\end{align}
The $\delta$-errors are
\begin{align}\nonumber
  &R_{t+1} \ + \ \sum_{a} \pi(a \mid s_{t+1}) \
    q(s_{t+1},a) \ - \ q(s_t,a_t)  \\ \nonumber
   &= \ R_{t+1} \ + \ \sum_{a} \pi(a \mid s_{t+1}) \
    (q(s_{t+1},a)-v_0) \ - \ (q(s_t,a_t) -v_0) \\
  &= \
  \bar{R}_{t+1} \ + \ \sum_{a} \pi(a \mid s_{t+1}) \
    \bar{q}(s_{t+1},a) \ - \ \bar{q}(s_t,a_t)
\end{align}
and for the last step
\begin{align}
  &R_{T+1} \ - \ q(s_T,a_T)  \ = \
  (R_{T+1} \ - \ v_0) \ - \ ( q(s_T,a_T) \ - \ v_0) \\\nonumber
  &= \ \bar{R}_{T+1}   \ - \ \bar{q}(s_T,a_T) \ .
\end{align}

If we set 
\begin{align}
  \bar{q}(s_t,a_t) \ &= \
  \begin{cases}
    q(s_t,a_t) \ - \ v_0 \ , & \text{for } t \leq T  \ .
  \end{cases}  \\
  \bar{R}_t \ &= \
  \begin{cases}
    R_t \ , & \text{for } t \leq T  \\
    R_{T+1} \ - \ v_0 \ , & \text{for } t = T+1  \ ,
  \end{cases}
\end{align}
then the $\delta$-errors and the updates remain the same for $q$ and $\bar{q}$.
We are equally far away from the optimal solution in both cases.

Removing the offset $v_0$ at the end by $\bar{R}_{T+1}=R_{T+1}-v_0$,
can also be derived via reward shaping.
However, the offset has to be added at the beginning: $\bar{R}_1=R_1+v_0$.
Reward shaping requires for the shaping reward $F$ and a potential
function $\Phi$ \cite{Ng:99,Wiewiora:03}:
\begin{align}
F(s_t,a_t,s_{t+1}) \ &= \ \Phi(s_{t+1}) - \Phi(s_t) \ .
\end{align}

For introducing a reward of $c$ at time $t=k$ and removing
it from time $t=m<k$ we set:
\begin{align}
  \Phi(s_t)  \ &= \
  \begin{cases}
    0 \ , & \text{for } t \leq m \ , \\
    -c \ , & \text{for } m+1 \leq t \leq k \ ,\\
    0 \ , & \text{for } t > k \ ,
  \end{cases} 
\end{align}
then the shaping reward is
\begin{align}
  F \big( s_t,a_t,s_{t+1} \big)  \ &= \
  \begin{cases}
    0 \ , & \text{for } t < m \ , \\
    - c \ , & \text{for } t = m  \ , \\
    0 \ , & \text{for } m+1 \leq t < k \ , \\
    c \ , & \text{for } t = k  \ , \\
    0 \ , & \text{for } t > k \ .
  \end{cases} 
\end{align}

For $k=T$, $m=0$, and $c=-v_0$ we obtain above situation but with
$\bar{R}_1=R_1+v_0$ and $\bar{R}_{T+1}=R_{T+1}-v_0$,
that is, $v_0$ is removed at the end and
added at the beginning. All $Q$-values except $q(s_0,a_0)$ are
decreased by $v_0$.
In the general case,
all $Q$-values $q(s_t,a_t)$ with
$m+1 \leq t \leq k$ are increased by $c$.

{\bf $Q$-value normalization}:
We apply reward shaping \cite{Ng:99,Wiewiora:03} for normalization of
the $Q$-values.
The potential $\Phi(s)$ defines the shaping reward
$F(s_t,a_t,s_{t+1})= \Phi(s_{t+1}) - \Phi(s_t)$.
The optimal policies do not change and the $Q$-values become
\begin{align}
   q^{\nn}(s_t,a_t)  \ &= \  q(s_t,a_t) \ - \ \Phi(s_t) \ .
\end{align}
We change the $Q$-values for all $1\leq t \leq T$, but not
for $t=0$ and $t=T+1$. The first and the last $Q$-values are not normalized.
All the shaped reward is added/subtracted to/from the initial and the
last reward.

\begin{itemize}
\item The maximal $Q$-values are zero and the non-optimal $Q$-values are negative
for all $1\leq t \leq T$:
\begin{align}
  \Phi(s_t)   \ &= \ \max_{a} q(s_t,a) \ .
\end{align}
  
\item The minimal $Q$-values are zero and all others $Q$-values are positive
for all $1\leq t \leq T-1$:
\begin{align}
 \Phi(s_t)  \ &= \ \min_{a} q(s_t,a)  \ .
\end{align}
\end{itemize}

\subsection{Alternative Definition of State Enrichment}
\label{sec:AStateEnrich}
Next, we define state-enriched processes $\tilde{\cP}$ compared to $\cP$.
The state $\tilde{s}$ of $\tilde{\cP}$ is enriched with a
deterministic information compared to a state $s$ of $\cP$.
The enriched information
in $\tilde{s}$ can be computed from the state-action pair $(\tilde{s},a)$
and the reward $r$.
Enrichments may be the accumulated reward, count of the time step,
a count how often a certain action
has been performed, a count how often a certain state has been
visited, etc.
Givan et~al.\ have already shown that state-enriched Markov decision
processes (MDPs) preserve the optimal action-value and action sequence properties
as well as the optimal policies of the model \cite{Givan:03}.
Theorem 7 and Corollary 9.1 in Givan et~al.\ proved
these properties \cite{Givan:03} by bisimulations
(stochastically bisimilar MDPs).
A homomorphism between MDPs maps a MDP 
to another one with corresponding reward and transitions probabilities.
Ravindran and Barto have shown that solving the original MDP can be
done by solving a homomorphic image \cite{Ravindran:03}.
Therefore, Ravindran and Barto have also shown that 
state-enriched MDPs preserve the optimal action-value and action sequence properties.
Li et al.\ give an overview over state abstraction or state aggregation for
MDPs, which covers state-enriched MDPs \cite{Li:06}.

\begin{definitionA}
  A decision process $\tilde{\cP}$ is {\em state-enriched} compared to
  a decision process $\cP$ if following conditions hold.
  If $\tilde{s}$ is the state of $\tilde{\cP}$,
  then there exists a function $f: \tilde{s} \to s $ with $f(\tilde{s})=s$,
  where  $s$ is the state of $\cP$.
  There exists a function $g: \tilde{s} \to \cR$, where $g(\tilde{s})$
  gives the additional information of state $\tilde{s}$ compared to $f(\tilde{s})$.
  There exists a function $\nu$ with
  $\nu(f(\tilde{s}),g(\tilde{s}))=\tilde{s}$, that is, the state $\tilde{s}$ can be
  constructed from the original state and the additional information.
  There exists a function $H$ with $h(\tilde{s}')=H(r,\tilde{s},a)$, where $\tilde{s}'$
  is the next state and $r$ the reward.
  $H$ ensures that  $h(\tilde{s}')$ of the next state $\tilde{s}'$
  can be computed from
  reward $r$, actual state $\tilde{s}$, and the actual action
  $a$. Consequently, $\tilde{s}'$ can be computed from $(r,\tilde{s},a)$.
  For all $\tilde{s}$ and $\tilde{s}'$ following holds: 
 \begin{align}
   \tilde{p}(\tilde{s}',r\mid \tilde{s},a) \ &= \ p(f(\tilde{s}') ,r
   \mid f(\tilde{s}),a)  \ , \\ 
   \tilde{p}_0(\tilde{s}_0) \ &= \  p_0(f(\tilde{s}_0)) \ ,
 \end{align}
 where $\tilde{p}_0$ and $p_0$ are the probabilities of the initial states
 of $\tilde{\cP}$ and $\cP$, respectively.
\end{definitionA}
If the reward is deterministic, then
$\tilde{p}(\tilde{s}',r\mid \tilde{s},a)=
\tilde{p}(\tilde{s}'\mid \tilde{s},a)$ and
$\tilde{p}_0(\tilde{s}_0,r)=\tilde{p}_0(\tilde{s}_0)$.

We proof the following theorem, even if it has been proved several
times as mention above.
\begin{theoremA}
  If decision process $\tilde{\cP}$ is state-enriched compared to
  $\cP$, then for each optimal policy $\tilde{\pi}^*$ of $\tilde{\cP}$ there
  exists an equivalent optimal policy $\pi^*$ of $\cP$, and vice
  versa, with $\tilde{\pi}^*(\tilde{s})=\pi^*(f(\tilde{s}))$. The
  optimal return is the same for $\tilde{\cP}$ and $\cP$.
\end{theoremA}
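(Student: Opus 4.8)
The plan is to build an explicit correspondence between policies of the two processes and show that it preserves $Q$-values, from which optimality and equality of returns follow. To any Markov policy $\pi$ of $\cP$ I associate the policy $\tilde{\pi}$ of $\tilde{\cP}$ defined by $\tilde{\pi}(a \mid \tilde{s}) = \pi(a \mid f(\tilde{s}))$, i.e. $\tilde{\pi}$ ignores the enrichment $g(\tilde{s})$ and acts only through the projected state $f(\tilde{s})$. The heart of the argument is the identity
\begin{align}
  \tilde{q}^{\tilde{\pi}}(\tilde{s}, a) \ &= \ q^\pi(f(\tilde{s}), a)
\end{align}
for every enriched state $\tilde{s}$ and action $a$, together with the analogous identity for the value functions.

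To prove this identity, first I would set $\hat{q}(\tilde{s}, a) := q^\pi(f(\tilde{s}), a)$ and show that $\hat{q}$ is a fixed point of the Bellman operator $\rT^{\tilde{\pi}}$ of $\tilde{\cP}$. Writing out $\rT^{\tilde{\pi}}[\hat{q}](\tilde{s}, a)$ and substituting the two defining properties of state-enrichment, $\tilde{p}(\tilde{s}', r \mid \tilde{s}, a) = p(f(\tilde{s}'), r \mid f(\tilde{s}), a)$ and $\tilde{\pi}(a' \mid \tilde{s}') = \pi(a' \mid f(\tilde{s}'))$, the summand depends on $\tilde{s}'$ only through $s' := f(\tilde{s}')$. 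The key bookkeeping observation is that, for fixed $(\tilde{s}, a, r)$, the successor enrichment is deterministic, $h(\tilde{s}') = H(r, \tilde{s}, a)$ and $\tilde{s}' = \nu(s', H(r, \tilde{s}, a))$, so the map $s' \mapsto \tilde{s}'$ is injective and $f$ recovers $s'$; hence the sum over reachable $\tilde{s}'$ collapses to the sum over $s'$ without double counting. This turns $\rT^{\tilde{\pi}}[\hat{q}](\tilde{s}, a)$ exactly into the right-hand side of the Bellman equation for $q^\pi$ in $\cP$, so $\rT^{\tilde{\pi}}[\hat{q}] = \hat{q}$. By uniqueness of the Bellman fixed point established in Section~\ref{sec:ApropPolyFP} (for the finite-horizon case one may instead run the backward induction of Section~\ref{sec:ApropPolyCon}), $\hat{q} = \tilde{q}^{\tilde{\pi}}$, which is the claimed identity; summing against $\tilde{\pi}$ gives $\tilde{v}^{\tilde{\pi}}(\tilde{s}) = v^\pi(f(\tilde{s}))$.

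With the $Q$-value identity in hand, optimality transfers in both directions. Taking the maximum over policies (equivalently, applying the Bellman optimality equation) yields $\tilde{q}^{*}(\tilde{s}, a) = q^{*}(f(\tilde{s}), a)$ and $\tilde{v}^{*}(\tilde{s}) = v^{*}(f(\tilde{s}))$. For the forward direction, if $\pi^{*}$ is optimal in $\cP$ then its lift $\tilde{\pi}^{*}(\tilde{s}) = \pi^{*}(f(\tilde{s}))$ attains $\max_a \tilde{q}^{*}(\tilde{s}, a)$ and is therefore optimal in $\tilde{\cP}$. For the converse, the crucial point is that $\argmax_a \tilde{q}^{*}(\tilde{s}, a) = \argmax_a q^{*}(f(\tilde{s}), a)$ depends on $\tilde{s}$ only through $f(\tilde{s})$; hence a greedy optimal policy of $\tilde{\cP}$ can be chosen to depend only on $f(\tilde{s})$ and thus descends to a policy $\pi^{*}$ of $\cP$ with $\tilde{\pi}^{*}(\tilde{s}) = \pi^{*}(f(\tilde{s}))$, optimal because $v^{*}(s) = \max_a q^{*}(s,a)$ is attained. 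Equality of optimal returns then follows from the initial-distribution condition $\tilde{p}_0(\tilde{s}_0) = p_0(f(\tilde{s}_0))$: since the initial enrichment is determined by $s_0$ so that $f$ restricts to a bijection on starting states, $\tilde{v}_0^{*} = \sum_{\tilde{s}_0} \tilde{p}_0(\tilde{s}_0)\, \tilde{v}^{*}(\tilde{s}_0) = \sum_{s_0} p_0(s_0)\, v^{*}(s_0) = v_0^{*}$. The step I expect to be the main obstacle is exactly this sum-conversion in the Bellman computation, i.e. making rigorous that summing over enriched successors $\tilde{s}'$ equals summing over base successors $s'$; it hinges entirely on the determinism of the enrichment via $H$ and $\nu$, and on the boundary case of several initial enriched states projecting to one $s_0$, which the stated initial-distribution condition rules out. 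The remaining ingredients — fixed-point uniqueness and the greedy-policy argument — are routine given the contraction results of Section~\ref{sec:ApropPoly}.
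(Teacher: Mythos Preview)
Your proposal is correct and follows essentially the same route as the paper: establish the identity $\tilde{q}^{\tilde{\pi}}(\tilde{s},a)=q^{\pi}(f(\tilde{s}),a)$ for lifted policies by exploiting the Bellman recursion together with the determinism of the enrichment (so that the sum over $\tilde{s}'$ collapses to the sum over $s'=f(\tilde{s}')$), and then read off the correspondence of optimal policies and returns. The only cosmetic difference is that the paper proves the $Q$-value identity by explicit backward induction from $t=T$, whereas you phrase it as a Bellman fixed-point argument with uniqueness (and yourself note backward induction as the finite-horizon variant); the key sum-conversion step and the greedy-policy conclusion are identical.
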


\begin{proof}
  We proof by induction that
  $\tilde{q}^{\tilde{\pi}}(\tilde{s},a)=q^{\pi}(f(\tilde{s}),a)$ if
  $\tilde{\pi}(\tilde{s})=\pi(f(\tilde{s}))$.

  {\bf Basis}: The end of the sequence.
  For $t\geq T$ we have
  $\tilde{q}^{\tilde{\pi}}(\tilde{s},a)=q^{\pi}(f(\tilde{s}),a)=0$, since
  no policy receives reward for $t\geq T$.
  
{\bf Inductive step ($t \to t-1$)}: Assume $\tilde{q}^{\tilde{\pi}}(\tilde{s}',a')=q^{\pi}(f(\tilde{s}'),a')$ for the next state $\tilde{s}'$ and next action $a'$.
\begin{align} \nonumber 
  \tilde{q}^{\tilde{\pi}}(\tilde{s},a) \ &= \ \EXP_{\tilde{\pi}} \left[
    \tilde{G}_t \mid  \tilde{s}_t=\tilde{s}, A_t=a \right]
  \ = \   \sum_{\tilde{s}',r} \tilde{p}(\tilde{s}',r\mid \tilde{s},a) \ 
  \left[r \ + \ \sum_{a'} \tilde{\pi}(a'\mid \tilde{s}')  \
    \tilde{q}^{\tilde{\pi}}(\tilde{s}',a')  \right] \\   
  &= \ \sum_{f(\tilde{s}'),g(\tilde{s}'),r} \tilde{p}(\tilde{s}',r\mid \tilde{s},a) \ 
  \left[r \ + \ \sum_{a'} \tilde{\pi}(a'\mid \tilde{s}')  \
    \tilde{q}^{\tilde{\pi}}(\tilde{s}',a')  \right] \\ \nonumber    
  &= \ \sum_{f(\tilde{s}'),G(r,\tilde{s},a),r} \tilde{p}(\tilde{s}',r\mid \tilde{s},a) \ 
  \left[r \ + \ \sum_{a'} \tilde{\pi}(a'\mid \tilde{s}')  \
    \tilde{q}^{\tilde{\pi}}(\tilde{s}',a')  \right] \\ \nonumber    
  &= \ \sum_{f(\tilde{s}'),r} \tilde{p}(\tilde{s}',r\mid \tilde{s},a) \ 
  \left[r \ + \ \sum_{a'} \tilde{\pi}(a'\mid \tilde{s}')  \
    \tilde{q}^{\tilde{\pi}}(\tilde{s}',a')  \right] \\ \nonumber    
  &= \ \sum_{f(\tilde{s}'),r} p(f(\tilde{s}'),r\mid f(\tilde{s}),a) \ 
  \left[r \ + \ \sum_{a'} \pi(a'\mid f(\tilde{s}'))  \
    \tilde{q}^{\tilde{\pi}}(\tilde{s}',a')  \right] \\ \nonumber    
  &= \ \sum_{f(\tilde{s}'),r} p(f(\tilde{s}'),r\mid f(\tilde{s}),a) \ 
  \left[r \ + \ \sum_{a'} \pi(a'\mid f(\tilde{s}'))  \
    q^{\pi}(f(\tilde{s}'),a')  \right] \\ \nonumber    
  &= \ q^{\pi}(f(\tilde{s}),a) \ .
\end{align}
For the induction step $1 \to 0$ we use
$\tilde{p}_0(\tilde{s}_0,r) =  p_0(f(\tilde{s}_0),r)$
instead of $\tilde{p}(\tilde{s}',r\mid \tilde{s},a) =  p(f(\tilde{s}') ,r \mid f(\tilde{s}),a)$.

It follows that
$\tilde{q}^*(\tilde{s},a)=q^*(f(\tilde{s}),a)$, and therefore
\begin{align}
\tilde{\pi}^{*}(\tilde{s}) \ &= \ \argmax_a \  \tilde{q}^*(\tilde{s},a)
\ = \ \argmax_a \ q^*(f(\tilde{s}),a) \ = \ \pi^{*}(f(\tilde{s})) \ .
\end{align}

Using Bellman's optimality equation would give the same result,
where in above equation
both $\sum_{a'} \pi(a'\mid f(\tilde{s}'))$ and $\sum_{a'} \tilde{\pi}(a'\mid \tilde{s}')$ are
replaced by $\max_{a'}$.
\end{proof}

\begin{theoremA}
  If a Markov decision process $\tilde{\cP}$ is state-enriched compared to
  the MDP $\cP$, then for each optimal policy $\tilde{\pi}^*$ of $\tilde{\cP}$ there
  exists an equivalent optimal policy $\pi^*$ of $\cP$, and vice
  versa, with $\tilde{\pi}^*(f(s))=\pi^*(s)$. The
  optimal return is the same for $\tilde{\cP}$ and $\cP$.
\end{theoremA}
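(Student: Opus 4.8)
The plan is to obtain the statement as the immediate specialization of the preceding theorem for general decision processes, using only the fact that every finite MDP is in particular a decision process. First I would observe that the hypothesis ``$\tilde{\cP}$ is state-enriched compared to the MDP $\cP$'' supplies exactly the ingredients required by the general state-enrichment definition: the surjection $f$ with $f(\tilde{s})=s$, the extra-information map $g$, the reconstruction map $\nu$ satisfying $\nu(f(\tilde{s}),g(\tilde{s}))=\tilde{s}$, and the update map $H$ with $h(\tilde{s}')=H(r,\tilde{s},a)$, together with the transition-matching identities $\tilde{p}(\tilde{s}',r\mid \tilde{s},a)=p(f(\tilde{s}'),r\mid f(\tilde{s}),a)$ and $\tilde{p}_0(\tilde{s}_0)=p_0(f(\tilde{s}_0))$. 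Since $\cP$ and $\tilde{\cP}$ are MDPs, these are ordinary Markov transition-reward kernels, so no extra structure is needed to meet the hypotheses.

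Next I would invoke the preceding (general) theorem verbatim. That theorem already carries out the backward induction $t\to t-1$ establishing $\tilde{q}^{\tilde{\pi}}(\tilde{s},a)=q^{\pi}(f(\tilde{s}),a)$ for any pair of policies with $\tilde{\pi}(\tilde{s})=\pi(f(\tilde{s}))$, and hence $\tilde{q}^{*}(\tilde{s},a)=q^{*}(f(\tilde{s}),a)$. From this the policy correspondence follows by taking $\argmax_a$ on both sides, and the equality of optimal returns follows by evaluating at $t=0$ using the initial-distribution identity $\tilde{p}_0(\tilde{s}_0)=p_0(f(\tilde{s}_0))$. The correspondence stated here as $\tilde{\pi}^*(f(s))=\pi^*(s)$ is the same bijection between optimal policies as the $\tilde{\pi}^*(\tilde{s})=\pi^*(f(\tilde{s}))$ of the general theorem, read through $f$; this is the only point where I would be careful, since the two statements must be matched notationally.

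The main obstacle is thus one of bookkeeping rather than mathematics: I must confirm that the two notions of state enrichment used in the appendix coincide on MDPs, i.e.\ that the homomorphic-image definition (where $\cP$ is a homomorphic image of $\tilde{\cP}$ with $g_{\tilde{s}}$ the identity) yields the functions $f,g,\nu,H$ of the alternative definition and the transition-matching identities above. As a consistency check I would note that the conclusion is also deducible directly from Proposition~\ref{th:Aenrich} together with Lemma~\ref{th:Arav}, which already give equal optimal $Q$-values and a policy transfer for state-enriched MDPs; reconciling the two derivations confirms that the Markov property is preserved under the enrichment and that nothing beyond the general theorem is required.
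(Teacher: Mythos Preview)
Your proposal is correct, but it takes a different route from the paper. You reduce the MDP statement to the immediately preceding general decision-process theorem (the backward-induction argument showing $\tilde{q}^{\tilde{\pi}}(\tilde{s},a)=q^{\pi}(f(\tilde{s}),a)$), treating the MDP case as a mere specialization. The paper does \emph{not} do this: its proof observes that state enrichment makes $\cP$ a homomorphic image of $\tilde{\cP}$ and then invokes Lemma~\ref{th:Arav} (Ravindran--Barto) directly to conclude equality of optimal $Q$-values and the policy correspondence; equality of optimal returns is then immediate. In other words, what you mention only as a ``consistency check'' via Proposition~\ref{th:Aenrich} and Lemma~\ref{th:Arav} is in fact the paper's entire argument, and your main route via the general theorem is the alternative.

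Both approaches are short. Yours has the virtue of being self-contained within the appendix's alternative framework and not relying on the external homomorphism machinery; the paper's has the virtue of reusing an already-cited structural result rather than rerunning an induction. Your caution about the notational mismatch between $\tilde{\pi}^*(f(s))=\pi^*(s)$ and $\tilde{\pi}^*(\tilde{s})=\pi^*(f(\tilde{s}))$ is well placed; the paper does not comment on it.
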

\begin{proof}
  The MDP $\tilde{\cP}$ is a homomorphic image of $\cP$.
  For state-enrichment, the mapping $g$ is bijective, therefore the
  optimal policies in $\tilde{\cP}$ and $\cP$ are equal according to
  Lemma~\ref{th:Arav}. The optimal return is also equal since it does
  not change via state-enrichment.
\end{proof}

\subsection{Variance of the Weighted Sum of a Multinomial Distribution}

State transitions are multinomial distributions and the future
expected reward is a weighted sum of multinomial distributions.
Therefore, we are interested in the variance of the weighted sum
of a multinomial distribution.
Since we have
\begin{align}
  \EXP_{s',a'} \left[ q^\pi(s',a') \mid s,a \right] =
  \sum_{s'} p(s'\mid s,a) \sum_{a'} \pi(a' \mid s')  \ q^\pi(s',a')
  \ ,
\end{align}
the variance of $\EXP_{s',a'} \left[ q^\pi(s', a')\right]$
is determined by the variance of the multinomial distribution
$p(s'\mid s,a)$. In the following we derive
the variance of the estimation of a
linear combination of variables of a multinomial distribution like
$\sum_{s'} p(s'\mid s,a) f(s')$.

A multinomial distribution with parameters $(p_1, \ldots, p_N)$ as
event probabilities satisfying $\sum_{i=1}^N p_i = 1$
and support $x_i \in \{0, \dots, n\}$, $i \in \{1,\dots,N\}$ for $n$
trials, that is $\sum x_i = n$, has
\begin{align}
  \text{pdf} \ &\text{~~~~~~} \frac{n!}{x_1!\cdots x_k!} \ p_1^{x_1}
  \cdots p_k^{x_k} \ , \\
  \text{mean} \ &\text{~~~~~~} \EXP[X_i] \ = \ n \ p_i \ , \\
  \text{variance} \ &\text{~~~~~~} \VAR[X_i] \ = \ n \ p_i \ (1-p_i) \
  , \\
  \text{covariance} \ &\text{~~~~~~} \COV[X_i,X_j] \ =
  \ - \ n \ p_i\ p_j \ , \ \ (i\neq j) \ ,
\end{align}
where $X_i$ is the random variable and $x_i$ the actual count.

A linear combination of random variables has variance
\begin{align}
 \VAR \left[ \sum_{i=1}^N a_i \ X_i\right] \ &= \ \sum_{i,j=1}^{N} a_i
 \ a_j\ \COV \left[ X_i,X_j \right] \\ \nonumber
 &=\sum_{i=1}^N a_i^2 \ \VAR \left[ X_i\right] \ + \ \sum_{i\not=j}a_i
 \ a_j\ \COV\left[ X_i,X_j \right] \ .
\end{align}

The variance of estimating the mean $X$ of independent random variables
$(X_1,\ldots,X_n)$ that all have variance $\sigma^2$ is:
\begin{align}
 \VAR \left[X \right] \ &= \
 \VAR \left[ \frac{1}{n} \sum_{i=1}^n X_i\right] \\ \nonumber
 &= \  \frac{1}{n^2} \sum_{i=1}^n\VAR \left[  X_i\right]
 \ = \  \frac{1}{n^2} \sum_{i=1}^n \sigma^2 \ = \ \frac{\sigma^2}{n} \ .
\end{align}

When estimating the mean $\bar{y}$ over $n$ samples
of a linear combination of variables of a
multinomial distribution $y=\sum_{i=1}^N a_i  X_i$, where each $y$ has
$n_y$ trials, we obtain:
\begin{align}
  \VAR \left[ \bar{y} \right] \ &= \ \frac{\sigma_y^2}{n} \ = \
  \frac{1}{n} \ \left( \sum_{i=1}^N a_i^2 \ n_y \ p_i \ (1-p_i) \ - \
  \sum_{i\not=j}a_i \ a_j \ n_y \ p_i \ p_j \right) \\ \nonumber
  &= \  \frac{n_y}{n} \ \left( \sum_{i=1}^N a_i^2  \ p_i \ (1-p_i) \ - \
  \sum_{i\not=j}a_i \ a_j \ p_i \ p_j \right) \\ \nonumber
  &= \ \frac{n_y}{n}  \ \left( \sum_{i=1}^N a_i^2 \ p_i  \ - \
    \sum_{(i,j)=(1,1)}^{(N,N)} a_i \
  a_j \ p_i \ p_j \right) \\ \nonumber
  &= \ \frac{n_y}{n}  \ \left( \sum_{i=1}^N a_i^2 \ p_i  \ - \
  \left( \sum_{i=1}^{N} a_i \  p_i\right)^2  \right) \ .
\end{align}

\section{Long Short-Term Memory (LSTM)}

\subsection{LSTM Introduction}

Recently, {\em Long Short-Term Memory} (LSTM; \cite{Hochreiter:91,Hochreiter:95,Hochreiter:97})
networks have emerged as the best-performing technique in speech and language processing.
LSTM networks have been overwhelming successful in different speech and language applications,
including handwriting recognition \cite{Graves:09}, generation of writings
\cite{Graves:13}, language modeling and identification \cite{Gonzalez-Dominguez:14,Zaremba:14arxiva},
automatic language translation \cite{Sutskever:14nips}, 
speech recognition \cite{Sak:14,Geiger:14}
analysis of audio data \cite{Marchi:14}, analysis, annotation, and
description of video data \cite{Donahue:14,Venugopalan:14,Srivastava:15}. 
LSTM has facilitated recent benchmark records in TIMIT phoneme recognition (Google),
optical character recognition, text-to-speech synthesis (Microsoft),
language identification (Google), large vocabulary speech recognition (Google),
English-to-French translation (Google), audio onset detection, social signal classification,
image caption generation (Google), video-to-text description, end-to-end speech recognition (Baidu),
and semantic representations. In the proceedings of the flagship conference {\em ICASSP 2015}
(40\textsuperscript{th} IEEE International Conference on Acoustics, Speech and Signal
Processing, Brisbane, Australia, April 19--24, 2015), 13 papers had ``LSTM'' in their
title, yet many more contributions described computational approaches that make use of LSTM.

The key idea of LSTM is the use of memory cells that allow for constant error flow
during training. Thereby, LSTM avoids the {\em vanishing gradient problem}, that is,
the phenomenon that training errors are decaying when they are back-propagated through time
\cite{Hochreiter:91,Hochreiter:00}.
The vanishing gradient problem severely impedes {\em credit assignment} in recurrent neural
networks, i.e.\ the correct identification of relevant events whose effects are not
immediate, but observed with possibly long delays.
LSTM, by its constant error flow, avoids vanishing gradients and, hence, allows for
{\em uniform credit assignment}, i.e.\ all input signals obtain a similar error signal.
Other recurrent neural networks are not able to assign the same credit to all input signals,
therefore they are very limited concerning the solutions they will
find. Uniform credit assignment enabled LSTM networks to excel in speech and
language tasks: if a sentence is analyzed, then the first word can be as important as
the last word. Via uniform credit assignment, LSTM networks regard all words of a sentence equally.
Uniform credit assignment enables to consider all input information
at each phase of learning, no matter where it is located in the input
sequence. Therefore, uniform credit assignment reveals many more
solutions to the learning algorithm which would otherwise remain hidden. 

\begin{figure}[htb]
\centering
\includegraphics[angle=0,width=1.0\textwidth]{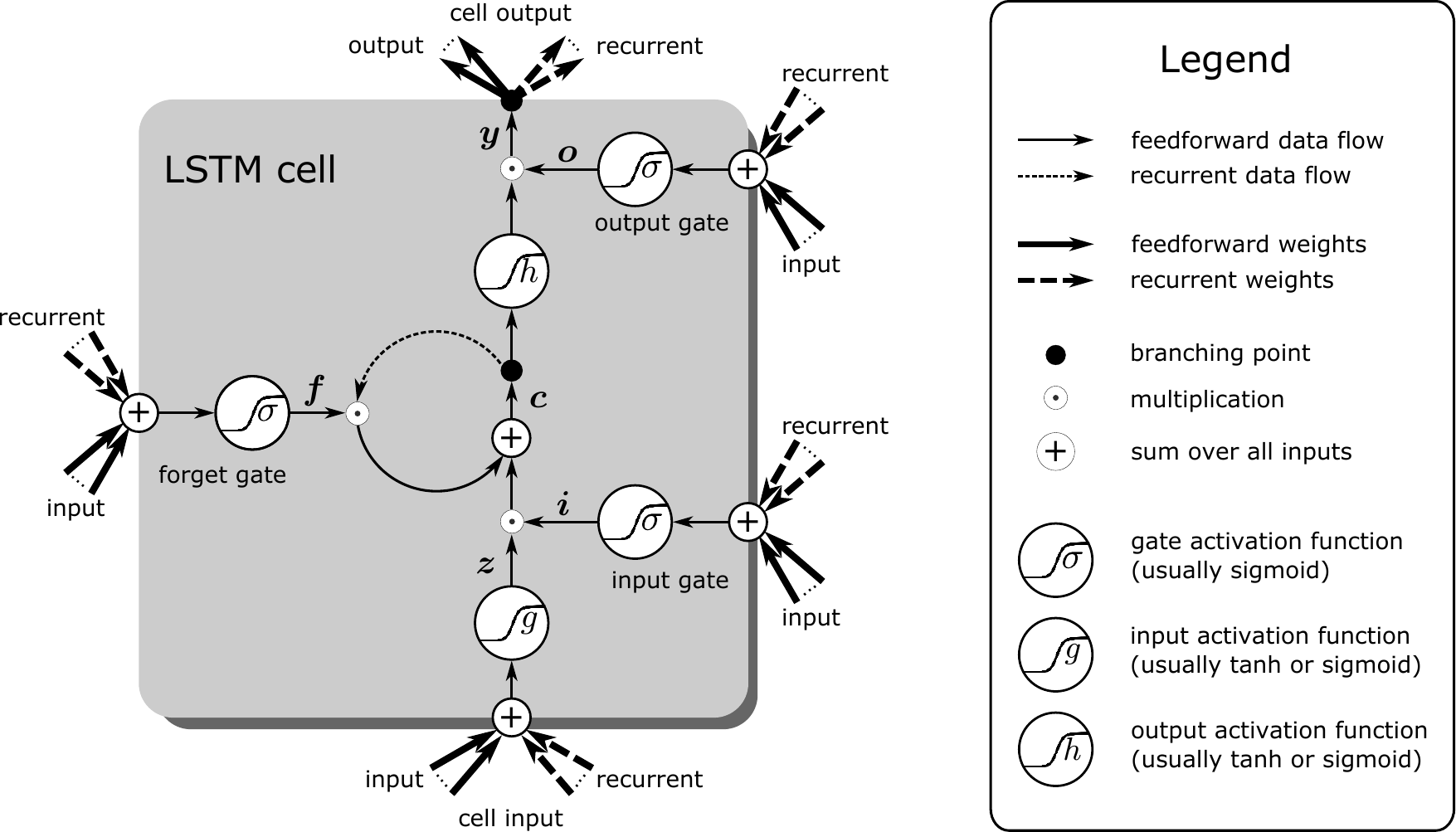}
\caption{LSTM memory cell without peepholes. 
$\Bz$ is the vector of cell input
activations, $\Bi$ is the vector of input gate
activations,  $\Bf$ is the vector of forget gate
activations,  $\Bc$ is the vector of memory cell states,
$\Bo$ is the vector of output gate
activations, and $\By$ is the vector of cell output 
activations. The activation functions are $g$ for the cell input, $h$ for the cell
state, and $\sigma$ for the gates. Data flow is either ``feed-forward''
without delay or ``recurrent'' with an one-step delay.
``Input'' connections are from the
external input to the LSTM network, while ``recurrent'' connections take inputs
from other memory cells and hidden units of the LSTM network with a delay of one time step.  
\label{fig:cellFB}}
\end{figure}

\subsection{LSTM in a Nutshell}

The central processing and storage unit for LSTM recurrent networks is
the {\em memory cell}. As already mentioned, it avoids vanishing gradients and allows for
uniform credit assignment.
The most commonly used LSTM memory cell architecture in the 
literature \cite{Graves:05,Schmidhuber:15} 
contains forget gates \cite{Gers:99a,Gers:00}
and peephole connections \cite{Gers:00a}. 
In our previous work \cite{Hochreiter:01,Hochreiter:07}, 
we found that peephole connections are 
only useful for modeling time series, but not for 
language, meta-learning, or biological sequences. 
That peephole connections can be removed without performance decrease, 
was recently confirmed in a large assessment, where 
different LSTM architectures have been tested \cite{Greff:15}.
While LSTM networks are highly successful in various applications, 
the central memory cell architecture was not modified since 2000 \cite{Schmidhuber:15}.
A memory cell architecture without peepholes is depicted in
Figure~\ref{fig:cellFB}. 

In our definition of a LSTM network, all units of one kind are
pooled to a vector: $\Bz$ is the vector of cell input
activations, $\Bi$ is the vector of input gate
activations,  $\Bf$ is the vector of forget gate
activations,  $\Bc$ is the vector of memory cell states,
$\Bo$ is the vector of output gate
activations, and $\By$ is the vector of cell output 
activations.
We assume to have an input sequence, where the input vector at 
time $t$ is $\Bx^t$. The matrices $\BW_{\Bz}$, $\BW_{\Bi}$,
$\BW_{\Bf}$, and $\BW_{\Bo}$ correspond to the
weights of the connections between inputs and cell input, input gate, forget gate, and
output gate, respectively.
The vectors  $\Bb_{\Bz}$, $\Bb_{\Bi}$,
$\Bb_{\Bf}$, and $\Bb_{\Bo}$ are the bias vectors of cell input, input gate, forget gate, and
output gate, respectively.
The activation functions are $g$ for the cell input, $h$ for the cell
state, and $\sigma$ for the gates, where these functions are evaluated in a
component-wise manner if they are applied to vectors.
Typically, either the sigmoid $\frac{1}{1+\exp(-x)}$ or
$\tanh$ are used as activation functions.
$\odot$ denotes the point-wise multiplication
of two vectors. Without peepholes, the LSTM memory cell forward pass rules
are (see Figure~\ref{fig:cellFB}):
\begin{align}
\Bz^t \ &= \ g \left( \BW_{\Bz} \ \Bx^t \ + \
   \Bb_{\Bz}\right) & \text{cell input} \\
\Bi^t \ &= \ \sigma \left( \BW_{\Bi} \ \Bx^t \ + \
    \Bb_{\Bi} \right) & \text{input gate} \\
\Bf^t \ &= \ \sigma \left( \BW_{\Bf} \ \Bx^t \ + \
   \Bb_{\Bf} \right) & \text{forget gate} \\
\Bc^t \ &= \  \Bi^t \odot \Bz^t \ + \ 
\Bf^t \odot \Bc^{t-1} & \text{cell state} \\
\Bo^t \ &= \ \sigma \left( \BW_{\Bo} \ \Bx^t \ + \
  \Bb_{\Bo} \right) & \text{output gate} \\
\By^t \ &= \ \Bo^t \odot h\left( \Bc^t \right) &
\text{cell output}
\end{align}

\subsection{Long-Term Dependencies vs.\ Uniform Credit Assignment}

The LSTM network has been proposed with the aim
to learn {\em long-term dependencies} in sequences
which span over long intervals
\cite{Hochreiter:97,Hochreiter:97e,Hochreiter:97f,Hochreiter:98}. 
However, besides extracting long-term dependencies, 
LSTM memory cells have another, even
more important, advantage in sequence learning:
as already described in the early 1990s,
LSTM memory cells allow for {\em uniform credit assignment}, that is,
the propagation of errors back to inputs without 
scaling them \cite{Hochreiter:91}. 
For uniform credit assignment of current LSTM architectures,
the forget gate $\Bf$ must be one or close to one.  
A memory cell without an input gate $\Bi$ just sums up all the squashed inputs it
receives during scanning the input sequence.
Thus, such a memory cell is equivalent to a unit that sees all sequence
elements at the same time, as has been shown via 
the ``Ersatzschaltbild'' \cite{Hochreiter:91}.
If an output error occurs only at the end of the sequence,
such a memory cell, via backpropagation, supplies
the same delta error at the cell input unit $\Bz$ at every time
step.
Thus, all inputs obtain the same credit for producing the correct
output and are treated on an equal level and, consequently, the incoming weights to a memory cell 
are adjusted by using the same delta error at the input unit $\Bz$.

In contrast to LSTM memory cells, standard recurrent networks scale
the delta error and assign different credit to different inputs.
The more recent the input, the more credit it obtains.
The first inputs of the sequence are hidden from the final states of
the recurrent network.
In many learning tasks, however, important information is distributed over
the entire length of the sequence and can even occur at the very beginning. For
example, in language- and text-related tasks, 
the first words are often important for the meaning of a sentence. 
If the credit assignment is not uniform along the input sequence, then
learning is very limited. Learning would start by trying to improve
the prediction solely by using the most recent inputs.
Therefore, the solutions that can be found are restricted to those
that can be constructed if the last inputs are considered first.
Thus, only those solutions are found that are accessible by gradient
descent from regions in the parameter space that only use the most recent input information.
In general, these limitations lead to sub-optimal solutions, since 
learning gets trapped in local optima. 
Typically, these local optima correspond to solutions 
which efficiently exploit the most recent information in the input
sequence, while information way back in the past is neglected.

\subsection{Special LSTM Architectures for contribution Analysis}
\label{sec:ALSTMadjust}

\subsubsection{LSTM for Integrated Gradients}

For Integrated Gradients contribution analysis with LSTM, 
we make following assumptions:
\begin{enumerate}[label=\textbf{(A\arabic*)}]
\item $\Bf^t=1$ for all $t$. That is the forget gate is always 1 and
  nothing is forgotten. We assume uniform credit assignment, which
  is ensured by the forget gate set to one.

\item $\Bo^t=1$ for all $t$. That is the output gate is always 1 and
  nothing is forgotten. 

\item We set $h=a_h \tanh$ with $a_h=1,2,4$.

\item We set $g=a_g \tanh$ with $a_g=1,2,4$.

\item The cell input gate $\Bz$ is only connected to the input but not
  to other memory cells.  $\BW_{\Bz}$ has only connections to the
  input. 

\item The input gate $\Bi$ is not connected to the input, that is,
  $\BW_{\Bi}$ has only connections to other memory cells. This ensures
  that LRP assigns relevance only via $\Bz$ to the input.

\item The input gate $\Bi$ has a negative bias, that is,
  $\Bb_{\Bi}<0$. The negative bias
  reduces the drift effect, that is, the memory 
  content $\Bc$ either increases or decreases over time.
  Typical values are $\Bb_{\Bi}=-1,-2,-3,-4,-5$.

\item The memory cell content is initialized with zero at time $t=0$,
  that is, $\Bc^0=0$.
 
\end{enumerate}

The resulting LSTM forward pass rules for Integrated Gradients are:
\begin{align}
\Bz^t \ &= \ a_g \ \sigma \left( \BW_{\Bz} \ \Bx^t \ + \
   \Bb_{\Bz}\right) & \text{cell input} \\
\Bi^t \ &= \ \sigma \left( \BW_{\Bi} \ \Bx^t \ + \
    \Bb_{\Bi} \right) & \text{input gate} \\
\Bc^t \ &= \  \Bi^t \ \odot \ \Bz^t \ + \ \Bc^{t-1} & \text{cell state} \\
\By^t \ &= \ a_h \ \tanh\left( \Bc^t \right) &
\text{cell output}
\end{align}
See Figure~\ref{fig:cellMarkovA} which depicts these  
forward pass rules for Integrated Gradients.

\begin{figure}[htb]
\centering
\includegraphics[angle=0,width=1.0\textwidth]{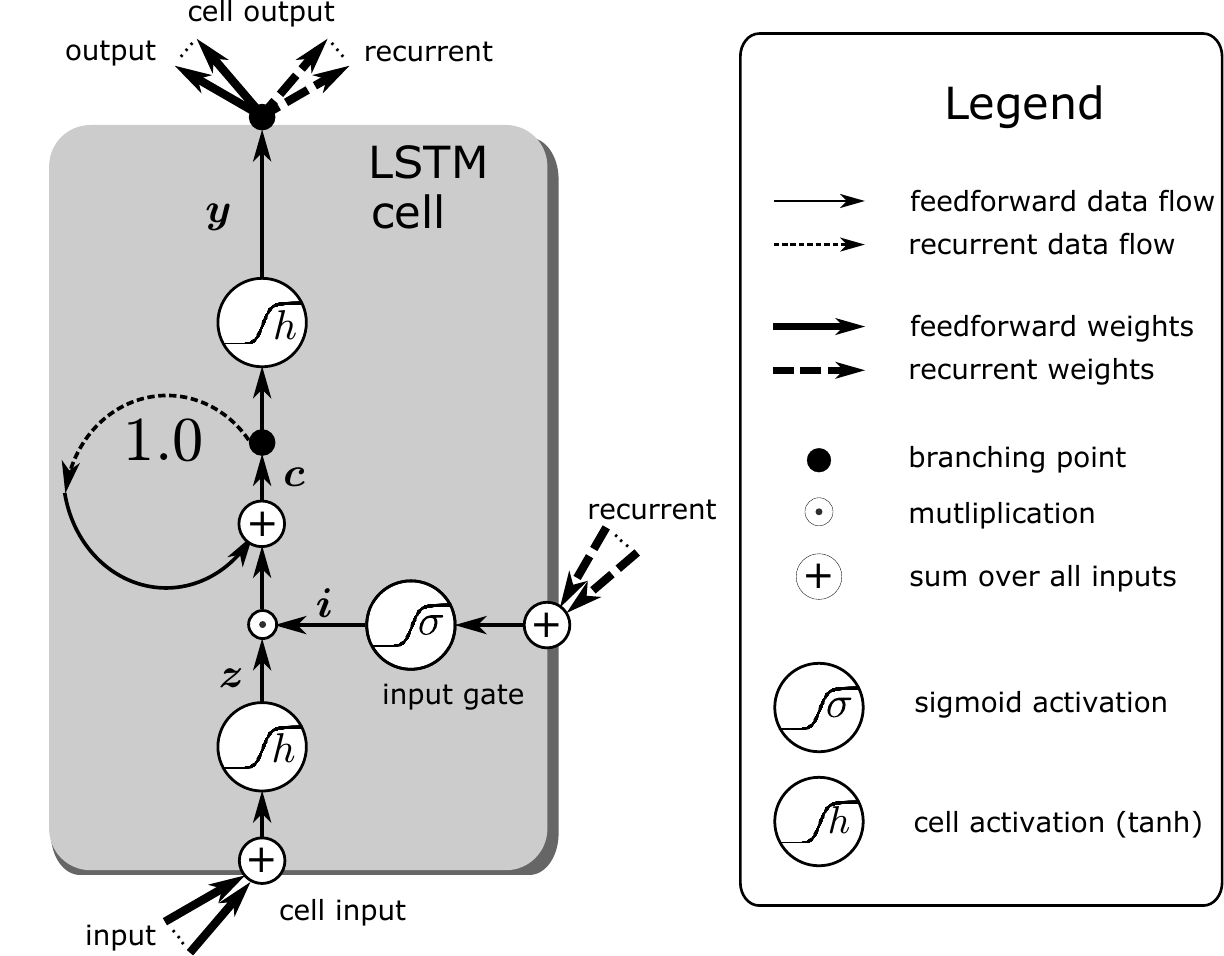}
\caption{LSTM memory cell used for Integrated Gradients (IG). 
  Forget gates and output gates are set to 1 since they
  can modify all cell inputs at times after they have been observed,
  which can make the dynamics highly nonlinear.
\label{fig:cellMarkovA}}
\end{figure}

\subsubsection{LSTM for LRP}

LRP has already been used for LSTM in order to
identify important terms in sentiment analysis \cite{Arras:17}.
In texts, positive and negative terms with respect to the topic
could be identified.

For LRP contribution analysis with LSTM, we make following assumptions:
\begin{enumerate}[label=\textbf{(A\arabic*)}]
\item $\Bf^t=1$ for all $t$. That is the forget gate is always 1 and
  nothing is forgotten. We assume uniform credit assignment, which
  is ensured by the forget gate set to one.

\item $g>0$, that is, $g$ is positive. For example we can use a sigmoid
  $\sigma(x)= a_g \frac{1}{1+\exp(-x)}$: $g(x)=a_g \sigma(x)$, with
  $a_g = 2,3,4$.
  Methods like LRP have problems with negative contributions
  which cancel with positive contributions \cite{Montavon:17}.
  With a positive $g$ all
  contributions are positive.
  The cell input $\Bz$ (the function $g$) has a negative bias, that is,
  $\Bb_{\Bz}<0$. This is important to avoid the drift effect.
  The drift effect is that the memory content only gets positive
  contributions which lead to an increase of $\Bc$ over time.
  Typical values are $\Bb_{\Bz} = -1,-2,-3,-4,-5$.

\item We want to ensure that $h(0)=0$. If the memory content is zero
  then nothing is transferred to the next layer.
  Therefore we set $h=a_h \tanh$ with $a_h=1,2,4$.

\item The cell input gate $\Bz$ is only connected to the input but not
  to other memory cells.  $\BW_{\Bz}$ has only connections to the
  input. This ensures
  that LRP assigns relevance $\Bz$ to the input and $\Bz$ is not
  disturbed by redistributing relevance to the network.

\item The input gate $\Bi$ is not connected to the input, that is,
  $\BW_{\Bi}$ has only connections to other memory cells. This ensures
  that LRP assigns relevance only via $\Bz$ to the input.

\item The output gate $\Bo$ is not connected to the input, that is,
  $\BW_{\Bo}$ has only connections to other memory cells. This ensures
  that LRP assigns relevance only via $\Bz$ to the input.

\item The input gate $\Bi$ has a negative bias, that is,
  $\Bb_{\Bi}<0$. Like with the cell input the negative bias
  avoids the drift effect.
  Typical values are $\Bb_{\Bi}=-1,-2,-3,-4$.

\item The output gate $\Bo$ may also have a negative bias, that is,
  $\Bb_{\Bo}<0$. This allows to bring in different memory cells at
  different time points. It is related to resource allocation.
  
\item The memory cell content is initialized with zero at time $t=0$,
  that is, $\Bc^0=0$. The memory cell content $\Bc^t$ 
  is non-negative $\Bc^t \geq 0$ since 
  $\Bz \geq 0$ and $\Bi\geq 0$.
 
\end{enumerate}

The resulting LSTM forward pass rules for LRP are:
\begin{align}
\Bz^t \ &= \ a_g \ \sigma \left( \BW_{\Bz} \ \Bx^t \ + \
   \Bb_{\Bz}\right) & \text{cell input} \\
\Bi^t \ &= \ \sigma \left( \BW_{\Bi} \ \Bx^t \ + \
    \Bb_{\Bi} \right) & \text{input gate} \\
\Bc^t \ &= \  \Bi^t \ \odot \ \Bz^t \ + \ \Bc^{t-1} & \text{cell state} \\
\Bo^t \ &= \ \sigma \left( \BW_{\Bo} \ \Bx^t \ + \
  \Bb_{\Bo} \right) & \text{output gate} \\
\By^t \ &= \ \Bo^t \ \odot \ a_h \ \tanh\left( \Bc^t \right) &
\text{cell output}
\end{align}
See Figure~\ref{fig:cellLRP} which depicts these  
forward pass rules for LRP. However, gates may be used while no
relevance is given to them which may lead to inconsistencies.

\begin{figure}[htb]
\centering
\includegraphics[angle=0,width=1.0\textwidth]{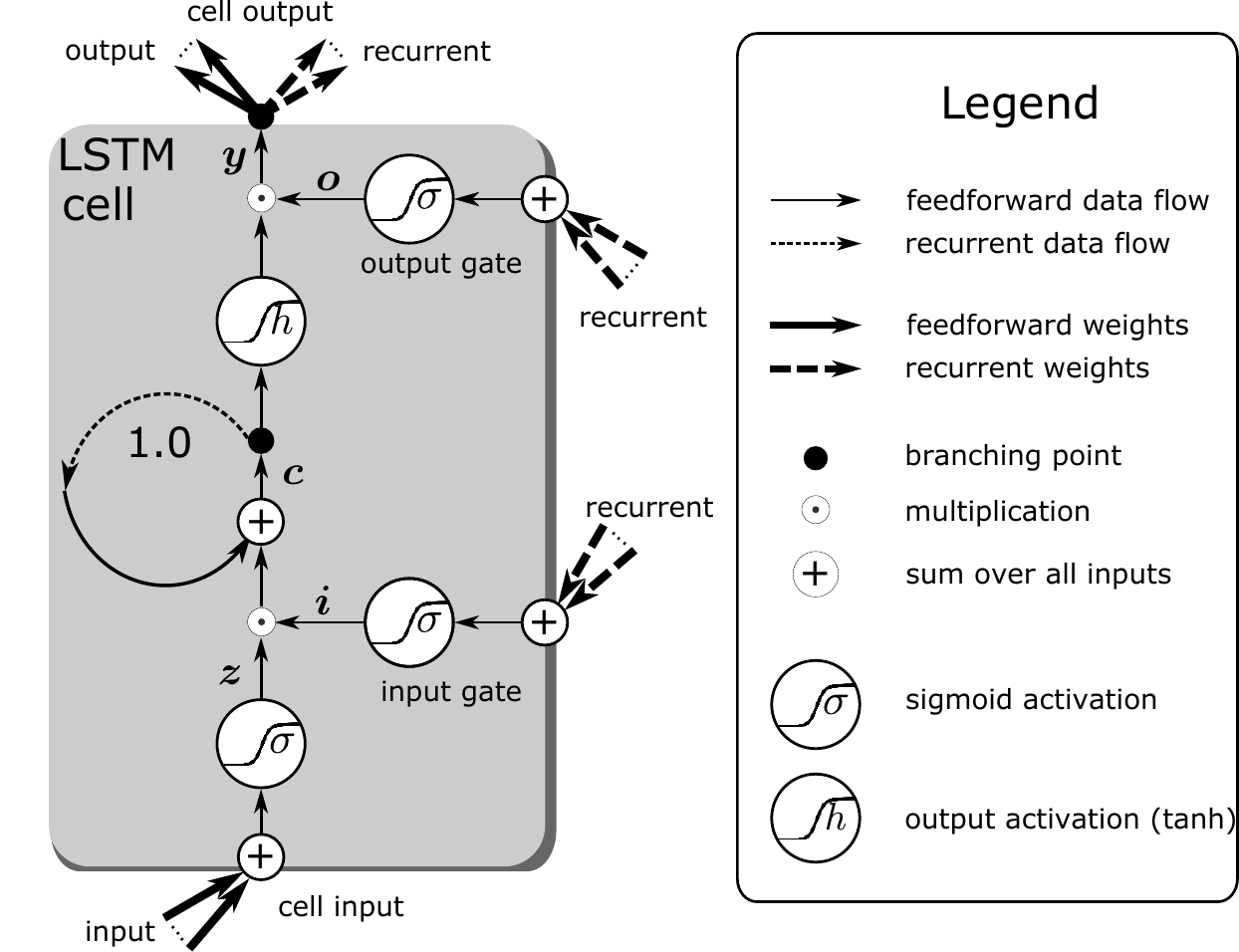}
\caption{LSTM memory cell used for Layer-Wise Relevance Propagation (LRP). 
$\Bz$ is the vector of cell input
activations, $\Bi$ is the vector of input gate
activations,  $\Bc$ is the vector of memory cell states,
$\Bo$ is the vector of output gate
activations, and $\By$ is the vector of cell output 
activations. The activation functions are
the sigmoid $\sigma(x)=a_g \frac{1}{1+\exp(-x)}$
and the cell state activation $h(x)=a_h \tanh(x)$. 
Data flow is either ``feed-forward''
without delay or ``recurrent'' with an one-step delay.
External input reaches the LSTM network 
only via the cell input $\Bz$. All gates only receive
recurrent input, that is, from other memory cells.
\label{fig:cellLRP}}
\end{figure}

\paragraph{LRP and Contribution Propagation for LSTM.}

We analyze Layer-wise Relevance Propagation (LRP) and Contribution Propagation
for LSTM networks.
A single memory cell can be described by:
\begin{align}
c^t \ &= \  i^t \ z^t \ + \ c^{t-1} \ .
\end{align}
Here we treat $i^t$ like a weight for $z^t$ and $c^{t-1}$ has weight 1.

For positive values of $i^t$,  $z^t$, and $c^{t-1}$,
both LRP and contribution propagation leads to
\begin{align}
    R_{c^t \leftarrow y^t}\ &= \  R_{y^t} \\
    R_{c^t}\ &= \   R_{c^t \leftarrow c^{t+1}} \ + \ R_{c^t \leftarrow y^t} \\
    R_{c^{t-1} \leftarrow c^t}\ &= \  \frac{c^{t-1}}{c^t} \ R_{c^t} \\
    R_{z^t \leftarrow c^t}\ &= \  \frac{ i^t \ z^t }{c^t} \ R_{c^t} \ .
\end{align}
Since we predict only at the last step $t=T$, we have
$R_{y^t}=0$ for $t<T$. For $t=T$ we obtain $R_{c^T}=R_{y^T}$, since
$R_{c^T \leftarrow c^{T+1}} =0$.

We obtain for $t=1 \ldots T$:
\begin{align}
    R_{c^T}\ &= \   R_{y^T} \\
    R_{c^{t-1}}\ &= \  \frac{c^{t-1}}{c^t} \ R_{c^t} 
\end{align}
which gives
\begin{align}
    R_{c^t}\ &= \   R_{y^T} \ \prod_{\tau=t+1}^{T}
               \frac{c^{\tau-1}}{c^{\tau}}
  \ = \ \frac{c^{t}}{c^{T}} \ R_{y^T} 
\end{align}
and consequently as $c^0=0$ we obtain
\begin{align}
   R_{c^0}\ &= \ 0 \ , \\ 
    R_{z^t}\ &= \   \frac{i^t \ z^t }{c^{T}} \ R_{y^T} \ . 
\end{align}
Since we assume $c^0=0$, we have
\begin{align}
   c^{T}\ &= \   \sum_{t=1}^{T} i^t \ z^t 
\end{align}
and therefore
\begin{align}
    R_{z^t}\ &= \   \frac{i^t \ z^t }{\sum_{\tau=1}^{T} i^{\tau} \ z^{\tau}} \ R_{y^T} \ . 
\end{align}

Therefore the relevance $R_{y^T}$ is distributed across the inputs
$z^t$ for $t=1 \ldots T-1$, where input $z^t$ obtains relevance $R_{z^t}$.

\subsubsection{LSTM for Nondecreasing Memory Cells}

contribution analysis is made simpler if memory cells are nondecreasing since
the contribution of each input to each memory cells
is well defined. The problem that a 
negative and a positive input cancels each other is avoided. 
For nondecreasing memory cells 
and contribution analysis with LSTM, 
we make following assumptions:
\begin{enumerate}[label=\textbf{(A\arabic*)}]
\item $\Bf^t=1$ for all $t$. That is the forget gate is always 1 and
  nothing is forgotten. We assume uniform credit assignment, which
  is ensured by the forget gate set to one.

\item $g>0$, that is, $g$ is positive. For example we can use a sigmoid
  $\sigma(x)= a_g \frac{1}{1+\exp(-x)}$: $g(x)=a_g \sigma(x)$, with
  $a_g = 2,3,4$.
  With a positive $g$ all
  contributions are positive.
  The cell input $\Bz$ (the function $g$) has a negative bias, that is,
  $\Bb_{\Bz}<0$. This is important to avoid the drift effect.
  The drift effect is that the memory content only gets positive
  contributions which lead to an increase of $\Bc$ over time.
  Typical values are $\Bb_{\Bz} = -1,-2,-3,-4,-5$.

\item We want to ensure that $h(0)=0$. If the memory content is zero
  then nothing is transferred to the next layer.
  Therefore we set $h=a_h \tanh$ with $a_h=1,2,4$.

\item The cell input gate $\Bz$ is only connected to the input but not
  to other memory cells.  $\BW_{\Bz}$ has only connections to the
  input. 

\item The input gate $\Bi$ is not connected to the input, that is,
  $\BW_{\Bi}$ has only connections to other memory cells. 

\item The output gate $\Bo$ is not connected to the input, that is,
  $\BW_{\Bo}$ has only connections to other memory cells. 

\item The input gate $\Bi$ has a negative bias, that is,
  $\Bb_{\Bi}<0$. Like with the cell input the negative bias
  avoids the drift effect.
  Typical values are $\Bb_{\Bi}=-1,-2,-3,-4$.

\item The output gate $\Bo$ may also have a negative bias, that is,
  $\Bb_{\Bo}<0$. This allows to bring in different memory cells at
  different time points. It is related to resource allocation.
  
\item The memory cell content is initialized with zero at time $t=0$,
  that is, $\Bc^0=0$. We ensured via the architecture that $\Bc^t \geq 0$
  and $\Bc^{t+1} \geq \Bc^{t}$, that is, 
  the memory cells are positive and nondecreasing.
 
\end{enumerate}

The resulting LSTM forward pass rules for nondecreasing memory cells are:
\begin{align}
\Bz^t \ &= \ a_g \ \sigma \left( \BW_{\Bz} \ \Bx^t \ + \
   \Bb_{\Bz}\right) & \text{cell input} \\
\Bi^t \ &= \ \sigma \left( \BW_{\Bi} \ \Bx^t \ + \
    \Bb_{\Bi} \right) & \text{input gate} \\
\Bc^t \ &= \  \Bi^t \ \odot \ \Bz^t \ + \ \Bc^{t-1} & \text{cell state} \\
\Bo^t \ &= \ \sigma \left( \BW_{\Bo} \ \Bx^t \ + \
  \Bb_{\Bo} \right) & \text{output gate} \\
\By^t \ &= \ \Bo^t \ \odot \ a_h \ \tanh\left( \Bc^t \right) &
\text{cell output}
\end{align}
See Figure~\ref{fig:cellLRPMarkov} for a LSTM memory cell that is
nondecreasing.

\begin{figure}[htb]
\centering
\includegraphics[angle=0,width=1.0\textwidth]{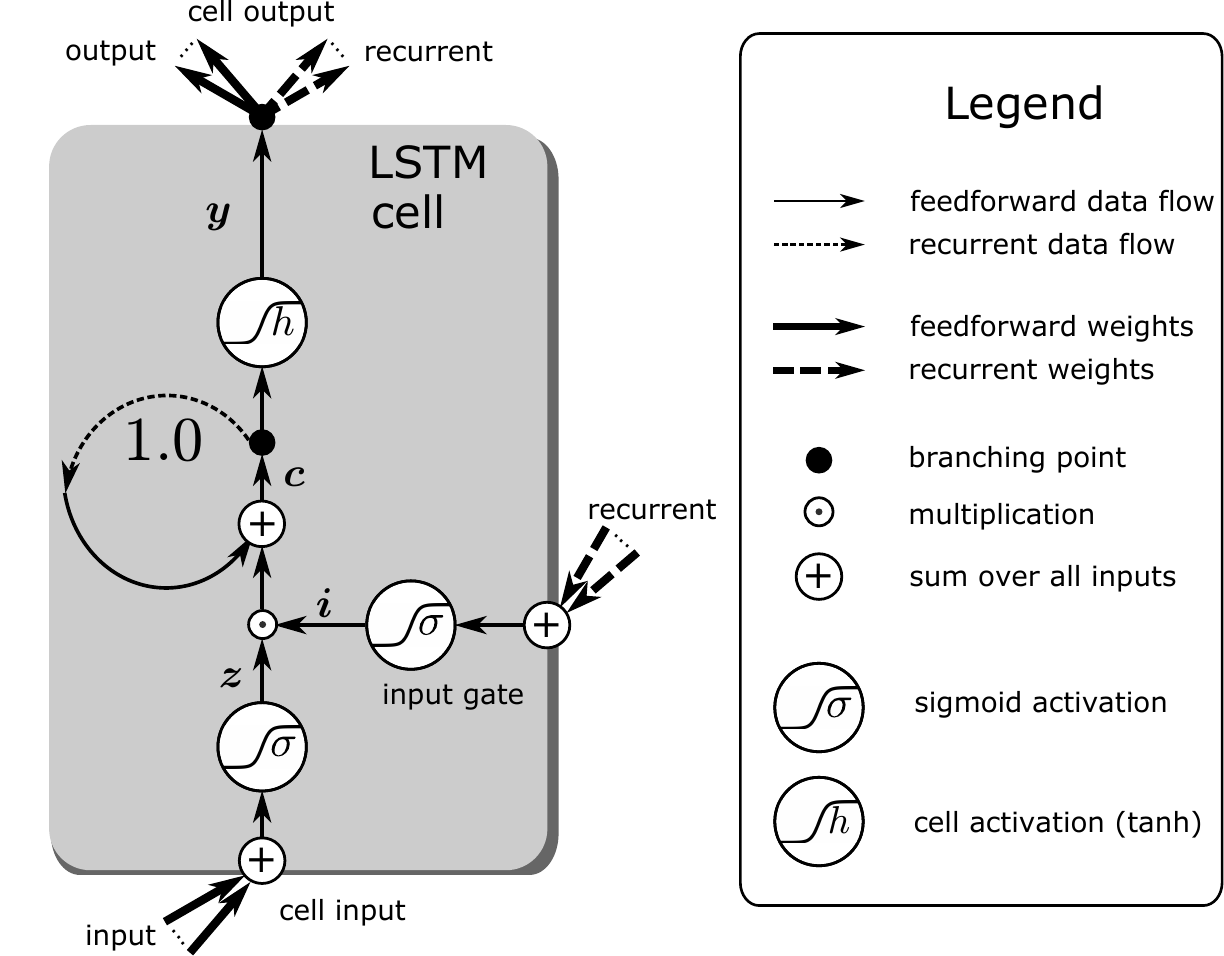}
\caption{A nondecreasing LSTM memory cell. 
\label{fig:cellLRPMarkov}}
\end{figure}

\subsubsection{LSTM without Gates}

The most simple LSTM architecture for contribution analysis does not
use any gates. Therefore complex dynamics that have to be treated 
in the contribution analysis are avoided.
For LSTM without gates, 
we make following assumptions:
\begin{enumerate}[label=\textbf{(A\arabic*)}]
\item $\Bf^t=1$ for all $t$. That is the forget gate is always 1 and
  nothing is forgotten. 

\item $\Bo^t=1$ for all $t$. That is the output gate is always 1.

\item $\Bi^t=1$ for all $t$. That is the input gate is always 1.

\item $g>0$, that is, $g$ is positive. For example we can use a sigmoid
  $\sigma(x)= a_g \frac{1}{1+\exp(-x)}$: $g(x)=a_g \sigma(x)$, with
  $a_g = 2,3,4$.
  With a positive $g$ all
  contributions are positive.
  The cell input $\Bz$ (the function $g$) has a negative bias, that is,
  $\Bb_{\Bz}<0$. This is important to avoid the drift effect.
  The drift effect is that the memory content only gets positive
  contributions which lead to an increase of $\Bc$ over time.
  Typical values are $\Bb_{\Bz} = -1,-2,-3,-4,-5$.

\item We want to ensure that $h(0)=0$. If the memory content is zero
  then nothing is transferred to the next layer.
  Therefore we set $h=a_h \tanh$ with $a_h=1,2,4$.

\item The memory cell content is initialized with zero at time $t=0$,
  that is, $\Bc^0=0$. 
 
\end{enumerate}

The resulting LSTM forward pass rules are:
\begin{align}
\Bz^t \ &= \ a_g \ \sigma \left( \BW_{\Bz} \ \Bx^t \ + \
   \Bb_{\Bz}\right) & \text{cell input} \\
\Bc^t \ &= \  \Bz^t \ + \ \Bc^{t-1} & \text{cell state} \\
\By^t \ &= \ a_h \ \tanh\left( \Bc^t \right) &
\text{cell output}
\end{align}
See Figure~\ref{fig:cellLRPNoGate} for a LSTM memory cell without
gates which perfectly distributes the relevance across the input.

\begin{figure}[htb]
\centering
\includegraphics[angle=0,width=1.0\textwidth]{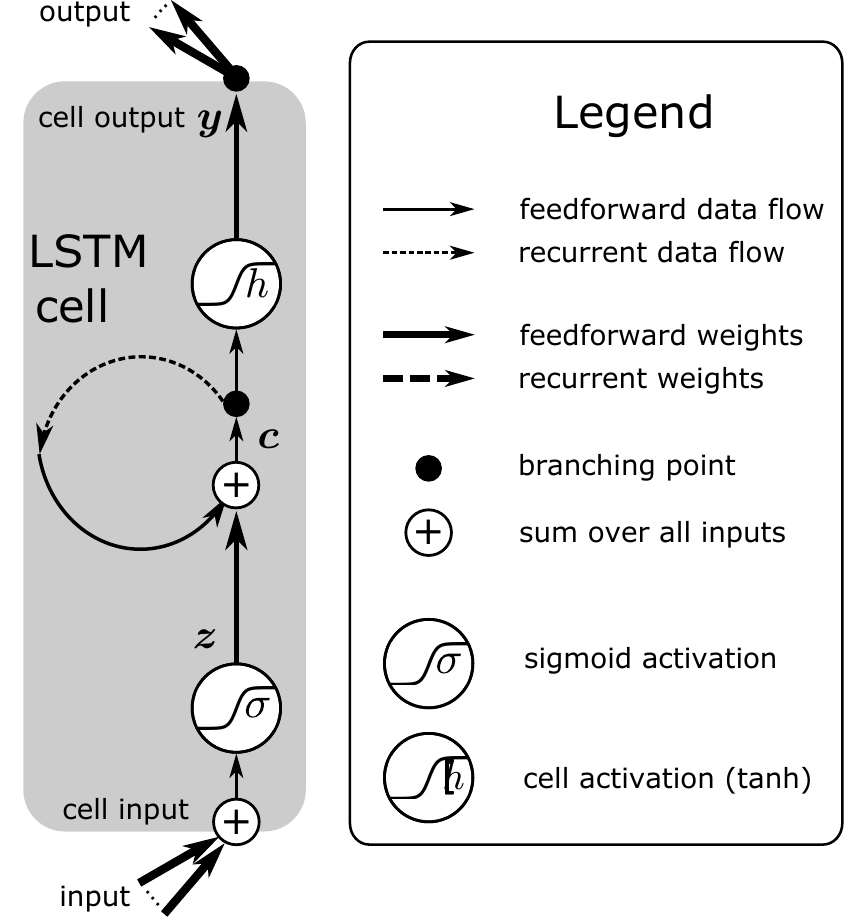}
\caption{LSTM memory cell without gates. 
\label{fig:cellLRPNoGate}}
\end{figure}

\clearpage

\pagebreak

\section{Contribution Analysis}
\label{sec:Aback}

\subsection{Difference of Consecutive Predictions for Sequences}

\paragraph{General Approach.}

The idea is to assess the information gain that is induced by an input at a particular time step. This information gain is used for predicting the target at sequence end
by determining the change in prediction.
The input to a recurrent neural network is the sequence
$\Bx=(x_1,\ldots,x_d)$ with target $y_d$, which is only given at
sequence end.
The prefix sequence $\Bx_t$ of length $t\leq d$ is $\Bx_t=(x_1,\ldots,x_t)$.
$F$ predicts the target $y_d$ at every time step $t$:
\begin{align}
 F(\Bx_t) \ &= \ y_d \ . 
\end{align} 
We can define the decomposition of $F$ through contributions at different time steps
\begin{align}
 h_0 \ &= \ F(\Bx_0)  \ , \\
 h_t \ &= \ F(\Bx_t) \ - \ F(\Bx_{t-1}) \ \text{ for } t > 0 \ ,
\end{align} 
where $F(\Bx_0)$ is a predefined constant.
We have
\begin{align}
 F(\Bx_t) \ &= \ \sum_{\tau=0}^t h_{\tau} \ . 
\end{align} 

We assume a loss function for $F$ that is minimal if $F \equiv F_{\min}$ predicts the expected
$y_d$
\begin{align}
 F_{\min}(\Bx_t) \ &= \ \EXP \left[ y_d \mid \Bx_t \right] \ . 
\end{align} 
Then 
\begin{align}
 h_0 \ &= \   \EXP \left[ y_d  \right] \ , \\ 
 h_t \ &= \ \EXP \left[ y_d \mid \Bx_t \right] \ - \ 
 \EXP \left[ y_d \mid \Bx_{t-1} \right] \ \text{ for } t > 0 \ .
\end{align} 
In this case, the contributions are the change in the expectation of 
the target that will be observed at sequence end.
The contribution can be viewed as the information gain in time step $t$
for predicting the target.
If we cannot ensure that $F$ predicts the target at every time step,
then other contribution analysis methods must be employed.
For attributing the prediction of a deep
network to its input features several contribution analysis
methods have been proposed.
We consider Input Zeroing, 
Integrated Gradients (IG), and Layer-Wise Relevance Propagation (LRP).

\paragraph{Linear Models and Coefficient of Determination.}

We consider linear models and the average gain of information about
the reward at sequence end if we go one time step further in the 
input sequence.
By adding a variable, that is, another sequence element, the mean squared error (MSE)
decreases, which is the amount by which the expectation improves due to new information.
But by what amount does the MSE decrease in average? 
Here, we consider linear models.
For linear models we are interested in how much the coefficient of 
determination increases if we add another variable, that is, if we
see another input.

We consider the feature vector $\Bx=(x_1,x_2,\ldots,x_k)^T$ from which
the target $y$ (the reward at sequence end) has to be predicted. 
We assume to have $n$ pairs $(\Bx_i,y_i), 1\leq i \leq n$,
as training set. 
The prediction or estimation of $y_i$ from $\Bx_i$ is $\hat{y}_i$ with 
$\hat{y}_i=F(\Bx_i)$. The vector of all training labels is $\By=(y_1,\ldots,y_n)$ and
the training feature matrix is $\BX=(\Bx_1,\ldots,\Bx_n)$.
We define the mean squared error (MSE) as
\begin{align}
 \mathrm{mse}(\By,\BX) \ &= \ \frac{1}{n-1}\ 
 \sum_{i=1}^n \left( \hat{y}_i \ - \ y_i  \right)^2 \ . 
\end{align}

The {\em coefficient of determination} $R^2$
is equal to the correlation between the target $y$ and
its prediction $\hat{y}$.
$R^2$ is given by:
\begin{align}
 R^2 \ &= \ 1 \ - \ \frac{\frac{1}{n-1}\ \sum_{i=1}^n \left( \hat{y}_i \ - \ y_i
  \right)^2}{\frac{1}{n-1}\ \sum_{i=1}^n \left( y_i \ - \ \bar{y}
  \right)^2} \ = \ 1 \ - \ \frac{\mathrm{mse}(\By,\BX)}{s_y^2}\ . 
\end{align}
Therefore, $R^2$ is one minus the ratio of the mean squared
error divided by the mean total sum of squares.
$R^2$ is a strict monotonically decreasing function of the
mean squared error.

We will give a breakdown of the factors that determine how much
each variable adds to $R^2$ \cite[chapter 10.6, p.~263]{Rencher:08}.
The feature vector $\Bx$ is expanded by one additional feature $z$:
$\Bw=(x_1,x_2,\ldots,x_k,z)^T=(\Bx^T,z)^T$.
We want to know the increase in $R^2$ due to adding $z$.
Therefore, we decompose $\Bw$ into $\Bx$ and $z$.  
The difference in coefficients of determination is the difference 
of the according MSEs divided by the empirical variance of $y$:
\begin{align}
 R^2_{y\Bw} \ - \ R^2_{y\Bx} \ &= \ 
 \frac{\mathrm{mse}(\By,\BW) \ - \ \mathrm{mse}(\By,\BX)}{s_y^2}  \ . 
\end{align}

We further need definitions: 
\begin{itemize}
\item $\Bx=(x_1,x_2,\ldots,x_k)^T$.
\item $\Bw=(x_1,x_2,\ldots,x_k,z)^T=(\Bx^T,z)^T$.
\item The sample covariance between $y$ and $x$
  is $s_{yx}=\sum_{i=1}^{n} (x_i-\bar{x})(y_i-\bar{y})/(n-1)$, where
  $\bar{x}=\sum_{i=1}^{n} x_i / n$ and $\bar{y}=\sum_{i=1}^{n} y_i /
  n$ are the sample means. The variance of $x$ is $s_{xx}$ often
  written as $s_x^2$, the standard deviation squared: $s_x:=\sqrt{s_{xx}}$.
\item The correlation between $y$ and $x$ is $r_{yx}=
  s_{yx}/(s_{x} s_{y})$.
\item The covariance matrix $\BS_{xx}$ of a vector $\Bx$ is the matrix
  with entries $[\BS_{xx}]_{ij}=s_{x_i x_j}$.
\item The covariance matrix $\BR_{xx}$ of a vector $\Bx$ is the matrix
  with entries $[\BR_{xx}]_{ij}=r_{x_i x_j}$.
\item The diagonal matrix $\BD_x=[\mathrm{diag}(\BS_{xx})]^{1/2}$
  has a $i$th diagonal entry
  $\sqrt{s_{x_i}}$ and is the diagonal matrix of standard deviations of the components
  of $\Bx$.
\item $R^2_{yw}$ is the squared multiple correlation between $y$ and $\Bw$.
\item $R^2_{yx}$ is the squared multiple correlation between $y$ and $\Bx$.
\item $R^2_{zx}=\Bs_{zx}^T \BS_{xx}^{-1}\Bs_{zx} / s_z^2=\Br_{zx}^T \BR_{xx}^{-1}\Br_{zx}$  is
  the squared multiple correlation between $z$ and $\Bx$.
\item $r_{yz}$ is the simple correlation between $y$ and $z$: $r_{yz}=
  s_{yz}/(s_{y} s_{z})$.
\item $\Br_{yx}=(r_{yx_1},r_{yx_2},\ldots,r_{yx_k})^T=s_y^{-1}
  \BD_x^{-1} \BS_{yx}$ is
  the vector of correlations between $y$ and $\Bx$.
\item $\Br_{zx}=(r_{zx_1},r_{zx_2},\ldots,r_{zx_k})^T=s_z^{-1}
  \BD_x^{-1} \BS_{zx}$ is
  the vector of correlations between $z$ and $\Bx$.
\item $\hat{\Bbe}^*_{zx}=\BR_{xx}^{-1} \Br_{zx}$ is
  the vector of standardized regression coefficients (beta weights)
of $z$ regressed on $\Bx$.
\item The parameter vector is partitioned into the constant $\beta_0$
  and $\Bbe_1$ via
  $\Bbe=(\beta_0,\beta_1,\ldots,\beta_m)^T=(\beta_0,\Bbe_1^T)^T$.
  We have for the maximum likelihood estimate
 \begin{align}
   \hat{\beta}_0 \ &= \ \bar{y} \ - \ \Bs_{yx}^T \BS_{xx}^{-1}
                     \bar{\Bx} \ , \\
   \hat{\Bbe}_1 \ &= \ \BS_{xx}^{-1} \Bs_{yx} \ .
 \end{align}
 The offset $\hat{\beta}_0$ guarantees $\bar{\hat{y}}=\bar{y}$,
 therefore, $\By^{T}\bar{\By} =  \hat{\By}^{T} \bar{\By}$,
 since $\bar{\By}=\bar{y} \BOn$:
 \begin{align}
  \bar{\hat{y}} \ &= \    \frac{1}{n} \
  \sum_{i=1}^{n} \hat{y}_i \ = \ \frac{1}{n} \
  \sum_{i=1}^{n} \big( \hat{\beta}_0 \ + \ \hat{\Bbe}_1^T \Bx_i \big) \\ \nonumber
   &= \ \bar{y} \ - \ \Bs_{yx}^T \BS_{xx}^{-1} \bar{\Bx} \ + \ \frac{1}{n} \ \sum_{i=1}^{n} 
                              \hat{\Bbe}_1^T \Bx_i\\ \nonumber
   &= \ \bar{y} \ - \ \Bs_{yx}^T \BS_{xx}^{-1} \bar{\Bx} \ + \
     \Bs_{yx}^T \BS_{xx}^{-1} \bar{\Bx}\\ \nonumber
   &= \ \bar{y} \ .
 \end{align}

\item The vector of {\em standardized coefficients} $\hat{\Bbe}^{*}_1$ are
 \begin{align}
   \hat{\Bbe}^{*}_1   \ &= \ \frac{1}{s_y} \ \BD_x \ \hat{\Bbe}_1 \ =
  \ \BR_{xx}^{-1} \ \Br_{yx} \ .
\end{align}
\end{itemize}

The next theorem is Theorem~10.6 in Rencher and Schaalje \cite{Rencher:08}
and gives a breakdown of the factors that determine how much
each variable adds to $R^2$ \cite[Chapter 10.6, p.~263]{Rencher:08}.
\begin{theorem}[Rencher Theorem 10.6]
The increase in $R^2$ due to $z$ can be expressed as
\begin{align}
\label{eq:increase}
  R^2_{y\Bw} \ - \ R^2_{y\Bx} \ &= \ \frac{(\hat{r}_{yz} \ -  \ 
  r_{yz})^2}{1 \ - \ R^2_{zx}} \ ,
\end{align}
where $\hat{r}_{yz}=(\hat{\Bbe}^*_{zx})^T \Br_{yx}$
is a ``predicted'' value of $r_{yz}$ based on the relationship of $z$ to
the $\Bx$'s.
\end{theorem}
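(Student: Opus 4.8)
The plan is to standardize all variables, express each coefficient of determination as a quadratic form in correlations, and then perform a single block--matrix inversion. Working in standardized coordinates is essential because it lets me exploit the fact that $z$ has unit variance. The starting identities are the standardized--regression forms of the squared multiple correlations: just as the excerpt records $R^2_{zx}=\Br_{zx}^T \BR_{xx}^{-1}\Br_{zx}$, the same derivation (using $\hat{\Bbe}^*_1=\BR_{xx}^{-1}\Br_{yx}$ together with $R^2=(\hat{\Bbe}^*_1)^T\Br_{yx}$) gives
\begin{align*}
R^2_{y\Bx} \ &= \ \Br_{yx}^T \, \BR_{xx}^{-1} \, \Br_{yx} \ , \qquad
R^2_{y\Bw} \ = \ \Br_{y\Bw}^T \, \BR_{\Bw\Bw}^{-1} \, \Br_{y\Bw} \ ,
\end{align*}
where, since $\Bw=(\Bx^T,z)^T$ and $z$ is standardized,
\begin{align*}
\Br_{y\Bw} \ = \ \begin{pmatrix} \Br_{yx} \\ r_{yz} \end{pmatrix} \ , \qquad
\BR_{\Bw\Bw} \ = \ \begin{pmatrix} \BR_{xx} & \Br_{zx} \\ \Br_{zx}^T & 1 \end{pmatrix} \ .
\end{align*}

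Next I would invert $\BR_{\Bw\Bw}$ by the Schur complement of the block $\BR_{xx}$. The crucial point is that this Schur complement equals
\begin{align*}
d \ := \ 1 \ - \ \Br_{zx}^T \, \BR_{xx}^{-1} \, \Br_{zx} \ = \ 1 \ - \ R^2_{zx} \ ,
\end{align*}
which is exactly the denominator in the claim. The standard partitioned--inverse formula then yields
\begin{align*}
\BR_{\Bw\Bw}^{-1} \ = \ \begin{pmatrix} \BR_{xx}^{-1} + \tfrac{1}{d}\,\BR_{xx}^{-1}\Br_{zx}\Br_{zx}^T\BR_{xx}^{-1} & -\tfrac{1}{d}\,\BR_{xx}^{-1}\Br_{zx} \\ -\tfrac{1}{d}\,\Br_{zx}^T\BR_{xx}^{-1} & \tfrac{1}{d} \end{pmatrix} \ .
\end{align*}
Substituting this into the quadratic form for $R^2_{y\Bw}$, the leading block contributes precisely $\Br_{yx}^T\BR_{xx}^{-1}\Br_{yx}=R^2_{y\Bx}$, so that term cancels when I subtract $R^2_{y\Bx}$.

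Finally I would collapse the three remaining contributions using the definition $\hat{\Bbe}^*_{zx}=\BR_{xx}^{-1}\Br_{zx}$, so that by symmetry of $\BR_{xx}^{-1}$ the ``predicted'' correlation is $\hat{r}_{yz}=(\hat{\Bbe}^*_{zx})^T\Br_{yx}=\Br_{zx}^T\BR_{xx}^{-1}\Br_{yx}$. The rank--one correction gives $\tfrac{1}{d}\hat{r}_{yz}^2$, the two symmetric cross blocks give $-\tfrac{2}{d}\,r_{yz}\hat{r}_{yz}$, and the corner entry gives $\tfrac{1}{d}r_{yz}^2$, so
\begin{align*}
R^2_{y\Bw} \ - \ R^2_{y\Bx} \ = \ \frac{\hat{r}_{yz}^2 \ - \ 2\, r_{yz}\, \hat{r}_{yz} \ + \ r_{yz}^2}{d} \ = \ \frac{(\hat{r}_{yz} \ - \ r_{yz})^2}{1 \ - \ R^2_{zx}} \ ,
\end{align*}
as desired. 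The main obstacle is the purely algebraic bookkeeping of the block inversion: I must verify that the initial squared multiple correlations genuinely take the standardized quadratic--form shape (justifying the unit corner entry for $z$), and then track the rank--one Schur correction carefully enough that the three leftover terms assemble into the perfect square $(\hat{r}_{yz}-r_{yz})^2$ rather than an unsymmetric combination. Once the identification $\hat{r}_{yz}=\Br_{zx}^T\BR_{xx}^{-1}\Br_{yx}$ is in place, this collapse is forced and the result follows.
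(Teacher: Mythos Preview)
Your proof is correct. The block--Schur approach you outline is the standard textbook derivation: writing $R^2_{y\Bw}=\Br_{y\Bw}^T\BR_{\Bw\Bw}^{-1}\Br_{y\Bw}$, inverting the bordered correlation matrix via the Schur complement $d=1-\Br_{zx}^T\BR_{xx}^{-1}\Br_{zx}=1-R^2_{zx}$, and then recognizing $\Br_{zx}^T\BR_{xx}^{-1}\Br_{yx}=\hat r_{yz}$ so that the rank--one correction collapses to the perfect square. All the algebraic steps you sketch check out.

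There is nothing to compare against, however: the paper does not supply its own proof of this statement. It simply quotes the result as Theorem~10.6 from Rencher and Schaalje and then, after the statement, verifies the \emph{interpretation} of $\hat r_{yz}$ as a genuine predicted correlation (the computation culminating in $\hat r_{yz}=\hat s_{yz}/(s_z s_y)$) and works out the orthogonal special case $\Br_{zx}=\BZe$. So your argument is not an alternative route to the paper's proof; it is the proof the paper omits.
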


The following equality shows
that $\hat{r}_{yz}=(\hat{\Bbe}^*_{zx})^T \Br_{yx}$ is indeed a
prediction of $r_{yz}$:
\begin{align}
\label{eq:empCorr}
 \left(\hat{\Bbe}^*_{zx} \right)^T \Br_{yx} \ &= \
 \frac{1}{s_z} \ \BD_x \  \hat{\Bbe}_{zx}^T \frac{1}{s_y} \ \BD_x^{-1} \Bs_{yx} \\ \nonumber
  &= \ \frac{1}{s_z \ s_y} \ \hat{\Bbe}_{zx}^T \frac{1}{n-1} \
    \sum_{i=1}^{n} (\Bx_i-\bar{\Bx})(y_i-\bar{y}) \\ \nonumber
  &= \ \frac{1}{s_z \ s_y} \ \frac{1}{n-1} \ \sum_{i=1}^{n}
    (\hat{\Bbe}_{zx}^T \Bx_i-\hat{\Bbe}_{zx}^T \bar{\Bx})(y_i-\bar{y}) \\ \nonumber
  &= \ \frac{1}{s_z \ s_y} \ \frac{1}{n-1} \ \sum_{i=1}^{n}
    (\hat{z_i}-\bar{\hat{z}})(y_i-\bar{y}) \\ \nonumber
  &= \ \frac{1}{s_z \ s_y} \ \hat{s}_{yz} \ = \ \hat{r}_{yz} \ .
\end{align}

If $z$ is orthogonal to $\Bx$
(i.e., if $\Br_{zx}=\BZe$), then $\hat{\Bbe}^*_{zx}=\BZe$, which implies
that $\hat{r}_{yz}=0$ and $R^2_{zx}=0$. In this case,
Eq.~\eqref{eq:increase} can be written as
\begin{align}
 R^2_{y\Bw} \ - \ R^2_{y\Bx} \ &= \  r_{yz}^2 \ .
\end{align}
Consequently, if all $x_i$ are independent from each other, then
\begin{align}
R^2_{y\Bx} \ &= \  \sum_{j=1}^{k} r_{y x_j}^2 \ .
\end{align}
The contribution of $z$ to $R^2$ can either be less than or
greater than $r_{yz}$.
If the correlation $r_{yz}$ can be predicted from $\Bx$, then
$\hat{r}_{yz}$ is close to $r_{yz}$ and, therefore, $z$ has
contributes less to $R^2$ than $r_{yz}^2$.

Next, we compute the contribution of $z$ to $R^2$ explicitly. 
The correlation between $y$ and $z$ is 
\begin{align}
\label{eq:corrYZ}
  r_{yz}  \ &= \ \frac{1}{s_z \ s_y} \ \frac{1}{n-1} \ \sum_{i=1}^{n}
    (z_i-\bar{z})(y_i-\bar{y}) \ = \ \frac{1}{s_z \ s_y} \ s_{yz} \ . 
\end{align}
We assume that $\bar{z}=\bar{\hat{z}}$.
We want to express the information gain using the mean squared error (MSE)
$1/(n-1) \sum_{i=1}^n (\hat{z_i}- z_i)^2$.
We define the error $e_i:= \hat{z_i}- z_i$ at sample $i$ 
with $\bar{e} =  \bar{\hat{z}} - \bar{z} = 0$. Therefore,
the MSE is equal to the empirical variance
$s_e^2 = 1/(n-1) \sum_{i=1}^n e_i^2$.
The correlation $r_{ey}$ between the target $y$ and the
error $e$ is
\begin{align}
  r_{ey} \ &= \ \frac{1}{s_y \ s_e}\ \frac{1}{n-1}\ \sum_{i=1}^{n}
    (e_i \ - \ \bar{e}) \ (y_i-\bar{y}) \ .
\end{align}
Using Eq.~\eqref{eq:empCorr} and Eq.~\eqref{eq:corrYZ},
we can express the difference between the estimate $\hat{r}_{yz}$ and
the true correlation $r_{yz}$ by:
\begin{align}
  \hat{r}_{yz} \ - \ r_{yz}  \ &= \ \frac{1}{s_z \ s_y} \ \frac{1}{n-1} \ \sum_{i=1}^{n}
    (\hat{z_i}-\bar{\hat{z}})(y_i-\bar{y})  \ - \ \frac{1}{s_z \ s_y} \ \frac{1}{n-1} \ \sum_{i=1}^{n}
  (z_i-\bar{z})(y_i-\bar{y}) \\ \nonumber
  &= \ \frac{1}{s_z \ s_y} \ \frac{1}{n-1} \ \sum_{i=1}^{n}
    (\hat{z_i}- z_i)(y_i-\bar{y}) \ .
\end{align}

The information gain can now be expressed by
the correlation $r_{ey}$ between the target $y$ and the
error $e$:
\begin{align}
  R^2_{y\Bw} \ - \ R^2_{y\Bx} \ &= \
  \frac{(\hat{r}_{yz} \ - \ r_{yz})^2}{1 \ - \ R^2_{zx}} \ = \
  \frac{\frac{1}{s_z^2 \ s_y^2} \ \frac{1}{(n-1)^2} \ \left( \sum_{i=1}^{n}
    (\hat{z_i}- z_i)(y_i-\bar{y}) \right)^2}
  {\frac{\frac{1}{n-1}\ \sum_{i=1}^n \left( \hat{z}_i \ - \ z_i
  \right)^2}{\frac{1}{n-1}\ \sum_{i=1}^n \left( z_i \ - \ \bar{z}
  \right)^2}} \\ \nonumber
  &= \ \frac{\frac{1}{s_y^2} \ \frac{1}{(n-1)^2} \ \left( \sum_{i=1}^{n}
    (\hat{z_i}- z_i)(y_i-\bar{y}) \right)^2}
  {\frac{1}{n-1}\ \sum_{i=1}^n \left( \hat{z}_i \ - \ z_i \right)^2} \\ \nonumber
  &= \ r_{ey}^2 \ .
\end{align}
The information gain is the squared
correlation $r_{ey}^2$ between the target $y$ and the
error $e$.
{\bf The information gain is the information in $z$ about $y$, which is
not contained in $\Bx$.}

\subsection{Input Zeroing}

The simplest contribution analysis method is Input Zeroing, where just
an input is set to zero to determine its contribution to the output.
Input Zeroing sets a particular input $x_i$ to zero and then computes
the network's output. For the original input $\Bx=(x_1,\ldots,x_d)$ and the input with
$x_i=0$, i.e.\ $\tilde{\Bx}_i=(x_1,\ldots,x_{i-1},0,x_{i+1},\ldots,x_d)$,
we compute $\Delta x_i=F(\Bx) \ - \ F(\tilde{\Bx}_i)$ to obtain the
contribution of $x_i$. 
We obtain for the difference of $F(\Bx)$ to the baseline
of average zeroing $\frac{1}{d} \sum_{i=1}^d F(\tilde{\Bx}_i)$:
\begin{align}
  &F(\Bx)  \ - \ \frac{1}{d} \ \sum_{i=1}^d F(\tilde{\Bx}_i) \ = \
  \frac{1}{d} \ \sum_{i=1}^d \Delta x_i \ .
\end{align}
The problem is that the $F(\tilde{\Bx}_i)$ have to be computed $d$-times, that
is, for each input component zeroed out.

Input Zeroing does not recognize redundant inputs, i.e.\ each one of the 
inputs is sufficient 
to produce the output but if all inputs are missing at the same time 
then the output changes. 
In contrast, Integrated Gradients (IG) and Layer-Wise Relevance Propagation (LRP)
detect the relevance of an input even if it is redundant.

\subsection{Integrated Gradients}

Integrated gradients is a recently introduced method
\cite{Sundararajan:17}.
Integrated gradients decomposes the difference $F(\Bx)-F(\tilde{\Bx})$
between the network output $F(\Bx)$ and a baseline $F(\tilde{\Bx})$:
\begin{align}
\label{eq:intgrad}
 F(\Bx) \ - \ F(\tilde{\Bx}) \ &= \ \sum_{i=1}^d (x_i \ - \
 \tilde{x}_i) \ \int_{t=0}^{1}
 \frac{\partial F}{\partial x_i}(\tilde{\Bx}+ t (\Bx-\tilde{\Bx}))
 \, \Rd t \\ \nonumber
 &\approx \
 \sum_{i=1}^d (x_i \ - \ \tilde{x}_i) \ \frac{1}{m} \ \sum_{k=1}^m
 \frac{\partial F}{\partial x_i}(\tilde{\Bx}+ (k/m)
 (\Bx-\tilde{\Bx})) \ .
\end{align}
In contrast to previous approaches, we have $F$ and its derivative to
evaluate only $m$-times, where $m<d$.

The equality can be seen if we define $\Bh=\Bx-\tilde{\Bx}$ and
\begin{align}
\begin{cases}
  g:[0,1]\to \dR \\
  g(t)=F(\Bx+t \Bh) \ .
\end{cases}
\end{align}
Consequently, we have
\begin{align}
  &F(\Bx+\Bh)-F(\Bx) \ = \ g(1)-g(0) \ = \
  \int _{0}^{1} g'(t)\, \Rd t\\
  &= \ \int _{0}^{1} \left(\sum_{i=1}^{d} \
    \frac{\partial F}{\partial x_i}(\Bx+t \Bh) \ h_i\right) \, \Rd t
  \ = \ \sum _{i=1}^{d} \ \left(\int _{0}^{1}
    \frac{\partial F}{\partial x_i}(\Bx+t \Bh)\, \Rd t\right) \ h_i \ .
\end{align}

For the final reward decomposition, we obtain
\begin{align}
 F(\Bx)  \ &= \ 
 \sum_{i=1}^d \left( (x_i \ - \ \tilde{x}_i) \ \frac{1}{m} \ \sum_{k=1}^m
   \frac{\partial F}{\partial x_i}(\tilde{\Bx}+ (k/m)
 (\Bx-\tilde{\Bx})) 
  \ + \ \frac{1}{d} \ F(\tilde{\Bx}) \right) \ .
 \end{align}

\subsection{Layer-Wise Relevance Propagation}
\label{sec:ALRP}

Layer-Wise Relevance Propagation (LRP) \cite{Bach:15}
has been introduced to interpret machine learning models.
LRP is an extension of
the contribution-propagation algorithm \cite{Landecker:13} based on
the contribution approach \cite{Poulin:06}.
Recently ``excitation backprop'' was proposed \cite{Zhang:16},
which is like LPR but uses only positive weights and shifts the
activation function to have non-negative values.
Both algorithms assign a relevance or importance value
to each node of a neural network which describes how much
it contributed to generating the network output.
The relevance or importance is recursively propagated back:
A neuron is important to the network output if it has been important to its
parents, and its parents have been important to the network output.
LRP moves through a neural network like backpropagation:
it starts at the output, redistributes the relevance scores of one
layer to the previous layer until the input layer is reached.
The redistribution procedure satisfies a local relevance
conservation principle. All relevance values that a node obtains from its parents will be
redistributed to its children.
This is analog to Kirchhoff's first law for the conservation of electric charge
or the continuity equation in physics for transportation in general form.
LRP has been used for deep neural networks (DNN) \cite{Montavon:17} and
for recurrent neural networks like LSTM \cite{Arras:17}.

We consider a neural network with activation $x_i$ for neuron $i$.
The weight from neuron $l$ to neuron $i$ is denoted by $w_{il}$.
The activation function is $g$ and $\net_i$ is the netinput to
neuron $i$ with bias $b_i$.
We have following forward propagation rules:
\begin{align}
  \net_i \ &= \ \sum_l w_{il} \ x_l \ , \\
  x_i \ &= \ f_i(\net_i)  \ = \ g( \net_i + b_i) \ .
\end{align}

Let $R_i$ be the relevance for neuron $i$ and $R_{i \leftarrow k}$
the share of relevance $R_k$ that flows from neuron $k$ in the higher
layer to neuron $i$ in the lower layer.
The parameter $z_{ik}$ is a weighting for the share of $R_k$ of neuron
$k$ that flows to neuron $i$. We define $R_{i \leftarrow k}$ as
\begin{align}
    R_{i \leftarrow k}\ &= \  \frac{z_{ik}}{\sum_l z_{lk}} \ R_k \ .
\end{align}
The relative contributions $z_{ik}$ are
previously defined as \cite{Bach:15,Montavon:17,Arras:17}:
\begin{align}
  z_{ik}\ &= \ w_{ik} \ x_k \ .
\end{align}
Here, $z_{ik}$ is the contribution of $x_k$ to the netinput value $\net_i$.
If neuron $k$ is removed from the network,
then  $z_{ik}$ will be the difference to the original $\net_i$.

The relevance $R_i$ of neuron $i$ is the sum of relevances it obtains
from its parents $k$ from a layer above: 
\begin{align}
   R_{i}\ &= \ \sum_k R_{i \leftarrow k} \ . 
\end{align}
Furthermore, a unit $k$ passes on all its relevance $R_k$ to its children,
which are units $i$ of the layer below:
\begin{align}
   R_k \ &= \  \sum_i  R_{i \leftarrow k} \ .
\end{align}
It follows the {\em conservation of relevance}.
The sum of relevances $R_k$
of units $k$ in a layer is equal to the sum of
relevances $R_i$ of units $i$
of a layer below:
\begin{align}
  \label{eq:cons}
  \sum_k R_k \ &= \  \sum_k  \ \sum_i  R_{i \leftarrow k} \ = \
  \sum_i  \ \sum_k  R_{i \leftarrow k}  \ = \ \sum_i R_i \ .
\end{align}

The scalar output $g(\Bx)$ 
of a neural network with input $\Bx=(x_1,\ldots,x_d)$
is considered as relevance $R$ which is
decomposed into contributions $R_i$ of the inputs $x_i$:
\begin{align}
  \sum_i R_i \ &= \  R \ = \ g(\Bx) \ .
\end{align} 

The decomposition is valid for recurrent neural networks,
where the relevance at the output
is distributed across the sequence elements of the
input sequence.

\subsubsection{New Variants of LRP}
\label{sec:ALRPvariants}

An alternative definition of $z_{ik}$ is
\begin{align}
  z_{ik}\ &= \ w_{ik} \ (x_k \ - \ \bar{x}_k ) \ ,
\end{align} 
where $\bar{x}_k$ is the mean of $x_k$ across samples.
Therefore, $(x_k \ - \ \bar{x}_k )$ is the contribution of the
actual sample to the variance of $x_k$. This in turn is related to
the information carried by $x_k$.
Here, $z_{ik}$ is the contribution of $x_k$ to the variance of
$\net_i$. However, we can have negative values of
$(x_k \ - \ \bar{x}_k )$ which may lead to negative contributions even
if the weights are positive.

Another alternative definition of $z_{ik}$ is
\begin{align}
  z_{ik} \ &= \ f_i(\net_i) \ - \ f_i(\net_i \ - \
            w_{ik} \ x_k) \ .
\end{align} 
Here, $z_{ik}$ is the contribution of $x_k$ to the activation
value $x_i=f_i(\net_i)$.
If neuron $k$ is removed from the network,
then  $z_{ik}$ will be the difference to the original $x_i$.
If $f_i$ is strict monotone increasing and $x_k>0$, then
positive weights $w_{ik}$ will lead to positive values and negative weights
$w_{ik}$ to negative values.

\noindent Preferred Solution:\newline
A definition of $z_{ik}$ is
\begin{align}
  z_{ik}\ &= \ w_{ik} \ (x_k \ - \ x_{\min} ) \ ,
\end{align} 
where $x_{\min}$ is the minimum of $x_k$ either across samples
(mini-batch) or across time steps.
The difference $(x_k  -  x_{\min} )$ is always positive.
Using this definition,
activation functions with negative values are possible, like for
excitation backprop \cite{Zhang:16}.
The minimal value is considered as default off-set, which can be
included into the bias.

\subsubsection{LRP for Products}
\label{sec:ALRPproduct}

Here we define relevance propagation for products of two units.
We assume that $z= x_1  x_2$ with $x_1>0$ and $x_2>0$. 
We view $x_1$ and $x_2$ as units of a layer below the layer in
which $z$ is located. Consequently, $R_z$ has to be divided between $x_1$ and $x_2$,
which gives the conservation rule
\begin{align}
R_z  \ &= \    R_{x_1 \leftarrow z} \ + \ R_{x_2 \leftarrow z}   \ .
\end{align}

Alternative 1:
\begin{align}
  R_{x_1 \leftarrow z}   \ &= \  0.5 \ R_z \\
  R_{x_2 \leftarrow z}   \ &= \  0.5 \ R_z   \ .
\end{align}
The relevance is equally distributed.

\noindent Preferred Solution:\newline
Alternative 2:
The contributions according to the deep Taylor decomposition around
$(a,a)$ are
\begin{align}
  \frac{\partial z}{\partial x_1}\bigg\rvert_{(a,a)} \ (x_1 -a)   \ &= \ (x_1-a)  \ a  \ , \\
  \frac{\partial z}{\partial x_2}\bigg\rvert_{(a,a)} \ (x_2 -a)   \ &= \ a  \ (x_2-a) \ .
\end{align}
We compute the relative contributions:
\begin{align}
  \frac{(x_1-a)  \ a}{(x_1-a)  \ a \ + \ a  \ (x_2-a)}  \ &= \
  \frac{x_1-a}{(x_1\ + \ x_2 \ - \ 2 \ a)}  \ , \\
  \frac{(x_2-a)  \ a}{(x_1-a)  \ a \ + \ a  \ (x_2-a)}  \ &= \
  \frac{x_2-a}{(x_1\ + \ x_2 \ - \ 2 \ a)}  \ .
\end{align}
For $\lim_{a\to 0}$ we obtain $x_1/(x_1+x_2)$ and $x_2/(x_1+x_2)$
as contributions.

We use this idea but scale $x_1$ and $x_2$ to the range $[0,1]$:
\begin{align}
  R_{x_1 \leftarrow z}   \ &= \
  \frac{\frac{x_1-x_{\min}}{x_{\max}-x_{\min}}}{\frac{x_1-x_{\min}}{x_{\max}-x_{\min}}  \ + \ \frac{x_2-x_{\min}}{x_{\max}-x_{\min}}} \ R_z \\
  R_{x_2 \leftarrow z}   \ &= \
  \frac{\frac{x_2-x_{\min}}{x_{\max}-x_{\min}}}{\frac{x_1-x_{\min}}{x_{\max}-x_{\min}}
                             \ + \  \frac{x_2-x_{\min}}{x_{\max}-x_{\min}}} \ R_z  \ .
\end{align}
The relevance is distributed according to how close the maximal value
is achieved and how far away it is from the minimal value.

Alternative 3:
\begin{align}
  R_{x_1 \leftarrow z}   \ &= \
  \frac{\ln\left(1-\frac{x_1-x_{\min}}{x_{\max}-x_{\min}} \right)}{\ln\left(1-\frac{x_1-x_{\min}}{x_{\max}-x_{\min}} \right) \ + \ \ln\left(1-\frac{x_2-x_{\min}}{x_{\max}-x_{\min}} \right)} \ R_z \\
  R_{x_2 \leftarrow z}   \ &= \
  \frac{\ln\left(1-\frac{x_2-x_{\min}}{x_{\max}-x_{\min}} \right)}{\ln\left(1-\frac{x_1-x_{\min}}{x_{\max}-x_{\min}} \right) \ + \ \ln\left(1-\frac{x_2-x_{\min}}{x_{\max}-x_{\min}} \right)} \ R_z  \ .
\end{align}
All $\ln$-values are negative, therefore the fraction in front of
$R_z$ is positive. $x_1= x_{\min}$ leads to a zero relevance for $x_1$.
The ratio of the relevance for $x_1$
increases to 1 when $x_1$ approaches $x_{\max}$. 
The relevance is distributed according to how close the maximal value
is achieved. We assume that the maximal value is a saturating value,
therefore we use $\ln$, the natural logarithm.

%SEPP: BEGINN
\subsection{Variance Considerations for contribution Analysis}

We are interested how the redistributed reward affects the variance 
of the estimators. 
We consider
(A) the difference of consecutive predictions is the redistributed reward,
(B) integrated gradients (IG), and
(C) layer-wise relevance propagation (LRP).

For (A) the difference of consecutive predictions is the redistributed reward,
all variance is moved to the final correction. However imperfect $g$ and variance
cannot be distinguished.

For (B) integrated gradients (IG) the redistributed rewards depend on future values.
Therefore the variance can even be larger than in the original MDP.

%Not sure if this is correct: variance is propagated back but also dependencies on
% future transition
For (C) layer-wise relevance propagation (LRP) the variance is propagated back without
decreasing or increasing if the actual return is used as relevance. If the prediction is
used as relevance and a final correction is used then the variance is moved to the final
prediction but new variance is injected since rewards depend on the future path.

%%%%%%%%%%%%%% Reproducibility List
\clearpage
\pagebreak
\section{Reproducibility Checklist}
We followed the reproducibility checklist \cite{Pineau:18} and point to relevant sections. 
\paragraph{For all models and algorithms presented, check if you include:}
\begin{itemize}
    \item \textbf{A clear description of the mathematical setting, algorithm, and/or model.}
    
    Description of mathematical settings starts at paragraph ~\nameref{c:def}.
    
    Description of novel learning algorithms starts at paragraph \nameref{c:novel}.
    \item \textbf{An analysis of the complexity (time, space, sample size) of any algorithm.}
    
    Plots in Figure~\ref{fig:test} show the number of episodes, i.e.\ the sample size, which are needed for convergence to the optimal policies. They are evaluated for different algorithms and delays in all artificial tasks.
    For Atari games, the number of samples corresponds to the number of game frames. See paragraph \nameref{c:Atari}.
    We further present a bias-variance analysis of TD and MC learning in Section~\ref{sec:Abias_variance_estimator} and Section~\ref{sec:Abias_variance_sample} in the appendix.
    % TODO
    \item \textbf{A link to a downloadable source code, with specification of all dependencies, including external libraries.}
    
    \href{https://github.com/ml-jku/baselines-rudder}{https://github.com/ml-jku/baselines-rudder}
\end{itemize}

\paragraph{For any theoretical claim, check if you include:}
\begin{itemize}
    \item \textbf{A statement of the result.}
    
    The main theorems:
    \begin{itemize}
        \item Theorem \ref{th:EquivT}
        \item Theorem \ref{th:zeroExp}
        \item Theorem \ref{th:OptReturnDecomp}
    \end{itemize}
    
    Additional supporting theorems can be found in the proof section of the appendix \ref{c:RR}.
    \item \textbf{A clear explanation of any assumptions.}
    
    % not clear if what I wrote is what we should write here. I went over all "assu*" occurrences in the main paper. I left few out since I thought they might not apply
    %Assumptions on Explaining Away Problem: lines 220-222
    %Assumption on $\gamma = 1$ for delayed rewards: For $\gamma < 1$ rewards far in the future vanish.
    %Since we tackle the problem of delay rewards, $\gamma = 1$ makes sense. 
    %Assumption on Delta function at line 228
    The proof section \ref{c:RR} in the appendix covers all the assumptions for the main theorems.
    \item \textbf{A complete proof of the claim.}
    
Proof of the main theorems are moved to the appendix.
\begin{itemize}
    \item Proof of Theorem 1 can be found after Theorem \ref{th:AEquivT} in the appendix. 
    \item Proof of Theorem 2 can be found after Theorem \ref{th:AzeroExp} in the appendix. 
    \item Proof of Theorem 3 can be found after Theorem \ref{th:AOptReturnDecomp} in the appendix. 
\end{itemize}
Proofs for additional theorems can also be found in this appendix.
\end{itemize}

\paragraph{For all figures and tables that present empirical results, check if you include:}
\begin{itemize}
    \item \textbf{A complete description of the data collection process, including sample size.}
    
    For artificial tasks the environment descriptions can be found in section \nameref{sec:Aexp} in the main paper. For Atari games, we use the standard sampling procedures as in OpenAI Gym \cite{Brockman:16} (description can be found in paragraph \nameref{para:Atari}).
    % TODO: check with Widi if this statement is correct.
    \item \textbf{A link to a downloadable version of the dataset or simulation environment.}
    
    Link to our repository:
    \href{https://github.com/ml-jku/baselines-rudder}{https://github.com/ml-jku/rudder}
    
    \item \textbf{An explanation of any data that were excluded, description of any pre-processing step}
    
    For Atari games, we use the standard pre-processing described in \cite{Mnih:16}.
    \item \textbf{An explanation of how samples were allocated for training / validation / testing.}
    
    For artificial tasks, description of training and evaluation are included in section \ref{sec:Aexp} . For Atari games, description of training and evaluation are included Section~\ref{sec:Aexp}.
    \item \textbf{The range of hyper-parameters considered, method to select the best hyper-parameter configuration, and specification of all hyper-parameters used to generate results.}
    
    A description can be found at paragraph \nameref{c:ppomodel} in the appendix. 
    \item \textbf{The exact number of evaluation runs.}
    
    For artificial tasks evaluation was performed during training runs. See Figure~\ref{fig:test}.
    For Atari games see paragraph \nameref{para:Atari}.
    We also provide a more detailed description in Section~\ref{sec:Aexp} and Section~\ref{sec:Aatari} in the appendix.
    \item \textbf{A description of how experiments were run}.
    For artificial task, description can be found at \ref{sec:Mexperiments}.
    
    For Atari games, description starts at paragraph \nameref{para:Atari}.
    We also provide a more detailed description in Section~\ref{sec:Aexp} and Section~\ref{sec:Aatari} in the appendix.
    \item \textbf{A clear definition of the specific measure or statistics used to report results.}
    
For artificial tasks, see section \ref{sec:Mexperiments}.
For Atari games, see section \ref{sec:Aatari} and the caption of Table 1. 
We also provide a more detailed description in Section~\ref{sec:Aexp} and Section~\ref{sec:Aatari} in the appendix.
    \item \textbf{Clearly defined error bars.}
    
For artificial tasks, see caption of Figure~\ref{fig:test}, second line.
For Atari games we show all runs in Figure~\ref{fig:atari_training} in the appendix.
    \item \textbf{A description of results with central tendency (e.g.\ mean) \& variation (e.g.\ stddev).}
    
An exhaustive description of the results including mean, variance and significant test, is included in Table~\ref{tab:res1}, Table~\ref{tab:Ares1} and Table~\ref{tab:Ares2} in Section~\ref{sec:Aexp} in the appendix.
    \item \textbf{A description of the computing infrastructure used.}
    
We distributed all runs across 2 CPUs per run and 1 GPU per 4 runs for Atari experiments. We used various GPUs including GTX 1080 Ti, TITAN X, and TITAN V.
    % TO check: should we include this information? 10 days might look like a lot, compared to the 8h. the baseline needs
    Our algorithm takes approximately 10 days.
\end{itemize}

%%%%%%%%%%%%%%%%%%%%%%%%%%%%%%%%%%%%%%%%%%%%%%%%%%%%%%%%%%%%%%%%%%
\vfill
\pagebreak
\section{References}
\label{sec:Areferences}
\renewcommand{\section}[2]{}%
\bibliographystyle{plain}
\bibliography{lrp}

\end{appendices}

\end{document}